\definecolor{mydarkblue}{rgb}{0,0.08,0.45}
\newtheorem{Theorem}{{Problem}}
\newtheorem{assumption}{{Assumption}}
\newtheorem{theorem*}{Theorem}
\newtheorem{Definition}{Definition}
\newtheorem{lemma}{Lemma} 
\newtheorem{Proposition}{Proposition}
\newtheorem{Condition}{\textbf{Condition}}
\DeclareMathOperator*{\argmin}{arg\,min}
\DeclareMathOperator*{\argmax}{arg\,max}
  \def\ttl@Hy@steplink#1{%
    \Hy@MakeCurrentHrefAuto{#1*}%
    \edef\ttl@Hy@saveanchor{%
      \noexpand\Hy@raisedlink{%
        \noexpand\hyper@anchorstart{\@currentHref}%
        \noexpand\hyper@anchorend
        \def\noexpand\ttl@Hy@SavedCurrentHref{\@currentHref}%
        \noexpand\ttl@Hy@PatchSaveWrite
      }%
    }%
  }%
  \def\ttl@Hy@PatchSaveWrite{%
    \begingroup
      \toks@\expandafter{\ttl@savewrite}%
      \edef\x{\endgroup
        \def\noexpand\ttl@savewrite{%
          \let\noexpand\@currentHref
              \noexpand\ttl@Hy@SavedCurrentHref
          \the\toks@
        }%
      }%
    \x
  }%
  \def\ttl@Hy@refstepcounter#1{%
    \let\ttl@b\Hy@raisedlink
    \def\Hy@raisedlink##1{%
      \def\ttl@Hy@saveanchor{\Hy@raisedlink{##1}}%
    }%
    \refstepcounter{#1}%
    \let\Hy@raisedlink\ttl@b
  }%
\def\ttl@gobblecontents#1#2#3#4{\ignorespaces}%
\newcommand\DoToC{%
  \startcontents
  \printcontents{}{1}{
  \textbf{\Large \noindent Table of Contents of Appendix}
  \vskip2pt\hrule\vskip2pt
  \protect\setlength{\parskip}{0pt}\protect\onehalfspacing
  }
  \vskip2pt\hrule\vskip2pt
}
\begin{document}

\title{On the Learnability of Out-of-distribution Detection}

\author{\name Zhen Fang \email zhen.fang@uts.edu.au \\
       \addr Australian Artificial Intelligence Institute \\
       University of Technology Sydney\\
       61 Broadway, Ultimo NSW 2007, Australia
       \AND
       \name Yixuan Li \email sharonli@cs.wisc.edu \\
       \addr Department of Computer Sciences\\
       The University of Wisconsin Madison\\
       1210 W Dayton St, Madison, WI 53706, USA
       \AND
       \name Feng Liu~$^{\textrm{\Letter}}$ \email feng.liu1@unimelb.edu.au \\
       \addr School of Computing and Information Systems\\
       The University of Melbourne\\
       700 Swanston Street, Carlton VIC 3053, Australia
       \AND
       \name Bo Han \email bhanml@comp.hkbu.edu.hk \\
       \addr Department of Computer Science\\
       Hong Kong Baptist University\\
       Kowloon Tong, Hong Kong SAR
       \AND
       \name Jie Lu~$^{\textrm{\Letter}}$ \email jie.lu@uts.edu.au \\
       \addr Australian Artificial Intelligence Institute \\
       University of Technology Sydney\\
       61 Broadway, Ultimo NSW 2007, Australia}

\editor{Amos Storkey}
\maketitle
\vspace{-1.5em}
\noindent {\small \textbf{Accepted by JMLR} in 7th of April, 2024} \\ \\

\begin{abstract}
Supervised learning aims to train a classifier under the assumption that training and test data are from the same distribution. 
To ease the above assumption, researchers have studied a more realistic setting: \emph{out-of-distribution} (OOD) detection, where test data may come from classes that are unknown during training (\textit{i.e.}, OOD data).
Due to the unavailability and diversity of OOD data,
good generalization ability is crucial for effective OOD detection algorithms, and corresponding learning theory is still an \emph{open problem}.
To study the generalization of OOD detection, this paper investigates the \emph{probably approximately correct} (PAC) learning theory of OOD detection that fits the commonly used evaluation metrics in the literature. First, we find a necessary condition for the learnability of OOD detection. Then, using this condition, we prove several impossibility theorems for the learnability of OOD detection under some scenarios. Although the impossibility theorems are frustrating, we find that some conditions of these impossibility theorems may not hold in some practical scenarios.
Based on this observation, 
we next give several necessary and sufficient conditions to characterize the learnability of OOD detection in some practical scenarios. 
Lastly, we offer theoretical support for representative OOD detection works based on our OOD theory. 
\end{abstract}
\begin{keywords}
  out-of-distribution detection, weakly supervised learning, learnability
\end{keywords}

\section{Introduction}
\label{sec:intro}

The success of supervised learning is established on an \textit{in-distribution} (ID) assumption that training and test data share the same distribution  \citep{Dosovitskiy2021animage,Huang2017densely,DBLP:conf/cvpr/HsuSJK20,Yang2021generalized}. However, in many real-world scenarios, the distribution of test data violates the assumption and, instead, contains \emph{out-of-distribution} (OOD) data whose labels have not been seen during the training process \citep{openmax16cvpr,chen2021robustifying}. To mitigate the risk brought by OOD data, a more practical learning scenario is considered in the machine learning field: OOD detection, which determines whether an input is ID/OOD, while classifying the ID data into respective classes. 
\\

\vspace{-0.6em}
\noindent OOD detection can significantly increase the reliability of machine learning models when deploying them in the real world. 
Many seminar algorithms have been developed to {empirically} address the OOD detection problem \citep{ Hendrycks2017abaseline,liang2018enhancing,lee2018asimple,zong2018deep, Pidhorskyi2018generative,Nalisnick2019do,Hendrycks2019deep, ren2019likelihood, lin2021mood, salehi2021unified,sun2021react}. A common solution paradigm to OOD detection is to propose a new learning objective or/and a score function to identify if one upcoming data point is OOD data. When evaluating algorithms under this solution paradigm, both threshold-dependent metrics (\textit{e.g.}, risk) and threshold-independent metrics (\textit{e.g.}, AUC) will be used to see to what extent the algorithms can successfully identify OOD data. However, very few works study theory of OOD detection,  which hinders the rigorous path forward for the field. This paper aims to bridge the gap.
\\

\vspace{-0.6em}
\noindent In this paper, a theoretical framework is proposed to understand the learnability of OOD detection problem in view of threshold-dependent metrics and threshold-independent metrics\footnote{This paper is an extended version of our previous conference paper \citep{fang2022is}. In Section~\ref{sec:discuss}, we discuss the main difference between this paper and \cite{fang2022is}.}. We investigate the probably approximately correct (PAC) learning theory of OOD detection when the evaluation metrics are risk and AUC, which is posed as an open problem to date. Unlike the classical PAC learning theory in a supervised setting, our problem setting is fundamentally challenging due to the \emph{absence of OOD data} in training. Because OOD data can be diverse in many real-world scenarios, we want to study whether there exists an algorithm that can be used to detect data from various OOD distributions instead of merely data from some specified OOD distributions. Such is the significance of studying the learning theory for OOD detection \citep{Yang2021generalized}. This motivates our question: \textit{is OOD detection PAC learnable? {i.e.}, is there the PAC learning theory to guarantee the generalization ability of OOD detection under two common metrics: risk and AUC?} 
\\

\vspace{-0.5em}
\noindent To answer the above research question and investigate the learning theory, we mainly focus on two basic spaces: domain space and function space. The domain space is a space consisting of some distributions, and the function space is a space consisting of some classifiers or ranking functions. Existing agnostic PAC theories in supervised learning \citep{shalev2014understanding,mohri2018foundations} are distribution-free, \textit{i.e.}, the domain space consists of all domains. Yet, in {Theorem} \ref{T4} and {Theorem} \ref{T4_auc}, we show that the learning theory of OOD detection is not distribution-free. Furthermore, we find that OOD detection is learnable only if the domain space and the function space satisfy {some special conditions}, \textit{e.g.,} Conditions \ref{C1}, \ref{C1_auc}, and \ref{Con2}. Notably, 
there are many conditions and theorems in existing learning theories and many OOD detection algorithms in the literature. Thus, it is very difficult to analyze the relation between these theories and algorithms, and explore useful conditions to ensure the learnability of OOD detection, especially when we have to explore them \textit{from the scratch}. Thus, the main aim of our paper is to study these essential conditions under risk and AUC metrics. From these essential conditions, we can know \textit{when} OOD detection can be successful in practical scenarios.
We restate our question and goal in the following:

\noindent \fbox{
\parbox{0.97\textwidth}{
\textit{Given hypothesis spaces and several representative domain spaces, what are the {conditions} to ensure the learnability of OOD detection in terms of risk and AUC? If possible, we hope that these conditions are necessary and sufficient in some scenarios.}
}
}

\paragraph{Main Results.} We start to study the learnability of OOD detection in the largest space---the total space, and give two {necessary conditions} for the learnability of OOD detection under risk and AUC (Condition \ref{C1} for risk and Condition \ref{C1_auc} for AUC). However, we find that the overlap between ID and OOD data may result in that both necessary conditions do not hold.
Therefore, we give two impossibility theorems to demonstrate that OOD detection fails in the total space ({Theorem}~\ref{T4} under risk and {Theorem}~\ref{T4_auc} under AUC). Then, we investigate OOD detection in a separate space, where the ID and OOD data do not overlap. Unfortunately, there still exists impossibility theorems ({Theorem}~\ref{T12} under risk and {Theorem}~\ref{T12_auc} under AUC), meaning that we cannot expect OOD detection is learnable under risk and AUC in the separate space under some conditions of the developed theorems. 
\\

\vspace{-0.5em}
\noindent It is frustrating to find the impossibility theorems regarding OOD detection in a separate space, but we find that some conditions of these impossibility theorems may not hold in several practical scenarios.
Stemming from this observation, we give several \emph{necessary and sufficient} conditions to characterize the learnability of OOD detection under risk and AUC in the separate space ({Theorems} \ref{T13} and \ref{T24} under risk, and Theorems \ref{T13_auc} and \ref{T24_auc} under AUC). Especially, when our function space is based on \emph{fully-connected neural network} (FCNN),  OOD detection is learnable under risk and AUC in the separate space if and only if the feature space is finite.
Then, we focus on other more practical domain spaces, \textit{e.g.}, the finite-ID-distribution space and the density-based space and investigate the learnability of OOD detection in both spaces. {Theorem} \ref{T-SET} shows a necessary and sufficient condition of learnability of OOD detection under risk. {Theorems} \ref{T-SET2} and \ref{T-SET2_auc} show two sufficient conditions for the learnability of OOD detection under risk and AUC, respectively. It should be noted that when studying learnability of OOD detection in the finite-ID-distribution space, we discover a compatibility condition ({Condition} \ref{Con2}) that is a necessary and sufficient condition of learnability of OOD detection under risk for this space. Then, we explore the compatibility condition in the density-based space, and find that such condition is also the necessary and sufficient condition in some practical scenarios ({Theorem} \ref{T24.3}).

\paragraph{Implications and Impacts of Theory.} Our study is not of purely theoretical interest; it has also practical impacts.
(i) From the perspective of domain space, we consider the finite-ID-distribution space that fits the common scenarios in the real world: we normally only have finite ID datasets. In this case, {Theorem} \ref{T-SET} gives a necessary and sufficient condition to the success of OOD detection under risk. More importantly, our theory shows that OOD detection is learnable in image-based scenarios when ID images have clearly different semantic labels and styles  (\textit{far-OOD}) from OOD images. 
(ii) From the perspective of function space, we investigate the learnability of OOD detection under risk and AUC for commonly used FCNN-based function spaces. Our theory provides theoretical support ({Theorems} \ref{T24} and \ref{T24.3} under risk, and {Theorems} \ref{T24_auc} and \ref{T24.3_auc} under AUC) for several representative OOD detection works \citep{Hendrycks2017abaseline,liang2018enhancing,liu2020energy}. 
(iii) From the perspective of evaluation metrics, our paper studies the learnability of OOD detection under risk and the learnability of OOD detection under AUC, which covers the major evaluation metrics used in OOD detection evaluation and provides theoretical guidance when users have different requirement in evaluating OOD detection performance.
Based on all of our theoretical results, they suggest we should not expect a universally working OOD detection algorithm. It is necessary to design different algorithms in different scenarios.



\section{Learning Setups}\label{S3}
We begin by introducing the necessary concepts and notations for our theoretical framework. Given a feature space $\mathcal{X}\subset \mathbb{R}^d$ and a label space $\mathcal{Y}:=\{1,\ldots,K\}$, we have an ID joint distribution $D_{X_{\rm I}Y_{\rm I}}$ over $\mathcal{X}\times \mathcal{Y}$, where $X_{\rm I}\in \mathcal{X}$ and $Y_{\rm I}\in \mathcal{Y}$ are random variables. We also have an OOD joint distribution $D_{X_{\rm O}Y_{\rm O}}$, where $X_{\rm O}$ is a random variable from $\mathcal{X}$, but $Y_{\rm O}$ is a random variable whose outputs do not belong to $\mathcal{Y}$. During testing, we encounter a mixture of ID and OOD joint distributions: $D_{XY}=(1-\pi^{\rm out})D_{X_{\rm I}Y_{\rm I}}+\pi^{\rm out}D_{X_{\rm O}Y_{\rm O}}$, and we can only observe the marginal distribution $D_X=(1-\pi^{\rm out})D_{X_{\rm I}}+\pi^{\rm out}D_{X_{\rm O}}$, where the constant $\pi^{\rm out}\in [0,1)$ represents an unknown class-prior probability. Next, we provide the formal definition of the OOD detection problem and key concepts used in this paper.

\subsection{Problem Setting and Concepts}

\begin{Theorem}[OOD Detection \citep{Yang2021generalized}]\label{P1}
Given an ID joint distribution $D_{X_{\rm I}Y_{\rm I}}$ and a training data $S:=\{(\mathbf{x}^1,{y}^1),...,(\mathbf{x}^n,{y}^n)\}$ drawn {independent and identically distributed}  from $D_{X_{\rm I}Y_{\rm I}}$,
the aim of OOD detection is to train a classifier $f$ by using the training data $S$ such that, for any test data $\mathbf{x}$ drawn from the mixed marginal distribution $D_X$:
\begin{itemize}
    \item if $\mathbf{x}$ is an observation from $D_{X_{\rm I}}$, $f$ can classify $\mathbf{x}$ into correct ID classes;
    \item if $\mathbf{x}$ is an observation from $D_{X_{\rm O}}$, $f$ can detect $\mathbf{x}$ as OOD data.
\end{itemize}
\end{Theorem}
According to \cite{Yang2021generalized},  when $K=1$, OOD detection reduces to one-class novelty detection or  semantic anomaly detection \citep{DBLP:conf/icml/RuffGDSVBMK18,DBLP:conf/icml/GoyalRJS020,DBLP:conf/pkdd/DeeckeVRMK18}. Next, we introduce some basic and important concepts and notations. 

$~$
\\
\textbf{OOD Label and Domain Space.} Based on Problem \ref{P1}, we know it is not necessary to classify OOD data
into the correct OOD classes. Without loss of generality,
let all OOD data be allocated to one big OOD class, \textit{i.e.}, $Y_{\rm O}=K+1$ \citep{Fang2021learning,Fang2020Open}. 
 To investigate the PAC learnability of OOD detection, we define a domain space $\mathscr{D}_{XY}$, which is a set consisting of some joint distributions $D_{XY}$ mixed by some ID joint distributions and some OOD joint distributions.
In this paper, the joint distribution $D_{XY}$ mixed by ID joint distribution $D_{X_{\rm I}Y_{\rm I}}$ and OOD joint distribution $D_{X_{\rm O}Y_{\rm O}}$ is called \textbf{\textit{domain}}.
\\

\noindent \textbf{Hypothesis Spaces and Scoring Function Spaces.} A hypothesis space $\mathcal{H}$ is a subset of function space, \textit{i.e.},
$
  \mathcal{H}\subset  \{ { h}: \mathcal{X}\rightarrow \mathcal{Y}\cup \{K+1\}\}.
$ We  set $\mathcal{H}^{\rm in}\subset  \{ { h}: \mathcal{X}\rightarrow \mathcal{Y}\}$ to the ID hypothesis space. We also define $\mathcal{H}^{\rm b}\subset  \{ { h}: \mathcal{X}\rightarrow \{1,2\}\}$ as the hypothesis space for binary classification, where $1$ represents the ID data, and $2$ represents the OOD data. The function $h$ is called the hypothesis function. A scoring function space is a subset of function space, \textit{i.e.}, $\mathcal{F}_{l}\subset  \{ \mathbf{f}: \mathcal{X}\rightarrow \mathbb{R}^{l}\}$, where $l$ is the output's dimension of the vector-valued function $\mathbf{f}$. The function $\mathbf{f}$ is called the scoring function. 
\\

\noindent \textbf{Ranking Function Spaces.} 
Most representative OOD detection algorithms \citep{liu2020energy} output a ranking function from a given ranking function space $\mathcal{R} \subset \mathcal{R}_{\rm all}=\{r:\mathcal{X}\rightarrow \mathbb{R}\}$. If the ranking function $r(\mathbf{x})$ has a higher value, then $\mathbf{x}$ is from $D_{X_{\rm I}}$ with a higher probability. A perfect ranking function $r^*$ fulfills the condition $r^*(\mathbf{x})>r^*(\mathbf{x}')$ for all $\mathbf{x}$ from $D_{X_{\rm I}}$ and all $\mathbf{x}'$ from $D_{X_{\rm O}}$, indicating that rankings of ID data are always higher than rankings of OOD data. The general strategy to construct the ranking function space $\mathcal{R}$ is to design a scoring function $E: \mathbb{R}^l \rightarrow \mathbb{R}$ and integrate it with the scoring function space $\mathcal{F}_{l}$, i.e., $\mathcal{R} = E\circ \mathcal{F}_{l}$.
\\



\noindent \textbf{Loss, Risks and AUC Metric.} Let $\mathcal{Y}_{\rm all}=\mathcal{Y}\cup\{K+1\}$. Given a loss function $\ell:\mathcal{Y}_{\rm all}\times \mathcal{Y}_{\rm all}\rightarrow \mathbb{R}_{\geq 0}$ satisfying that $\ell(y_1,y_2)=0$ if and only if $y_1=y_2$, and any ${ h}\in \mathcal{H}$, then the \textit{risk} with respect to ${D}_{XY}$ is
\begin{equation}\label{risks}
\begin{split}
   & R_{D}({h}):= \mathbb{E}_{(\mathbf{x},y)\sim D_{XY}} \ell({ h}(\mathbf{x}),{y}).
\end{split}
 \end{equation} 
The $\alpha$-risk $R_{D}^{\alpha}({h}):= (1-\alpha)R^{\rm in}_D({h})+\alpha R^{\rm out}_D({h}), \forall \alpha\in [0,1]$, where $R^{\rm in}_D({h})$ and $R^{\rm out}_D({h})$ are
\begin{equation*}
\begin{split}
  & R^{\rm in}_D({h}):= \mathbb{E}_{(\mathbf{x},y)\sim D_{X_{\rm I}Y_{\rm I}}} \ell({ h}(\mathbf{x}),{y}),
~~~~~~R^{\rm out}_D({h}):= \mathbb{E}_{\mathbf{x}\sim D_{X_{\rm O}}} \ell({ h}(\mathbf{x}),K+1).
\end{split}
\end{equation*}
Except for using risk to evaluate the OOD detection performance, AUC is also a {promising metric} to see if a ranking function $r$ can separate the ID and OOD data:
 \begin{equation}\label{auc}
     {\rm AUC}(r;D_{XY}) = \mathbb{E}_{\mathbf{x}\sim D_{X_{\rm I}}}\mathbb{E}_{\mathbf{x}'\sim D_{X_{\rm O}}} \big [ \mathbf{1}_{r(\mathbf{x})>r(\mathbf{x}')}  + \frac{1}{2} \mathbf{1}_{r(\mathbf{x})=r(\mathbf{x}')}\big].
\end{equation}
Note that since value of ${\rm AUC}$ only denpends on the marginal distributions $D_{X_{\rm I}}$ and $D_{X_{\rm O}}$, therefore, it is also convientent for us to rewrite ${\rm AUC}(r;D_{XY})$ as 
$
     {\rm AUC}(r;D_{X_{\rm I}},D_{X_{\rm O}}).
$


\subsection{Learnability under Risk} 

Based on risk defined in Eq.~\eqref{risks}, OOD detection aims to select a hypothesis function $h \in \mathcal{H}$ with approximately minimal risk,
based on finite data. Generally, we expect the approximation to get better, with the increase in sample size. Algorithms achieving this are said to be consistent under risk. Formally, we have: 
\begin{Definition}[Learnability of OOD Detection under Risk]\label{D0}
Given a domain space $\mathscr{D}_{XY}$ and a hypothesis space $\mathcal{H}\subset  \{ { h}: \mathcal{X}\rightarrow \mathcal{Y}_{\rm all}\}$, we say  OOD detection is \textbf{learnable} in  $\mathscr{D}_{XY}$ for $\mathcal{H}$ under risk, if there exists an algorithm $\mathbf{A}\footnote{Similar to \cite{shalev2010learnability}, in this paper, we regard an algorithm as a mapping from $\cup_{n=1}^{+\infty}(\mathcal{X}\times\mathcal{Y})^n$ to $\mathcal{H}$ or $\mathcal{R}$.}: \cup_{n=1}^{+\infty}(\mathcal{X}\times\mathcal{Y})^n\rightarrow \mathcal{H}$ and a monotonically decreasing
sequence $\epsilon_{\rm cons}(n)$, such that $\epsilon_{\rm cons}(n)\rightarrow 0$, as $n\rightarrow +\infty$, and for any domain $D_{XY}\in \mathscr{D}_{XY}$,
\begin{equation}\label{issue-definition1}
    \mathbb{E}_{S\sim D^n_{X_{\rm I}Y_{\rm I}}}\big[ R_D(\mathbf{A}(S))- \inf_{h\in \mathcal{H}}R_D(h)\big] \leq \epsilon_{\rm cons}(n),
\end{equation}
An algorithm $\mathbf{A}$ for which this holds is said to be consistent with respect to $\mathscr{D}_{XY}$.
\end{Definition}
 Definition \ref{D0} is a natural extension of agnostic PAC learnability of supervised learning \citep{shalev2010learnability}. If for any $D_{XY}\in \mathscr{D}_{XY}$, $\pi^{\rm out}=0$, then Definition \ref{D2} is the agnostic PAC learnability of supervised learning. Although the expression of Definition \ref{D0} is different from the normal definition of agnostic PAC learning in \cite{shalev2014understanding}, one can prove that they are equivalent if $\ell$ is bounded, see Appendix \ref{SB.3}.

\noindent Since OOD data are unavailable, it is impossible to obtain any information about the class-prior probability $\pi^{\rm out}$. Furthermore, in the real world, it is possible that $\pi^{\rm out}$
can be any value in $[0,1)$. Therefore, the imbalance issue between ID and OOD distributions, and the priori-unknown issue (\textit{i.e.}, $\pi^{\rm out}$ is unknown) are the core challenges. To mitigate this challenge, we revise Eq. \eqref{issue-definition1} as follows:
\begin{equation}\label{issue-definition2}
\begin{split}
   & \mathbb{E}_{S\sim D^n_{X_{\rm I}Y_{\rm I}}}\big[ R_D^{\alpha}(\mathbf{A}(S))- \inf_{h\in \mathcal{H}}R_D^{\alpha}(h)\big] \leq \epsilon_{\rm cons}(n),~\forall \alpha\in[0,1].
    \end{split}
\end{equation}
If an algorithm $\mathbf{A}$ satisfies Eq. \eqref{issue-definition2}, then the imbalance issue and the prior-unknown issue disappear. That is, $\mathbf{A}$ can simultaneously classify the ID data and detect the OOD data well. Based on the above discussion, we define the strong learnability of OOD detection under risk as follows:
\begin{Definition}[Strong Learnability of OOD Detection under Risk]\label{D2}
Given a domain space $\mathscr{D}_{XY}$ and a hypothesis space $\mathcal{H}\subset  \{ { h}: \mathcal{X}\rightarrow \mathcal{Y}_{\rm all}\}$, we say  OOD detection is \textbf{strongly learnable} in  $\mathscr{D}_{XY}$ for $\mathcal{H}$, if there exists an algorithm $\mathbf{A}: \cup_{n=1}^{+\infty}(\mathcal{X}\times\mathcal{Y})^n\rightarrow \mathcal{H}$ and a monotonically decreasing
sequence $\epsilon_{\rm cons}(n)$, such that $\epsilon_{\rm cons}(n)\rightarrow 0$, as $n\rightarrow +\infty$, and for any domain $D_{XY}\in \mathscr{D}_{XY}$,
\begin{equation*}
    \mathbb{E}_{S\sim D^n_{X_{\rm I}Y_{\rm I}}}\big[ R_D^{\alpha}(\mathbf{A}(S))- \inf_{h\in \mathcal{H}}R_D^{\alpha}(h)\big] \leq \epsilon_{\rm cons}(n), ~\forall \alpha \in[0,1].
\end{equation*}
\end{Definition}

\noindent \textbf{Remark.}
    In Theorem \ref{T1}, we have shown that the strong learnability of OOD detection under risk is equivalent to the learnability of OOD detection under risk, if the domain space $\mathscr{D}_{XY}$ is a \textit{prior-unknown space} (see Definition \ref{D3}). In this paper, we mainly discuss the learnability in the prior-unknown space. Therefore, \textit{when we mention that OOD detection is learnable under risk, we also mean that OOD detection is strongly learnable under risk}.

\subsection{Learnability under AUC} 

Based on AUC defined in Eq.~\eqref{auc}, OOD detection aims to select a ranking function $r \in \mathcal{R}$ with approximately maximal AUC, based on finite data. Generally, we expect the approximation to get better, with the increase in sample size. Algorithms achieving this are said to be consistent under AUC. Formally, we have: 
\begin{Definition}[Learnability of OOD Detection under AUC]\label{D0_auc}
Given a domain space $\mathscr{D}_{XY}$, a ranking function space $\mathcal{R}\subset  \{ r: \mathcal{X}\rightarrow \mathbb{R}\}$, we say  OOD detection is \textbf{learnable} in  $\mathscr{D}_{XY}$ for $\mathcal{R}$ under AUC, if there exists an algorithm $\mathbf{A}: \cup_{n=1}^{+\infty}(\mathcal{X}\times\mathcal{Y})^n\rightarrow \mathcal{R}$ and a monotonically decreasing
sequence $\epsilon_{\rm cons}(n)$, such that $\epsilon_{\rm cons}(n)\rightarrow 0$, as $n\rightarrow +\infty$, and for any domain $D_{XY}\in \mathscr{D}_{XY}$,
\begin{equation}\label{issue-definition1_auc}
    \mathbb{E}_{S\sim D^n_{X_{\rm I}Y_{\rm I}}}\big[\sup_{r\in \mathcal{R}}{\rm AUC}(r;D_{XY})- {\rm AUC}(\mathbf{A}(S);D_{XY})\big] \leq \epsilon_{\rm cons}(n).
\end{equation}
An algorithm $\mathbf{A}$ for which this holds is said to be ${\rm AUC}$ consistent with respect to $\mathscr{D}_{XY}$.
\end{Definition}
Definition \ref{D0_auc} is another version of Definition~\ref{D0}. Here, we use AUC instead of risk to evaluate the performance of the OOD detection. Note that the learnability of OOD detection under AUC is not influenced by the $\pi_{\rm out}$, as AUC is directly calculated by using $D_{X_{\rm I}}$ and $D_{X_{\rm O}}$. 

\subsection{Goal of Our Theory}
Note that the agnostic PAC learnability of supervised learning is distribution-free, \textit{i.e.}, the domain space $\mathscr{D}_{XY}$ consists of all domains. However, due to the absence of OOD data during the training process \citep{liang2018enhancing,ren2019likelihood,Fang2021learning}, it is obvious that the learnability of OOD detection is not distribution-free (\textit{i.e.}, Theorem \ref{T4} and {Theorem \ref{T4_auc}}). In fact, we discover that the learnability of OOD detection is deeply correlated with the relationship between the domain space $\mathscr{D}_{XY}$ and the hypothesis space $\mathcal{H}$ (or the ranking function space $\mathcal{R}$). That is, OOD detection is learnable only when the domain space $\mathscr{D}_{XY}$ and the hypothesis space $\mathcal{H}$ (or the ranking function space $\mathcal{R}$) satisfy some special conditions, \textit{e.g.,} Conditions \ref{C1}, \ref{Con2} (under risk), {Conditions} \ref{C1_auc} (under AUC). We present our goal as follows:
\\
$~~~~~~~~~$\fbox{
\parbox{0.81\textwidth}{
\textit{\textbf{Goal:} given a hypothesis space $\mathcal{H}$ $($or a ranking function space $\mathcal{R})$, and several representative domain spaces $\mathscr{D}_{XY}$, what are the \textbf{conditions} to ensure the learnability of OOD detection? Furthermore, if possible, we hope that these conditions are \textbf{necessary and sufficient} in some scenarios.}
}
}

\vspace{0.4em}
\noindent Therefore, compared to the agnostic PAC learnability of supervised learning, our theory doesn't focus on the distribution-free case, but focuses on discovering essential conditions to guarantee the learnability of OOD detection in several representative and practical domain spaces $\mathscr{D}_{XY}$. By these essential conditions, we can know \textit{when} OOD detection can be successful in real applications.
\section{Learning in Priori-unknown Spaces}

We first investigate a special space, called {prior-unknown} space 
and prove that if OOD detection is strongly learnable under risk or learnable under AUC in a space $\mathscr{D}_{XY}$, then one can discover a larger domain space, which is prior-unknown, to ensure the learnability of OOD detection under risk or AUC. These results imply that it is enough to study learnabiligy of OOD detection in the prior-unknown spaces. The prior-unknown space is as follows:

\begin{Definition}\label{D3}
Given a domain space $\mathscr{D}_{XY}$, we say $\mathscr{D}_{XY}$ is a priori-unknown space, if for any domain $D_{XY}\in \mathscr{D}_{XY}$ and any $\alpha \in [0,1)$, we have $D_{XY}^{\alpha}:=(1-\alpha)D_{X_{\rm I}Y_{\rm I}}+\alpha D_{X_{\rm O}Y_{\rm O}}\in \mathscr{D}_{XY}$.
\end{Definition}

\noindent Then the following theorem presents importance and necessity of  priori-unknown space.
\begin{restatable}{theorem}{thmCone}
\label{T1}
Given spaces $\mathscr{D}_{XY}$ and
$\mathscr{D}_{XY}'=\{D_{XY}^{\alpha}:\forall D_{XY}\in \mathscr{D}_{XY}, \forall \alpha\in [0,1)\}$, then
\\
1) $\mathscr{D}_{XY}'$ is a priori-unknown space and $\mathscr{D}_{XY}\subset \mathscr{D}_{XY}'$;\\
2) if $\mathscr{D}_{XY}$ is a priori-unknown space, then Definition \ref{D0} and Definition \ref{D2} are \textbf{equivalent};\\
3) OOD detection is strongly learnable in $\mathscr{D}_{XY}$ under risk \textbf{if and only if} OOD detection is learnable in $\mathscr{D}_{XY}'$ under risk; \\
4) OOD detection is learnable in $\mathscr{D}_{XY}$ under AUC \textbf{if and only if} OOD detection is learnable in $\mathscr{D}_{XY}'$ under AUC.
\end{restatable}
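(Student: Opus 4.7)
The four parts of Theorem~\ref{T1} all rest on three simple facts about the operation $D_{XY}\mapsto D_{XY}^{\alpha}$: it (i) preserves the ID training distribution $D_{X_{\rm I}Y_{\rm I}}$, (ii) preserves both marginals $D_{X_{\rm I}}$ and $D_{X_{\rm O}}$, and (iii) satisfies $R_{D^{\alpha}}(h)=R_{D}^{\alpha}(h)$, since with $Y_{\rm O}\equiv K+1$ one has $R_{D^{\alpha}}(h)=(1-\alpha)R_{D}^{\rm in}(h)+\alpha R_{D}^{\rm out}(h)$ by direct expansion. Once these facts are recorded, each of the four assertions becomes a short bookkeeping exercise.

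\textbf{Part~1.} Taking $\alpha=0$ in the definition of $\mathscr{D}_{XY}'$ gives $D_{XY}^{0}=D_{XY}$, so $\mathscr{D}_{XY}\subset \mathscr{D}_{XY}'$. To show $\mathscr{D}_{XY}'$ is prior-unknown, fix $D_{XY}^{\alpha}\in\mathscr{D}_{XY}'$ and $\beta\in[0,1)$. Because ID labels lie in $\mathcal{Y}$ and the OOD label is $K+1$, the ID and OOD components of $D_{XY}^{\alpha}$ are exactly $D_{X_{\rm I}Y_{\rm I}}$ and $D_{X_{\rm O}Y_{\rm O}}$, so $(D_{XY}^{\alpha})^{\beta}=D_{XY}^{\beta}\in\mathscr{D}_{XY}'$.

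\textbf{Part~2.} Strong learnability (Definition~\ref{D2}) trivially implies ordinary learnability (Definition~\ref{D0}) by specializing $\alpha$ to the class-prior $\pi^{\rm out}$ of $D$. For the converse, fix a consistent $\mathbf{A}$ with rate $\epsilon_{\rm cons}$, any $D\in\mathscr{D}_{XY}$, and any $\alpha\in[0,1)$. Since $\mathscr{D}_{XY}$ is prior-unknown, $D^{\alpha}\in\mathscr{D}_{XY}$; applying the consistency bound at $D^{\alpha}$ and invoking (i) and (iii) yields the $\alpha$-risk bound at $D$. The endpoint $\alpha=1$, which is not in the prior-unknown orbit, is handled by continuity: if $\ell$ is bounded by $M$ (the standing boundedness assumption invoked in Appendix~\ref{SB.3}), then $R_D^{\alpha}(h)$ is linear, and hence $M$-Lipschitz, in $\alpha$ uniformly in $h$, so the bound at $\alpha<1$ passes to $\alpha=1$ with an additive error $O(1-\alpha)$ that vanishes as $\alpha\uparrow 1$.

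\textbf{Parts~3 and~4.} For~(3), the forward direction is immediate from (iii): strong learnability on $\mathscr{D}_{XY}$ yields an $\alpha$-risk bound at every $(D,\alpha)$, which is the risk bound at every $D^{\alpha}\in\mathscr{D}_{XY}'$. Conversely, learnability on $\mathscr{D}_{XY}'$ combined with Parts~1 and~2 gives strong learnability on $\mathscr{D}_{XY}'\supset \mathscr{D}_{XY}$. For~(4), AUC depends only on $D_{X_{\rm I}}$ and $D_{X_{\rm O}}$, both preserved by $D\mapsto D^{\alpha}$ via (ii), and the training distribution is preserved via (i); thus every AUC bound on $D\in\mathscr{D}_{XY}$ is literally the AUC bound at every $D^{\alpha}\in\mathscr{D}_{XY}'$, and combined with $\mathscr{D}_{XY}\subset\mathscr{D}_{XY}'$ the two AUC-learnability notions coincide. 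The only subtle step in the whole argument is the $\alpha=1$ endpoint in Part~2: Definition~\ref{D3} supplies closure only for $\alpha\in[0,1)$, while Definition~\ref{D2} demands a uniform bound over the closed interval $[0,1]$, which is why the boundedness of $\ell$ must be invoked.
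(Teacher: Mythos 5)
Your proof is essentially correct and rests on the same bookkeeping identities as the paper's ($R_{D^{\alpha}}(h)=R_D^{\alpha}(h)$ together with preservation of the ID training distribution and of the marginals), but two points deserve comment. First, a small slip in Part~1: with the paper's definition $D_{XY}^{\alpha}=(1-\alpha)D_{X_{\rm I}Y_{\rm I}}+\alpha D_{X_{\rm O}Y_{\rm O}}$, taking $\alpha=0$ gives $D_{XY}^{0}=D_{X_{\rm I}Y_{\rm I}}$, \emph{not} $D_{XY}$; the inclusion $\mathscr{D}_{XY}\subset\mathscr{D}_{XY}'$ instead follows from writing $D_{XY}=D_{XY}^{\pi^{\rm out}}$ with $\pi^{\rm out}\in[0,1)$, which is how the paper argues. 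This is a one-line fix and does not affect the rest. Second, your handling of the endpoint $\alpha=1$ in Part~2 is genuinely different from the paper's: you use that $\alpha\mapsto R_D^{\alpha}(h)$ is affine with slope $R_D^{\rm out}(h)-R_D^{\rm in}(h)$ bounded uniformly in $h$ by $M=\max\ell$ (valid since $\mathcal{Y}_{\rm all}$ is finite), so both $\mathbb{E}_{S}R_D^{\alpha}(\mathbf{A}(S))$ and $\inf_{h}R_D^{\alpha}(h)$ lie within $M(1-\alpha)$ of their $\alpha=1$ values, and letting $\alpha\uparrow 1$ transfers the consistency bound with a vanishing $2M(1-\alpha)$ penalty. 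The paper instead establishes $\lim_{\alpha\to 1}\inf_{h}R_D^{\alpha}(h)=\inf_{h}R_D^{\rm out}(h)$ by a $\liminf$/$\limsup$ argument with near-minimizers $h_{\epsilon}$ and invokes dominated convergence for the algorithm's expected risk; your route is shorter at the harmless cost of explicitly invoking boundedness of the loss, whereas the paper's limit argument for the infimum does not need a uniform bound. Your Parts~3 and~4, which the paper dismisses as immediate consequences, are filled in correctly.
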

\noindent The second result of Theorem \ref{T1} bridges the learnability and strong learnability under risk, which implies that if an algorithm $\mathbf{A}$ is consistent with respect to a prior-unknown space, then this algorithm $\mathbf{A}$ can address the imbalance issue between ID and OOD distributions, and the priori-unknown issue well. The fourth result of Theorem \ref{T1} shows that the learnability of OOD detection under AUC is not influenced by the unknown class-prior probability $\pi^{\rm out}$. Based on Theorem \ref{T1}, we focus on our theory in the prior-unknown spaces. To demystify the learnability of OOD detection, we introduce five representative priori-unknown spaces:

\begin{itemize}
    \item Single-distribution space $\mathscr{D}_{XY}^{D_{XY}}$.  For a domain $D_{XY}$, $\mathscr{D}_{XY}^{D_{XY}}:=\{D_{XY}^{\alpha}: \forall \alpha \in [0,1)\}$.
    \item Total space $\mathscr{D}_{XY}^{\rm all}$, which consists of all domains.
    \item Separate space $\mathscr{D}_{XY}^{s}$, which consists of all domains that satisfy the separate condition, that is for any $D_{XY}\in \mathscr{D}_{XY}^{s}$,
    $
        {\rm supp} D_{X_{\rm O}}\cap   {\rm supp} D_{X_{\rm I}}=\emptyset,
    $
    where ${\rm supp}D$ means the support set of a distribution $D$.
    \item Finite-ID-distribution space $\mathscr{D}_{XY}^{F}$, which is a prior-unknown space satisfying that the number of distinct ID joint distributions $D_{X_{\rm I}Y_{\rm I}}$ in $\mathscr{D}_{XY}^{F}$ is finite, \textit{i.e.}, $|\{D_{X_{\rm I}Y_{\rm I}}:\forall D_{XY}\in \mathscr{D}_{XY}^{F}\}|<+\infty$.
    \item Density-based space $\mathscr{D}^{\mu,b}_{XY}$, which is a prior-unknown space consisting of some domains satisfying that: for any $D_{XY}$, there exists a density function $f$ with $1/b\leq f\leq b$ in ${\rm supp}\mu$ and $0.5*D_{X_{\rm I}}+0.5*D_{X_{\rm O}}= \int f {\rm d}\mu$, where $\mu$ is a measure defined over $\mathcal{X}$. Note that if $\mu$ is discrete, then $D_{X}$ is a discrete distribution; and if $\mu$ is the Lebesgue measure, then $D_{X}$ is a continuous distribution.
\end{itemize}

\noindent The above representative spaces widely exist in real applications. For example, 1) if the images from different semantic labels with different styles are clearly different, then those images can form a distribution belonging to a separate space $\mathscr{D}_{XY}^{s}$; and 2) when designing an algorithm, we only have finite ID datasets, \textit{e.g.}, CIFAR-10, MNIST, SVHN, and ImageNet, to build a model. Then, finite-ID-distribution space $\mathscr{D}_{XY}^{F}$ can handle this real scenario. Note that the single-distribution space is a special case of the finite-ID-distribution space. In this paper, we mainly discuss these five spaces.

\section{Impossibility Theorems for OOD Detection}

In this section, we first give a necessary condition for the learnability of OOD detection. Then, we show this necessary condition does not hold in the total space $\mathscr{D}_{XY}^{\rm all}$ and the separate space $\mathscr{D}_{XY}^{s}$. 

\subsection{Necessary Conditions for Learnability of OOD Detection} 

We first find a necessary condition for the learnability of OOD detection under risk (AUC), \textit{i.e.}, Condition \ref{C1} (Condition \ref{C1_auc}).
\begin{Condition}[Linear Condition under Risk]\label{C1}
For any $D_{XY}\in \mathscr{D}_{XY}$ and any $\alpha \in [0,1)$,
\begin{equation*}
   \inf_{h\in \mathcal{H}}R_D^{\alpha}(h)= (1-\alpha)\inf_{h\in \mathcal{H}}R_D^{\rm in}(h)+\alpha \inf_{h\in \mathcal{H}}R_D^{\rm out}(h).
\end{equation*}
\end{Condition}
The importance of Condition \ref{C1} is reflected by Theorem \ref{T3}, showing that Condition \ref{C1} is a \textit{necessary and sufficient} condition for the learnability of OOD detection under risk if the $\mathscr{D}_{XY}$ is the single-distribution space. 

\begin{restatable}{theorem}{thmCondoneIff}
\label{T3}
Given a hypothesis space $\mathcal{H}$ and a domain $D_{XY}$, OOD detection is learnable under risk in the single-distribution space $\mathscr{D}_{XY}^{D_{XY}}$ for $\mathcal{H}$ \textbf{if and only if} Condition \ref{C1} holds.
\end{restatable}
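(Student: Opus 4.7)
The plan is to exploit the fact that $\mathscr{D}_{XY}^{D_{XY}}$ is a priori-unknown space by construction: every element is of the form $D^{\alpha'}$ for the same ID/OOD pair, and $(D^{\alpha'})^{\alpha} = D^{\alpha}$ lies in the space for every $\alpha\in[0,1)$. Hence Theorem~\ref{T1}(2) tells me that learnability and strong learnability coincide on this space, so in both directions I may freely use the strong-consistency bound $\mathbb{E}_S[R_D^{\alpha}(\mathbf{A}(S))-\inf_{h\in\mathcal{H}} R_D^{\alpha}(h)]\le \epsilon_{\rm cons}(n)$ for \emph{every} $\alpha\in[0,1]$. A further simplification is that every $D'\in\mathscr{D}_{XY}^{D_{XY}}$ shares ID and OOD joints with $D$, so $R^{\rm in}_{D'}=R^{\rm in}_{D}$, $R^{\rm out}_{D'}=R^{\rm out}_{D}$, and Condition~\ref{C1} on $\mathscr{D}_{XY}^{D_{XY}}$ collapses to a single family of identities in $\alpha$.

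For the necessity direction, I would take a consistent $\mathbf{A}$ and specialise the strong-learnability bound to $\alpha=0$ and $\alpha=1$, obtaining $\mathbb{E}_S R^{\rm in}_D(\mathbf{A}(S))\le \inf_h R^{\rm in}_D(h)+\epsilon(n)$ and $\mathbb{E}_S R^{\rm out}_D(\mathbf{A}(S))\le \inf_h R^{\rm out}_D(h)+\epsilon(n)$. Then for any $\alpha\in[0,1)$, using $\mathbf{A}(S)\in\mathcal{H}$ and linearity of $R^{\alpha}_D$, I get $\inf_h R^{\alpha}_D(h)\le \mathbb{E}_S R^{\alpha}_D(\mathbf{A}(S))=(1-\alpha)\mathbb{E}_S R^{\rm in}_D(\mathbf{A}(S))+\alpha\mathbb{E}_S R^{\rm out}_D(\mathbf{A}(S))\le (1-\alpha)\inf_h R^{\rm in}_D(h)+\alpha\inf_h R^{\rm out}_D(h)+\epsilon(n)$. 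Letting $n\to\infty$ gives the non-trivial direction of Condition~\ref{C1}; the reverse inequality is the elementary $\inf(f+g)\ge \inf f+\inf g$ applied to $f=(1-\alpha)R^{\rm in}_D$ and $g=\alpha R^{\rm out}_D$.

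For sufficiency, I would apply Condition~\ref{C1} at $\alpha=1/2$: there is a sequence $h_k\in\mathcal{H}$ with $R^{\rm in}_D(h_k)+R^{\rm out}_D(h_k)\to \inf R^{\rm in}_D+\inf R^{\rm out}_D$. Since each summand is bounded below by its own infimum, the non-negative gaps $a_k:=R^{\rm in}_D(h_k)-\inf R^{\rm in}_D$ and $b_k:=R^{\rm out}_D(h_k)-\inf R^{\rm out}_D$ must \emph{simultaneously} converge to $0$. I then define the \emph{data-independent} algorithm $\mathbf{A}(S):=h_{k(n)}$ where $n=|S|$ and $k(n)\to\infty$ slowly. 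Rewriting $\inf_h R^{\alpha}_D(h)$ via Condition~\ref{C1}, the excess risk becomes $(1-\alpha)a_{k(n)}+\alpha b_{k(n)}\le \max(a_{k(n)},b_{k(n)})$ uniformly in $\alpha\in[0,1]$, and a monotone-majorant smoothing produces the required decreasing rate $\epsilon_{\rm cons}(n)$.

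The main obstacle is conceptual rather than technical: the sufficiency construction is counter-intuitive because the consistent algorithm does not need to look at the training data. This works only because the single-distribution space is indexed by a single $D$, so the distribution is effectively known at design time and Condition~\ref{C1} already pins down a hard-codable good sequence. The key mathematical pivot is that non-negativity of $a_k$ and $b_k$ converts the joint near-minimization at $\alpha=1/2$ into \emph{simultaneous} convergence of both marginal gaps; without this observation one might fear Condition~\ref{C1} is compatible with $h_k$ trading $R^{\rm in}$ off against $R^{\rm out}$, which would collapse the argument.
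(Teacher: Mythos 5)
Your proposal is correct and follows essentially the same route as the paper: necessity is obtained by invoking the learnability/strong-learnability equivalence on the priori-unknown single-distribution space and specializing the consistency bound to $\alpha=0$ and $\alpha=1$ (the paper packages this as Lemma~\ref{C1andC2}), while sufficiency extracts from the $\alpha=1/2$ instance of Condition~\ref{C1} a sequence of simultaneous near-minimizers of $R_D^{\rm in}$ and $R_D^{\rm out}$ (the paper's Lemma~\ref{Lemma1}) and hard-codes it into a data-independent algorithm indexed by sample size. Your observation that non-negativity of the two gaps forces them to vanish simultaneously, and that the algorithm need not look at the data because the space is indexed by a single domain, is exactly the mechanism the paper uses.
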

\noindent Theorem \ref{T3} implies that Condition \ref{C1} is important for the learnability of OOD detection under risk. Due to the simplicity of single-distribution space, Theorem \ref{T3} implies that  Condition \ref{C1} is the necessary condition for the learnability of OOD detection under risk in the prior-unknown space, see Lemma \ref{C1andC2} in Appendix \ref{SD}. Then, we focus on finding a necessary condition for the learnability of OOD detection under AUC. The condition is similar to Condition~\ref{C1} but replacing risk with AUC. Note that, for simplicity, in the following of this paper, we use ${\rm AUC}(r; D_{X_{\rm I}}, D_{X_{\rm O}})$ to present ${\rm AUC}(r;D_{XY})$.
\begin{Condition}[Linear Condition under AUC]\label{C1_auc}
For any $D_{XY}=\beta D_{X_{\rm I}Y_{\rm I}}+(1-\beta)D_{X_{\rm O}Y_{\rm O}},$ $D_{XY}'= \beta' D_{X_{\rm I}Y_{\rm I}}+(1-\beta')D_{X_{\rm O}Y_{\rm O}}' \in \mathscr{D}_{XY}$, then for any $\alpha \in [0,1)$,
\begin{equation*}
\begin{split}
 \alpha \sup_{r\in \mathcal{R}} {\rm AUC}(r; D_{X_{\rm I}}, D_{X_{\rm O}}) +(1-\alpha) \sup_{r\in \mathcal{R}} {\rm AUC}(r; D_{X_{\rm I}}, D_{X_{\rm O}}') =  \sup_{r\in \mathcal{R}} {\rm AUC}(r; D_{X_{\rm I}}, {D}_{X_{\rm O}}^{\alpha}),
       \end{split}
\end{equation*}
where ${D}_{X_{\rm O}}^{\alpha} = \alpha D_{X_{\rm O}} + (1-\alpha) D_{X_{\rm O}}'$.
\end{Condition}

\noindent The importance of Condition \ref{C1_auc} is reflected in Theorem \ref{T3_auc}, showing that Condition \ref{C1_auc} is a \textit{necessary} condition for the learnability of OOD detection under AUC if the $\mathscr{D}_{XY}$ is a simple distribution space. 

\begin{restatable}{theorem}{thmCondoneIffauc}
\label{T3_auc}
    Given a ranking function space $\mathcal{R}$ and a domain space $\mathscr{D}_{XY}$, if OOD detection is learnable under AUC for $\mathcal{R}$ in $\mathscr{D}_{XY}$, then for any $D_{XY}, D_{XY}'\in \mathscr{D}_{XY}$, the linear condition under AUC (i.e., Condition \ref{C1_auc}) holds.
\end{restatable}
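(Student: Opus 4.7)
The plan is to prove the equality in Condition \ref{C1_auc} via separate $\leq$ and $\geq$ inequalities, with the key ingredient being the linearity of AUC in its OOD marginal argument: for any $r \in \mathcal{R}$, unfolding the double expectation in Eq.~\eqref{auc} and using $D_{X_{\rm O}}^{\alpha} = \alpha D_{X_{\rm O}} + (1-\alpha) D_{X_{\rm O}}'$ gives
\begin{equation*}
    {\rm AUC}(r; D_{X_{\rm I}}, D_{X_{\rm O}}^{\alpha}) = \alpha\, {\rm AUC}(r; D_{X_{\rm I}}, D_{X_{\rm O}}) + (1-\alpha)\, {\rm AUC}(r; D_{X_{\rm I}}, D_{X_{\rm O}}').
\end{equation*}
The ``$\leq$'' direction of Condition \ref{C1_auc} then drops out immediately by taking $\sup_{r\in\mathcal{R}}$ on both sides and invoking subadditivity of the supremum. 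This half uses neither the learnability hypothesis nor any further structure on $\mathscr{D}_{XY}$.

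For the ``$\geq$'' direction, I would invoke the AUC-consistent algorithm $\mathbf{A}$ guaranteed by Definition \ref{D0_auc}, with vanishing rate $\epsilon_{\rm cons}(n)$. Since $D_{XY}$ and $D_{XY}'$ share a common ID joint distribution $D_{X_{\rm I}Y_{\rm I}}$ by the hypothesis of Condition \ref{C1_auc}, a single i.i.d. sample $S \sim D^n_{X_{\rm I}Y_{\rm I}}$ qualifies as training data under both domains, so the consistency bound in Eq.~\eqref{issue-definition1_auc} gives $\mathbb{E}_S[{\rm AUC}(\mathbf{A}(S); D_{X_{\rm I}}, D_{X_{\rm O}})] \geq \sup_r {\rm AUC}(r; D_{X_{\rm I}}, D_{X_{\rm O}}) - \epsilon_{\rm cons}(n)$ and analogously for $D_{X_{\rm O}}'$. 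Applying the linearity identity to $\mathbf{A}(S)$, taking expectation over $S$, and combining these lower bounds with the pointwise upper bound $\mathbb{E}_S[{\rm AUC}(\mathbf{A}(S); D_{X_{\rm I}}, D_{X_{\rm O}}^{\alpha})] \leq \sup_r {\rm AUC}(r; D_{X_{\rm I}}, D_{X_{\rm O}}^{\alpha})$ (valid because $\mathbf{A}(S) \in \mathcal{R}$ for every $S$) yields
\begin{equation*}
    \sup_{r} {\rm AUC}(r; D_{X_{\rm I}}, D_{X_{\rm O}}^{\alpha}) \geq \alpha \sup_{r} {\rm AUC}(r; D_{X_{\rm I}}, D_{X_{\rm O}}) + (1-\alpha) \sup_{r} {\rm AUC}(r; D_{X_{\rm I}}, D_{X_{\rm O}}') - \epsilon_{\rm cons}(n).
\end{equation*}
Letting $n \to \infty$ closes the slack and completes the proof.

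The argument is essentially a single linearity computation together with the standard ``sample-and-take-expectation'' trick for passing from per-domain consistency to a statement about suprema, so I do not anticipate a genuine obstacle. The only subtle point worth flagging is that the hypothetical mixture domain with OOD marginal $D_{X_{\rm O}}^{\alpha}$ need not itself belong to $\mathscr{D}_{XY}$; this is harmless because the argument only invokes consistency against the two given members $D_{XY}, D_{XY}' \in \mathscr{D}_{XY}$ and uses only a trivial supremum bound against the mixture.
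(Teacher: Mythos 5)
Your proposal is correct and follows essentially the same route as the paper's proof: both rest on the linearity of ${\rm AUC}(r;D_{X_{\rm I}},\cdot)$ in the OOD marginal, obtain the nontrivial inequality by feeding the same ID sample $S\sim D^n_{X_{\rm I}Y_{\rm I}}$ to the consistent algorithm and using its per-domain guarantees on $D_{XY}$ and $D_{XY}'$ together with the trivial bound ${\rm AUC}(\mathbf{A}(S);D_{X_{\rm I}},D_{X_{\rm O}}^{\alpha})\leq \sup_{r\in\mathcal{R}}{\rm AUC}(r;D_{X_{\rm I}},D_{X_{\rm O}}^{\alpha})$, and then let $n\to\infty$. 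Your closing remark that the mixture domain need not lie in $\mathscr{D}_{XY}$ is exactly the (implicit) point the paper's argument also relies on.
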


\noindent Since the Condition~\ref{C1_auc} is a necessary condition for the learnability of OOD detection under AUC, this condition provides a new way to check if an OOD detection is learnable under AUC. Namely, if Condition~\ref{C1_auc} does not hold, OOD detection is not learnable under AUC.

\subsection{Impossibility Theorems under Risk} 

In this subsection, we first study whether Condition \ref{C1} holds in the total space $\mathscr{D}_{XY}^{\rm all}$. If Condition \ref{C1} does not hold, then OOD detection is not learnable under risk.
Theorem~\ref{T5} shows that Condition \ref{C1} is not always satisfied, 
especially, when there is an overlap between the ID and OOD distributions:
\begin{Definition}[Overlap Between ID and OOD]
\label{def:overlap}
We say a domain $D_{XY}$ has overlap between ID and OOD distributions, if there is a $\sigma$-finite measure $\tilde{\mu}$ such that $D_X$ is absolutely continuous with respect to $\tilde{\mu}$, and
$
 \tilde{\mu}(A_{\rm overlap})>0,~\textit{ where}
$
$A_{\rm overlap}= \{\mathbf{x}\in \mathcal{X}:f_{\rm I}(\mathbf{x})>0~ \textit{and}~ f_{\rm O}(\mathbf{x})>0\}$. Here $f_{\rm I}$ and $f_{\rm O}$ are the representers of $D_{X_{\rm I}}$ and $D_{X_{\rm O}}$ in Radon–Nikodym Theorem \citep{cohn2013measure}, 
\begin{equation*}
  D_{X_{\rm I}} = \int f_{\rm I} {\rm d}\tilde{\mu},~~~ D_{X_{\rm O}} = \int f_{\rm O} {\rm d}\tilde{\mu}.
\end{equation*}
\end{Definition}
\begin{restatable}{lemma}{thmImpOne}
\label{T5}
Given a hypothesis space $\mathcal{H}$ and  a prior-unknown space $\mathscr{D}_{XY}$, if there is $D_{XY}\in \mathscr{D}_{XY}$, which has overlap between ID and OOD, and $\inf_{h\in \mathcal{H}} R_{D}^{\rm in}(h)=0$, $\inf_{h\in \mathcal{H}} R_{D}^{\rm out}(h)=0$, then Condition \ref{C1} does not hold. Therefore, OOD detection is not learnable under risk in $\mathscr{D}_{XY}$ for $\mathcal{H}$.
\end{restatable}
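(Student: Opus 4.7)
The approach is to contradict Condition~\ref{C1} directly at the given $D_{XY}$, and then transfer failure of this necessary condition to failure of learnability. Under the hypotheses $\inf_h R_D^{\rm in}(h)=0$ and $\inf_h R_D^{\rm out}(h)=0$, Condition~\ref{C1} would force $\inf_h R_D^\alpha(h)=0$ for every $\alpha\in[0,1)$. My plan is to show that the overlap assumption instead produces a strictly positive, $h$-independent lower bound on $R_D^\alpha(h)$ whenever $\alpha\in(0,1)$, giving the desired contradiction.

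The key estimate is a pointwise loss bound on $A_{\rm overlap}$. Because $\mathcal{Y}_{\rm all}$ is finite and $\ell$ vanishes only on the diagonal, $\ell_{\min}:=\min_{y_1\ne y_2}\ell(y_1,y_2)$ is strictly positive. Fix $h\in\mathcal{H}$ and $\mathbf{x}\in A_{\rm overlap}$. If $h(\mathbf{x})=K+1$, then the conditional ID loss $g_{\rm in}(\mathbf{x}):=\mathbb{E}_{y\sim D_{Y_{\rm I}|X_{\rm I}=\mathbf{x}}}\ell(h(\mathbf{x}),y)$ is at least $\ell_{\min}$, because every ID label lies in $\mathcal{Y}$ and hence differs from $K+1$; otherwise $h(\mathbf{x})\in\mathcal{Y}$ and the OOD loss $g_{\rm out}(\mathbf{x}):=\ell(h(\mathbf{x}),K+1)\ge\ell_{\min}$. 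Weighting by $(1-\alpha)f_{\rm I}$ and $\alpha f_{\rm O}$ and using $f_{\rm I},f_{\rm O}>0$ on $A_{\rm overlap}$, either case yields
\[
(1-\alpha)g_{\rm in}(\mathbf{x})f_{\rm I}(\mathbf{x})+\alpha g_{\rm out}(\mathbf{x})f_{\rm O}(\mathbf{x})\ge \ell_{\min}\cdot\min\bigl((1-\alpha)f_{\rm I}(\mathbf{x}),\,\alpha f_{\rm O}(\mathbf{x})\bigr).
\]
Integrating over $A_{\rm overlap}$ against $\tilde{\mu}$ and using the identity $R_D^\alpha(h)=\int[(1-\alpha)g_{\rm in}f_{\rm I}+\alpha g_{\rm out}f_{\rm O}]\,{\rm d}\tilde{\mu}$, I obtain
\[
R_D^\alpha(h)\ge \ell_{\min}\int_{A_{\rm overlap}}\min\bigl((1-\alpha)f_{\rm I},\,\alpha f_{\rm O}\bigr)\,{\rm d}\tilde{\mu}=:c(\alpha).
\]
Since $\tilde{\mu}(A_{\rm overlap})>0$ and the integrand is strictly positive on $A_{\rm overlap}$ for $\alpha\in(0,1)$, $c(\alpha)>0$. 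Taking the infimum over $h\in\mathcal{H}$ contradicts the Condition~\ref{C1} equality $\inf_h R_D^\alpha(h)=0$.

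Consequently, Condition~\ref{C1} fails at $D_{XY}$. The single-distribution space $\mathscr{D}_{XY}^{D_{XY}}=\{D_{XY}^\alpha:\alpha\in[0,1)\}$ is contained in $\mathscr{D}_{XY}$ because $\mathscr{D}_{XY}$ is prior-unknown, so Theorem~\ref{T3} yields that OOD detection is not learnable under risk on $\mathscr{D}_{XY}^{D_{XY}}$. Since any algorithm that is consistent on $\mathscr{D}_{XY}$ with rate $\epsilon_{\rm cons}(n)$ is automatically consistent on any sub-collection at the same rate, non-learnability transfers up to $\mathscr{D}_{XY}$. The main technical obstacle is the pointwise step: turning the purely set-theoretic overlap hypothesis into a uniform, $h$-independent positive lower bound on $R_D^\alpha$, for which the dichotomy between $h(\mathbf{x})\in\mathcal{Y}$ and $h(\mathbf{x})=K+1$ is the essential device.
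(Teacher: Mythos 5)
Your proof is correct and follows essentially the same route as the paper's: both establish an $h$-independent strictly positive lower bound on $R_D^{\alpha}(h)$ over the overlap region (the paper via the label-wise constant $c_{\alpha}$ and a truncated set $A_{m_0}$ of positive measure, you via the dichotomy $h(\mathbf{x})=K+1$ versus $h(\mathbf{x})\in\mathcal{Y}$ and positivity of $\int_{A_{\rm overlap}}\min((1-\alpha)f_{\rm I},\alpha f_{\rm O})\,{\rm d}\tilde{\mu}$), then contradict the linearity forced by Condition \ref{C1}. Your final transfer step routes through Theorem \ref{T3} and restriction to the single-distribution subspace, whereas the paper invokes Lemma \ref{C1andC2} directly, but these rest on the same necessity argument.
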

Lemma~\ref{T5} clearly shows that under proper conditions, Condition \ref{C1} does not hold, if there exists a domain whose ID and OOD distributions have overlap. By Lemma \ref{T5}, we can obtain that the OOD detection is not learnable in the total space $\mathscr{D}^{\rm all}_{XY}$ for any non-trivial hypothesis space $\mathcal{H}$.

%
\begin{restatable}[Impossibility Theorem for Total Space under Risk]{theorem}{thmImpTotal}
\label{T4}
OOD detection is not learnable under risk in the total space $\mathscr{D}^{\rm all}_{XY}$ for $\mathcal{H}$, if $|\phi\circ\mathcal{H}|>1$, where $\phi$ maps ID labels to $1$ and maps OOD labels to $2$.
\end{restatable}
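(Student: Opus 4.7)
The strategy is to invoke Lemma \ref{T5}, which already reduces non-learnability to the existence of a single domain $D_{XY}\in \mathscr{D}_{XY}^{\rm all}$ satisfying three conditions: (i) ID/OOD overlap in the sense of Definition \ref{def:overlap}, (ii) $\inf_{h\in\mathcal{H}}R_D^{\rm in}(h)=0$, and (iii) $\inf_{h\in\mathcal{H}}R_D^{\rm out}(h)=0$. Because $\mathscr{D}_{XY}^{\rm all}$ is the space of all domains, we are completely free in the construction, and the hypothesis $|\phi\circ\mathcal{H}|>1$ is precisely what we need to make both infima vanish simultaneously at a shared point.

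The assumption $|\phi\circ\mathcal{H}|>1$ gives two hypotheses $h_1,h_2\in\mathcal{H}$ whose $\phi$-images disagree at some point $\mathbf{x}_0\in\mathcal{X}$. Without loss of generality, $h_1(\mathbf{x}_0)\in\mathcal{Y}$ (an ID label) and $h_2(\mathbf{x}_0)=K+1$ (the OOD label). I would then take $D_{X_{\rm I}Y_{\rm I}}$ to be the Dirac mass at $(\mathbf{x}_0,h_1(\mathbf{x}_0))$ and $D_{X_{\rm O}Y_{\rm O}}$ to be the Dirac mass at $(\mathbf{x}_0,K+1)$, giving a domain $D_{XY}\in \mathscr{D}_{XY}^{\rm all}$. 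Using the defining property $\ell(y,y)=0$ we obtain $R_D^{\rm in}(h_1)=\ell(h_1(\mathbf{x}_0),h_1(\mathbf{x}_0))=0$ and $R_D^{\rm out}(h_2)=\ell(K+1,K+1)=0$, so both infima are zero.

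To verify the overlap condition, take $\tilde{\mu}=\delta_{\mathbf{x}_0}$, which is a finite and hence $\sigma$-finite measure on $\mathcal{X}$. Both $D_{X_{\rm I}}$ and $D_{X_{\rm O}}$ are absolutely continuous with respect to $\tilde{\mu}$ with Radon–Nikodym densities $f_{\rm I}(\mathbf{x}_0)=f_{\rm O}(\mathbf{x}_0)=1$. Consequently $A_{\rm overlap}=\{\mathbf{x}_0\}$ and $\tilde{\mu}(A_{\rm overlap})=1>0$, so all hypotheses of Lemma \ref{T5} are satisfied. Lemma \ref{T5} then forces Condition \ref{C1} to fail for this $D_{XY}$, and therefore OOD detection is not learnable under risk in $\mathscr{D}_{XY}^{\rm all}$ for $\mathcal{H}$.

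The only non-mechanical step is extracting the witness point $\mathbf{x}_0$ from the cardinality hypothesis $|\phi\circ\mathcal{H}|>1$, and this is immediate: two distinct functions in $\phi\circ\mathcal{H}$ must differ on at least one input. Beyond that, the construction is intentionally degenerate (point masses supported at a single $\mathbf{x}_0$), so all the required verifications collapse to evaluations of $\ell$ at identical arguments, and I expect no further technical obstacle.
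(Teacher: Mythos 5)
Your proof is correct and follows essentially the same route as the paper's: extract a point $\mathbf{x}_0$ where one hypothesis predicts an ID label and another predicts $K+1$, build a degenerate domain with both $D_{X_{\rm I}}$ and $D_{X_{\rm O}}$ supported at $\mathbf{x}_0$ so that both partial infima vanish and the overlap condition holds, then invoke Lemma \ref{T5} (the paper uses the mixture $0.5\,\delta_{(\mathbf{x}_1,h_1(\mathbf{x}_1))}+0.5\,\delta_{(\mathbf{x}_1,h_2(\mathbf{x}_1))}$, which is the same construction with $\pi^{\rm out}=0.5$). Your explicit verification of the overlap condition via $\tilde{\mu}=\delta_{\mathbf{x}_0}$ is a welcome detail the paper leaves implicit.
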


\noindent Since the overlaps between ID and OOD distributions may cause that Condition~\ref{C1} does not hold, we then consider studying
the learnability of OOD detection in the separate space $\mathscr{D}_{XY}^s$, where there are no overlaps between the ID
and OOD distributions. However, Theorem \ref{T12} shows that even if we consider the separate space, the OOD detection is still not learnable in some scenarios. Before introducing the impossibility theorem for separate space, \textit{i.e.}, Theorem \ref{T12}, we need a mild assumption:
\begin{assumption}[Separate Space for OOD under Risk]\label{ass1}
A hypothesis space $\mathcal{H}$ is separate for OOD data, if for each data point $\mathbf{x}\in \mathcal{X}$, there exists at least one hypothesis function $h_{\mathbf{x}}\in \mathcal{H}$ such that $h_{\mathbf{x}}(\mathbf{x})=K+1$.
\end{assumption}
Assumption \ref{ass1} means that every data point $\mathbf{x}$ has the possibility to be detected as OOD data. Assumption \ref{ass1} is mild and can be satisfied by many hypothesis spaces, {\textit{e.g.}, the FCNN-based hypothesis space} (Proposition \ref{Pr1} in Appendix \ref{SK}), score-based hypothesis space  (Proposition \ref{P2} in Appendix \ref{SK}) and universal kernel space. Next, we use \textit{Vapnik–Chervonenkis} (VC) dimension \citep{mohri2018foundations} to measure the size of hypothesis space, and study the learnability of OOD detection in $\mathscr{D}_{XY}^{s}$ based on the VC dimension. 

\begin{restatable}[Impossibility Theorem for Separate Space under Risk]{theorem}{thmImpSeptwo}
\label{T12}
 If Assumption \ref{ass1} holds, 
  ${\rm VCdim}(\phi\circ \mathcal{H})<+\infty$ and $\sup_{{ h}\in \mathcal{H}} |\{\mathbf{x}\in \mathcal{X}: { h}(\mathbf{x})\in \mathcal{Y}\}|=+\infty$,
  OOD detection is not learnable under risk in the separate space $\mathscr{D}_{XY}^{s}$ for $\mathcal{H}$, where $\phi$ maps ID labels to ${1}$ and maps OOD labels to $2$.
\end{restatable}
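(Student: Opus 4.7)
I would argue by contradiction. Suppose an algorithm $\mathbf{A}$ with $\epsilon_{\rm cons}(n)\to 0$ certifies learnability of OOD detection under risk in $\mathscr{D}_{XY}^s$. The structural tension I would exploit is between two of the three hypotheses: finite $\mathrm{VCdim}(\phi\circ\mathcal{H})$ rigidly bounds (via Sauer's lemma) the binary ID/OOD patterns realized by $\mathcal{H}$ on any finite point set, while $\sup_{h\in\mathcal{H}}|h^{-1}(\mathcal{Y})|=+\infty$ produces hypotheses with arbitrarily large ID-prediction regions. Assumption~\ref{ass1} is what ensures that every individual point is covered by some OOD-predicting hypothesis, so that the OOD infima in the construction behave well.

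\textbf{Adversarial construction.} Choose a reference ID observation $(\mathbf{x}_0,y_0)$ such that some hypothesis in $\mathcal{H}$ predicts $y_0$ at $\mathbf{x}_0$, and set $D_{X_{\rm I}Y_{\rm I}}=\delta_{(\mathbf{x}_0,y_0)}$. For each $N$, use the supremum condition to select $h_N\in\mathcal{H}$ with $|h_N^{-1}(\mathcal{Y})|\geq N+1$, and let $T_N:=h_N^{-1}(\mathcal{Y})\setminus\{\mathbf{x}_0\}$ so that $|T_N|\geq N$. For every finite $F\subset T_N$, define $D_{X_{\rm O}Y_{\rm O}}^F$ to be the uniform distribution on $F\times\{K+1\}$ and $D_{XY}^F:=\tfrac{1}{2}D_{X_{\rm I}Y_{\rm I}}+\tfrac{1}{2}D_{X_{\rm O}Y_{\rm O}}^F$; separation holds because $F\cap\{\mathbf{x}_0\}=\emptyset$, so $D_{XY}^F\in\mathscr{D}_{XY}^s$. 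Crucially, the training sample $S$ consists of $n$ copies of $(\mathbf{x}_0,y_0)$ independent of $F$, so the output $\hat h:=\mathbf{A}(S)$ is a random hypothesis not depending on $F$.

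\textbf{Defeating the algorithm.} An adversary who sees $\hat h$ can choose $F\subset T_N$ concentrated on points where $\hat h$ predicts an ID label. I would show, via a pigeonhole/averaging argument over subsets of $T_N$ combined with Sauer's lemma applied to $\phi\circ\mathcal{H}$, that either (i) the ID-region of $\hat h$ intersects $T_N$ in a set of size $\Omega(N)$, in which case placing $F$ there forces $R^{\rm out}_{D^F}(\hat h)$ to be bounded below by a constant multiple of $\inf_{y_1\neq y_2}\ell(y_1,y_2)$; or (ii) $\hat h$ predicts OOD on almost all of $T_N$, in which case the construction is modified by swapping $h_N$ for another $h_N'$ (guaranteed by the supremum condition) whose ID-region intersects $\hat h^{-1}(\mathcal{Y})$ in many points. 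In either branch, $R^{\rm out}_{D^F}(\hat h)$ is bounded below by a positive constant uniformly in $n$. Meanwhile, invoking the necessary Condition~\ref{C1} (which must hold on every single-distribution subspace $\mathscr{D}_{XY}^{D^F}\subset\mathscr{D}_{XY}^s$ by Theorem~\ref{T3}) together with Assumption~\ref{ass1} applied point-wise on $F\cup\{\mathbf{x}_0\}$, I would argue that $\inf_{h\in\mathcal{H}}R^\alpha_{D^F}(h)\to 0$ as $N\to\infty$, yielding an excess risk bounded away from zero and contradicting $\epsilon_{\rm cons}(n)\to 0$.

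\textbf{Main obstacle.} The most delicate step is ensuring that $\inf_{h}R^\alpha_{D^F}(h)$ vanishes \emph{while} a large adversarial $F$ is available \emph{simultaneously}. Assumption~\ref{ass1} supplies only point-wise OOD-covering hypotheses, not joint ones, so exhibiting a single $h'\in\mathcal{H}$ with $h'(\mathbf{x}_0)=y_0$ and $h'(F)=\{K+1\}$ is generally not possible. The intended resolution is to bypass this by using the linear Condition~\ref{C1} to decompose the $\alpha$-infimum into the in- and out-infima separately, each of which does tend to zero using Assumption~\ref{ass1}. Reconciling the quantifier order (the adversary chooses $F$ after seeing $\hat h$), the separation constraint, and the polynomial-versus-exponential balance between the Sauer trace of $\mathcal{H}$ on $T_N$ and the family of candidate subsets $F$ is where the finite VC dimension will carry the combinatorial weight of the argument.
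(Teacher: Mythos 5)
Your overall architecture (contradiction, Sauer's lemma on $\phi\circ\mathcal{H}$, forcing a tension between the in-distribution and out-of-distribution infima via the necessary linear condition) points in the right direction, but two steps break down. First, the adversarial family built on $D_{X_{\rm I}Y_{\rm I}}=\delta_{(\mathbf{x}_0,y_0)}$ is provably too weak to yield the theorem. Applying strong learnability (Theorem \ref{T1}) with $\alpha=1$ to the singleton domains $D^{\{\mathbf{x}\}}$, $\mathbf{x}\neq\mathbf{x}_0$, together with Assumption \ref{ass1}, forces $\hat h_n(\mathbf{x})=K+1$ for all $\mathbf{x}\neq\mathbf{x}_0$ once $\epsilon_{\rm cons}(n)<\min_{y\in\mathcal{Y}}\ell(y,K+1)$; so for large $n$ the output predicts ID on at most one point, your branch (i) is vacuous, and branch (ii) cannot be completed because $\hat h^{-1}(\mathcal{Y})$ is a single point. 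Concretely, take $K=1$ and $\mathcal{H}$ with $\phi\circ\mathcal{H}=\{\text{indicators of singletons}\}\cup\{\text{all-ID}\}$: this satisfies Assumption \ref{ass1}, has finite VC dimension, and has $\sup_h|h^{-1}(\mathcal{Y})|=+\infty$, yet every domain in your family $\{D^F\}$ is handled with zero risk by the fixed hypothesis ``ID at $\mathbf{x}_0$ only.'' The genuine obstruction only appears when the ID marginal is spread over a set $C_{\rm I}$ of size up to $m$; this is why the paper's proof (Lemma \ref{LemmaforTheorem5}) applies Sauer's lemma to a set $C$ of $m+1$ points on which some $\tilde h$ is all-ID, extracts a binary pattern not realized by $\phi\circ\mathcal{H}$ on $C$, and uses that pattern to split $C$ into the ID support $C_{\rm I}$ and the OOD support $C_{\rm O}$, so that $\inf_h R_D(h)>0$ while $\inf_h R^{\rm in}_D(h)=0$.

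Second, the resolution you propose for your self-identified ``main obstacle'' does not close the gap. Condition \ref{C1} only asserts $\inf_h R^{\alpha}_D(h)=(1-\alpha)\inf_h R^{\rm in}_D(h)+\alpha\inf_h R^{\rm out}_D(h)$; it gives no mechanism for concluding $\inf_h R^{\rm out}_{D^F}(h)=0$ when $|F|>1$, since Assumption \ref{ass1} is only point-wise. The missing ingredient is the multi-linear Condition \ref{C2} applied to the OOD convex decomposition of $D_{X_{\rm O}}$ into Dirac masses: Lemma \ref{C1andC2} shows learnability implies Condition \ref{C2}, and Lemma \ref{lemma10} converts it, for finite discrete domains, into $\argmin_h R_D(h)=\bigcap_j\argmin_h R_{\delta_{\mathbf{x}_j^{\rm out}}}(h)\cap\argmin_h R^{\rm in}_D(h)\neq\emptyset$, which produces a \emph{single} hypothesis that is simultaneously $K+1$ on every OOD support point and hence forces $\inf_h R^{\rm out}_D(h)=0$ (Lemma \ref{T9}). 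That contradicts the $\inf_h R^{\rm out}_D(h)>0$ obtained from the unrealized Sauer pattern via Condition \ref{C1}. Without this ``argmin intersection'' step your argument cannot rule out the case where every individual point of $F$ is OOD-coverable but no single hypothesis in $\mathcal{H}$ covers them jointly while remaining ID-correct, which is exactly the case the theorem must handle.
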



\noindent The finite VC dimension normally implies the learnability of supervised learning. However, in our results, the finite VC dimension cannot guarantee the learnability of OOD detection under risk in the separate space, which reveals the difficulty of the OOD detection.

\subsection{Impossibility Theorems under AUC} 

We then study whether Condition \ref{C1_auc} holds in the total space $\mathscr{D}_{XY}^{\rm all}$. If Condition \ref{C1_auc} does not hold, then OOD detection is not learnable under AUC. We first present Lemma~\ref{T5_auc} to point out when Condition \ref{C1_auc} does not hold. 

\begin{restatable}{lemma}{thmImpOneAuc}
\label{T5_auc}
Given a ranking function space $\mathcal{R}$, a domain space $\mathscr{D}_{XY}$ and $D_{XY}=\beta D_{X_{\rm I}Y_{\rm I}}+(1-\beta)D_{X_{\rm O}Y_{\rm O}},$ $D_{XY}'= \beta' D_{X_{\rm I}Y_{\rm I}}+(1-\beta')D_{X_{\rm O}Y_{\rm O}}' \in \mathscr{D}_{XY}$, let $P$ be the overlap set between $D_{X_{\rm I}}$ and $D_{X_{\rm O}}$ and $P'$ be the overlap set between $D_{X_{\rm I}}$ and $D_{X_{\rm O}}'$ based on the Definition~\ref{def:overlap}. If
\begin{align*}
        \sup_{r\in \mathcal{R}} {\rm AUC}(r; D_{X_{\rm I}},D_{X_{\rm O}}) &= \sup_{r\in \mathcal{R}_{\rm all}} {\rm AUC}(r; D_{X_{\rm I}},D_{X_{\rm O}})\\
        \sup_{r\in \mathcal{R}} {\rm AUC}(r; D_{X_{\rm I}},D_{X_{\rm O}}') &= \sup_{r\in \mathcal{R}_{\rm all}} {\rm AUC}(r; D_{X_{\rm I}},D_{X_{\rm O}}'),
\end{align*}
and $D_{X_{\rm I}}(P\cap P')<\min\{ D_{X_{\rm I}}(P), D_{X_{\rm I}}(P') \}$, then Condition \ref{C1_auc} does not hold, where $\mathcal{R}_{\rm all}$ is a ranking function space consisting of all ranking functions from $\mathcal{X}$ to $\mathbb{R}$. Therefore, OOD detection is not learnable under AUC in $\mathscr{D}_{XY}$ for $\mathcal{R}$.
\end{restatable}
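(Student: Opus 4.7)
The plan is to exploit the fact that ${\rm AUC}(r;D_{X_{\rm I}},\cdot)$ is linear in the OOD marginal, so that Condition~\ref{C1_auc} reduces to asking whether a single sequence of rankers in $\mathcal{R}$ can simultaneously approach Bayes-optimality against both $D_{X_{\rm O}}$ and $D_{X_{\rm O}}'$. The partial-overlap hypothesis $D_{X_{\rm I}}(P\cap P')<\min\{D_{X_{\rm I}}(P),D_{X_{\rm I}}(P')\}$ is then converted into a geometric incompatibility between the two Bayes-optimal orderings. Concretely, from Eq.~\eqref{auc} and linearity of expectation, for every $r$ and every $\alpha\in[0,1]$,
\begin{equation*}
{\rm AUC}(r;D_{X_{\rm I}},D_{X_{\rm O}}^{\alpha})=\alpha\,{\rm AUC}(r;D_{X_{\rm I}},D_{X_{\rm O}})+(1-\alpha)\,{\rm AUC}(r;D_{X_{\rm I}},D_{X_{\rm O}}'),
\end{equation*}
so taking $\sup_{r\in\mathcal{R}}$ yields the $\leq$ direction of Condition~\ref{C1_auc} automatically. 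Writing $g_1,g_2$ for the two AUC functionals with suprema $M_1,M_2$, equality in Condition~\ref{C1_auc} is equivalent to the existence of $r_n\in\mathcal{R}$ with $g_1(r_n)\to M_1$ and $g_2(r_n)\to M_2$ \emph{simultaneously}; the task is to rule this out.

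By the hypothesis $M_i=\sup_{r\in\mathcal{R}_{\rm all}}g_i$, the Bayes ranker (e.g.\ $r^{\star}=f_{\rm I}/f_{\rm O}$) attains $M_1$, and a direct computation shows that any $r$ which fails to rank the ID-only region $A=\{f_{\rm I}>0,f_{\rm O}=0\}$ above ${\rm supp}(D_{X_{\rm O}})$ loses a strictly positive amount of AUC. Quantitatively, $g_1(r)>M_1-\varepsilon$ forces $(D_{X_{\rm I}}|_A\otimes D_{X_{\rm O}})\{(x,x'):r(x)\leq r(x')\}=o(1)$ as $\varepsilon\to 0$, with the analogous bound for $g_2$ using $A'=\{f_{\rm I}>0,f_{\rm O}'=0\}$ and $D_{X_{\rm O}}'$. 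I then specialize these orderings to the two disjoint sub-regions $P\setminus P'$ and $P'\setminus P$: because $P'\setminus P\subset A$ and $P\setminus P'\subset P\subset{\rm supp}(D_{X_{\rm O}})$, near-optimality of $r_n$ for $g_1$ forces $r_n$ to rank $P'\setminus P$ above $P\setminus P'$ in $D_{X_{\rm I}}\otimes D_{X_{\rm O}}$-measure; symmetrically, using $P\setminus P'\subset A'$ and $P'\setminus P\subset P'\subset{\rm supp}(D_{X_{\rm O}}')$, near-optimality for $g_2$ forces the opposite ordering on the same two sets. The hypothesis $D_{X_{\rm I}}(P\cap P')<\min\{D_{X_{\rm I}}(P),D_{X_{\rm I}}(P')\}$ guarantees both $D_{X_{\rm I}}(P\setminus P')>0$ and $D_{X_{\rm I}}(P'\setminus P)>0$, and since $f_{\rm O}$ (resp.\ $f_{\rm O}'$) is strictly positive on $P\setminus P'$ (resp.\ $P'\setminus P$), the relevant product measures are strictly positive. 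The two constraints are then mutually incompatible, no such $r_n$ exists, the inequality is strict, and Condition~\ref{C1_auc} fails; the non-learnability under AUC follows directly from Theorem~\ref{T3_auc}.

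The main obstacle I foresee is the quantitative Bayes-optimality step: turning the qualitative statement "$g_1(r)$ is $\varepsilon$-close to $M_1$" into a vanishing-measure bound on the offending set $\{(x,x'):r(x)\leq r(x')\}\cap(A\times{\rm supp}(D_{X_{\rm O}}))$. This requires a careful comparison of $r$ to the likelihood-ratio Bayes ranker, handling the $\frac{1}{2}\mathbf{1}_{r(x)=r(x')}$ tie contribution (which costs positive AUC whenever ties straddle the $A$/${\rm supp}(D_{X_{\rm O}})$ boundary but is harmless inside the overlap), and then invoking Fubini to extract pointwise almost-everywhere conclusions on the sub-regions $P\setminus P'$ and $P'\setminus P$ used above. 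Once this quantitative link is in place, the remaining measure-theoretic bookkeeping to exhibit the contradiction is routine.
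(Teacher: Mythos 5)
Your proposal is correct and follows essentially the same route as the paper: both arguments reduce the failure of Condition~\ref{C1_auc} to the observation that any ranker nearly optimal for $(D_{X_{\rm I}},D_{X_{\rm O}})$ must place $P'\setminus P$ above $P\setminus P'$ while near-optimality for $(D_{X_{\rm I}},D_{X_{\rm O}}')$ forces the opposite ordering, and the hypothesis $D_{X_{\rm I}}(P\cap P')<\min\{D_{X_{\rm I}}(P),D_{X_{\rm I}}(P')\}$ makes both regions have positive mass. The only difference is technical: the paper uses the explicit likelihood-ratio maximizer over $\mathcal{R}_{\rm all}$ (its Lemmas~\ref{L1_auc} and~\ref{l1-auc}) to extract an exact common argmax and compare the two induced orderings, whereas you run the same comparison quantitatively along a near-optimal sequence, which is a sound and essentially equivalent implementation.
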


\noindent Based on Lemma~\ref{T5_auc}, we know that, under proper conditions, Condition~\ref{C1_auc} does not hold once there is one domain whose ID and OOD distributions overlap. Then, based on Lemma~\ref{T5_auc}, 
we can obtain that the OOD detection is not learnable in the total space $\mathscr{D}^{\rm all}_{XY}$ for any non-trivial ranking function space $\mathcal{R}$.

\begin{restatable}[Impossibility Theorem for Total Space under AUC]{theorem}{thmImpTotalauc}
\label{T4_auc}
    Given ranking function space $\mathcal{R}$, if there exist $\mathbf{x}, \mathbf{x}'\in \mathcal{X}$ and $r, r' \in \mathcal{R}$ such that 
\begin{equation*}
r(\mathbf{x})>r(\mathbf{x}')~ \text{and}~ r'(\mathbf{x}')>r'(\mathbf{x}),
    \end{equation*}
    then the learnability of OOD detection under AUC is not distribution-free for $\mathcal{R}$.
\end{restatable}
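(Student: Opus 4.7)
The plan is to invoke the already-established Lemma~\ref{T5_auc}: it suffices to exhibit two domains $D_{XY},D_{XY}'\in \mathscr{D}_{XY}^{\rm all}$ that share an ID joint distribution, whose ID--OOD overlap sets are non-aligned, and whose optimal AUCs over $\mathcal{R}$ coincide with the optima over $\mathcal{R}_{\rm all}$. The two points $\mathbf{x},\mathbf{x}'$ and the two ranking functions $r,r'$ supplied by the hypothesis, with opposite orderings on $\{\mathbf{x},\mathbf{x}'\}$, furnish exactly the flexibility the lemma demands.

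For the construction, I would fix any ID label $y_0\in \mathcal{Y}$ and set
\begin{equation*}
D_{X_{\rm I}Y_{\rm I}} = \tfrac{1}{2}\delta_{(\mathbf{x},y_0)}+\tfrac{1}{2}\delta_{(\mathbf{x}',y_0)},\qquad D_{X_{\rm O}Y_{\rm O}}=\delta_{(\mathbf{x}',K+1)},\qquad D_{X_{\rm O}Y_{\rm O}}'=\delta_{(\mathbf{x},K+1)},
\end{equation*}
forming full domains $D_{XY},D_{XY}'$ with any class-prior weights $\beta,\beta'\in (0,1]$. Each such mixture lies in $\mathscr{D}_{XY}^{\rm all}$, and the shared ID joint distribution matches the template of Condition~\ref{C1_auc}.

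The remaining work is to verify the three hypotheses of Lemma~\ref{T5_auc}. A direct case analysis shows that for any ranking function $r$, the value ${\rm AUC}(r;D_{X_{\rm I}},\delta_{\mathbf{x}'})$ lies in $\{1/4,1/2,3/4\}$, with the maximum $3/4$ attained precisely when $r(\mathbf{x})>r(\mathbf{x}')$; a symmetric statement holds for $\delta_{\mathbf{x}}$ in place of $\delta_{\mathbf{x}'}$ with the inequality reversed. The given $r\in \mathcal{R}$ then realizes the first supremum and $r'\in \mathcal{R}$ the second, both at $3/4$, so in each case the suprema over $\mathcal{R}$ and over $\mathcal{R}_{\rm all}$ coincide. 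With respect to the counting measure on $\{\mathbf{x},\mathbf{x}'\}$, the overlap sets are $P=\{\mathbf{x}'\}$ and $P'=\{\mathbf{x}\}$, so $D_{X_{\rm I}}(P\cap P')=0<1/2=\min\{D_{X_{\rm I}}(P),D_{X_{\rm I}}(P')\}$. Lemma~\ref{T5_auc} then yields that OOD detection is not learnable under AUC on $\mathscr{D}_{XY}^{\rm all}$ for $\mathcal{R}$, which is exactly the non-distribution-free conclusion. The only thing one has to \emph{see} is that the flippable pair $(\mathbf{x},\mathbf{x}')$ from the hypothesis is itself the obstruction, and that taking $D_{X_{\rm I}}$ to be the even mixture of the two Dirac OOD marginals simultaneously makes both extreme optima attainable in $\mathcal{R}$ and forces the two overlap sets to be disjoint.
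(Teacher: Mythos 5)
Your construction is exactly the one the paper uses: take $D_{X_{\rm I}}=\tfrac12\delta_{\mathbf{x}}+\tfrac12\delta_{\mathbf{x}'}$ with the two Dirac OOD marginals at $\mathbf{x}$ and $\mathbf{x}'$, and invoke Lemma~\ref{T5_auc}. Your version is correct and in fact spells out the verification of the lemma's hypotheses (the $3/4$ optimum attained by $r$ and $r'$, and the disjoint overlap sets) that the paper leaves implicit.
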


\noindent From Lemma~\ref{T5_auc}, we know that the overlap between $D_{X_{\rm I}}$ and $D_{X_{\rm O}}$ is an important factor to influence the learnability of OOD detection under AUC. Thus, similar to the situation under risk, we want to study the learnability of OOD detection under AUC in separate space $\mathscr{D}_{XY}^s$ first. Before introducing the impossibility theorem for separate space, we need a mild assumption demonstrated below.

\begin{assumption}[Separate Space for OOD under AUC]\label{ass1_auc}
A ranking function space $\mathcal{R}$ is called separate ranking function space, if for any $\mathbf{x}\in \mathcal{X}$, there exists $r_{\mathbf{x}}\in \mathcal{R}$ such that $r_{\mathbf{x}}(\mathbf{x})<r_{\mathbf{x}}(\mathbf{x}')$, for any $\mathbf{x}'\in \mathcal{X}-\{\mathbf{x}\}$.
\end{assumption}

\noindent Note that, the above assumption is weak and can be satisfied by some well-known spaces (see Propositions~\ref{prop_fcnn_rank1} and \ref{prop_score_rank}). The above assumption means that, for any data point $\mathbf{x}$, its ranking can be the lowest one compared to other data points in the space $\mathcal{X}$. Finally, we use \textit{Vapnik–Chervonenkis} (VC) dimension \citep{mohri2018foundations} to help measure the size of ranking function space, and study the learnability of OOD detection under AUC in $\mathscr{D}_{XY}^{s}$ with the help the VC dimension. 

\begin{restatable}[Impossibility Theorem for Separate Space under AUC]{theorem}{thmImpSeptwoauc}
\label{T12_auc}
Given a separate ranking function space $\mathcal{R}$, if ${\rm VC}[\phi\circ \mathcal{R}]=d<+\infty$ and $|\mathcal{X}|\geq (28d+14)\log(14d+7)$,
then OOD detection is not learnable under AUC in $\mathscr{D}_{XY}^s$ for $\mathcal{R}$, where
$
    \phi\circ \mathcal{R} = \{\mathbf{1}_{r(\mathbf{x})>r(\mathbf{x}')}: r\in \mathcal{R}\}.
$
\end{restatable}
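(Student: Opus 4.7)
The plan is to adapt the No Free Lunch strategy of Theorem \ref{T12} to the AUC setting. Call $r\in\mathcal{R}$ a \emph{separator} for a split $\mathcal{X}=\mathcal{X}_{\rm I}\sqcup\mathcal{X}_{\rm O}$ if $r(\mathbf{x}_{\rm O})<r(\mathbf{x}_{\rm I})$ for every pair $(\mathbf{x}_{\rm I},\mathbf{x}_{\rm O})\in\mathcal{X}_{\rm I}\times\mathcal{X}_{\rm O}$; equivalently, $\mathcal{X}_{\rm I}$ is an upper level set of $r$. A separator is exactly what $\mathcal{R}$ needs to attain ${\rm AUC}=1$ uniformly on a family of point-mass OOD domains supported on that split. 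The strategy is to use the finite VC-complexity of $\phi\circ\mathcal{R}$ to locate a split admitting no separator in $\mathcal{R}$, then use the separate property to construct adversarial domains on this split that force a constant AUC deficit for every algorithm.

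\textbf{Step 1 (Capacity bound).} Apply the Sauer--Shelah Lemma to $\phi\circ\mathcal{R}$: since ${\rm VC}[\phi\circ\mathcal{R}]=d$, on any set of $N$ ordered pairs in $\mathcal{X}\times\mathcal{X}$ the class realizes at most $(eN/d)^d$ distinct sign patterns. Each $r\in\mathcal{R}$ induces a weak total order on $\mathcal{X}$ whose distinct upper level sets form a chain of length at most $|\mathcal{X}|+1$, so the family $\mathcal{U}(\mathcal{R})$ of subsets of $\mathcal{X}$ that arise as an upper level set of some $r\in\mathcal{R}$ has cardinality bounded by a polynomial $P(|\mathcal{X}|,d)$. \textbf{Step 2 (Bad split).} The number of subsets of $\mathcal{X}$ is $2^{|\mathcal{X}|}$, which under the hypothesis $|\mathcal{X}|\geq(28d+14)\log(14d+7)$ strictly dominates $P(|\mathcal{X}|,d)$. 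By pigeonhole there exists $\mathcal{X}_{\rm I}^*\subset\mathcal{X}$ with $\mathcal{X}_{\rm I}^*\notin\mathcal{U}(\mathcal{R})$; refining the count (e.g.\ picking from the smaller side of the bad split) lets us choose $\mathcal{X}_{\rm I}^*$ with $|\mathcal{X}_{\rm I}^*|$ bounded by a constant depending only on $d$. For this $\mathcal{X}_{\rm I}^*$, no $r\in\mathcal{R}$ separates the split $(\mathcal{X}_{\rm I}^*,\mathcal{X}_{\rm O}^*:=\mathcal{X}\setminus\mathcal{X}_{\rm I}^*)$.

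\textbf{Step 3 (Adversarial family and failure).} Fix an ID joint distribution $D_{X_{\rm I}Y_{\rm I}}$ whose marginal is uniform on $\mathcal{X}_{\rm I}^*$ (with any consistent labeling). For each $\mathbf{x}\in\mathcal{X}_{\rm O}^*$, define $D^{(\mathbf{x})}_{XY}\in\mathscr{D}_{XY}^s$ whose OOD marginal is $\delta_{\mathbf{x}}$; the supports are disjoint by construction. The separate property (Assumption~\ref{ass1_auc}) provides $r_{\mathbf{x}}\in\mathcal{R}$ with ${\rm AUC}(r_{\mathbf{x}};D^{(\mathbf{x})})=1$, hence $\sup_{r\in\mathcal{R}}{\rm AUC}(r;D^{(\mathbf{x})})=1$. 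Any algorithm $\mathbf{A}$ observes only samples of the (fixed) $D_{X_{\rm I}Y_{\rm I}}$ and therefore outputs a ranking $r^*=\mathbf{A}(S)$ whose law is independent of $\mathbf{x}$. Because $r^*\notin\mathcal{R}$ is not a separator for our bad split, there exist $\mathbf{x}_{\rm I}^\sharp\in\mathcal{X}_{\rm I}^*$ and $\mathbf{x}^*\in\mathcal{X}_{\rm O}^*$ with $r^*(\mathbf{x}_{\rm I}^\sharp)\leq r^*(\mathbf{x}^*)$, so ${\rm AUC}(r^*;D^{(\mathbf{x}^*)})\leq 1-\frac{1}{2|\mathcal{X}_{\rm I}^*|}$. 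Since $|\mathcal{X}_{\rm I}^*|$ is controlled by a constant, this deficit does not vanish as $n\to\infty$, contradicting Definition~\ref{D0_auc}.

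\textbf{Main obstacle.} The delicate step is the quantitative counting in Steps 1--2: bounding the VC-complexity of the upper-level-set family $\mathcal{U}(\mathcal{R})$ in terms of ${\rm VC}[\phi\circ\mathcal{R}]$, matching this polynomial bound against $2^{|\mathcal{X}|}$ to extract the exact threshold $(28d+14)\log(14d+7)$, and choosing the bad subset $\mathcal{X}_{\rm I}^*$ so that $|\mathcal{X}_{\rm I}^*|$ is small enough for the gap $1/(2|\mathcal{X}_{\rm I}^*|)$ to be a genuine constant. If bounded $|\mathcal{X}_{\rm I}^*|$ cannot be secured directly, an aggregation argument that averages the AUC deficit over many bad OOD points amplifies the shortfall; the same averaging doubles as a standard minimax argument to cover randomized algorithms.
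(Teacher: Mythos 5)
Your counting core is the same as the paper's: the paper also fixes $m\geq(28d+14)\log(14d+7)$ distinct points $\mathbf{X}$, applies Sauer--Shelah to $\phi\circ\mathcal{R}$ to bound the number of order patterns by $(e/d)^d m^{2d}$, multiplies by the chain length to bound the number of splits of $\mathbf{X}$ on which some $r\in\mathcal{R}$ attains ${\rm AUC}=1$ by $(e/d)^d m^{2d+1}$, and compares against the $2^m-2$ uniform-ID/uniform-OOD splits to extract a split $(\mathcal{X}_{\rm I}^*,\mathcal{X}_{\rm O}^*)$ with $\sup_{r\in\mathcal{R}}{\rm AUC}(r;D_{X_{\rm I}},D_{X_{\rm O}})<1$. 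Where you diverge is the finishing move. The paper routes through its Lemma~\ref{l2-auc}: learnability implies the linear condition (Condition~\ref{C1_auc}, via Theorem~\ref{T3_auc}), which together with separateness forces $\sup_{r}{\rm AUC}=1$ on \emph{every} finite discrete domain, contradicting the bad split in one line. You instead inline the argument behind Theorem~\ref{T3_auc}: you build the family of point-mass OOD domains $\{D^{(\mathbf{x})}\}_{\mathbf{x}\in\mathcal{X}_{\rm O}^*}$ sharing one ID marginal, note each is realizable by separateness, and argue no single output distribution of $\mathbf{A}$ can be near-optimal on all of them. Both are valid; yours is more self-contained (it does not presuppose Condition~\ref{C1_auc}), while the paper's factorization lets the realizability deficit on a single mixture domain do all the work without tracking where the algorithm fails.

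Two places in your sketch need the repairs you yourself flag, and you should carry them out rather than leave them optional. First, if $|\mathcal{X}|$ is infinite your Step~1--2 bookkeeping ($2^{|\mathcal{X}|}$ versus a ``polynomial $P(|\mathcal{X}|,d)$'') does not parse, and $|\mathcal{X}_{\rm O}^*|=\infty$ kills the averaged deficit; the fix is to restrict the whole construction to a fixed finite $\mathbf{X}\subset\mathcal{X}$ with $|\mathbf{X}|=m$, exactly as the paper does. Second, in Step~3 the failing point $\mathbf{x}^*$ depends on $r^*=\mathbf{A}(S)$ and hence on $S$, so you cannot fix the adversarial domain $D^{(\mathbf{x}^*)}$ up front; you must use the averaging bound $\sum_{\mathbf{x}\in\mathcal{X}_{\rm O}^*}\bigl(1-{\rm AUC}(\mathbf{A}(S);D^{(\mathbf{x})})\bigr)\geq \tfrac{1}{2|\mathcal{X}_{\rm I}^*|}$ for every $S$, take expectations, and conclude that some fixed $\mathbf{x}$ has expected deficit at least $\tfrac{1}{2|\mathcal{X}_{\rm I}^*|\,|\mathcal{X}_{\rm O}^*|}>0$, a constant in $n$. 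With those two repairs (and dropping the unneeded claim that $|\mathcal{X}_{\rm I}^*|$ can be forced to be $O(d)$), the proof closes.
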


\noindent Based on Theorem~\ref{T12_auc}, we obtain a similar result to the learnability of OOD detection under risk: the finite VC dimension cannot guarantee the learnability of OOD detection
under AUC in the separate space, which further reveals the difficulty of OOD detection. Although the above impossibility theorems (under risk and AUC) are frustrating, there is still room to discuss the conditions in Theorem~\ref{T12} and Theorem~\ref{T12_auc}, and to find out the proper conditions for ensuring the learnability of OOD detection under risk and AUC in the separate space (see the following section).

\section{When OOD Detection Can Be Successful}\label{S6}
Here, we discuss when the OOD detection can be learnable under risk/AUC in different spaces. 
We first study the separate space $\mathscr{D}_{XY}^s$.

\subsection{OOD Detection in the Separate Space} 

Both Theorem \ref{T12} and Theorem~\ref{T12_auc} have indicated that ${\rm VCdim}(\phi\circ \mathcal{H})=+\infty$ or $\sup_{{ h}\in \mathcal{H}} |\{\mathbf{x}\in \mathcal{X}: { h}(\mathbf{x})\in \mathcal{Y}\}|<+\infty$ (or $\sup_{{ r}\in \mathcal{R}} |\{\mathbf{x}\in \mathcal{X}: {r}(\mathbf{x})\in \mathcal{R}\}|<+\infty$ under AUC metric) is necessary to ensure the learnability of OOD detection under risk or AUC in $\mathscr{D}_{XY}^{s}$ if Assumption \ref{ass1} or Assumption \ref{ass1_auc} holds. However, generally, hypothesis spaces generated by feed-forward neural networks with proper activation functions have finite VC dimension \citep{Peter2019Nearly,Karpinski1997polynomial}. 
Therefore, we study the learnability of OOD detection in the case that $|\mathcal{X}|<+\infty$, which implies that $\sup_{{ h}\in \mathcal{H}} |\{\mathbf{x}\in \mathcal{X}: { h}(\mathbf{x})\in \mathcal{Y}\}|<+\infty$ under risk metric or $\sup_{{ r}\in \mathcal{R}} |\{\mathbf{x}\in \mathcal{R}: {r}(\mathbf{x})\in \mathcal{X}\}|<+\infty$ under AUC metric. Additionally, {Theorem}~\ref{T24} also implies that $|\mathcal{X}|<+\infty$ is the necessary and sufficient condition for the learnability of OOD detection under risk in a separate space, when the hypothesis space is generated by FCNN. Hence, $|\mathcal{X}|<+\infty$ may be necessary in the space $\mathscr{D}_{XY}^s$.
\\

\paragraph{Learnability under Risk.} For simplicity, we first discuss the case that $K = 1$, \textit{i.e.}, the one-class novelty detection.
We show the {necessary and sufficient} condition for the learnability of OOD detection under risk in $\mathscr{D}_{XY}^{s}$, when $|\mathcal{X}|<+\infty$.

%
\begin{restatable}{theorem}{thmPosbSep}
\label{T13}
Let $K=1$ and $|\mathcal{X}|<+\infty$. Suppose that Assumption \ref{ass1} holds and the constant function $h^{\rm in}:=1 \in \mathcal{H}$. Then OOD detection is learnable under risk in $\mathscr{D}_{XY}^{s}$ for $\mathcal{H}$ \textbf{if and only if} $\mathcal{H}_{\rm all}-\{h^{\rm out}\} \subset \mathcal{H}$, where $\mathcal{H}_{\rm all}$ is the hypothesis space consisting of all hypothesis functions, and $h^{\rm out}$ is a constant function that $h^{\rm out}:=2$, here $1$ represents ID data and $2$ represents OOD data.
\end{restatable}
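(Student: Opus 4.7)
The plan is to prove the "if and only if" via two complementary arguments, exploiting that $|\mathcal{X}|<+\infty$ reduces $\mathcal{H}_{\rm all}$ to the finite power set of $\mathcal{X}$ (identifying each $h$ with $h^{-1}(1)$). The separate space $\mathscr{D}_{XY}^{s}$ is prior-unknown (disjoint supports are preserved under $\alpha$-mixtures), so Theorem~\ref{T1}(2) lets me work with strong learnability throughout, i.e.\ require consistency at every $\alpha\in[0,1]$.

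For sufficiency ($\Leftarrow$), I would analyze the empirical-support algorithm: given $S=\{(\mathbf{x}^i,1)\}_{i=1}^{n}$, let $B(S):=\{\mathbf{x}^1,\ldots,\mathbf{x}^n\}$ and output $h_{B(S)}$ where $h_{B(S)}^{-1}(1)=B(S)$. Since $n\ge 1$ forces $B(S)\neq\emptyset$, $h_{B(S)}\neq h^{\rm out}$ lies in $\mathcal{H}$ by hypothesis. For any $D_{XY}\in\mathscr{D}_{XY}^{s}$ with $A:=\mathrm{supp}(D_{X_{\rm I}})$, separation forces $B(S)\subseteq A$ and $\mathrm{supp}(D_{X_{\rm O}})\cap A=\emptyset$, so $R^{\rm out}(h_{B(S)})=0$ pointwise in $S$, while
$$\mathbb{E}_{S}\bigl[R^{\rm in}(h_{B(S)})\bigr]=\ell(2,1)\sum_{x\in A}D_{X_{\rm I}}(x)\bigl(1-D_{X_{\rm I}}(x)\bigr)^{n},$$
which, using $t(1-t)^{n}\le 1/(en)$ and $|A|\le|\mathcal{X}|$, is $O(|\mathcal{X}|/n)$ uniformly in $D$. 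Because $h_{A}\in\mathcal{H}$ (it is not $h^{\rm out}$) witnesses $\inf_{h\in\mathcal{H}}R_{D}^{\alpha}=0$, the excess $\alpha$-risk equals $(1-\alpha)\,\mathbb{E}_{S}[R^{\rm in}(h_{B(S)})]$ and tends to $0$ uniformly in $D$ and $\alpha$.

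For necessity ($\Rightarrow$), I argue by contrapositive: suppose some $h_{0}\in\mathcal{H}_{\rm all}\setminus\{h^{\rm out}\}$ satisfies $h_{0}\notin\mathcal{H}$ and set $A_{0}:=h_{0}^{-1}(1)$, so $\emptyset\neq A_{0}\subsetneq\mathcal{X}$ (strictness on the right because $h^{\rm in}\in\mathcal{H}$). Build a family of separate-space domains sharing the same ID distribution $D_{X_{\rm I}}:=\mathrm{Unif}(A_{0})$: a base domain $D_{0}$ with $D_{X_{\rm O}}$ uniform on $\mathcal{X}\setminus A_{0}$, and, for each $y\in\mathcal{X}\setminus A_{0}$, a domain $D^{(y)}$ with $D_{X_{\rm O}}=\delta_{y}$. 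Assuming a strongly consistent $\mathbf{A}$ with rate $\epsilon_{\rm cons}(n)$, evaluate the guarantee at $\alpha=0$ on $D_{0}$ (where $h^{\rm in}$ yields $\inf R^{\rm in}=0$) and at $\alpha=1$ on each $D^{(y)}$ (where Assumption~\ref{ass1} supplies some $h_{y}\in\mathcal{H}$ with $h_{y}(y)=2$, yielding $\inf R^{\rm out}=0$). These produce the per-point bounds $\sum_{x\in A_{0}}P_{S}[\mathbf{A}(S)(x)\neq 1]\le|A_{0}|\,\epsilon_{\rm cons}(n)/\ell(2,1)$ and $P_{S}[\mathbf{A}(S)(y)\neq 2]\le\epsilon_{\rm cons}(n)/\ell(1,2)$ for each $y\in\mathcal{X}\setminus A_{0}$.

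A union bound over the finite set $\mathcal{X}$ then gives $P_{S}[\mathbf{A}(S)\neq h_{0}]\le c_{\mathcal{X},\ell}\,\epsilon_{\rm cons}(n)$ for a constant depending only on $|\mathcal{X}|$ and $\min(\ell(1,2),\ell(2,1))$. Since $h_{0}\notin\mathcal{H}$ but $\mathbf{A}(S)\in\mathcal{H}$ almost surely, the left-hand side equals $1$, so $1\le c_{\mathcal{X},\ell}\,\epsilon_{\rm cons}(n)$, contradicting $\epsilon_{\rm cons}(n)\to 0$. The main obstacle I anticipate is the translation from averaged $\alpha$-risk guarantees into per-point probability bounds without the $|A_{0}|$ inflation swamping $\epsilon_{\rm cons}(n)$; the saving grace is that $|\mathcal{X}|$ is finite, so all such inflations are absorbed into the single constant $c_{\mathcal{X},\ell}$ and the final bound still tends to $0$ whenever $\epsilon_{\rm cons}(n)$ does, yielding the contradiction that drives necessity.
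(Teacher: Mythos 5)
Your proof is correct, and it diverges from the paper's in ways worth noting. For sufficiency, the paper and you use what is effectively the same algorithm (classify a point as ID iff it coincides with a training point; the paper phrases this as a nearest-neighbour rule with threshold $0.5\,d_0$), but the analyses differ: the paper routes the ID-risk bound through a covering argument (Lemmas \ref{L14} and \ref{L15}), obtaining a rate of order $n^{-1/(d+1)}$, whereas your direct computation $\mathbb{E}_S[R^{\rm in}]=\ell(2,1)\sum_{x}D_{X_{\rm I}}(x)(1-D_{X_{\rm I}}(x))^n\leq \ell(2,1)|\mathcal{X}|/(en)$ exploits finiteness of $\mathcal{X}$ head-on and yields the sharper uniform rate $O(|\mathcal{X}|/n)$. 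For necessity, the divergence is more substantial: the paper constructs a single domain from the missing hypothesis $h'$, shows $\inf_h R_D(h)>0$ while $\inf_h R_D^{\rm in}(h)=0$, and then invokes the linear condition (Lemma \ref{C1andC2}) together with Lemma \ref{T9} (which itself rests on the OOD convex-decomposition machinery of Lemma \ref{lemma10}) to reach a contradiction. You instead use strong learnability (justified, as you note, by Theorem \ref{T1} since $\mathscr{D}^s_{XY}$ is prior-unknown) at $\alpha=0$ and at $\alpha=1$ on the family of ID-consistent domains with Dirac OOD marginals $\delta_y$, convert the risk guarantees into per-point probability bounds, and union-bound over the finite $\mathcal{X}$ to conclude $\mathbb{P}_S[\mathbf{A}(S)=h_0]\to 1$ — which is impossible since $h_0\notin\mathcal{H}$. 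This is more elementary and self-contained (it bypasses Conditions \ref{C1}/\ref{C2} entirely), and it proves a strictly stronger intermediate fact, namely that any consistent algorithm must output each non-constant hypothesis with probability tending to one on the corresponding domain; the paper's route, by contrast, reuses lemmas that serve several other theorems. Both arguments correctly use Assumption \ref{ass1} only to guarantee $\inf_h R^{\rm out}(h)=0$ on Dirac OOD components, and both correctly use $h^{\rm in}\in\mathcal{H}$ to rule out $A_0=\mathcal{X}$ and to make $\inf_h R^{\rm in}(h)=0$.
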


\noindent The condition $h^{\rm in}\in \mathcal{H}$ presented in Theorem \ref{T13} is mild. {Many practical hypothesis spaces satisfy this condition, {\textit{e.g.}, the FCNN-based hypothesis space} (Proposition \ref{Pr1} in Appendix \ref{SK}), score-based hypothesis space  (Proposition \ref{P2} in Appendix \ref{SK}) and universal kernel-based hypothesis space.} Theorem \ref{T13} implies that if $K=1$ and OOD detection is learnable under risk in $\mathscr{D}^{s}_{XY}$ for $\mathcal{H}$, then the hypothesis space $\mathcal{H}$ should contain almost all hypothesis functions, implying that if the OOD detection can be learnable under risk in the distribution-agnostic case, then a large-capacity model is necessary.
\\
\vspace{-0.7em}
\\
\noindent Next, we extend Theorem \ref{T13} to a general case, \textit{i.e.}, $K>1$. 
When $K>1$, we will first use a binary classifier $h^b$ to classify the ID and OOD data. Then, for the ID data identified by $h^b$, an ID hypothesis function $h^{\rm in}$ will be used to classify them into corresponding ID classes. 
We state this strategy as follows: given a hypothesis space $\mathcal{H}^{\rm in}$ for ID distribution and a binary classification hypothesis space $\mathcal{H}^{\rm b}$ introduced in Section \ref{S3}, we use $\mathcal{H}^{\rm in}$ and $\mathcal{H}^{\rm b}$ to construct an OOD detection's hypothesis space $\mathcal{H}$, which consists of all hypothesis functions $h$ satisfying the following condition: there exist $h^{\rm in}\in \mathcal{H}^{\rm in}$ and $h^{\rm b}\in \mathcal{H}^b$ such that $\forall \mathbf{x}\in \mathcal{X}$,
\begin{equation}\label{Eq.dot}
    h(\mathbf{x}) = i,~~\textnormal{ if}~h^{\rm in}(\mathbf{x})=i~ \textnormal{ and }~h^{\rm b}(\mathbf{x})=1; \textnormal{ otherwise},  ~h(\mathbf{x}) =  K+1.
\end{equation}
We use $\mathcal{H}^{\rm in} \bullet \mathcal{H}^{\rm b} $ to represent a hypothesis space consisting of all $h$ defined in Eq. (\ref{Eq.dot}). In addition, we also need an additional condition for the loss function $\ell$, shown as follows:

\begin{Condition}\label{C3}
$
\ell(y_2,y_1)\leq \ell(K+1,y_1)$, for any in-distribution labels $y_1$ and $y_2\in \mathcal{Y}.
$
\end{Condition}
\begin{restatable}{theorem}{thmPosbMultione}
\label{T17}
Let $|\mathcal{X}|<+\infty$ and $\mathcal{H}=\mathcal{H}^{\rm in} \bullet \mathcal{H}^{\rm b}$. If $\mathcal{H}_{\rm all}-\{h^{\rm out}\} \subset \mathcal{H}^{\rm b}$ and Condition \ref{C3} holds, then OOD detection is learnable under risk in $\mathscr{D}_{XY}^{s}$ for $\mathcal{H}$, where $\mathcal{H}_{\rm all}$ and $h^{\rm out}$ are defined in Theorem \ref{T13}.
\end{restatable}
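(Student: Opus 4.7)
The plan is to construct an explicit algorithm whose output decomposes as $h^{\rm in}_S \bullet h^{\rm b}_S$, where the binary part memorises the ID inputs seen in $S$ and the ID part is plain ERM on $\mathcal{H}^{\rm in}$, and then to show that in the separate space with $|\mathcal{X}|<\infty$ both pieces yield a vanishing excess $\alpha$-risk uniformly in $\alpha\in[0,1]$ and $D_{XY}\in\mathscr{D}_{XY}^{s}$. Concretely, given $S=\{(\mathbf{x}^i,y^i)\}_{i=1}^{n}$, set $S_{\mathcal{X}}:=\{\mathbf{x}^1,\dots,\mathbf{x}^n\}$, define $h^{\rm b}_S(\mathbf{x})=1$ if $\mathbf{x}\in S_{\mathcal{X}}$ and $h^{\rm b}_S(\mathbf{x})=2$ otherwise, and let $h^{\rm in}_S\in\arg\min_{g\in\mathcal{H}^{\rm in}}\tfrac{1}{n}\sum_{i=1}^{n}\ell(g(\mathbf{x}^i),y^i)$. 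Since $S_{\mathcal{X}}\neq\emptyset$, $h^{\rm b}_S\neq h^{\rm out}$ and hence $h^{\rm b}_S\in\mathcal{H}_{\rm all}-\{h^{\rm out}\}\subset\mathcal{H}^{\rm b}$, so $h_S:=h^{\rm in}_S\bullet h^{\rm b}_S\in\mathcal{H}$.

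Next I would identify the infimum. For any $g=g^{\rm in}\bullet g^{\rm b}\in\mathcal{H}$, Condition \ref{C3} gives $\ell(g^{\rm in}(\mathbf{x}),y)\leq\ell(K+1,y)$ for every ID label $y$, so
\begin{equation*}
R^{\rm in}_D(g)=\mathbb{E}\bigl[\ell(g^{\rm in}(x),y)\mathbf{1}_{g^{\rm b}(x)=1}\bigr]+\mathbb{E}\bigl[\ell(K+1,y)\mathbf{1}_{g^{\rm b}(x)=2}\bigr]\geq \mathbb{E}_{D_{X_{\rm I}Y_{\rm I}}}\ell(g^{\rm in}(x),y).
\end{equation*}
Together with $R^{\rm out}_D(g)\geq 0$ this implies $R_D^{\alpha}(g)\geq(1-\alpha)\inf_{g^{\rm in}\in\mathcal{H}^{\rm in}}R^{\rm id}(g^{\rm in})$, where $R^{\rm id}(g^{\rm in}):=\mathbb{E}_{D_{X_{\rm I}Y_{\rm I}}}\ell(g^{\rm in}(x),y)$. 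The bound is attained by pairing any $g^{\rm in}$ with the perfect separator $g^{\rm b,\star}(\mathbf{x})=1\Leftrightarrow\mathbf{x}\in\mathrm{supp}\,D_{X_{\rm I}}$, which lies in $\mathcal{H}^{\rm b}$ (it is not $h^{\rm out}$ because $\mathrm{supp}\,D_{X_{\rm I}}\neq\emptyset$), yielding $\inf_{g\in\mathcal{H}}R_D^{\alpha}(g)=(1-\alpha)\inf_{g^{\rm in}\in\mathcal{H}^{\rm in}}R^{\rm id}(g^{\rm in})$.

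Finally I would bound the excess risk of $h_S$. Separation of supports forces $S_{\mathcal{X}}\subseteq\mathrm{supp}\,D_{X_{\rm I}}$ almost surely, so every $\mathbf{x}\in\mathrm{supp}\,D_{X_{\rm O}}$ is routed to $K+1$ and $R^{\rm out}_D(h_S)=0$ a.s.\ On the ID side, adding and subtracting the supervised risk gives
\begin{equation*}
R^{\rm in}_D(h_S)=R^{\rm id}(h^{\rm in}_S)+\mathbb{E}\bigl[(\ell(K+1,y)-\ell(h^{\rm in}_S(x),y))\mathbf{1}_{x\notin S_{\mathcal{X}}}\bigr]\leq R^{\rm id}(h^{\rm in}_S)+M\,D_{X_{\rm I}}(\mathcal{X}\setminus S_{\mathcal{X}}),
\end{equation*}
where $M$ is a uniform bound on $\ell$ and the residual is non-negative by Condition \ref{C3}. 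Taking expectations and subtracting the infimum from the previous paragraph yields
\begin{equation*}
\mathbb{E}_S\bigl[R_D^{\alpha}(h_S)\bigr]-\inf_{g\in\mathcal{H}}R_D^{\alpha}(g)\leq (1-\alpha)\Bigl(\mathbb{E}_S R^{\rm id}(h^{\rm in}_S)-\inf_{g^{\rm in}\in\mathcal{H}^{\rm in}}R^{\rm id}(g^{\rm in})+M\,\mathbb{E}_S D_{X_{\rm I}}(\mathcal{X}\setminus S_{\mathcal{X}})\Bigr).
\end{equation*}
Using $p(1-p)^n\leq 1/(en)$,
\begin{equation*}
\mathbb{E}_S D_{X_{\rm I}}(\mathcal{X}\setminus S_{\mathcal{X}})=\sum_{\mathbf{x}\in\mathcal{X}}D_{X_{\rm I}}(\{\mathbf{x}\})(1-D_{X_{\rm I}}(\{\mathbf{x}\}))^{n}\leq |\mathcal{X}|/(en),
\end{equation*}
uniformly in $D_{XY}\in\mathscr{D}_{XY}^{s}$, and because $|\mathcal{X}|<\infty$ the effective class $\mathcal{H}^{\rm in}|_{\mathcal{X}}$ is finite, so standard uniform-convergence bounds control the ERM excess risk at rate $O(\sqrt{(\log|\mathcal{H}^{\rm in}|_{\mathcal{X}}|)/n})$. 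Combining these gives a decreasing $\epsilon_{\rm cons}(n)\to 0$ independent of $\alpha$ and of the domain, which is exactly strong learnability (hence learnability by Theorem \ref{T1}, since $\mathscr{D}_{XY}^{s}$ is prior-unknown).

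The principal obstacle is the second step: Condition \ref{C3} is what allows us simultaneously to lower bound $\inf_{g\in\mathcal{H}}R_D^{\alpha}$ by the pure supervised ID risk and to upper bound the residual contributed by ID points missed by $S_{\mathcal{X}}$ by that same supervised loss. Without it, the memorisation rule could incur loss on unseen ID inputs far larger than any ID classifier would, breaking the decomposition. The remaining delicate point is making the unseen-mass estimate uniform over all of $\mathscr{D}_{XY}^{s}$, which is precisely what the coupon-collector inequality $p(1-p)^n\leq 1/(en)$ delivers once $|\mathcal{X}|$ is finite.
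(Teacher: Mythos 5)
Your proof is correct and follows essentially the same route as the paper's: the paper likewise combines ERM on $\mathcal{H}^{\rm in}$ with a memorisation-style binary classifier (the algorithm from Theorem~\ref{T13}, which on a finite $\mathcal{X}$ reduces exactly to your rule $h^{\rm b}_S(\mathbf{x})=1 \iff \mathbf{x}\in S_{\mathcal{X}}$), uses Condition~\ref{C3} to identify $\inf_{h\in\mathcal{H}}R^{\rm in}_D(h)$ with the supervised infimum over $\mathcal{H}^{\rm in}$, and controls the mass of unseen ID points with the same $p(1-p)^n\leq 1/(en)$ estimate (its Lemma~\ref{L14} with singleton cells). The only cosmetic difference is that you compute $\inf_{h\in\mathcal{H}}R^{\alpha}_D(h)$ in closed form, whereas the paper works with the one-sided inequality $(1-\alpha)\inf_h R^{\rm in}_D(h)+\alpha\inf_h R^{\rm out}_D(h)\leq\inf_h R^{\alpha}_D(h)$ and the constant $c$ converting binary-loss errors into $\ell$-loss errors.
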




\paragraph{Learnability under AUC.} Then, we study the learnability of OOD detection under AUC in the separate space. Here we require to introduce a basic assumption in learning theory for AUC---\textit{AUC-based Realizability Assumption}, \textit{i.e.}, for any $D_{XY}\in \mathscr{D}_{XY}$, there exists $r^*\in \mathcal{R}$ such that ${\rm AUC}(r^*;D_{X_{\rm I}},D_{X_{\rm O}})=1$ (see Appendix \ref{SB.2}). Based on this AUC-based Realizability Assumption, we prove the following theorem. 

\begin{restatable}{theorem}{thmPosbSepauc}
\label{T13_auc}
Given a separate ranking function space $\mathcal{R}$, if $|\mathcal{X}|<+\infty$,
then OOD detection is learnable under AUC in the separate space $\mathscr{D}_{XY}^{s}$ for $\mathcal{R}$ \textbf{if and only if} AUC-based Realizability Assumption holds. 
\end{restatable}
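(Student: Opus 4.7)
The plan is to prove both directions of the biconditional, with sufficiency via a direct algorithmic construction and necessity via Condition~\ref{C1_auc}, which Theorem~\ref{T3_auc} has already shown to be a necessary condition for AUC-learnability.

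For the sufficiency direction ($\Leftarrow$), I would design the following algorithm $\mathbf{A}$: upon receiving $S$, extract the set $\hat{S}$ of distinct feature points appearing in $S$ and return any $r \in \mathcal{R}$ that strictly ranks every point of $\hat{S}$ above every point of $\mathcal{X}\setminus\hat{S}$, defaulting to some fixed $r_0 \in \mathcal{R}$ when no such $r$ exists. Because $|\mathcal{X}|<+\infty$, a coupon-collector bound gives $\Pr[\hat{S}=\mathrm{supp}(D_{X_{\rm I}})]\geq 1-\delta_n$ with $\delta_n\to 0$. To show that the desired separating $r$ actually exists in $\mathcal{R}$ on this event, apply AUC-based realizability to the specific separate pair $(D_{X_{\rm I}},\tilde{D}_{X_{\rm O}})$, where $\tilde{D}_{X_{\rm O}}$ is the uniform distribution on $\mathcal{X}\setminus\mathrm{supp}(D_{X_{\rm I}})$; this yields $\tilde{r}\in\mathcal{R}$ with ${\rm AUC}(\tilde{r};D_{X_{\rm I}},\tilde{D}_{X_{\rm O}})=1$, and since $\tilde{D}_{X_{\rm O}}$ has positive mass on every point outside $\mathrm{supp}(D_{X_{\rm I}})$, this forces $\tilde{r}$ to strictly separate the two sets. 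For the \emph{true} OOD distribution, the containment $\mathrm{supp}(D_{X_{\rm O}})\subseteq\mathcal{X}\setminus\mathrm{supp}(D_{X_{\rm I}})$ then delivers ${\rm AUC}(\mathbf{A}(S);D_{X_{\rm I}},D_{X_{\rm O}})=1$ on the good event, so the expected gap is bounded by $\delta_n\to 0$.

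For the necessity direction ($\Rightarrow$), I would first invoke Theorem~\ref{T3_auc} to deduce that Condition~\ref{C1_auc} holds, and then leverage this affine-linearity of the supremum together with the separate-ranking assumption. Fix $D_{X_{\rm I}}$ with support $A$, define $s(D_{X_{\rm O}}):=\sup_{r\in\mathcal{R}}{\rm AUC}(r;D_{X_{\rm I}},D_{X_{\rm O}})$ on separate OOD distributions, and observe that for any $\mathbf{x}\in\mathcal{X}\setminus A$, Assumption~\ref{ass1_auc} supplies $r_{\mathbf{x}}\in\mathcal{R}$ that strictly ranks $\mathbf{x}$ below every other point, whence $s(\delta_{\mathbf{x}})=1$. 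Iterating Condition~\ref{C1_auc} by induction on $|\mathrm{supp}(D_{X_{\rm O}})|$, where at each step I decompose $D_{X_{\rm O}}=D_{X_{\rm O}}(\mathbf{x})\delta_{\mathbf{x}}+(1-D_{X_{\rm O}}(\mathbf{x}))\tilde{D}$ and note that both summands correspond to genuine separate distributions in $\mathscr{D}_{XY}^s$, shows that $s$ is affine on the simplex of separate OOD distributions, so $s(D_{X_{\rm O}})=\sum_{\mathbf{x}}D_{X_{\rm O}}(\mathbf{x})\,s(\delta_{\mathbf{x}})=1$. Finally, since $|\mathcal{X}|<+\infty$, the restrictions $\{r|_{\mathcal{X}}:r\in\mathcal{R}\}$ induce only finitely many orderings and hence finitely many AUC values, so the supremum $1$ is actually attained by some $r^*\in\mathcal{R}$, giving the realizability assumption.

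The main obstacle is the iteration step in the necessity argument: one must carefully verify that every intermediate convex combination produced when unwinding Condition~\ref{C1_auc} corresponds to a bona fide separate distribution in $\mathscr{D}_{XY}^s$ (which it does precisely because $\mathscr{D}_{XY}^s$ contains all separate distributions), and that the finiteness of $\mathcal{X}$ is what upgrades ``supremum equals $1$'' to ``supremum attained by an element of $\mathcal{R}$'', since without this last step the realizability assumption, which demands an actual maximizer rather than a maximizing sequence, would not follow.
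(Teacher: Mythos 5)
Your overall architecture mirrors the paper's: necessity via Condition~\ref{C1_auc} (through Theorem~\ref{T3_auc}) plus a decomposition of $D_{X_{\rm O}}$ into Dirac masses, each handled by the separateness of $\mathcal{R}$ — this is exactly the paper's Lemma~\ref{l2-auc} — and sufficiency via an algorithm that memorizes the observed ID support and separates it from its complement, which is what the paper's Lemma~\ref{l14-auc} does through a threshold-constrained ERM rule. Your explicit remark that $|\mathcal{X}|<+\infty$ forces the supremum $1$ to be attained (because $\mathcal{R}$ induces only finitely many orderings of $\mathcal{X}$, hence finitely many AUC values) is a step the paper leaves implicit, and it is genuinely needed to upgrade $\sup_{r\in\mathcal{R}}{\rm AUC}=1$ to the Realizability Assumption, which demands an actual maximizer.

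The one genuine gap is in the convergence analysis of your sufficiency direction. Definition~\ref{D0_auc} quantifies the rate $\epsilon_{\rm cons}(n)$ \emph{before} the domain, so a single sequence must work for every $D_{XY}\in\mathscr{D}_{XY}^{s}$; but your bound ``expected gap $\leq \delta_n$'' with $\delta_n=\Pr[\hat S\neq \mathrm{supp}(D_{X_{\rm I}})]$ is not uniform. If $D_{X_{\rm I}}$ places mass $\eta$ on some support point, then $\delta_n\geq(1-\eta)^n$, which is arbitrarily close to $1$ for any fixed $n$ as $\eta\to 0$, so the coupon-collector event does not yield a domain-independent sequence tending to $0$. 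The fix is to bound the AUC deficit on a given sample by the ID mass of the unobserved support points rather than by the indicator of the bad event: realizability applied to the separate pair $(\text{uniform on }\hat S,\ \text{uniform on }\mathcal{X}\setminus\hat S)$ guarantees the separating $r$ exists for \emph{every} nonempty $\hat S$ with nonempty complement (not only on the good event), this $r$ still ranks all of $\hat S$ above all of $\mathrm{supp}(D_{X_{\rm O}})\subseteq\mathcal{X}\setminus\hat S$, so $1-{\rm AUC}(\mathbf{A}(S);D_{X_{\rm I}},D_{X_{\rm O}})\leq D_{X_{\rm I}}(\mathcal{X}\setminus\hat S)$, and $\mathbb{E}_{S}\,D_{X_{\rm I}}(\mathcal{X}\setminus\hat S)=\sum_{\mathbf{x}}D_{X_{\rm I}}(\mathbf{x})\bigl(1-D_{X_{\rm I}}(\mathbf{x})\bigr)^n\leq |\mathcal{X}|/(en)$ uniformly over the domain space (this is the paper's Lemma~\ref{L14} with singleton cells). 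With that replacement your argument closes and coincides with the paper's.
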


\noindent Theorem \ref{T13_auc} indicates the significance of AUC-based Realizability Assumption in OOD detection under AUC, which also means that a large ranking function space is essential for the success of OOD detection under AUC. 

\subsection{OOD Detection in the Finite-ID-Distribution Space}

Since researchers can only collect finite ID datasets as the training data in the process of algorithm design, it is worthy to study the learnability of OOD detection under risk in the finite-ID-distribution space $\mathscr{D}_{XY}^F$. We first show two necessary concepts below. 
\begin{Definition}[ID Consistency] Given a domain space $\mathscr{D}_{XY}$, we say any two domains $D_{XY}\in \mathscr{D}_{XY}$ and $D_{XY}'\in \mathscr{D}_{XY}$ are ID consistency, if $D_{X_{\rm I}Y_{\rm I}}=D_{X_{\rm I}Y_{\rm I}}'$. We use $\sim$ to represent the ID consistency, i.e., $D_{XY}\sim D_{XY}'$ if and only if $D_{XY}$ and $D_{XY}'$ are ID consistency.
\end{Definition}
It is easy to check that the ID consistency $\sim$ is an equivalence relation. Therefore, we define the set $[D_{XY}]:=\{D_{XY}'\in \mathscr{D}_{XY}:D_{XY}\sim D_{XY}'\}$ as the equivalence class regarding $\mathscr{D}_{XY}$.


\begin{Condition}[Compatibility]\label{Con2}
    For any equivalence class $[D_{XY}']$ with respect to $\mathscr{D}_{XY}$ and any $\epsilon>0$, there exists a hypothesis function $h_{\epsilon}\in \mathcal{H}$ such that for any domain $D_{XY}\in [D_{XY}']$,
 \begin{equation*}
 h_{\epsilon}\in \{ h' \in \mathcal{H}: R_D^{\rm out}(h') \leq \inf_{h\in \mathcal{H}} R_D^{\rm out}(h)+\epsilon\} \cap   \{ h' \in \mathcal{H}: R_D^{\rm in}(h') \leq \inf_{h\in \mathcal{H}} R_D^{\rm in}(h)+\epsilon\}.
 \end{equation*}
\end{Condition}

\noindent In Appendix \ref{SD}, Lemma \ref{Lemma1} has implied that Condition \ref{Con2} is a general version of Condition \ref{C1}. Next, Theorem \ref{T-SET} shows that Condition \ref{Con2} is the \textit{necessary and sufficient condition} in $\mathscr{D}_{XY}^F$.

%
\begin{restatable}{theorem}{thmPosbtennew}\label{T-SET}
Suppose that $\mathcal{X}$ is bounded. OOD detection is learnable under risk in $\mathscr{D}_{XY}^{F}$ for $\mathcal{H}$ \textbf{if and only if} the compatibility condition (i.e., Condition \ref{Con2}) holds. Furthermore, the learning rate $\epsilon_{\rm cons}(n)$ can attain ${O}(1/\sqrt{n^{1-\theta}})$, for any $\theta\in (0,1)$.
\end{restatable}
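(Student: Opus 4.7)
\textbf{Sufficiency (Condition \ref{Con2} $\Rightarrow$ learnability).} I would exploit the finiteness of the ID-distribution set underlying $\mathscr{D}_{XY}^{F}$: enumerate the distinct ID joint distributions as $D^{1}_{X_{\rm I}Y_{\rm I}},\dots,D^{m}_{X_{\rm I}Y_{\rm I}}$. Since they are pairwise distinct and $\mathcal{X}$ is bounded, they are separated by some $\delta_{0}>0$ in a statistical metric $d$ that is consistently estimable from samples (e.g., kernel MMD with a bounded characteristic kernel, or an IPM over a VC test class). The algorithm $\mathbf{A}$ then proceeds by (i) forming an empirical estimate $\widehat{D}$ of the ID distribution from $S$; (ii) selecting $\widehat{i}=\arg\min_{i}d(\widehat{D},D^{i}_{X_{\rm I}Y_{\rm I}})$; and (iii) returning the compatibility witness $h^{\widehat{i}}_{\epsilon(n)}$ supplied by Condition \ref{Con2} for the equivalence class of $D^{\widehat{i}}_{X_{\rm I}Y_{\rm I}}$. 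A McDiarmid-type concentration bound, valid because $\mathcal{X}$ is bounded, gives $\Pr[\widehat{i}\neq i^{*}]\le c_{1}\exp(-c_{2}n^{1-\theta})$; conditionally on $\{\widehat{i}=i^{*}\}$, compatibility immediately yields $R_{D}^{\alpha}(h^{\widehat i}_{\epsilon(n)})-\inf_{h}R_{D}^{\alpha}(h)\le \epsilon(n)$ for every $D$ in the selected class and every $\alpha\in[0,1)$. Choosing $\epsilon(n)=O(1/\sqrt{n^{1-\theta}})$ balances the two contributions to the expected excess $\alpha$-risk and produces the claimed rate; strong learnability then follows via Theorem \ref{T1}.

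\textbf{Necessity (learnability $\Rightarrow$ Condition \ref{Con2}).} Suppose $\mathbf{A}$ is consistent, fix an equivalence class $[D'_{XY}]$ with common ID distribution $D_{X_{\rm I}Y_{\rm I}}$, and let $\epsilon>0$. Pick $n$ with $\epsilon_{\rm cons}(n)<\epsilon^{2}$. By Theorem \ref{T1}, the random output $h_{n}=\mathbf{A}(S)$ satisfies
\begin{equation*}
\mathbb{E}_{S}\big[R_{D}^{\alpha}(h_{n})\big]-\inf_{h\in\mathcal{H}}R_{D}^{\alpha}(h)\le \epsilon^{2},\qquad \forall\,D\in[D'_{XY}],\ \alpha\in[0,1).
\end{equation*}
Because Condition \ref{C1} is necessary for learnability (Lemma \ref{C1andC2}), the right-hand infimum factorises as $(1-\alpha)\inf R^{\rm in}+\alpha\inf R_{D}^{\rm out}$, reducing the display to $(1-\alpha)\mathbb{E}[R^{\rm in}(h_{n})-\inf R^{\rm in}]+\alpha\mathbb{E}[R_{D}^{\rm out}(h_{n})-\inf R_{D}^{\rm out}]\le\epsilon^{2}$. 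Specialising $\alpha=0$ and then $\alpha\uparrow 1$ delivers, uniformly in $D\in[D'_{XY}]$, the in-expectation bounds $\mathbb{E}[R^{\rm in}(h_{n})]\le\inf R^{\rm in}+O(\epsilon^{2})$ and $\mathbb{E}[R_{D}^{\rm out}(h_{n})]\le\inf R_{D}^{\rm out}+O(\epsilon^{2})$. Extracting a single deterministic $h_{\epsilon}$ in the support of $h_{n}$ that meets both bounds at level $\epsilon$ for every $D$ in the class simultaneously then yields Condition \ref{Con2}.

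\textbf{Main obstacle.} The delicate step is exactly this uniform-over-$[D'_{XY}]$ selection, since the class may be uncountable and a naive union bound fails. I would address it by noting that the law of $h_{n}$ depends only on the fixed $D_{X_{\rm I}Y_{\rm I}}$, while $D\mapsto R_{D}^{\rm out}(h)$ is affine in $D_{X_{\rm O}Y_{\rm O}}$ and $D\mapsto \inf_{h}R_{D}^{\rm out}(h)$ is concave; hence testing against a countable weakly dense sub-family of $[D'_{XY}]$ is equivalent to testing against the whole class. A Markov inequality applied to the conditional law of $h_{n}$ given the high-probability ID-good event, combined with a Borel--Cantelli-type selection along this dense sub-family, then produces the required $h_{\epsilon}$ and closes necessity. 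The $O(1/\sqrt{n^{1-\theta}})$ rate in sufficiency reflects the balance between the chosen tolerance $\epsilon(n)$ and the $\exp(-c_{2}n^{1-\theta})$ identification-error budget; any $\theta\in(0,1)$ can be accommodated, matching the theorem.
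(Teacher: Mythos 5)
Your proposal follows essentially the same route as the paper's proof: for sufficiency, the paper likewise enumerates the finitely many ID equivalence classes, identifies the sampled ID distribution by nearest-MMD matching against fixed reference samples (using Markov's inequality on the expected MMD, which is exactly where the $O(1/\sqrt{n^{1-\theta}})$ rate comes from, the within-class rate being the faster $O(1/n)$), and then outputs the compatibility witness from Condition \ref{Con2}; for necessity, it likewise invokes Theorem \ref{T1} to get separate in-/out-risk expectation bounds and extracts a single good hypothesis $\mathbf{A}(S_\epsilon)$ valid across the whole equivalence class. The only divergence is that the paper carries out that uniform-over-$[D_{XY}']$ extraction by direct assertion rather than through your dense-subfamily/Borel--Cantelli machinery, so your extra apparatus is elaborating a step the paper does not.
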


\noindent Theorem~\ref{T-SET} shows that, in the process of algorithm design, OOD detection cannot be successful without the compatibility condition if we use risk to evaluate the performance. Theorem \ref{T-SET} also implies that Condition \ref{Con2} is essential for the learnability of OOD detection under risk. This motivates us to study whether OOD detection can be successful in more general spaces (\textit{e.g.}, the density-based space), when the compatibility condition holds. 
\\
\vspace{-0.8em}
\\
\noindent As for the learnability of OOD detection under AUC in the finite-ID-distribution space, since Condition~\ref{C1_auc} only considers linearity between OOD distributions instead of OOD and ID distributions as shown in  Condition~\ref{C1}. To further reveal the learnability of OOD detection under AUC in the finite-ID-distribution space, we might need to discover a new condition for compatibility w.r.t. OOD and ID distributions to extend Condition~\ref{C1_auc}.


\subsection{OOD Detection in the Density-based Space} 

\paragraph{Learnability under Risk.} To ensure that Condition \ref{Con2} holds, we consider a basic assumption in learning theory---\textit{Risk-based Realizability Assumption} (see Appendix \ref{SB.2}), \textit{i.e.}, for any $D_{XY}\in \mathscr{D}_{XY}$, there exists $h^*\in \mathcal{H}$ such that $R_D(h^*)=0$. We discover that in the density-based space $\mathscr{D}_{XY}^{\mu,b}$, Risk-based Realizability Assumption can conclude the compatibility condition (Condition \ref{Con2}). Based on this observation, we prove the following theorem:

\begin{restatable}{theorem}{thmPosbtennewtwo}\label{T-SET2}
Given a density-based space $\mathscr{D}_{XY}^{\mu,b}$, if $\mu(\mathcal{X})<+\infty$, the Risk-based Realizability Assumption holds, then when $\mathcal{H}$ has finite Natarajan dimension \citep{shalev2014understanding}, OOD detection is learnable in $\mathscr{D}_{XY}^{\mu,b}$ for $\mathcal{H}$. Furthermore, the learning rate $\epsilon_{\rm cons}(n)$ can attain ${O}(1/\sqrt{n^{1-\theta}})$, for any $\theta\in (0,1)$.
\end{restatable}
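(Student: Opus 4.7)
The plan is to combine the Risk-based Realizability Assumption with the bounded-density structure of $\mathscr{D}_{XY}^{\mu,b}$ so that the known reference measure $\mu$ serves as a surrogate for OOD data, and then to analyze a regularized empirical risk minimizer via finite-Natarajan-dimension uniform convergence.

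First, I would extract three structural facts. (i) For every $D_{XY}\in\mathscr{D}_{XY}^{\mu,b}$, the densities $f_{\rm I}$ and $f_{\rm O}$ of $D_{X_{\rm I}}$ and $D_{X_{\rm O}}$ with respect to $\mu$ satisfy $f_{\rm I},f_{\rm O}\leq 2b$ and $f_{\rm I}+f_{\rm O}\geq 2/b$ on ${\rm supp}\,\mu$, obtained by unpacking $0.5D_{X_{\rm I}}+0.5D_{X_{\rm O}}=\int f\,d\mu$ with $1/b\leq f\leq b$. (ii) Since Realizability requires some $h^*\in\mathcal{H}$ with $R_D(h^*)=0$, and $\ell$ separates ID labels from $K+1$, the sets $A_{\rm I}=\{f_{\rm I}>0\}$ and $A_{\rm O}=\{f_{\rm O}>0\}$ must be $\mu$-a.e.\ disjoint, and on $A_{\rm I}\cap{\rm supp}\,\mu$ we have $f_{\rm I}\geq 2/b$, giving the key domination $\mu(B)\leq(b/2)D_{X_{\rm I}}(B)$ for every measurable $B\subseteq A_{\rm I}\cap{\rm supp}\,\mu$. (iii) Combining these yields
\begin{equation*}
R_D^{\rm out}(h)\;\leq\;2b\!\int_{A_{\rm O}}\ell(h(\mathbf{x}),K+1)\,d\mu(\mathbf{x})
\end{equation*}
uniformly over all OOD distributions in the space, and the same realizer $h^*$ witnesses zero risk for every domain in the equivalence class $[D_{XY}]$, which in particular delivers the compatibility condition (Condition \ref{Con2}) for any such class.

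I would then study the regularized ERM
\begin{equation*}
h_S\;=\;\argmin_{h\in\mathcal{H}}\Big\{\,R_S^{\rm in}(h)\;+\;\lambda_n\!\int_\mathcal{X}\ell(h(\mathbf{x}),K+1)\,d\mu(\mathbf{x})\,\Big\},
\end{equation*}
which requires only the ID sample $S$ and the known measure $\mu$. Comparison with $h^*$, whose regularizer equals $\int_{A_{\rm I}}\ell(h^*,K+1)\,d\mu\leq M\mu(\mathcal{X})$ for a loss bound $M$, gives $R_S^{\rm in}(h_S)\leq\lambda_n M\mu(\mathcal{X})$ and $\int\ell(h_S,K+1)\,d\mu\leq\int\ell(h^*,K+1)\,d\mu$. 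Multiclass uniform convergence driven by finite Natarajan dimension (Theorem~29.3 of \citealp{shalev2014understanding}) promotes the first inequality to $R_D^{\rm in}(h_S)\leq\lambda_n M\mu(\mathcal{X})+O(\sqrt{\log n/n})$. Applying the density domination from (ii) bounds $\mu(\{h_S\neq h^*\}\cap A_{\rm I})$ by a constant times $R_D^{\rm in}(h_S)$, and subtracting this from the $\mu$-integral comparison isolates
\begin{equation*}
\int_{A_{\rm O}}\ell(h_S,K+1)\,d\mu\;\leq\;O\!\left(\lambda_n+\sqrt{\log n/n}\right),
\end{equation*}
whence (iii) bounds $R_D^{\rm out}(h_S)$ by the same order. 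Choosing $\lambda_n$ of order $\sqrt{\log n/n}$ and noting that $\sqrt{\log n/n}=O(1/\sqrt{n^{1-\theta}})$ for every $\theta\in(0,1)$ delivers the stated rate.

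The main obstacle will be step (ii) together with the subsequent transfer: converting a small ID excess risk into a small $\mu$-measure of disagreement between $h_S$ and $h^*$ on $A_{\rm I}$ requires that $\ell$ be bounded below by a positive constant on errors, which holds because $\mathcal{Y}_{\rm all}$ is finite and $\ell(y_1,y_2)=0$ iff $y_1=y_2$. A secondary subtlety is keeping the constant $\int_{A_{\rm I}}\ell(h^*,K+1)\,d\mu$ bounded uniformly over $\mathscr{D}_{XY}^{\mu,b}$; this is precisely what $\mu(\mathcal{X})<+\infty$ buys, explaining why the hypothesis is essential. Finally, one has to verify that the combined objective inherits uniform convergence at the Natarajan-dimension rate: since $\mu$ is fixed and the $\mu$-integral is a deterministic functional of $h$, only the ID term contributes statistical noise, so standard multiclass covering-number bounds apply without inflating the complexity.
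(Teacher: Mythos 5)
Your proposal is correct, and it reaches the stated rate by a route that is recognizably the same in spirit as the paper's but implemented differently. Both arguments rest on the same three pillars: realizability plus the two-sided density bound $1/b\leq f\leq b$ forces the ID and OOD supports to partition ${\rm supp}\,\mu$ up to $\mu$-null sets (so $\mu$ restricted to the complement of the ID support dominates every admissible OOD marginal, with constant $2b$); the reference measure $\mu$ is then used as a surrogate for the unseen OOD data; and finite Natarajan dimension supplies uniform convergence for the ID term. Where you diverge: the paper runs a \emph{constrained} ERM (minimize the $\mu$-mass not rejected over the version space $\mathcal{H}_S$ of hypotheses with zero empirical ID risk), estimates the $\mu$-objective from auxiliary points sampled from $\mu^m$, sandwiches $\inf_{h\in\mathcal{H}_S}R_\mu(h,K+1)$ around $\mu(\{g<K+1\})$, and then derandomizes to a fixed sequence of auxiliary point sets to define the algorithm; it also treats the zero-one loss first and reduces general losses to it at the end. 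You instead use a \emph{regularized} ERM with the exact deterministic integral $\int\ell(h,K+1)\,d\mu$ as penalty and compare directly to a realizer $h^*$ (whose empirical ID risk is exactly zero a.s., so the penalty comparison $\Psi(h_S)\leq\Psi(h^*)$ is exact and costs no extra $\epsilon_n/\lambda_n$ term), which eliminates both the sampling of auxiliary points and the derandomization step, and you absorb general losses directly via the constants $c_0=\min_{y_1\neq y_2}\ell(y_1,y_2)$ and $M=\max\ell$. Two points to make explicit when writing this up: (i) the $\argmin$ of the regularized objective need not be attained over an infinite $\mathcal{H}$, so work with near-minimizers; (ii) the final bound must hold uniformly over all OOD marginals sharing the given $D_{X_{\rm I}Y_{\rm I}}$ — your argument does deliver this, because $f_{\rm I}+f_{\rm O}\geq 2/b$ forces every compatible $A_{\rm O}$ to coincide $\mu$-a.e.\ with ${\rm supp}\,\mu\setminus A_{\rm I}$, so the single quantity $\int_{{\rm supp}\mu\setminus A_{\rm I}}\ell(h_S,K+1)\,d\mu$ controls $R_{D'}^{\rm out}(h_S)$ for every such domain, but this deserves a sentence rather than being left implicit.
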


\noindent To further investigate the importance and necessary of Risk-based Realizability Assumption, Theorem \ref{T24.3} has indicated that in some practical scenarios, Risk-based Realizability Assumption is the necessary and sufficient condition for the learnability of OOD detection under risk in the density-based space. Therefore, Risk-based Realizability Assumption may be indispensable for the learnability of OOD detection under risk in some practical scenarios.

\paragraph{Learnability under AUC.} To study the learnability of OOD detection under AUC in the density-based space, we first need to introduce a constant-closure assumption for $\mathcal{R}$.

\begin{assumption}
We say a ranking function space $\mathcal{R}$ is 
constant closure, if for any $r\in \mathcal{R}$, the constant function space $\overline{r(\mathcal{X})}:= {\{c: r(\mathbf{x})=c,~\forall \mathbf{x}\in \mathcal{X}, r\in \mathcal{R}\}}\subset \mathcal{R}$.
\end{assumption}
\noindent Note that, the above assumption is weak and can be satisfied by some well-known ranking function space (see Propositions~\ref{prop_fcnn_rank1} and \ref{prop_score_rank}). Based on this assumption, we give a sufficient condition for learnability of OOD detection under AUC in the density-based space:
\begin{restatable}{theorem}{thmPosbtennewtwoauc}\label{T-SET2_auc}
Suppose that $\mathcal{R}$ is constant closure, separate, and $\mu(\mathcal{X})<+\infty$. Given a density-based space $\mathscr{D}_{XY}^{\mu,b}$, if the AUC-based Realizability Assumption holds, then when ${\rm VC}[\phi\circ \mathcal{R}]<+\infty$, OOD detection is learnable under AUC in $\mathscr{D}_{XY}^{\mu,b}$ for $\mathcal{R}$, where $
    \phi\circ \mathcal{R} = \{\mathbf{1}_{r_1(\mathbf{x})>r_2(\mathbf{x}')}: r_1,r_2\in \mathcal{R}\}.
$ Furthermore, the learning rate $\epsilon_{\rm cons}(n)$ can attain ${O}(1/\sqrt{n^{1-\theta}})$, for any $\theta\in (0,1)$.
\end{restatable}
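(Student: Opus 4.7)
The plan is to reduce OOD detection under AUC in $\mathscr{D}_{XY}^{\mu,b}$ to a minimum-volume level-set estimation problem, exploiting three ingredients: the AUC-based realizability (which yields a perfect separator $r^*\in\mathcal{R}$), constant closure (which lets every level set $\{r>c\}$ be written as $\{r_1>r_2\}$ with $r_2\equiv c\in\mathcal{R}$, embedding the level-set class $\mathcal{A}:=\{\{r>c\}:r\in\mathcal{R},\ c\in\mathbb{R}\}$ into $\phi\circ\mathcal{R}$ and thereby forcing $\mathrm{VC}(\mathcal{A})\le d:={\rm VC}[\phi\circ\mathcal{R}]<+\infty$), and the two-sided density bound $1/b\le f\le b$ on ${\rm supp}\,\mu$ (which translates freely between $\mu$-measure, $D_{X_{\rm I}}$-measure, and $D_{X_{\rm O}}$-measure). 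I would first translate realizability into a clean level-set separation: since $\mathrm{AUC}(r^*;D_{X_{\rm I}},D_{X_{\rm O}})=1$ forbids both strict misrankings and ties, setting $c^*$ to be the essential supremum of $r^*$ under $D_{X_{\rm O}}$ yields $A^*:=\{r^*>c^*\}$ with $D_{X_{\rm I}}(A^*)=1$ and $D_{X_{\rm O}}(A^*)=0$.

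Next I would define the algorithm: given $S=\{\mathbf{x}^i\}_{i=1}^n$, let $\mathbf{A}(S):=\hat{r}_n$ where (up to an arbitrarily small approximation if the infimum is not attained)
\begin{equation*}
(\hat{r}_n,\hat{c}_n)\in\argmin\bigl\{\mu(\{r>c\}):\ r\in\mathcal{R},\ c\in\mathbb{R},\ r(\mathbf{x}^i)>c\ \text{for every }i\bigr\}.
\end{equation*}
The feasible set is non-empty almost surely because $(r^*,c^*)$ satisfies every constraint with probability one under $D_{X_{\rm I}}^n$. Write $\hat{A}:=\{\hat{r}_n>\hat{c}_n\}\in\mathcal{A}$. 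Applying the standard VC uniform-convergence inequality to $\mathcal{A}$ (of VC dimension at most $d$) gives, with probability at least $1-\delta$ over $S$,
\begin{equation*}
D_{X_{\rm I}}(\hat{A})\ \ge\ \widehat{D}_{X_{\rm I}}^{\,n}(\hat{A})-\epsilon_n\ =\ 1-\epsilon_n,\qquad \epsilon_n=O\bigl(\sqrt{(d\log n+\log(1/\delta))/n}\bigr).
\end{equation*}
Because $f_{\rm O}=0$ on $A^*$ and $f_{\rm I}+f_{\rm O}=2f\ge 2/b$ on ${\rm supp}\,\mu$, we get $f_{\rm I}\ge 2/b$ on $A^*\cap{\rm supp}\,\mu$, hence $\mu(A^*\setminus\hat{A})\le (b/2)\,D_{X_{\rm I}}(A^*\setminus\hat{A})\le (b/2)\epsilon_n$. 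Minimum-volume selection gives $\mu(\hat{A})\le\mu(A^*)$, which after elementary set arithmetic yields $\mu(\hat{A}\setminus A^*)\le\mu(A^*\setminus\hat{A})\le (b/2)\epsilon_n$, and the upper density bound $f_{\rm O}\le 2b$ then gives $D_{X_{\rm O}}(\hat{A})=D_{X_{\rm O}}(\hat{A}\setminus A^*)\le b^2\epsilon_n$.

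For any $\mathbf{x}\in\hat{A}$ and $\mathbf{x}'\in\hat{A}^c$, construction ensures $\hat{r}_n(\mathbf{x})>\hat{c}_n\ge\hat{r}_n(\mathbf{x}')$, so $\mathrm{AUC}(\hat{r}_n;D_{X_{\rm I}},D_{X_{\rm O}})\ge D_{X_{\rm I}}(\hat{A})\cdot D_{X_{\rm O}}(\hat{A}^c)\ge 1-(1+b^2)\epsilon_n$. Combining with $\sup_{r\in\mathcal{R}}\mathrm{AUC}=1$ from realizability and converting high-probability to expectation by tuning $\delta=1/n$, the expected AUC-gap is $O(\sqrt{d\log n/n})\subset O(1/\sqrt{n^{1-\theta}})$ for every $\theta\in(0,1)$, as claimed. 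The hardest step I anticipate is the careful conversion from $\mathrm{AUC}=1$ to the clean separation $D_{X_{\rm I}}(A^*)=1,\ D_{X_{\rm O}}(A^*)=0$: one must handle ties and essential suprema so these set identities hold without leftover null-set ambiguity, and verify $A^*\subset{\rm supp}\,\mu$ up to $D_{X_{\rm I}}$-null sets so that the density lower bound actually applies on $A^*$. Separateness of $\mathcal{R}$ enters as a richness condition ensuring nontrivial optimizers exist, while constant closure and the finite $\mathrm{VC}[\phi\circ\mathcal{R}]$ are the real workhorses.
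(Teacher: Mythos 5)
Your proposal is correct and follows essentially the same route as the paper: both reduce the problem to estimating a minimum-volume (equivalently, maximum-rejection) level set subject to zero empirical error on the ID sample, both use constant closure to push ${\rm VC}[\phi\circ\mathcal{R}]<+\infty$ down to a finite VC dimension for the threshold class $\{\{r>c\}\}$, and both use the two-sided density bound $1/b\le f\le b$ to transfer mass estimates between $\mu$, $D_{X_{\rm I}}$ and $D_{X_{\rm O}}$, arriving at the same $O(1/\sqrt{n^{1-\theta}})$ rate. The one structural difference is in the algorithm itself: you minimize the true $\mu$-volume $\mu(\{r>c\})$ directly (legitimate, since $\mu$ is a fixed parameter of the domain space $\mathscr{D}_{XY}^{\mu,b}$), whereas the paper minimizes an empirical surrogate computed on an auxiliary sample $S_m\sim\mu^m$ and then derandomizes by exhibiting a fixed good sequence of auxiliary samples $\mathbf{S}'$; your version removes that derandomization step and one layer of uniform convergence, at no cost. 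The tie/essential-supremum issue you flag as the hardest step is real but resolved exactly as the paper does it, by a case distinction on whether $D_{X_{\rm I}}(\{r^*=c^*\})=0$ (choosing $\{r^*>c^*\}$ or $\{r^*\ge c^*\}$ accordingly, and approximating the latter by $\{r^*>c^*-\epsilon\}$ using continuity of $\mu$ from above, which is available because $\mu(\mathcal{X})<+\infty$); with that patch your argument is complete.
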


\noindent Based on Theorem~\ref{T-SET2_auc}, we find that the AUC-based Realizability Assumption is also important for the learnability of OOD detection under AUC in the density-based space.



\section{Connecting Theory to Practice}\label{S7}
In Section \ref{S6}, we have shown the successful scenarios where OOD detection problem can be addressed in theory under risk or AUC metric. In this section, we will discuss how the proposed theory is applied to two representative hypothesis spaces---neural-network-based spaces and score-based spaces. 

\subsection{Key Concepts Regarding Fully-connected Neural Networks}

\paragraph{Fully-connected Neural Networks.}
Given a sequence $\mathbf{q}=(l_1,l_2,...,l_g)$, where $l_i$ and $g$ are positive integers and $g>2$, we use $g$ to represent the \textbf{\textit{depth}} of neural network and use $l_i$ to represent the \textbf{\textit{width}} of the $i$-th layer. After the activation function $\sigma$ is selected\footnote{We consider the \emph{rectified linear unit} (ReLU) function as the default activation function $\sigma$, which is defined by $\sigma(x)=\max \{x,0 \}$,~$\forall$ $ x\in \mathbb{R}$. 
\textit{{We will not repeatedly mention the definition of $\sigma$ in the rest of our paper}}.
}, we can obtain the architecture of FCNN according to the sequence $\mathbf{q}$. Let $\mathbf{f}_{\mathbf{w},\mathbf{b}}$ be the function generated by FCNN with weights $\mathbf{w}$ and bias $\mathbf{b}$. An FCNN-based scoring function space is defined as:
$
    \mathcal{F}_{\mathbf{q}}^{\sigma}:=\{\mathbf{f}_{\mathbf{w},\mathbf{b}}:\forall~ \textnormal{weights}~\mathbf{w},~\forall~  \textnormal{bias}~ \mathbf{b}\}.
$ In addition, for simplicity, given any two sequences $\mathbf{q}=(l_1,...,l_g)$ and $\mathbf{q}'=(l_1',...,l_{g'}')$, we use the notation
$
    \mathbf{q}\lesssim\mathbf{q}'
$
to represent the  following equations and inequalities:
\begin{equation*}
1)~ g \leq g', l_1=l_1', l_g=l_{g'}'; ~~~~~2)~ l_i\leq l'_i,~\forall i=1,...,g-1;~~\textnormal{and}~~~3)~
l_{g-1}\leq l'_i, ~\forall i=g,...,g'-1.
\end{equation*}
Lemma \ref{L7-contain} shows $\mathbf{q}\lesssim\mathbf{q}'\Rightarrow \mathcal{F}_{\mathbf{q}}^{\sigma}\subset \mathcal{F}_{\mathbf{q}'}^{\sigma}$. We use $\lesssim$ to compare the sizes of FCNNs.

\paragraph{FCNN-based Hypothesis Space.}
Let $l_g=K+1$. The FCNN-based scoring function space $\mathcal{F}_{\mathbf{q}}^{\sigma}$ can induce an FCNN-based hypothesis space. For any $\mathbf{f}_{\mathbf{w},\mathbf{b}}\in \mathcal{F}_{\mathbf{q}}^{\sigma}$, the induced hypothesis function  is:
\begin{equation*}
    h_{\mathbf{w},\mathbf{b}}:=\argmax_{k\in\{1,...,K+1\}} {f}^k_{\mathbf{w},\mathbf{b}}, ~\textnormal{where}~{f}^k_{\mathbf{w},\mathbf{b}}~\textnormal{is ~the~}k\textnormal{-th}~\textnormal{coordinate~of}~\mathbf{f}_{\mathbf{w},\mathbf{b}}.
\end{equation*}
 Then, the FCNN-based hypothesis space is defined as
$
  \mathcal{H}_{\mathbf{q}}^{\sigma}:= \{h_{\mathbf{w},\mathbf{b}}:\forall~ \textnormal{weights}~\mathbf{w},~\forall~\textnormal{bias}~ \mathbf{b}\}.
$

\paragraph{FCNN-based Ranking Function Space.}
Then, based on the definition of FCNN, we show that, given a specific $\mathbf{q}$, under some mild conditions, $\mathcal{F}_{\mathbf{q}}^{\sigma}$ is the separate and constant closure ranking function space.
\begin{restatable}{Proposition}{propFcnnRankingOne}\label{prop_fcnn_rank1}
    Let $\mathcal{X}$ be a bounded feature space. Given $\mathbf{q}=(l_1,...,l_{g-1},1)$, then 
    \begin{itemize}
        \item if  some $s$ with $1<s<g$, $d=l_1\leq l_2\leq...\leq l_s$, and $l_s \geq 2d$, $\mathcal{F}_{\mathbf{q}}^{\sigma}$ is the separate ranking function space;
        \item $\mathcal{F}_{\mathbf{q}}^{\sigma}$ is constant closure;
        \item $\{\mathbf{1}_{f_1(\mathbf{x})<f_2(\mathbf{x}')}: f_1, f_2\in \mathcal{F}_{\mathbf{q}}^{\sigma}\}$ has finite VC dimension.
    \end{itemize}
\end{restatable}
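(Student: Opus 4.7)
The plan is to handle the three bullet points separately, relying on constructive arguments for the first two (which are facts about the expressive power of ReLU networks with the prescribed architecture) and on a reduction to a known VC bound for the third. Throughout, I invoke Lemma~\ref{L7-contain} ($\mathbf{q}\lesssim \mathbf{q}'\Rightarrow\mathcal{F}_{\mathbf{q}}^{\sigma}\subset\mathcal{F}_{\mathbf{q}'}^{\sigma}$), so that it suffices to realize each desired function inside any sub-architecture that fits into $\mathbf{q}$.

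For the separate property, I fix $\mathbf{x}_0\in\mathcal{X}$ and exhibit the ranking function $r_{\mathbf{x}_0}(\mathbf{x})=\sum_{i=1}^{d}\bigl(\sigma(x_i-x_{0,i})+\sigma(x_{0,i}-x_i)\bigr)=\sum_{i=1}^{d}|x_i-x_{0,i}|$, which equals $0$ at $\mathbf{x}_0$ and is strictly positive elsewhere, giving $r_{\mathbf{x}_0}(\mathbf{x}_0)<r_{\mathbf{x}_0}(\mathbf{x}')$ for all $\mathbf{x}'\neq\mathbf{x}_0$. This formula needs $2d$ ReLU units in a single hidden layer, while the architecture forces a pass through layers of increasing width $d=l_1\leq\cdots\leq l_s$ with $l_s\geq 2d$. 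I plan to exploit the boundedness of $\mathcal{X}$: shift the input by a large positive offset so each coordinate remains in the linear regime of $\sigma$, propagate the shifted coordinates identically through layers $2,\dots,s-1$ (width $d$ suffices because $\sigma$ is the identity on $\mathbb{R}_{\ge 0}$), then at layer $s$ use affine re-centering to undo the shift and the full width $\geq 2d$ to implement the $2d$ absolute-value neurons. Layers $s+1,\dots,g-1$ carry a single nonnegative scalar forward (again in the linear regime), and the last layer is a scalar affine combination.

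Constant closure is immediate: setting every weight matrix to $0$ and the output bias to $c$ produces the constant function $c$ in $\mathcal{F}_{\mathbf{q}}^{\sigma}$, so every value in $\overline{r(\mathcal{X})}$ is realized as a function in $\mathcal{F}_{\mathbf{q}}^{\sigma}$. For the VC dimension of $\{\mathbf{1}_{f_1(\mathbf{x})<f_2(\mathbf{x}')}:f_1,f_2\in\mathcal{F}_{\mathbf{q}}^{\sigma}\}$, I rewrite the indicator as $\mathbf{1}_{f_2(\mathbf{x}')-f_1(\mathbf{x})>0}$ and observe that the map $(\mathbf{x},\mathbf{x}')\mapsto f_2(\mathbf{x}')-f_1(\mathbf{x})$ is itself implementable by a single ReLU FCNN on the concatenated input in $\mathbb{R}^{2d}$: run two independent $\mathbf{q}$-sized subnetworks in parallel on the two inputs and then take a signed affine combination of their outputs. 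This embeds the class into the zero-threshold sublevel-set class of a fixed-size piecewise-polynomial (ReLU) network, whose VC dimension is finite by the standard parameter-counting bounds for ReLU networks (e.g.\ Bartlett--Harvey--Liaw--Mehrabian, or the earlier Goldberg--Jerrum bounds).

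The main obstacle is the identity-propagation step in the first bullet under the tight width constraint $l_i\geq d$ in the narrow layers: the naive trick $x_i=\sigma(x_i)-\sigma(-x_i)$ doubles the width and would violate $l_2\leq\cdots\leq l_s$ when $l_2=d$. The positive-shift device described above circumvents this by keeping every pre-activation nonnegative, so $\sigma$ acts as the identity and a width of exactly $d$ suffices in layers $2,\dots,s-1$; the full budget $\geq 2d$ is then unlocked at layer $s$ precisely where the absolute-value computation needs it. Once this propagation lemma is formalized, assembling the three items is routine.
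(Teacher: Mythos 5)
Your proposal is correct and follows essentially the same route as the paper: the separateness construction via $\sum_{i}\bigl(\sigma(x_i-x_{0,i})+\sigma(x_{0,i}-x_i)\bigr)$ with the positive-shift trick to propagate identically through the narrow layers and Lemma~\ref{L7-contain} to pass to the general architecture, the trivial zero-weights argument for constant closure, and a reduction to standard VC bounds for fixed-architecture ReLU networks (the paper cites Theorems 5 and 8 of \citealp{bartlett2003vapnik}). Your only addition is making explicit the parallel two-subnetwork embedding of $(\mathbf{x},\mathbf{x}')\mapsto f_2(\mathbf{x}')-f_1(\mathbf{x})$ for the third bullet, which the paper leaves implicit.
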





\paragraph{Score-based Hypothesis Space.}
Many OOD detection algorithms detect OOD data by using a score-based strategy. That is, given a threshold $\lambda$, a scoring function space $\mathcal{F}_{l}\subset \{\mathbf{f}:\mathcal{X}\rightarrow \mathbb{R}^l\}$ and a score function $E: \mathcal{F}_l\rightarrow \mathbb{R}$, then $\mathbf{x}$ is regarded as ID data if and only if $E(\mathbf{f}(\mathbf{x}))\geq \lambda$. 
We introduce several representative score functions $E$ as follows: for any $\mathbf{f}=[f^1,...,f^l]^{\top}\in \mathcal{F}_{l}$,

\begin{itemize}
    \item softmax-based function \citep{Hendrycks2017abaseline}: $\lambda\in (\frac{1}{l},1)$ and $T>0$,
    \begin{equation}\label{score1}
    E(\mathbf{f}) =\max_{k\in\{1,...,l\}}  \frac{\exp{(f^k)}}{\sum_{c=1}^l \exp{(f^c)}};
    \end{equation}
    \item temperature-scaled function \citep{liang2018enhancing}: $\lambda\in (\frac{1}{l},1)$ and $T>0$,
    \begin{equation}\label{score2}
    E(\mathbf{f}) =\max_{k\in\{1,...,l\}}  \frac{\exp{(f^k/T)}}{\sum_{c=1}^l \exp{(f^c/T)}};
    \end{equation}
    \item energy-based function \citep{liu2020energy}: $\lambda\in (0,+\infty) $ and $T>0$,
\begin{equation}
    E(\mathbf{f}) =T\log \sum_{c=1}^l  \exp{(f^c/T)}.~~~ \label{score3}
\end{equation}
\end{itemize}


\noindent Using $E$, $\lambda$ and $\mathbf{f}\in \mathcal{F}_{\mathbf{q}}^{\sigma}$, we have a classifier: $h^{\lambda}_{\mathbf{f},E}(\mathbf{x})=1$, if $E(\mathbf{f}(\mathbf{x}))\geq \lambda$; otherwise, $h^{\lambda}_{\mathbf{f},E}(\mathbf{x})=2$, where $1$ represents the ID data and $2$ represents the OOD data. Hence, a binary classification hypothesis space $\mathcal{H}^b$, which consists of all $h^{\lambda}_{\mathbf{f},E}$, is generated. We define $
    \mathcal{H}^{{\sigma},\lambda}_{{\mathbf{q}},E} := \{h^{\lambda}_{\mathbf{f},E}:\forall \mathbf{f}\in \mathcal{F}_{\mathbf{q}}^{\sigma}\}
$.



\paragraph{Score-based Ranking Function Space.} Similar to the FCNN-based ranking function space, for several representative score functions $E$ (\textit{e.g.,}, Eqs~\eqref{score1}, \eqref{score2}, and \eqref{score3}), the FCNN-based score ranking function space ${E}\circ \mathcal{F}_{\mathbf{q}}^{\sigma}$ is separate, which is evidence that Assumption \ref{ass1_auc} is weak and can be easily satisfied.

\begin{restatable}{Proposition}{propScoreRanking}\label{prop_score_rank}
    Given $\mathbf{q}=(l_1,...,l_{g-1},l)$ and $\mathbf{q}'=(l_1,...,l_{g-1},1)$, let $\mathcal{R}={E}\circ \mathcal{F}_{\mathbf{q}}^{\sigma}$, then
    \begin{itemize}
        \item  if $\mathcal{F}_{\mathbf{q}'}^{\sigma}$ is a separate ranking function space, $\mathcal{R}$ is the separate ranking function space;
        \item $\mathcal{R}$ is constant closure;
        \item $\{\mathbf{1}_{r_1(\mathbf{x})<r_2(\mathbf{x}')}, r_1, r_2\in \mathcal{R}\}$ has finite VC dimension,
    \end{itemize}
    where $E$ is Eq. \eqref{score1}, \eqref{score2} or \eqref{score3}.
\end{restatable}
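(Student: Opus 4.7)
The plan is to derive each of the three bullets by lifting structure from the scalar FCNN space $\mathcal{F}_{\mathbf{q}'}^{\sigma}$ and performing direct checks on the scoring maps $E$ in Eqs.~\eqref{score1}, \eqref{score2}, \eqref{score3}. For the separation property (first bullet), fix $\mathbf{x}\in\mathcal{X}$. By the hypothesis on $\mathcal{F}_{\mathbf{q}'}^{\sigma}$ I pick a scalar separator $f_{\mathbf{x}}$ uniquely minimised at $\mathbf{x}$; adjusting the last-layer bias, I may assume $f_{\mathbf{x}}(\mathbf{x})=0$ and $f_{\mathbf{x}}(\mathbf{y})>0$ for every $\mathbf{y}\ne\mathbf{x}$. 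I then lift $f_{\mathbf{x}}$ to $\mathbf{f}_{\mathbf{x}}=(f_{\mathbf{x}},0,\ldots,0)\in\mathcal{F}_{\mathbf{q}}^{\sigma}$ by reusing the first $g-1$ layers of the $\mathbf{q}'$-network and zeroing the weights and biases feeding output channels $2,\ldots,l$. A direct computation closes this part: for softmax and temperature-scaled scores, $E((t,0,\ldots,0))$ equals $1/l$ at $t=0$ and strictly exceeds $1/l$ for $t>0$; for the energy score, $E((t,0,\ldots,0))=T\log(e^{t/T}+l-1)$ is strictly increasing in $t$. In each case $E\circ\mathbf{f}_{\mathbf{x}}$ attains its unique minimum at $\mathbf{x}$.

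For constant closure (second bullet), I would realise any target value $c$ achievable as some $r(\mathbf{x})=E(\mathbf{f}(\mathbf{x}))$ by $E$ applied to a constant FCNN. Take $\mathbf{f}\equiv(b_1,\ldots,b_l)\in\mathcal{F}_{\mathbf{q}}^{\sigma}$, obtained by zeroing every weight and setting the final biases to a chosen vector, so $E(\mathbf{f})$ is constant. Solving $E(b_1,\ldots,b_l)=c$ is a one-variable problem in each case: for softmax and temperature-scaled, set $b_2=\cdots=b_l=0$ and solve a single scalar equation for $b_1$, attaining any $c\in[1/l,1)$; for the energy score, $b_1=\cdots=b_l=c-T\log l$ yields $E(\mathbf{f})=c$ for any $c\in\mathbb{R}$. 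In every case the achievable constants cover the range of $E$, which is exactly what constant closure requires.

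For the finite VC dimension (third bullet), I would rewrite the event $\{E(\mathbf{f}_1(\mathbf{x}))<E(\mathbf{f}_2(\mathbf{x}'))\}$ as a Boolean combination of rational-exponential inequalities in the parameters of $\mathbf{f}_1,\mathbf{f}_2$ and in $(\mathbf{x},\mathbf{x}')$: at most $l^2$ cases in the softmax and temperature-scaled scores, one per pair of argmax choices; a single inequality after exponentiation in the energy case. On each fixed ReLU activation pattern of the two networks and each fixed argmax pair, the two networks become polynomial in their parameters, the scoring comparison becomes a single analytic/Pfaffian inequality, and the decision surface is a Pfaffian set. Since the number of activation patterns and argmax pairs depends only on the architecture $\mathbf{q}$ (not on the sample), standard Pfaffian VC bounds of the Karpinski--Macintyre type, the same tools that underlie the third bullet of Proposition~\ref{prop_fcnn_rank1}, give finite VC dimension.

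The main obstacle is the third bullet: the non-smoothness induced by the ReLU kinks and by the softmax $\max$ splits the decision region into many analytic pieces, and one must simultaneously control the sign-pattern complexity across all of them. The key observation that resolves this is that the number of argmax cases (at most $l^2$) and the number of ReLU activation patterns are fixed by the architecture alone, so the growth function of the class remains polynomial in the sample size and the VC dimension finite.
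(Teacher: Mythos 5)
Your proof is correct and follows essentially the same route as the paper's: both reduce the separateness of $\mathcal{R}$ to that of the scalar space $\mathcal{F}_{\mathbf{q}'}^{\sigma}$ by embedding the scalar separator $f_{\mathbf{x}}$ (normalised to vanish at $\mathbf{x}$) into a vector-valued network and checking by direct computation that $E$ preserves the strict minimum (the paper uses the lifts $[f,-f,\ldots,-f]$ and $[f,\ldots,f]$ where you use $(f,0,\ldots,0)$, an immaterial difference), both verify constant closure via constant networks obtained by zeroing the weights, and both obtain the finite VC dimension from standard neural-network VC bounds — the paper simply cites Theorems 5 and 8 of Bartlett et al., whereas you sketch the underlying Karpinski--Macintyre/Pfaffian argument. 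The only imprecision is your remark that the number of activation patterns is independent of the sample; the correct statement is that the resulting growth function is polynomial in the sample size with degree fixed by the architecture, which is what you conclude anyway.
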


\noindent According to the previous section, we find that FCNN-based ranking function space and FCNN-based score ranking function space can satisfy almost all conditions in theorems.




\subsection{Learnability of OOD Detection in Different Spaces}

Next, we present applications of our theory regarding the above practical and important hypothesis spaces and ranking function spaces.


\begin{restatable}{theorem}{thmAppFCNN}
\label{T24}
 Suppose that Condition \ref{C3} holds and the hypothesis space $\mathcal{H}$ is FCNN-based or score-based, i.e., $\mathcal{H}=\mathcal{H}_{\mathbf{q}}^{\sigma}$ or  $\mathcal{H}=\mathcal{H}^{\rm in}\bullet \mathcal{H}^{\rm b}$, where  $\mathcal{H}^{\rm in}$ is an ID hypothesis space, $\mathcal{H}^{\rm b}=\mathcal{H}^{{\sigma},\lambda}_{{\mathbf{q}},E}$ and $\mathcal{H}=\mathcal{H}^{\rm in}\bullet \mathcal{H}^{\rm b}$ is introduced below Eq. \eqref{Eq.dot}, here
 $E$ is Eq. \eqref{score1}, \eqref{score2} or \eqref{score3}. Then
\\
\vspace{-0.6em}
\\
$~~~~~~~~~~~$\fbox{
\parbox{0.8\textwidth}{
There is a sequence $\mathbf{q}=(l_1,...,l_g)$  such that OOD detection is learnable under risk in the separate space $\mathscr{D}^s_{XY}$ for $\mathcal{H}$ \textbf{if and only if} $|\mathcal{X}|<+\infty$.
 }}
 \\
 \\
Furthermore, if $|\mathcal{X}|<+\infty$, then there exists a sequence $\mathbf{q}=(l_1,...,l_g)$ such that for any sequence $\mathbf{q}'$ satisfying that $\mathbf{q}\lesssim\mathbf{q}'$, OOD detection is learnable under risk in  $\mathscr{D}^s_{XY}$ for $\mathcal{H}$.
\end{restatable}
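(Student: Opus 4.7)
The plan is to reduce the biconditional to the impossibility theorem (Theorem \ref{T12}) and the possibility theorems (Theorems \ref{T13} and \ref{T17}) established earlier, after verifying that FCNN-based and score-based hypothesis spaces satisfy the structural hypotheses of those theorems. Throughout, I rely on Propositions \ref{Pr1} and \ref{P2} from the appendix to get Assumption \ref{ass1}, and on the fact that a fixed-architecture ReLU FCNN induces a class $\phi\circ\mathcal{H}$ of binary classifiers with finite VC dimension (either directly, or through composition with a fixed score function $E$ in the score-based case).

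For the ``only if'' direction I argue contrapositively: assume $|\mathcal{X}|=+\infty$, and fix an arbitrary $\mathbf{q}$. Condition (i) of Theorem \ref{T12} holds because $\mathcal{H}_{\mathbf{q}}^{\sigma}$ and $\mathcal{H}^{\sigma,\lambda}_{\mathbf{q},E}$ are separate for OOD; condition (ii) is the finite VC bound quoted above; condition (iii), $\sup_{h\in\mathcal{H}}|\{\mathbf{x}\in\mathcal{X}:h(\mathbf{x})\in\mathcal{Y}\}|=+\infty$, follows because the constant ID classifier (obtained by choosing weights and bias that place a single output coordinate above the others everywhere, or equivalently that push $E(\mathbf{f}(\cdot))\geq\lambda$ for every $\mathbf{x}$) lies in $\mathcal{H}$ and assigns ID to infinitely many points. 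Thus Theorem \ref{T12} gives non-learnability for every $\mathbf{q}$, proving the ``only if'' direction. For the ``if'' direction, write $\mathcal{X}=\{\mathbf{x}_1,\dots,\mathbf{x}_N\}$. The key technical step is a memorization lemma: there exists a sequence $\mathbf{q}=(d,l_2,\dots,l_{g-1},m)$ (with $m=K+1$ for the FCNN case and $m=l$ for the score-based case) such that $\mathcal{F}_{\mathbf{q}}^{\sigma}$ can realize every target assignment $\mathbf{x}_i\mapsto \mathbf{v}_i\in\mathbb{R}^m$. A standard two-stage construction suffices---one hidden layer of width $O(Nd)$ builds ReLU ``bumps'' separating the $N$ points, and a linear output layer then prescribes the logits. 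For $\mathcal{H}=\mathcal{H}_{\mathbf{q}}^{\sigma}$ this immediately gives $\mathcal{H}_{\mathrm{all}}-\{h^{\rm out}\}\subset\mathcal{H}_{\mathbf{q}}^{\sigma}$, so Theorem \ref{T13} applies when $K=1$, and the argument extends to $K>1$ by viewing $\mathcal{H}_{\mathbf{q}}^{\sigma}$ as a space of the form $\mathcal{H}^{\rm in}\bullet\mathcal{H}^{\rm b}$ (splitting the last layer into a $K$-class ID head and an ID/OOD head) and invoking Theorem \ref{T17} together with Condition \ref{C3}. For the score-based case one invokes Theorem \ref{T17} directly once $\mathcal{H}_{\mathrm{all}}-\{h^{\rm out}\}\subset\mathcal{H}^{\sigma,\lambda}_{\mathbf{q},E}$ is established; here the same memorization construction produces logits whose score $E(\mathbf{f}(\mathbf{x}_i))$ can be driven above or below $\lambda$ for each $i$ independently (using that $E$ in Eqs.~\eqref{score1}--\eqref{score3} sweeps out an interval containing $\lambda$ as the logits are rescaled). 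The monotonicity clause is then free: by Lemma \ref{L7-contain}, $\mathbf{q}\lesssim\mathbf{q}'$ implies $\mathcal{F}_{\mathbf{q}}^{\sigma}\subset\mathcal{F}_{\mathbf{q}'}^{\sigma}$, hence the realizability $\mathcal{H}_{\mathrm{all}}-\{h^{\rm out}\}\subset\mathcal{H}$ is preserved and Theorems \ref{T13}/\ref{T17} still apply.

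The main obstacle is the memorization step and, more specifically, its adaptation to the score-based setting. For the pure FCNN case this is classical, but for the score-based case one must control a scalar score $E\circ\mathbf{f}$ rather than the logits themselves, and simultaneously (i) place $E(\mathbf{f}(\mathbf{x}_i))$ on the prescribed side of the threshold $\lambda$ for all $i$, and (ii) ensure the induced binary map lies in $\mathcal{H}_{\mathrm{all}}-\{h^{\rm out}\}$. A secondary nuisance is that Theorem \ref{T13} is stated only for $K=1$, so for the FCNN case with $K>1$ the argument must be routed through Theorem \ref{T17}, requiring one to exhibit FCNN subspaces $\mathcal{H}^{\rm in}$ and $\mathcal{H}^{\rm b}$ whose bullet composition equals (or is contained in) $\mathcal{H}_{\mathbf{q}}^{\sigma}$; writing the last hidden layer as a concatenation of two independent heads and verifying the resulting $\mathcal{H}^{\rm b}$ still satisfies $\mathcal{H}_{\mathrm{all}}-\{h^{\rm out}\}\subset\mathcal{H}^{\rm b}$ is the cleanest route.
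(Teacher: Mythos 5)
Your proposal is correct and follows essentially the same route as the paper: the ``only if'' direction via Theorem \ref{T12} (with Propositions \ref{Pr1}/\ref{P2} supplying Assumption \ref{ass1} and the finite-VC bound from FCNN structure), and the ``if'' direction by showing a sufficiently large architecture realizes every hypothesis on a finite $\mathcal{X}$ and then invoking Theorem \ref{T17}, with Lemma \ref{L7-contain} giving the monotonicity clause. The only difference is cosmetic: you establish the realizability lemma by an explicit ReLU memorization construction, whereas the paper (Lemmas \ref{L23} and \ref{L26}) obtains it by Tietze extension plus the universal approximation theorem, targeting interior points of $\{E\geq\lambda\}$ and $\{E<\lambda\}$ in the score-based case exactly as you propose.
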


\noindent If we consider the ranking function space, we can obtain a similar theoretical result.

\begin{restatable}{theorem}{thmAppFCNNauc}
\label{T24_auc}
 Suppose the ranking function space $\mathcal{R}$ is separate, and FCNN-based or score-based, i.e., $\mathcal{R}=\mathcal{F}_{\mathbf{q}}^{\sigma}$ or  $\mathcal{R}=E\circ \mathcal{F}_{\mathbf{q}}^{\sigma}$, where $E$ is Eq. \eqref{score1}, \eqref{score2} or \eqref{score3}. Then
\\
\vspace{-0.6em}
\\
$~~~~~~~~~~~$\fbox{
 \parbox{0.8\textwidth}{
There is a sequence $\mathbf{q}=(l_1,...,l_g)$  such that OOD detection is AUC learnable in the separate space $\mathscr{D}^s_{XY}$ for $\mathcal{R}$ \textbf{if and only if} $|\mathcal{X}|<+\infty$.
 }}
 \\
 \\
Furthermore, if $|\mathcal{X}|<+\infty$, then there is a sequence $\mathbf{q}=(l_1,...,l_g)$ such that for any sequence $\mathbf{q}'$ satisfying that $\mathbf{q}\lesssim\mathbf{q}'$, OOD detection is learnable under AUC in  $\mathscr{D}^s_{XY}$ for $\mathcal{R}$.
\end{restatable}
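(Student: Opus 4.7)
The plan is to establish both directions of the equivalence by leveraging the general impossibility result (Theorem \ref{T12_auc}) for the contrapositive of the ``only if'' direction, and the separate-space characterization (Theorem \ref{T13_auc}) for the ``if'' direction. The architectural ingredients---separateness, constant-closure, and finite pairwise-order VC dimension---are already supplied by Propositions \ref{prop_fcnn_rank1} and \ref{prop_score_rank}, so most of the work reduces to verifying the AUC-based Realizability Assumption for a suitable $\mathbf{q}$.

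For the ``only if'' direction, I would argue by contraposition: assume $|\mathcal{X}|=+\infty$. Propositions \ref{prop_fcnn_rank1} and \ref{prop_score_rank} give that $\{\mathbf{1}_{r_1(\mathbf{x})<r_2(\mathbf{x}')}: r_1,r_2\in\mathcal{R}\}$ has finite VC dimension; specialising to $r_1=r_2$ and swapping the roles of $\mathbf{x},\mathbf{x}'$ yields finite VC dimension for the class $\phi\circ\mathcal{R}=\{\mathbf{1}_{r(\mathbf{x})>r(\mathbf{x}')}:r\in\mathcal{R}\}$ used in Theorem \ref{T12_auc}. Since $|\mathcal{X}|$ is infinite, the cardinality hypothesis $|\mathcal{X}|\ge(28d+14)\log(14d+7)$ is trivially met for the resulting $d$, and separateness of $\mathcal{R}$ is assumed; Theorem \ref{T12_auc} then rules out AUC learnability for every choice of $\mathbf{q}$, contradicting the premise.

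For the ``if'' direction, suppose $|\mathcal{X}|<+\infty$, enumerate $\mathcal{X}=\{\mathbf{x}_1,\dots,\mathbf{x}_N\}$, and pick $\mathbf{q}$ large enough that (i) the width condition $l_s\ge 2d$ of Proposition \ref{prop_fcnn_rank1} is satisfied (so $\mathcal{R}$ is separate), and (ii) $\mathcal{F}_{\mathbf{q}}^{\sigma}$ can interpolate arbitrary prescribed real values on the $N$-point set $\mathcal{X}$ via a standard ReLU memorisation construction. For any separate domain $D_{XY}\in\mathscr{D}_{XY}^s$, the supports ${\rm supp}\,D_{X_{\rm I}}$ and ${\rm supp}\,D_{X_{\rm O}}$ are disjoint subsets of $\mathcal{X}$; interpolating $1$ on the ID support and $0$ on the OOD support produces $r^*\in\mathcal{R}$ with ${\rm AUC}(r^*;D_{X_{\rm I}},D_{X_{\rm O}})=1$. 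This is exactly the AUC-based Realizability Assumption, and Theorem \ref{T13_auc} then delivers AUC learnability in $\mathscr{D}_{XY}^s$. For the score-based variant $\mathcal{R}=E\circ\mathcal{F}_{\mathbf{q}}^{\sigma}$, the argument is analogous: since each $E$ in \eqref{score1}--\eqref{score3} is continuous with an interval image, one chooses $\mathbf{f}\in\mathcal{F}_{\mathbf{q}}^{\sigma}$ whose $E$-values take two distinct preassigned levels on the two supports, giving AUC $=1$ again.

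The ``furthermore'' claim then follows by monotonicity: $\mathbf{q}\lesssim\mathbf{q}'$ implies $\mathcal{F}_{\mathbf{q}}^{\sigma}\subset\mathcal{F}_{\mathbf{q}'}^{\sigma}$ by Lemma \ref{L7-contain}, so the realiser $r^*$ persists in the enlarged space and separateness is inherited upward from $\mathcal{R}$ to $\mathcal{R}'$; reapplying Theorem \ref{T13_auc} closes the argument. The main obstacle I expect is the realizability step for the score-based case, where the softmax-type $E$ compresses scores into a bounded interval such as $(1/l,1)$: the fix is to exploit surjectivity of $E$ onto its image interval and to build $\mathbf{f}$ so that the induced $E$-values exhibit a strict positive gap between ID and OOD parts of $\mathcal{X}$, rather than merely distinguishing them; the finiteness of $\mathcal{X}$ makes such a gap achievable with a sufficiently large $\mathbf{q}$.
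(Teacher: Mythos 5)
Your proposal is correct and follows essentially the same route as the paper, whose proof is a one-line sketch invoking Theorem \ref{T12_auc} for the impossibility direction and Lemma \ref{l14-auc} (via Theorem \ref{T13_auc}) for the positive direction, mirroring the structure of the proof of Theorem \ref{T24}. Your fleshing-out of the details --- finite VC dimension of $\phi\circ\mathcal{R}$ as a restriction of the pairwise class from Propositions \ref{prop_fcnn_rank1} and \ref{prop_score_rank}, verification of the AUC-based Realizability Assumption by ReLU interpolation on the finite set $\mathcal{X}$ (handling the bounded range of $E$ in the score-based case), and upward inheritance of separateness and the realiser via Lemma \ref{L7-contain} --- matches what the paper's cited lemmas are designed to supply.
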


\noindent Theorems~\ref{T24} and \ref{T24_auc} state that 1) when the hypothesis space or ranking function space is FCNN-based or score-based, the finite feature space is the necessary and sufficient condition for the learnability of OOD detection (under risk or AUC) in the separate space; and 2) a larger architecture of FCNN has a greater probability to achieve the learnability of OOD detection in the separate space.
Note that when we select  Eqs. \eqref{score1}, \eqref{score2}, or \eqref{score3} as the score function $E$, Theorems~\ref{T24} and \ref{T24_auc} also show that the selected score functions $E$ can guarantee the learnability of OOD detection (under risk or AUC), which is a theoretical support for the representative works \citep{liang2018enhancing,liu2020energy,Hendrycks2017abaseline}. Furthermore, Theorems \ref{T24.3} and \ref{T24.3_auc} also offer theoretical supports for these works in the density-based space. 

%


\begin{restatable}{theorem}{thmAppFCNNtwo}
\label{T24.3}
 Suppose that each domain $D_{XY}$ in $\mathscr{D}_{XY}^{\mu,b}$  is attainable, i.e., $\argmin_{h\in \mathcal{H}}R_{D}(h)\neq \emptyset$ (the finite discrete domains satisfy this). Let $K=1$ and the hypothesis space $\mathcal{H}$ be score-based $($$\mathcal{H}=\mathcal{H}_{\mathbf{q},E}^{\sigma,\lambda}$, where $E$ is in Eq. \eqref{score1}, \eqref{score2}, or \eqref{score3}$)$ or  FCNN-based $($$\mathcal{H}=\mathcal{H}_{\mathbf{q}}^{\sigma}$$)$.
If $\mu(\mathcal{X})<+\infty$, then the following four conditions are \textbf{equivalent}:
\\
$~~~~~~~~~~~~~~~~~~~~~~$\fbox{
 \parbox{0.6\textwidth}{
 Learnability in $\mathscr{D}_{XY}^{\mu,b}$ for  $\mathcal{H}$ $\iff$ Condition \ref{C1} 
 $\iff$
Risk-based Realizability Assumption $\iff$ Condition \ref{Con2}
}}
\end{restatable}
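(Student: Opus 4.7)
The plan is to prove the four-way equivalence via the cycle
\[
\text{Realizability} \Rightarrow \text{Condition~\ref{Con2}} \Rightarrow \text{Learnability} \Rightarrow \text{Condition~\ref{C1}} \Rightarrow \text{Realizability},
\]
so that three of the arrows reduce to material already developed in the paper and the last one carries all the new content.

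For the three easy arrows: Realizability $\Rightarrow$ Condition~\ref{Con2} is immediate, since any zero-risk $h^{*}$ simultaneously attains $\inf R_{D}^{\mathrm{in}}=0$ and $\inf R_{D}^{\mathrm{out}}=0$ on every member of the ID-equivalence class of $D_{XY}$. Condition~\ref{Con2} $\Rightarrow$ Learnability is a direct adaptation of Theorem~\ref{T-SET2}: fixed-architecture FCNN-based and score-based hypothesis spaces have finite Natarajan dimension, $\mu(\mathcal{X})<+\infty$ gives uniform convergence of empirical risks on the $\mu$-marginal, and the compatibility condition lets us pick a common near-minimizer across each equivalence class to build an algorithm with the stated consistency rate $O(1/\sqrt{n^{1-\theta}})$. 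Learnability $\Rightarrow$ Condition~\ref{C1} is Lemma~\ref{C1andC2} applied to $\mathscr{D}_{XY}^{\mu,b}$, which is prior-unknown by construction.

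The crux is Condition~\ref{C1} $\Rightarrow$ Realizability, which I would carry out in three substeps. First, I would show that for every $D_{XY}\in\mathscr{D}_{XY}^{\mu,b}$ both infima vanish: $\inf_{h}R_{D}^{\mathrm{in}}(h)=0$ follows because the constant ID classifier lies in $\mathcal{H}$ (Propositions~\ref{Pr1} and \ref{P2}), and $\inf_{h}R_{D}^{\mathrm{out}}(h)=0$ follows because FCNN/score-based classifiers can $L^{1}(\mu)$-approximate the indicator of $\operatorname{supp}D_{X_{\mathrm{O}}}$ to arbitrary precision, which, together with the density bound $1/b\le f\le b$, forces $R_{D}^{\mathrm{out}}$ to zero. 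Second, with both infima equal to zero, the contrapositive of Lemma~\ref{T5} applied to $D_{XY}$ forces the overlap set to be $\mu$-null, so $D_{X_{\mathrm{I}}}$ and $D_{X_{\mathrm{O}}}$ are essentially $\mu$-singular. Third, a hypothesis that labels $\operatorname{supp}D_{X_{\mathrm{I}}}$ as ID and $\operatorname{supp}D_{X_{\mathrm{O}}}$ as OOD drives $\inf_{h}R_{D}(h)$ to zero, and the attainability assumption $\argmin_{h\in\mathcal{H}}R_{D}(h)\ne\emptyset$ then upgrades this infimum to an attained minimum, producing $h^{*}\in\mathcal{H}$ with $R_{D}(h^{*})=0$, which is exactly Risk-based Realizability.

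The main obstacle will be the third substep: passing from the abstract $\mu$-singularity of $D_{X_{\mathrm{I}}}$ and $D_{X_{\mathrm{O}}}$ to the existence of a single $h^{*}$ inside the fixed architecture that realizes zero risk, rather than merely a minimizing sequence. My planned approach is to choose $\mathbf{q}$ large enough that ReLU FCNNs can realize arbitrary polyhedral partitions of $\operatorname{supp}\mu$, and to use attainability as the bridge that upgrades approximate separation into exact separation. The finite-discrete case mentioned parenthetically in the statement is a useful sanity check, since there attainability is automatic and the argument collapses to a combinatorial separation of finitely many disjoint support points.
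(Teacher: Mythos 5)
There is a genuine gap in two of your four arrows, and your ``crux'' arrow is far harder than it needs to be. First, the claim that Risk-based Realizability $\Rightarrow$ Condition~\ref{Con2} is ``immediate'' does not hold: Condition~\ref{Con2} demands a \emph{single} $h_\epsilon$ that is simultaneously near-optimal for \emph{every} domain in an ID-equivalence class, and these domains share the ID distribution but carry \emph{different} OOD distributions. A zero-risk $h^*$ for one domain $D_{XY}$ places mass-one of $D_{X_{\rm O}}$ on the OOD label, but says nothing about the OOD risk under a different $D_{X_{\rm O}}'$ in the same class; uniformizing over the whole class is precisely the hard content of the theorem, and the paper obtains Condition~\ref{Con2} only \emph{after} establishing learnability (Realizability $\Rightarrow$ Learnability via Theorem~\ref{T-SET2}, then Learnability $\Rightarrow$ Condition~\ref{Con2} by extracting a good sample $S_\epsilon$ from the consistent algorithm). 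Second, ``Condition~\ref{Con2} $\Rightarrow$ Learnability by adapting Theorem~\ref{T-SET2}'' is also shaky: Theorem~\ref{T-SET2}'s proof uses Realizability essentially (the zero-empirical-ID-risk constraint), and the paper's Condition~\ref{Con2} $\Rightarrow$ Learnability implication (Theorem~\ref{T-SET}) works only for finitely many equivalence classes, identified via MMD; Lemma~\ref{lemma7-new} gives learnability \emph{within} one equivalence class from Condition~\ref{Con2}, but not across the density-based space. So your cycle breaks at both of its first two edges.

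Your third arrow, Condition~\ref{C1} $\Rightarrow$ Realizability, reaches the right conclusion but by an unnecessarily heavy and partly unavailable route. Since $K=1$, Propositions~\ref{Pr1} and~\ref{P2} put \emph{both} constant classifiers (all-ID and all-OOD) in $\mathcal{H}$, so $\inf_h R_D^{\rm in}(h)=0$ and $\inf_h R_D^{\rm out}(h)=0$ with no approximation of support indicators; Condition~\ref{C1} then gives $\inf_h R_D^{\alpha}(h)=0$ for every $\alpha$, in particular $\inf_h R_D(h)=0$, and attainability immediately yields $h^*$ with $R_D(h^*)=0$. Your substeps invoking Lemma~\ref{T5} to derive $\mu$-singularity of the supports and then ``choosing $\mathbf{q}$ large enough to realize polyhedral partitions'' are not only redundant, the latter is not permitted: the architecture $\mathbf{q}$ is fixed by the hypothesis of Theorem~\ref{T24.3}, so you cannot enlarge it. The clean path is the paper's: Learnability $\Rightarrow$ Condition~\ref{C1} (Lemma~\ref{C1andC2}); Condition~\ref{C1} $\Rightarrow$ Realizability (constants plus attainability, as above); Realizability $\Rightarrow$ Learnability (Theorem~\ref{T-SET2}); and then Condition~\ref{Con2} is sandwiched by Learnability $\Rightarrow$ Condition~\ref{Con2} and Condition~\ref{Con2} $\Rightarrow$ Condition~\ref{C1} (Lemma~\ref{Lemma1}).
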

\vspace{-1em}

\begin{restatable}{theorem}{thmAppFCNNtwoauc}
\label{T24.3_auc}
 Suppose that the ranking function space $\mathcal{R}$ is separate and score-based $($$\mathcal{R}=E \circ \mathcal{F}_{\mathbf{q}}^{\sigma})$ or FCNN-based $($$\mathcal{R}= \mathcal{F}_{\mathbf{q}}^{\sigma})$, where $E$ is Eq. \eqref{score1}, \eqref{score2} or \eqref{score3}.
If $\mu(\mathcal{X})<+\infty$, then the following three conditions satisfy:
\\
\fbox{
 \parbox{0.98\textwidth}{
~~AUC-based Realizability Assumption $\Rightarrow$ Learnability in $\mathscr{D}_{XY}^{\mu,b}$ for  $\mathcal{R}$ $\Rightarrow$ Condition~\ref{C1_auc}
}}
\end{restatable}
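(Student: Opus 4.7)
The plan is to prove the two implications in the displayed chain separately by reducing each to results already established in the paper. The first implication requires showing that the AUC-based Realizability Assumption yields learnability under AUC in the density-based space $\mathscr{D}_{XY}^{\mu,b}$ for the FCNN-based or score-based ranking function space $\mathcal{R}$; the second requires deducing Condition \ref{C1_auc} from that learnability.

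For the first implication, I would apply Theorem \ref{T-SET2_auc}, which is precisely tailored to giving sufficient conditions for learnability under AUC in $\mathscr{D}_{XY}^{\mu,b}$. Its hypotheses are that $\mathcal{R}$ is constant closure and separate, that $\mu(\mathcal{X})<+\infty$, that the AUC-based Realizability Assumption holds, and that the two-variable indicator family $\phi\circ\mathcal{R}=\{\mathbf{1}_{r_1(\mathbf{x})>r_2(\mathbf{x}')}:r_1,r_2\in\mathcal{R}\}$ has finite VC dimension. Separateness, finiteness of $\mu(\mathcal{X})$, and the Realizability Assumption are assumed directly in the statement of Theorem \ref{T24.3_auc}. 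The remaining two hypotheses, constant closure and finite VC dimension of the pair-indicator class, are handled by Proposition \ref{prop_fcnn_rank1} for the case $\mathcal{R}=\mathcal{F}_{\mathbf{q}}^{\sigma}$ and by Proposition \ref{prop_score_rank} for the case $\mathcal{R}=E\circ\mathcal{F}_{\mathbf{q}}^{\sigma}$; both propositions list exactly these two properties. Chaining these facts together with Theorem \ref{T-SET2_auc} yields learnability.

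For the second implication, only one step is needed: Theorem \ref{T3_auc} asserts that whenever OOD detection is learnable under AUC in any domain space $\mathscr{D}_{XY}$ for a ranking function space $\mathcal{R}$, Condition \ref{C1_auc} holds for all pairs of domains in $\mathscr{D}_{XY}$. Specializing $\mathscr{D}_{XY}$ to $\mathscr{D}_{XY}^{\mu,b}$ is immediate, so this direction is essentially free once the first one is in hand.

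The main obstacle, and really the only one that takes care, is verifying that the structural premises of Propositions \ref{prop_fcnn_rank1} and \ref{prop_score_rank} are actually available under the hypotheses of Theorem \ref{T24.3_auc}. For the FCNN-based case, Proposition \ref{prop_fcnn_rank1} presumes a last layer of width one and an intermediate width condition of the form $l_s\geq 2d$ to guarantee separateness; since separateness of $\mathcal{R}$ is already assumed in Theorem \ref{T24.3_auc}, the remaining architectural conclusions (constant closure and the VC bound on the pair-indicator class) are what we actually need, and these are asserted without the separateness hypothesis. For the score-based case, one must also check that composing with $E$ given by Eq.~\eqref{score1}, \eqref{score2}, or \eqref{score3} does not destroy constant closure or blow up the VC dimension of $\phi\circ\mathcal{R}$; Proposition \ref{prop_score_rank} is precisely this statement, so the burden reduces to a brief dereferencing of that result.
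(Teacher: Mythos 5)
Your proposal is correct and follows essentially the same route as the paper, whose own proof simply cites Theorem~\ref{T-SET2_auc} for the first implication and Theorem~\ref{T3_auc} for the second. The only difference is that you explicitly verify the constant-closure and finite-VC-dimension hypotheses of Theorem~\ref{T-SET2_auc} via Propositions~\ref{prop_fcnn_rank1} and~\ref{prop_score_rank}, a step the paper leaves implicit.
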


\noindent Compared to Theorem~\ref{T24.3}, Theorem~\ref{T24.3_auc} cannot obtain the equivalence among Realizability Assumption, Learnability in $\mathscr{D}_{XY}^{\mu,b}$ and linear condition under AUC. The main reason is that Condition~\ref{C1_auc} is a weaker necessary condition for learnability of OOD detection under AUC than Condition~\ref{C1} for learnability of OOD detection under risk. We need to discover a strong necessary condition for learnability of OOD detection under AUC to obtain a similar equivalence that appeared in Theorem~\ref{T24.3}\footnote{A stronger necessary condition normally means that this necessary condition is closer to the necessary and sufficient condition.}. 

\subsection{Overlap and Benefits of Multi-class Case}
We investigate when the hypothesis space is FCNN-based or score-based, what will happen if there exists an overlap between the ID and OOD distributions?

\begin{restatable}{theorem}{thmImpOverlapFCNN}
\label{overlapcase}
Let $K=1$ and the hypothesis space $\mathcal{H}$ be score-based $($$\mathcal{H}=\mathcal{H}_{\mathbf{q},E}^{\sigma,\lambda}$, where $E$ is in Eq. \eqref{score1}, \eqref{score2}, or \eqref{score3}$)$ or FCNN-based $($$\mathcal{H}=\mathcal{H}_{\mathbf{q}}^{\sigma}$$)$.  Given a prior-unknown space $\mathscr{D}_{XY}$, if there exists a domain $D_{XY}\in \mathscr{D}_{XY}$, which has an overlap between ID and OOD distributions (see Definition \ref{def:overlap}), then OOD detection is not learnable under risk in $\mathscr{D}_{XY}$ for  $\mathcal{H}$.
\end{restatable}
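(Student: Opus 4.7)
The plan is to reduce Theorem \ref{overlapcase} to an application of Lemma \ref{T5}. Recall that Lemma \ref{T5} guarantees non-learnability under risk in a prior-unknown space $\mathscr{D}_{XY}$ whenever there exists a domain $D_{XY}\in\mathscr{D}_{XY}$ that (i) has overlap between ID and OOD distributions, and (ii) admits $\inf_{h\in\mathcal{H}}R_D^{\rm in}(h)=0$ and $\inf_{h\in\mathcal{H}}R_D^{\rm out}(h)=0$. Condition (i) is already assumed in the statement, so the entire job reduces to proving condition (ii) for the specific hypothesis spaces $\mathcal{H}=\mathcal{H}_{\mathbf{q}}^{\sigma}$ and $\mathcal{H}=\mathcal{H}_{\mathbf{q},E}^{\sigma,\lambda}$ when $K=1$.

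For the FCNN-based case $\mathcal{H}_{\mathbf{q}}^{\sigma}$, I would exhibit the two constant classifiers $h\equiv 1$ (ID) and $h\equiv 2$ (OOD) as members of the space. Since $l_g=K+1=2$, one can realize any constant output vector by setting all output-layer weights to zero and choosing the output-layer bias freely; for instance, bias $[1,0]$ yields $h_{\mathbf{w},\mathbf{b}}\equiv 1$ and bias $[0,1]$ yields $h_{\mathbf{w},\mathbf{b}}\equiv 2$. The former zeroes $R_D^{\rm in}$ and the latter zeroes $R_D^{\rm out}$, so both infima vanish.

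For the score-based case $\mathcal{H}_{\mathbf{q},E}^{\sigma,\lambda}$, the same zero-weight/choose-bias trick produces, for any constant vector $\mathbf{c}\in\mathbb{R}^{l_g}$, a network $\mathbf{f}\in\mathcal{F}_{\mathbf{q}}^{\sigma}$ with $\mathbf{f}\equiv\mathbf{c}$. I would then, for each scoring function $E$, pick two constants $\mathbf{c}_1,\mathbf{c}_2$ driving $E$ above and below the threshold $\lambda$. For the softmax and temperature-scaled scores \eqref{score1}–\eqref{score2}, $\mathbf{c}_1=(C,0,\dots,0)$ with $C$ large pushes the maximum softmax value arbitrarily close to $1$, while $\mathbf{c}_2=(0,\dots,0)$ yields constant score $1/l_g<\lambda$ since $\lambda\in(1/l_g,1)$. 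For the energy score \eqref{score3}, the constant choice $\mathbf{c}_1=(C,0,\dots,0)$ gives $E(\mathbf{f})\approx C$, exceeding $\lambda$ for $C$ large, whereas $\mathbf{c}_2=(-C,\dots,-C)$ gives $E(\mathbf{f})=T\log l_g-C<\lambda$ for $C$ large. Thus in every case $h\equiv 1, h\equiv 2 \in\mathcal{H}_{\mathbf{q},E}^{\sigma,\lambda}$, so both infima are zero.

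With (i) and (ii) verified, Lemma \ref{T5} immediately yields that Condition \ref{C1} fails and OOD detection is not learnable under risk in $\mathscr{D}_{XY}$ for $\mathcal{H}$, completing the proof. The routine but main source of care is the score-based step, where one must separately check that each of the three score functions can be driven to either side of its threshold using only constant outputs producible by $\mathcal{F}_{\mathbf{q}}^{\sigma}$; the energy score requires the constants to be taken sufficiently large in magnitude (in either direction), which is unproblematic but is the one place the three score families behave nonuniformly.
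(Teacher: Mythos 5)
Your proposal is correct and follows essentially the same route as the paper: the paper's proof likewise verifies $\inf_{h\in\mathcal{H}}R_D^{\rm in}(h)=0$ and $\inf_{h\in\mathcal{H}}R_D^{\rm out}(h)=0$ by showing the constant classifiers $h\equiv 1$ and $h\equiv 2$ lie in $\mathcal{H}$ (via Propositions \ref{Pr1} and \ref{P2}, whose zero-weight/free-bias constructions and threshold checks you have reproduced), and then invokes Lemma \ref{T5}. Your explicit verification that each of the three score functions can be driven to either side of $\lambda$ is exactly the content of the nonemptiness hypothesis in Proposition \ref{P2}.
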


\noindent When $K=1$ and the hypothesis space is FCNN-based or score-based, Theorem \ref{overlapcase} shows that overlap between ID and OOD distributions is the sufficient condition for the unlearnability of OOD detection under risk.
Theorem \ref{overlapcase} takes roots in the conditions $\inf_{h\in \mathcal{H}} R_{D}^{\rm in}(h)=0$ and $\inf_{h\in \mathcal{H}} R_{D}^{\rm out}(h)=0$. However, when $K>1$, we can ensure $\inf_{h\in \mathcal{H}}R_{D}^{\rm in}(h)>0$ if ID distribution $D_{X_{\rm I}Y_{\rm I}}$ has overlap between ID classes. By this observation, we conjecture that when $K>1$, OOD detection is learnable in some special cases where overlap exists, even if the hypothesis space is FCNN-based or score-based. As for the ranking function space, we can obtain a corresponding but weaker theoretical result shown below.

\begin{restatable}{theorem}{thmImpOverlapFCNNauc}
\label{overlapcase_auc}
Let the separate ranking function space $\mathcal{R}$ be FCNN-based or score-based (where the score function $E$ is Eq. \eqref{score1}, \eqref{score2}, or \eqref{score3}). Suppose that $D_{XY},D_{XY}'\in \mathscr{D}_{XY}$ are discrete distributions with $D_{X_{\rm I}Y_{\rm I}}=D_{X_{\rm I}Y_{\rm I}}$ and $D_{X_{\rm O}}=\delta_{\mathbf{x}}, D_{X_{\rm O}}'=\delta_{\mathbf{x}'}$. If $D_{X_{\rm O}}=\delta_{\mathbf{x}}, D_{X_{\rm O}}'=\delta_{\mathbf{x}'}$ have overlaps with $D_{X_{\rm I}Y_{\rm I}}$ and $D_{X_{\rm O}}\neq D_{X_{\rm O}}'$, then
 OOD detection is not learnable under AUC in $\mathscr{D}_{XY}$ for $\mathcal{R}$.
\end{restatable}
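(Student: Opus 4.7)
The plan is to show that Condition \ref{C1_auc} is violated for the specified pair $D_{XY}, D_{XY}'$ and then invoke Theorem \ref{T3_auc} contrapositively. Write $p := D_{X_{\rm I}}(\{\mathbf{x}\})$ and $p' := D_{X_{\rm I}}(\{\mathbf{x}'\})$. Specializing Definition \ref{def:overlap} to the discrete setting (counting measure), the overlap hypotheses are exactly $p, p' > 0$, and $D_{X_{\rm O}} \ne D_{X_{\rm O}}'$ forces $\mathbf{x} \ne \mathbf{x}'$.

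First I would compute the two ``pure'' suprema appearing on the LHS of the equality in Condition \ref{C1_auc}. From Eq.~\eqref{auc},
\[
{\rm AUC}(r; D_{X_{\rm I}}, \delta_{\mathbf{x}}) = D_{X_{\rm I}}(\{\mathbf{z}: r(\mathbf{z}) > r(\mathbf{x})\}) + \tfrac{1}{2} D_{X_{\rm I}}(\{\mathbf{z}: r(\mathbf{z}) = r(\mathbf{x})\}),
\]
and since $\mathbf{x}$ always lies in the tie set, this is at most $(1-p) + p/2 = 1 - p/2$. Separateness of $\mathcal{R}$---guaranteed for both the FCNN-based and score-based cases by Propositions \ref{prop_fcnn_rank1} and \ref{prop_score_rank}---produces an $r_{\mathbf{x}} \in \mathcal{R}$ with $r_{\mathbf{x}}(\mathbf{x}) < r_{\mathbf{x}}(\mathbf{z})$ for every $\mathbf{z} \ne \mathbf{x}$, making the bound tight. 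Hence $\sup_{r \in \mathcal{R}} {\rm AUC}(r; D_{X_{\rm I}}, \delta_{\mathbf{x}}) = 1 - p/2$ and symmetrically $= 1 - p'/2$ for $\delta_{\mathbf{x}'}$, so the LHS equals $\alpha(1-p/2) + (1-\alpha)(1-p'/2)$.

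Next I would upper-bound the mixture supremum $\sup_{r\in\mathcal{R}} {\rm AUC}(r; D_{X_{\rm I}}, D_{X_{\rm O}}^{\alpha})$ by a three-way case split on the relative order of $r(\mathbf{x})$ and $r(\mathbf{x}')$. Because the AUC is linear in the OOD distribution, in each case only one of the two pure terms can approach its own maximum, and the other necessarily loses mass corresponding to an overlapping point. A short counting argument yields: if $r(\mathbf{x}) < r(\mathbf{x}')$ the mixture AUC is at most $\alpha(1-p/2) + (1-\alpha)(1-p-p'/2)$; if $r(\mathbf{x}) > r(\mathbf{x}')$ it is at most $\alpha(1-p'-p/2) + (1-\alpha)(1-p'/2)$; if $r(\mathbf{x}) = r(\mathbf{x}')$ it is at most $1-(p+p')/2$. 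Subtracting each of these from the LHS leaves $(1-\alpha)p$, $\alpha p'$, and $(1-\alpha)p/2 + \alpha p'/2$ respectively, all strictly positive for any $\alpha \in (0,1)$. Therefore the mixture supremum is strictly below the LHS, violating Condition \ref{C1_auc} at every interior $\alpha$; Theorem \ref{T3_auc} then concludes that OOD detection is not learnable under AUC in $\mathscr{D}_{XY}$ for $\mathcal{R}$.

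The main obstacle is the case-analysis bookkeeping in the third step: one must verify that in every ordering pattern at least one of $\mathbf{x},\mathbf{x}'$ fails to be strictly beaten by enough ID mass, and that the shortfall is lower-bounded by a strictly positive expression in $p$ and $p'$. This is exactly where both overlap hypotheses are essential, since if either $p$ or $p'$ vanished the corresponding case bound would match the linear combination and the strict-inequality argument would collapse. No property of $\mathcal{R}$ beyond separateness is needed, which is also why the conclusion is weaker than its risk counterpart Theorem~\ref{overlapcase}, where a single overlapping domain already suffices.
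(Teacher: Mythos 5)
Your proof is correct, and it takes a more elementary route than the paper. The paper disposes of this theorem in one line by invoking Lemma~\ref{T5_auc}: one checks that for Dirac OOD distributions the overlap sets are the disjoint singletons $P=\{\mathbf{x}\}$, $P'=\{\mathbf{x}'\}$ with $D_{X_{\rm I}}(P\cap P')=0<\min\{p,p'\}$, and that separateness makes $\sup_{\mathcal{R}}$ coincide with $\sup_{\mathcal{R}_{\rm all}}$ for each pure term; Lemma~\ref{T5_auc} in turn rests on the characterization of the unrestricted AUC-optimal ranker (Lemma~\ref{L1_auc}) and an equivalence-class argument about where the two optimal rankers $g_{X_{\rm I}}/(g_{X_{\rm I}}+g_{X_{\rm O}})$ and $g_{X_{\rm I}}/(g_{X_{\rm I}}+g_{X_{\rm O}}')$ must agree (Lemma~\ref{l1-auc}). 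You bypass all of that machinery: because the OOD distributions are point masses, every supremum in Condition~\ref{C1_auc} can be computed in closed form ($1-p/2$ and $1-p'/2$ for the pure terms, attained via separateness), and the three-way case split on the order of $r(\mathbf{x})$ versus $r(\mathbf{x}')$ gives an explicit positive shortfall of $\min\{(1-\alpha)p,\ \alpha p',\ (1-\alpha)p/2+\alpha p'/2\}$ for the mixture supremum; the case bounds you state all check out. Both arguments then finish identically via the contrapositive of Theorem~\ref{T3_auc}. What the paper's route buys is generality (Lemma~\ref{T5_auc} handles arbitrary overlapping OOD distributions, not just Dirac masses); what your route buys is a self-contained, quantitative proof whose strict gap is visible by inspection and which makes explicit exactly where both overlap hypotheses and $\mathbf{x}\neq\mathbf{x}'$ are used.
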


\section{Discussion}\label{sec:discuss}

\paragraph{Connections between Theorems under Risk and AUC.} In our previous conference paper \citep{fang2022is}, we mainly focus on the learnability of OOD detection under risk. However, in practice, AUC-related metrics are often used to evaluate the performance of OOD detection algorithms \citep{lin2021mood, Huang2021on, Fort2021exploring, ming2022spurious,DBLP:journals/corr/abs-2204-05306}. To fill this gap, we take a further step: investigating the learnability of OOD detection under AUC. Figure~\ref{fig:connection} illustrates the connections between the main theoretical results of learnability of OOD detection under risk and AUC. In the most of domain spaces considered in this paper, we can obtain similar theoretical results under AUC compared to theoretical results under risk. However, when considering the finite-ID-distribution space, we cannot get a corresponding theorem for the learnability of OOD detection under AUC. 
The main reason is that Condition~\ref{C1_auc} is a weaker necessary condition for learnability of OOD detection under AUC than Condition~\ref{C1} for learnability of OOD detection under risk. Thus, additional information might be required to obtain a stronger necessary condition for AUC learnability. The influence of Condition~\ref{C1_auc} also appears in Theorem~\ref{T24.3_auc} where we cannot obtain a strong theoretical result like Theorem~\ref{T24.3}. To conclude, since Condition~\ref{C1} is stronger (i.e., it is closer to necessary and sufficient condition) than Condition~\ref{C1_auc}, the theoretical results under risk are stronger than those under AUC. In the future, we need to discover a stronger necessary condition for learnability of OOD detection under AUC.


\begin{figure*}[!t]
    \centering
    {\includegraphics[width=0.7\linewidth]{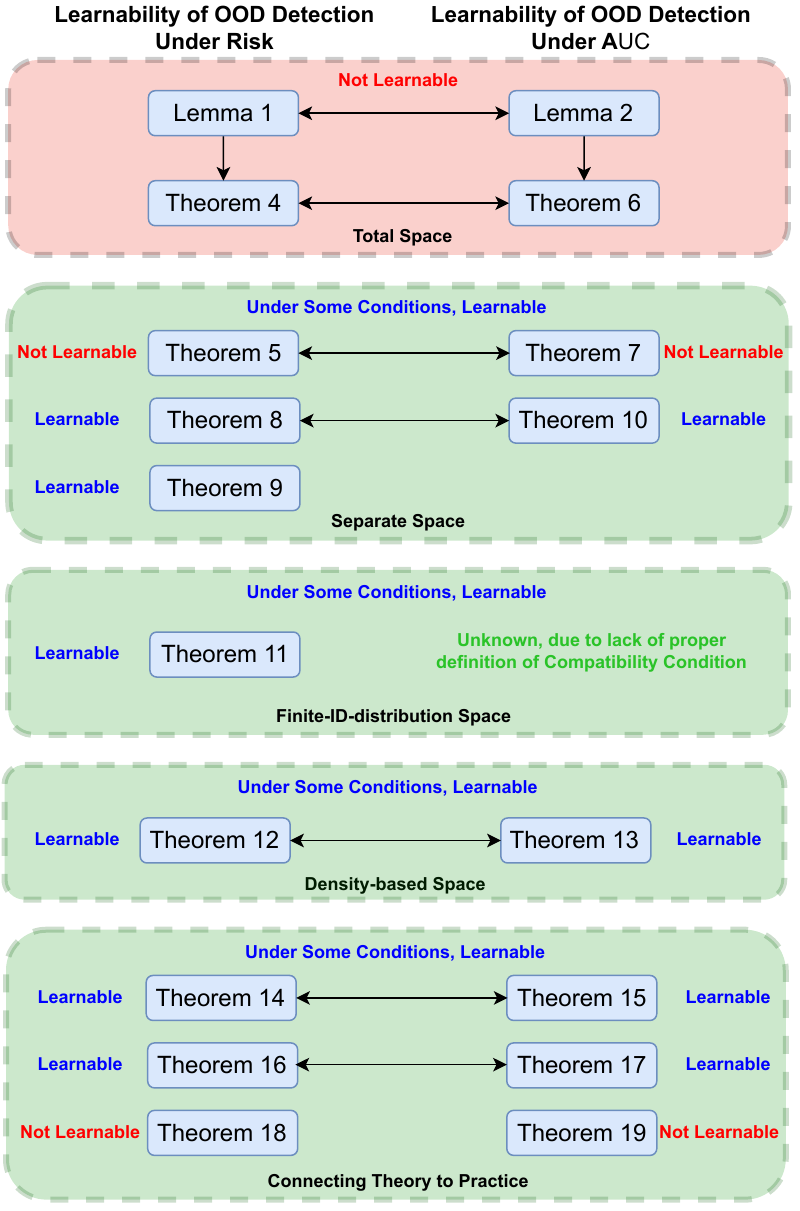}}
    \caption{Connections among main theoretical results in this paper. Compared to risk, AUC has a more strict requirement for the classification. Perfect classification (i.e., accuracy is 100\%) does not imply perfect AUC, which is the reason why theories regarding risk and AUC are very different.}\label{fig:connection}
    \vspace{-1em}
\end{figure*}

\paragraph{Understanding Far-OOD Detection.} Many existing works \citep{Hendrycks2017abaseline,DBLP:journals/corr/abs-2204-05306} study the far-OOD detection issue. Existing benchmarks include 1) MNIST \citep{DBLP:journals/spm/Deng12} as ID dataset, and Texture \citep{kylberg2011kylberg}, CIFAR-$10$ \citep{krizhevsky2010convolutional} or Place$365$ \citep{DBLP:journals/pami/ZhouLKO018} as OOD datasets; and  2) CIFAR-$10$ \citep{krizhevsky2010convolutional} as ID dataset, and MNIST \citep{DBLP:journals/spm/Deng12}, or Fashion-MNIST \citep{DBLP:journals/pami/ZhouLKO018} as OOD datasets. 
In far-OOD case, we find that the ID and OOD datasets have different semantic labels and different styles. From the theoretical view, we can define far-OOD detection tasks as follows: for $\tau>0$, a domain space $\mathscr{D}_{XY}$ is $\tau$-far-OOD, if for any domain $D_{XY}\in \mathscr{D}_{XY}$, 
\begin{equation*}
    {\rm dist}({{\rm supp} D_{X_{\rm O}}},{{\rm supp} D_{X_{\rm I}}})>\tau.
\end{equation*}
Theorems \ref{T17}, \ref{T-SET} and \ref{T24} imply that under appropriate hypothesis space, $\tau$-far-OOD detection is learnable under risk. Theorem~\ref{T24_auc} implies that under appropriate hypothesis space, $\tau$-far-OOD detection is learnable under AUC. In Theorem \ref{T17}, the condition $|\mathcal{X}|<+\infty$ is necessary for the separate space. However, one can prove that in the  far-OOD case, when $\mathcal{H}^{\rm in}$ is agnostic PAC learnable for ID distribution, the results in  Theorem \ref{T17} still holds, if the condition $|\mathcal{X}|<+\infty$ is replaced by a weaker condition that $\mathcal{X}$ is compact. In addition, it is notable that when $\mathcal{H}^{\rm in}$ is agnostic PAC learnable for ID distribution and $\mathcal{X}$ is compact, the KNN-based OOD
detection algorithm \citep{sun2022knn} is consistent in the  $\tau$-far-OOD case.

\paragraph{Understanding Near-OOD Detection.} When the ID and OOD datasets have similar semantics or styles,  OOD detection tasks become more challenging. \citep{ren2021asimplefix,DBLP:conf/nips/FortRL21} consider this issue and name it near-OOD detection. Existing benchmarks include  1) MNIST \citep{DBLP:journals/spm/Deng12} as ID dataset, and Fashion-MNIST \citep{DBLP:journals/pami/ZhouLKO018} or Not-MNIST \citep{notmnist} as OOD datasets; and 2) CIFAR-$10$ \citep{krizhevsky2010convolutional} as ID dataset, and CIFAR-$100$ \citep{cifar} as OOD dataset. From the theoretical view, some near-OOD tasks may imply the overlap condition, \textit{i.e.} Definition \ref{def:overlap}. Therefore, Lemma \ref{T5} and Theorem \ref{overlapcase} imply that near-OOD detection may be not learnable under risk, and Lemma \ref{T5_auc} and Theorem \ref{overlapcase_auc} imply that near-OOD detection may be not learnable under AUC.  
Developing a theory to understand the feasibility of near-OOD detection is an \textit{open question}.

\section{Related Work}

\textbf{OOD Detection Algorithms.} 
We will briefly review many representative OOD detection algorithms in three categories.
1) Classification-based methods use an ID classifier to detect OOD data \citep{Hendrycks2017abaseline}\footnote{Note that, some methods assume that OOD data are available in advance \citep{Hendrycks2019deep,Dhamija2018reducing}. 
However, the exposure of OOD data is a strong assumption \citep{Yang2021generalized}. We do not consider this situation in our paper.}. Representative works consider using the maximum softmax score \citep{Hendrycks2017abaseline}, temperature-scaled score \citep{ren2019likelihood} and energy-based score \citep{liu2020energy,Wang2021can} to identify OOD data. 
2) Density-based methods aim to estimate an ID distribution and identify the low-density area as OOD data \citep{zong2018deep}. 3) The recent development of generative models provides promising ways to make them successful in OOD detection \citep{Pidhorskyi2018generative,Nalisnick2019do,ren2019likelihood,Kingma2018glow,Xiao2020likelihood}.
Distance-based methods are based on the assumption that OOD data should be relatively far away from the centroids of ID classes \citep{lee2018asimple}, including Mahalanobis distance \citep{lee2018asimple,ren2021asimplefix}, cosine similarity \citep{zaeemzadeh2021out}, and kernel similarity \citep{Amersfoort2020uncertainty}.
\\
\vspace{-0.8em}
\\
\noindent Early works consider using the maximum softmax score to express the ID-ness \citep{Hendrycks2017abaseline}. Then, temperature scaling functions are used to amplify the separation between the ID and OOD data \citep{ren2019likelihood}. Recently, researchers propose hyperparameter-free energy scores to improve the OOD uncertainty estimation \citep{liu2020energy,Wang2021can}. 
Additionally, researchers also consider using the gradients to help improve the performance of OOD detection \citep{Huang2021on}.
\\
\vspace{-0.8em}
\\
\noindent Except for the above algorithms, researchers also study the situation, where auxiliary OOD data can be obtained during the training process \citep{Hendrycks2019deep,Dhamija2018reducing}. These methods are called outlier exposure, and have much better performance than the above methods due to the appearance of OOD data. However, the exposure of OOD data is a strong assumption \citep{Yang2021generalized}. Thus, researchers also consider generating OOD data to help the separation of OOD and ID data \citep{Vernekar2019out}. In this paper, we do not make an assumption that OOD data are available during training, since this assumption may not hold in real world.

\paragraph{OOD Detection Theory.}
\cite{zhang2021understanding} rejects the typical set hypothesis, the claim that relevant OOD distributions can lie in high likelihood regions of data distribution, as implausible.
\cite{zhang2021understanding} argues that minimal density estimation errors can lead to OOD detection failures {without assuming} an overlap between ID and OOD distributions. Compared to \cite{zhang2021understanding}, our theory focuses on the PAC learnable theory of OOD detection. If detectors are generated by FCNN, our theory (Theorem~\ref{overlapcase}) shows that the overlap is the sufficient condition to the failure of learnability of OOD detection, which is complementary to \cite{zhang2021understanding}. In addition, we identify several necessary and sufficient conditions for the learnability of OOD detection, which opens a door to studying OOD detection in theory.
Beyond \cite{zhang2021understanding}, \cite{morteza2022provable} paves a new avenue to designing provable OOD detection algorithms. Compared to \cite{morteza2022provable}, our paper aims to characterize the learnability of OOD detection to answer the question: is OOD detection PAC learnable?


\paragraph{Open-set Learning Theory.} \cite{liu2018open} is the first to propose the agnostic PAC guarantees for open-set detection. Unfortunately, the test data must be used during the training process. \cite{Fang2020Open} considers the open-set domain adaptation (OSDA)  \citep{Luo2020progressive} and proposes the first learning bound for OSDA. \cite{Fang2020Open} mainly depends on the positive-unlabeled learning techniques \citep{kiryo2017positive,Ishida2018binary,chen2021large}. However, similar to \cite{liu2018open}, the test data must be available during training. To study open-set learning (OSL) \textit{without accessing the test data} during training, \cite{Fang2021learning} proposes and studies the {almost} PAC learnability for OSL, which is motivated by transfer learning \citep{What_Transferred_Dong_CVPR2020,9695325}. Recently, \cite{wang2022openauc} proposes a novel AUC-based OOD detection objective named OpenAUC \citep{10214340,10264106} as objective function to learn open-set predictors, and builds a corresponding AUC-based open-set learning theory. In our paper, we study the PAC learnability for OOD detection, which is an {open problem} proposed by \cite{Fang2021learning}.

\paragraph{Learning Theory for Classification with Reject Option.} Many works \citep{DBLP:journals/tit/Chow70,DBLP:journals/corr/abs-2101-12523} also investigate the \emph{classification with reject option} (CwRO) problem, which is similar to OOD detection in some cases.  \cite{cortes2016learning,DBLP:conf/nips/CortesDM16,DBLP:conf/nips/NiCHS19,DBLP:conf/icml/CharoenphakdeeC21,DBLP:journals/jmlr/BartlettW08} study the learning theory and propose the agnostic PAC learning bounds for CwRO. However, compared to our work regarding OOD detection, existing CwRO theories mainly focus on how the ID risk (\textit{i.e.}, the risk that ID data is wrongly classified) is influenced by special rejection rules. Our theory not only focuses on the ID risk, but also pays attention to the OOD risk.

\paragraph{Robust Statistics.} In the field of robust statistics \citep{rousseeuw2011robust}, researchers aim to propose estimators and testers that can mitigate the negative effects of outliers (similar to OOD data). The proposed estimators are supposed to be independent of the potentially high dimensionality of the data \citep{ronchetti2009robust,diakonikolas2020outlier,diakonikolas2021outlier}. Existing works \citep{diakonikolas2021outlier_robust,cheng2021outlier,diakonikolas2022outlier} in the field have identified and resolved the statistical limits of outlier robust statistics by constructing estimators and proving impossibility results. In the future, it is a promising and interesting research direction to study the robustness of OOD detection based on robust statistics.

\paragraph{PQ Learning Theory.} Under some conditions, PQ learning theory \citep{DBLP:conf/nips/GoldwasserKKM20,DBLP:conf/alt/KalaiK21}  can be regarded
as the PAC theory for OOD detection in the semi-supervised
or transductive learning cases, \emph{i.e., test data are required during the training process.} Additionally,  PQ learning theory in \cite{DBLP:conf/nips/GoldwasserKKM20,DBLP:conf/alt/KalaiK21} aims to give the PAC estimation under Realizability Assumption \citep{shalev2014understanding}. Our theory focuses on the
PAC theory in different cases, which is more difficult and more practical than PAC theory under Realizability Assumption.  


\section{Conclusions and Future Works}

OOD detection has become an important technique to increase the reliability of machine learning. 
However, its theoretical foundation is merely investigated, which hinders real-world applications of OOD detection algorithms.
This paper is the \emph{first} to provide the PAC theory for OOD detection in terms of two commonly used metrics: risk and AUC.
Our results imply that a universally consistent algorithm might not exist for all scenarios in OOD detection. 
Yet, we still discover some scenarios where OOD detection is learnable under risk or AUC metrics. 
Our theory reveals many necessary and sufficient conditions for the learnability of OOD detection under risk or AUC, hence \emph{paving a road} to studying the learnability of OOD detection. In the future,  we will focus on studying the robustness of OOD detection based on robust statistics  \citep{diakonikolas2021outlier_robust,DBLP:journals/corr/abs-1911-05911}.

\acks{JL and ZF were supported by the Australian Research Council (ARC) under
FL190100149. YL is supported by National Science Foundation (NSF) Award No. IIS-2237037.  FL was supported by the ARC with grant numbers DP230101540 and DE240101089, and the NSF\&CSIRO Responsible AI program with grant number 2303037. ZF would also like to thank Prof. Peter Bartlett, Dr. Tongliang Liu and Dr. Zhiyong Yang for productive discussions. }


\vskip 0.2in
\bibliography{sample}

\newpage

\appendix

\DoToC
\newpage

\section{Notations}\label{SB}
\subsection{Main Notations and Their Descriptions}\label{SB.1}
In this section, we summarize important notations in Table \ref{Table_conncept}.
\begin{table}[t]
\caption{{Main notations and their descriptions.}}
\begin{center}\label{Table_conncept}
\scriptsize
\begin{tabular}{p{5cm}p{8.2cm}}
\hline
Notation & ~~~~~~~~~~Description  \\ \hline
$\bullet$ \textbf{Spaces and Labels} & \\
$d$ and $\mathcal{X}\subset \mathbb{R}^d$&  the feature dimension of data point and feature space\\
$\mathcal{Y}$&  ID label space $\{1,...,K\}$\\
$K+1$&  $K+1$ represents the OOD labels\\
$\mathcal{Y}_{\rm all}$&  $\mathcal{Y}\cup \{K+1\}$\\
$\bullet$ \textbf{Distributions} & \\
$X_{\rm I}$, $X_{\rm O}$, $Y_{\rm I}$, $Y_{\rm O}$ & ID feature, OOD feature, ID label, OOD label random variables\\
$D_{X_{\rm I}Y_{\rm I}}$, $D_{X_{\rm O}Y_{\rm O}}$&ID joint distribution and OOD joint distribution\\
$D_{XY}^{\alpha}$& $D_{XY}^{\alpha}=(1-\alpha)D_{X_{\rm I}Y_{\rm I}}+\alpha D_{X_{\rm O}Y_{\rm O}},~~\forall \alpha\in [0,1]$\\
$\pi^{\rm out}$& class-prior probability for OOD distribution\\
$D_{XY}$& $D_{XY}=(1-\pi^{\rm out})D_{X_{\rm I}Y_{\rm I}}+\pi^{\rm out} D_{X_{\rm O}Y_{\rm O}}$, called domain
\\
$D_{X_{\rm I}}, D_{X_{\rm O}},D_X$& marginal distributions for $D_{X_{\rm I}Y_{\rm I}}$, $D_{X_{\rm O}Y_{\rm O}}$ and $D_{XY}$, respectively
\\
$\bullet$ \textbf{Domain Spaces} & \\
$\mathscr{D}_{XY}$& domain space consisting of some domains\\
$\mathscr{D}_{XY}^{\rm all}$& total space\\
$\mathscr{D}_{XY}^{s}$& seperate space\\
$\mathscr{D}_{XY}^{D_{XY}}$& single-distribution space\\ 
$\mathscr{D}_{XY}^{F}$& finite-ID-distribution space\\
$\mathscr{D}_{XY}^{\mu,b}$& density-based space\\
$\bullet$ \textbf{Loss Function, Function Spaces} & \\
$\ell(\cdot,\cdot)$& loss: $\mathcal{Y}_{\rm all}\times \mathcal{Y}_{\rm all}\rightarrow \mathbb{R}_{\geq 0}$:  $\ell(y_1,y_2)=0$ if and only if $y_1=y_2$\\
$\mathcal{H}$& hypothesis space 
\\ 
$\mathcal{H}^{\rm in}$& ID hypothesis space 
\\ 
$\mathcal{H}^{\rm b}$& hypothesis space in binary
classification 
\\ 
$\mathcal{F}_l$& scoring function space consisting some $l$ dimensional vector-valued functions
\\ 
$\mathcal{R}$& ranking function space consisting some ranking functions
\\ 
$\bullet$ \textbf{Risks, Partial Risks and AUC} & \\
$R_D(h)$& risk corresponding to $D_{XY}$\\
$R_D^{\rm in}(h)$& partial risk corresponding to $D_{X_{\rm I}Y_{\rm I}}$\\
$R_D^{\rm out}(h)$& partial risk corresponding to $D_{X_{\rm O}Y_{\rm O}}$\\
$R_D^{\alpha}(h)$& $\alpha$-risk corresponding to $D_{XY}^{\alpha}$
\\
${\rm AUC}(r;D_{XY})$& AUC corresponding to $D_{XY}$
\\
${\rm AUC}(r;D_{X_{\rm I}},D_{X_{\rm O}})$& AUC corresponding to $D_{X_{\rm I}},D_{X_{\rm O}}$
\\
$\bullet$ \textbf{Fully-Connected Neural Networks} & \\
$\mathbf{q}$&  a sequence $(l_1,...,l_g)$ to represent the architecture of FCNN\\
$\sigma$&  activation function. In this paper, we use ReLU function\\
$\mathcal{F}_{\mathbf{q}}^{\sigma}$&  FCNN-based scoring function space\\
$\mathcal{H}_{\mathbf{q}}^{\sigma}$&  FCNN-based hypothesis space\\
$\mathbf{f}_{\mathbf{w},\mathbf{b}}$& FCNN-based scoring function, which is from $\mathcal{F}_{\mathbf{q}}^{\sigma}$\\
${h}_{\mathbf{w},\mathbf{b}}$& FCNN-based hypothesis function, which is from $\mathcal{H}_{\mathbf{q}}^{\sigma}$\\
$\bullet$ \textbf{Score-based Hypothesis Space} & \\
$E$ & scoring function\\
$\lambda$& threshold\\
$\mathcal{H}_{\mathbf{q},E}^{\sigma, \lambda}$&  score-based hypothesis
space---a binary classification space\\
${h}_{\mathbf{f},E}^{\lambda}$&  score-based hypothesis
function---a binary classifier\\
\hline
\end{tabular}
\end{center}
\end{table}
\\
Given $\mathbf{f}=[f^1,...,f^l]^{\top}$, for any $\mathbf{x}\in \mathcal{X}$,
\begin{equation*}
    \argmax_{k
    \in \{1,...,l\}} f^k(\mathbf{x}):=\max \{k\in \{1,...,l\}: f^k(\mathbf{x}) \geq f^i(\mathbf{x}), \forall i=1,...,l\},
\end{equation*}
where $f^k$ is the $k$-th coordinate of $\mathbf{f}$ and $f^i$ is the $i$-th coordinate of $\mathbf{f}$. 
The above definition about $\argmax$ aims to overcome some special cases. For example, there exist ${k}_1$, ${k}_2$ ($k_1< k_2$) such that $f^{k_1}(\mathbf{x})=f^{k_2}(\mathbf{x})$ and $f^{k_1}(\mathbf{x})> f^{i}(\mathbf{x})$, $f^{k_2}(\mathbf{x})> f^{i}(\mathbf{x})$, $\forall i\in \{1,...,l\}{- }\{k_1,k_2\}$. 
Then, according to the above definition, $k_2=\argmax_{k\in\{1,...,l\}} f^k(\mathbf{x})$.
\subsection{Realizability Assumptions}\label{SB.2}

\begin{assumption}[Risk-based Realizability Assumption]\label{Reaass}
A domain space $\mathscr{D}_{XY}$ and hypothesis space $\mathcal{H}$ satisfy the Risk-based Realizability Assumption, if for each domain $D_{XY}\in \mathscr{D}_{XY}$, there exists at least one hypothesis function $h^{*}\in \mathcal{H}$ such that $R_D(h^*)=0$.
\end{assumption}

\begin{assumption}[AUC-based Realizability Assumption]\label{Reaass_auc}
A domain space $\mathscr{D}_{XY}$ and ranking function space $\mathcal{R}$ satisfy the AUC-based Realizability Assumption under AUC, if for each domain $D_{XY}\in \mathscr{D}_{XY}$, there exists at least one hypothesis function $r^{*}\in \mathcal{R}$ such that ${\rm AUC}(h^*;D_{XY})=1$.
\end{assumption}

\subsection{Learnability and PAC learnability}\label{SB.3}
Here we give a proof to show that Learnability given in Definition \ref{D0} and PAC learnability are equivalent.
\\

\noindent \textbf{First}, we prove that Learnability concludes the PAC learnability.
\\

\noindent According to Definition \ref{D0},
\begin{equation*}
    \mathbb{E}_{S\sim D^n_{X_{\rm I}Y_{\rm I}}} R_{D}(\mathbf{A}(S))\leq \inf_{h\in \mathcal{H}}R_{D}(h) + \epsilon_{\rm cons}(n),
\end{equation*}
which implies that
\begin{equation*}
    \mathbb{E}_{S\sim D^n_{X_{\rm I}Y_{\rm I}}} [R_{D}(\mathbf{A}(S))-\inf_{h\in \mathcal{H}}R_{D}(h)]\leq   \epsilon_{\rm cons}(n).
\end{equation*}
Note that $R_{D}(\mathbf{A}(S))-\inf_{h\in \mathcal{H}}R_{D}(h) \geq 0$. Therefore, by Markov's inequality, we have \begin{equation*}
    \mathbb{P}(R_{D}(\mathbf{A}(S))-\inf_{h\in \mathcal{H}}R_{D}(h)<\epsilon)>1-  \mathbb{E}_{S\sim D^n_{X_{\rm I}Y_{\rm I}}} [R_{D}(\mathbf{A}(S))-\inf_{h\in \mathcal{H}}R_{D}(h)]/\epsilon \geq 1- \epsilon_{\rm cons}(n)/\epsilon.
\end{equation*}
 Because $\epsilon_{\rm cons}(n)$ is monotonically decreasing, we can find a smallest $m$ such that $\epsilon_{\rm cons}(m) \geq \epsilon\delta$ and $\epsilon_{\rm cons}(m-1) < \epsilon\delta$, for $\delta\in (0,1)$. We define that $m(\epsilon,\delta)=m$. Therefore, for any $\epsilon>0$ and $\delta\in (0,1)$, there exists a function $m(\epsilon,\delta)$ such that when $n>m(\epsilon,\delta)$, with the probability at least $1-\delta$, we have
 \begin{equation*}
     R_{D}(\mathbf{A}(S))-\inf_{h\in \mathcal{H}}R_{D}(h)<\epsilon,
 \end{equation*}
 which is the definition of  PAC learnability.
 \\

\noindent \textbf{Second}, we prove that the PAC learnability concludes Learnability.
\\

\noindent PAC-learnability: for any $\epsilon>0$ and $0<\delta<1$, there exists a function $m(\epsilon,\delta)>0$ such that when the sample size $n>m(\epsilon,\delta)$, we have that with the probability at least $1-\delta>0$,
\begin{equation*}
    R_D(\mathbf{A}(S))-\inf_{h\in \mathcal{H}}  R_D(h) \leq \epsilon.
\end{equation*}

\noindent Note that the loss $\ell$ defined in Section \ref{S3} has upper bound (because $\mathcal{Y}\cup \{K+1\}$ is a finite set). We assume the upper bound of $\ell$ is $M$. Hence, according to the definition of PAC-learnability, when the sample size $n>m(\epsilon,\delta)$, we have that 
\begin{equation*}
 \mathbb{E}_S [R_D(\mathbf{A}(S))-\inf_{h\in \mathcal{H}}  R_D(h)] \leq \epsilon(1-\delta)+2M\delta < \epsilon+2M\delta.
\end{equation*}
If we set $\delta = \epsilon$, then when the sample size $n>m(\epsilon,\epsilon)$, we have that 
\begin{equation*}
 \mathbb{E}_S [R_D(\mathbf{A}(S))-\inf_{h\in \mathcal{H}}  R_D(h)] < (2M+1)\epsilon,
\end{equation*}
this implies that
\begin{equation*}
 \lim_{n\rightarrow +\infty}\mathbb{E}_S [R_D(\mathbf{A}(S))-\inf_{h\in \mathcal{H}}  R_D(h)]=0,
\end{equation*}
which implies the Learnability in Definition \ref{D0}. We have completed this proof.
\newpage
\section{Proof of Theorem \ref{T1}}\label{SC}

\thmCone*
\begin{proof}[Proof of Theorem \ref{T1}.] 
$~$

\noindent \textbf{Proof of the First Result.}

\noindent To prove that $\mathscr{D}_{XY}'$ is a priori-unknown space, we need to show that for any $D_{XY}^{\alpha'} \in \mathscr{D}_{XY}'$, then $D_{XY}^{\alpha} \in \mathscr{D}_{XY}'$ for any $\alpha\in [0,1)$.
\\
\\
\noindent According to the definition of $\mathscr{D}_{XY}'$, for any $D_{XY}^{\alpha'} \in \mathscr{D}_{XY}'$, we can find a domain $D_{XY}\in \mathscr{D}_{XY}$, which can be written as $D_{XY}=(1-\pi^{\rm out})D_{X_{\rm I}Y_{\rm I}}+ \pi^{\rm out}D_{X_{\rm O}Y_{\rm O}}$ (here $\pi^{\rm out}\in [0,1)$) such that
\begin{equation*}
    D_{XY}^{\alpha'} = (1-\alpha')D_{X_{\rm I}Y_{\rm I}}+\alpha' D_{X_{\rm O}Y_{\rm O}}.
\end{equation*}

\noindent Note that $D_{XY}^{\alpha} = (1-\alpha)D_{X_{\rm I}Y_{\rm I}}+\alpha D_{X_{\rm O}Y_{\rm O}}$. Therefore, based on the definition of $\mathscr{D}_{XY}'$, for any $\alpha\in [0,1)$, $ D_{XY}^{\alpha}\in \mathscr{D}_{XY}'$, which implies that $\mathscr{D}_{XY}'$ is a prior-known space. Additionally, for any $D_{XY}\in \mathscr{D}_{XY}$, we can rewrite $D_{XY}$ as $D_{XY}^{\pi_{\rm out}}$, thus $D_{XY}=D_{XY}^{\pi_{\rm out}}\in \mathscr{D}_{XY}'$, which implies that $\mathscr{D}_{XY}\subset \mathscr{D}_{XY}'$.
\\

\noindent \textbf{Proof of the Second Result.}

\noindent \textbf{First,} we prove that Definition \ref{D0} concludes Definition \ref{D2}, if $\mathscr{D}_{XY}$ is a prior-unknown space:

\noindent\fbox{
\parbox{0.98\textwidth}{
~~~~$\mathscr{D}_{XY}$ is a priori-unknown space, and OOD detection  is learnable in $\mathscr{D}_{XY}$ for $\mathcal{H}$. \\
$~~~~~~~~~~~~~~~~~~~~~~~~~~~~~~~~~~~~~~~~~~~~~~~~~~~~~~~~~~~~{\Huge \Downarrow}$
\\
OOD detection is strongly learnable in $\mathscr{D}_{XY}$ for $\mathcal{H}$: there exist an algorithm $\mathbf{A}: \cup_{n=1}^{+\infty}(\mathcal{X}\times\mathcal{Y})^n\rightarrow \mathcal{H}$, and a monotonically decreasing sequence $\epsilon(n)$, such that
$\epsilon(n)\rightarrow 0$, as $n\rightarrow +\infty$
\begin{equation*}
\begin{split}
 \mathbb{E}_{S\sim D^n_{X_{\rm I}Y_{\rm I}}} \big[R^{\alpha}_D(\mathbf{A}(S))& -\inf_{h\in \mathcal{H}}R^{\alpha}_D(h)\big] \leq \epsilon(n), ~~~\forall \alpha\in [0,1],~\forall D_{XY}\in \mathscr{D}_{XY}.
 \end{split}
\end{equation*}
}
}
\\

\noindent In the priori-unknown space, for any $D_{XY}\in \mathscr{D}_{XY}$, we have that for any $\alpha\in [0,1)$,
\begin{equation*}
    D_{XY}^{\alpha}=(1-\alpha)D_{X_{\rm I}Y_{\rm I}}+\alpha D_{X_{\rm O}Y_{\rm O}}\in \mathscr{D}_{XY}.
\end{equation*}
Then, according to the definition of learnability of OOD detection, we have an algorithm $\mathbf{A}$ and a monotonically decreasing sequence
$\epsilon_{\rm cons}(n)\rightarrow 0$, as $n\rightarrow +\infty$, such that for any $\alpha\in [0,1)$,
\begin{equation*}
    \mathbb{E}_{S\sim D^n_{X_{\rm I}Y_{\rm I}}} R_{D^{\alpha}}(\mathbf{A}(S))\leq \inf_{h\in \mathcal{H}}R_{D^{\alpha}}(h) + \epsilon_{\rm cons}(n),~~(\textnormal{{by~the~property~of~priori}-{\rm unknown~space}})
\end{equation*}
where 
\begin{equation*}
     R_{D^{\alpha}}(\mathbf{A}(S))=\int_{\mathcal{X}\times \mathcal{Y}_{\rm all}} \ell(\mathbf{A}(S)(\mathbf{x}),y){\rm d}D^{\alpha}_{XY}(\mathbf{x},y),~~~R_{D^{\alpha}}(h)=\int_{\mathcal{X}\times \mathcal{Y}_{\rm all}} \ell(h(\mathbf{x}),y){\rm d}D^{\alpha}_{XY}(\mathbf{x},y).
\end{equation*}
Since $R_{D^{\alpha}}(\mathbf{A}(S))=R_{D}^{\alpha}(\mathbf{A}(S))$ and $R_{D^{\alpha}}(h)=R_{D}^{\alpha}(h)$, we have that
\begin{equation}\label{Eq1}
    \mathbb{E}_{S\sim D^n_{X_{\rm I}Y_{\rm I}}} R_{D}^{\alpha}(\mathbf{A}(S))\leq \inf_{h\in \mathcal{H}}R_{D}^{\alpha}(h) + \epsilon_{\rm cons}(n),~~~ \forall \alpha\in[0,1).
\end{equation}
{Next},
we consider the case that $\alpha=1$. Note that
\begin{equation}\label{Riskinequality1}
   \liminf_{\alpha\rightarrow 1} \inf_{h\in\mathcal{H}} R_{D}^{\alpha}(h) \geq \liminf_{\alpha \rightarrow 1} \alpha \inf_{h\in\mathcal{H}} R_{D}^{\rm out}(h) =  \inf_{h\in\mathcal{H}} R_{D}^{\rm out}(h).
\end{equation}
Then, we assume that $h_{\epsilon} \in \mathcal{H}$ satisfies that
\begin{equation*}
    R_{D}^{\rm out}(h_{\epsilon})-\inf_{h\in\mathcal{H}} R_{D}^{\rm out}(h) \leq \epsilon.
\end{equation*}
It is obvious that
\begin{equation*}
    R_{D}^{\alpha}(h_{\epsilon}) \geq  \inf_{h\in\mathcal{H}} R_{D}^{\alpha}(h).
\end{equation*}
Let $\alpha\rightarrow 1$. Then, for any $\epsilon>0$,
\begin{equation*}
    R_{D}^{\rm out}(h_{\epsilon})=\lim_{\alpha\rightarrow 1} R_{D}^{\alpha}(h_{\epsilon})=\limsup_{\alpha\rightarrow 1}R_{D}^{\alpha}(h_{\epsilon}) \geq \limsup_{\alpha\rightarrow 1} \inf_{h\in\mathcal{H}} R_{D}^{\alpha}(h),
\end{equation*}
which implies that
\begin{equation}\label{Riskinequality2}
    \inf_{h\in\mathcal{H}} R_{D}^{\rm out}(h) = \lim_{\epsilon\rightarrow 0}  R_{D}^{\rm out}(h_{\epsilon})\geq \lim_{\epsilon\rightarrow 0}\limsup_{\alpha\rightarrow 1} \inf_{h\in\mathcal{H}} R_{D}^{\alpha}(h)= \limsup_{\alpha\rightarrow 1} \inf_{h\in\mathcal{H}} R_{D}^{\alpha}(h).
\end{equation}
Combining Eq. (\ref{Riskinequality1}) with Eq. (\ref{Riskinequality2}), we have
\begin{equation}\label{Riskinequality3}
    \inf_{h\in\mathcal{H}} R_{D}^{\rm out}(h) = \limsup_{\alpha\rightarrow 1} \inf_{h\in\mathcal{H}} R_{D}^{\alpha}(h) = \liminf_{\alpha\rightarrow 1} \inf_{h\in\mathcal{H}} R_{D}^{\alpha}(h),
\end{equation}
which implies that
\begin{equation}\label{Riskinequality4}
    \inf_{h\in\mathcal{H}} R_{D}^{\rm out}(h) = \lim_{\alpha\rightarrow 1} \inf_{h\in\mathcal{H}} R_{D}^{\alpha}(h).
\end{equation}
Note that
\begin{equation*}
     \mathbb{E}_{S\sim D^n_{X_{\rm I}Y_{\rm I}}} R_{D}^{\alpha}(\mathbf{A}(S)) = (1-\alpha)\mathbb{E}_{S\sim D^n_{X_{\rm I}Y_{\rm I}}} R_{D}^{\rm in}(\mathbf{A}(S))+\alpha \mathbb{E}_{S\sim D^n_{X_{\rm I}Y_{\rm I}}} R_{D}^{\rm out}(\mathbf{A}(S)).
\end{equation*}
Hence, Lebesgue's Dominated Convergence Theorem \citep{cohn2013measure} implies that
\begin{equation}\label{Riskinequality5}
    \lim_{\alpha\rightarrow 1}\mathbb{E}_{S\sim D^n_{X_{\rm I}Y_{\rm I}}} R_{D}^{\alpha}(\mathbf{A}(S)) = \mathbb{E}_{S\sim D^n_{X_{\rm I}Y_{\rm I}}} R_{D}^{\rm out}(\mathbf{A}(S)).
\end{equation}

\noindent Using Eq. (\ref{Eq1}), we have that
\begin{equation}\label{Eq18}
   \lim_{\alpha \rightarrow 1} \mathbb{E}_{S\sim D^n_{X_{\rm I}Y_{\rm I}}} R_{D}^{\alpha}(\mathbf{A}(S))\leq  \lim_{\alpha \rightarrow 1} \inf_{h\in \mathcal{H}}R_{D}^{\alpha}(h) + \epsilon_{\rm cons}(n).
\end{equation}
Combining Eq. (\ref{Riskinequality4}), Eq. (\ref{Riskinequality5}) with Eq. (\ref{Eq18}), we obtain that
\begin{equation*}
  \mathbb{E}_{S\sim D^n_{X_{\rm I}Y_{\rm I}}} R_{D}^{\rm out}(\mathbf{A}(S))\leq  \inf_{h\in \mathcal{H}}R_{D}^{\rm out}(h) + \epsilon_{\rm cons}(n).
\end{equation*}
Since $R_D^{\rm out}(\mathbf{A}(S))=R_{D}^{1}(\mathbf{A}(S))$ and $R_D^{\rm out}(h)=R_{D}^{1}(h)$, we obtain that \begin{equation}\label{Eq2}
  \mathbb{E}_{S\sim D^n_{X_{\rm I}Y_{\rm I}}} R_{D}^{1}(\mathbf{A}(S))\leq  \inf_{h\in \mathcal{H}}R_{D}^{1}(h) + \epsilon_{\rm cons}(n).
\end{equation}
\\
Combining Eq. (\ref{Eq1}) and Eq. (\ref{Eq2}), we have proven that: if the  domain space $\mathscr{D}_{XY}$ is a priori-unknown space, then OOD detection  is learnable in $\mathscr{D}_{XY}$ for $\mathcal{H}$.\\
$~~~~~~~~~~~~~~~~~~~~~~~~~~~~~~~~~~~~~~~~~~~~~~~~~~~~~~~~{\Huge \Downarrow}$
\\
OOD detection is strongly learnable in $\mathscr{D}_{XY}$ for $\mathcal{H}$: there exist an algorithm $\mathbf{A}: \cup_{n=1}^{+\infty}(\mathcal{X}\times\mathcal{Y})^n\rightarrow \mathcal{H}$, and a monotonically decreasing sequence $\epsilon(n)$, such that
$\epsilon(n)\rightarrow 0$, as $n\rightarrow +\infty$,
\begin{equation*}
\begin{split}
 \mathbb{E}_{S\sim D^n_{X_{\rm I}Y_{\rm I}}} R^{\alpha}_D(\mathbf{A}(S))&\leq \inf_{h\in \mathcal{H}}R^{\alpha}_D(h) + \epsilon(n),~~~\forall \alpha\in [0,1],~\forall D_{XY}\in \mathscr{D}_{XY}.
 \end{split}
\end{equation*}

\noindent \textbf{Second}, we prove that Definition \ref{D2} concludes Definition \ref{D0}:
\\
\fbox{
\parbox{1\textwidth}{
OOD detection is strongly learnable in $\mathscr{D}_{XY}$ for $\mathcal{H}$: there exist an algorithm $\mathbf{A}: \cup_{n=1}^{+\infty}(\mathcal{X}\times\mathcal{Y})^n\rightarrow \mathcal{H}$, and a monotonically decreasing sequence $\epsilon(n)$, such that
$\epsilon(n)\rightarrow 0$, as $n\rightarrow +\infty$,
\begin{equation*}
\begin{split}
 \mathbb{E}_{S\sim D^n_{X_{\rm I}Y_{\rm I}}} \big[R^{\alpha}_D(\mathbf{A}(S))&- \inf_{h\in \mathcal{H}}R^{\alpha}_D(h)\big] \leq \epsilon(n),~~~\forall \alpha\in [0,1],~\forall D_{XY}\in \mathscr{D}_{XY}.
 \end{split}
\end{equation*}
$~~~~~~~~~~~~~~~~~~~~~~~~~~~~~~~~~~~~~~~~~~~~~~~~~~~~~~~~~~~~~~~{\Huge \Downarrow}$\\
${~~~~~~~~~~~~~~~~~~~~~~~~~~~~~~~~~}$OOD detection is learnable in $\mathscr{D}_{XY}$ for $\mathcal{H}$.
}
}

\noindent If we set $\alpha=\pi^{\rm out}$, then $
 \mathbb{E}_{S\sim D^n_{X_{\rm I}Y_{\rm I}}} R^{\alpha}_D(\mathbf{A}(S))\leq \inf_{h\in \mathcal{H}}R^{\alpha}_D(h) + \epsilon(n)
$ implies that
\begin{equation*}
\begin{split}
    \mathbb{E}_{S\sim D^n_{X_{\rm I}Y_{\rm I}}} R_D(\mathbf{A}(S)) \leq \inf_{h\in \mathcal{H}} R_D(h) + \epsilon(n),
    \end{split}
\end{equation*}
which means that OOD detection  is learnable in $\mathscr{D}_{XY}$ for $\mathcal{H}$. We have completed this proof.
\\

\noindent \textbf{Proof of the Third Result.} 

\noindent The third result is a simple conclusion of the second result. Hence, we omit it.
\\

\noindent \textbf{Proof of the Fourth Result.}

\noindent The fourth result is a simple conclusion of the property of AUC metric. Hence, we omit it.

\end{proof}
\section{Proof of Theorem \ref{T3}}\label{SD}

Before introducing the proof of Theorem \ref{T3}, we extend Condition \ref{C1} to a general version (Condition \ref{C2}). Then, Lemma  \ref{C1andC2} proves that Conditions \ref{C1} and \ref{C2} are the necessary conditions for the learnability of OOD detection. First, we provide the details of Condition \ref{C2}.

\noindent  Let  $\Delta_l^{\rm o} =\{(\lambda_1,...,\lambda_l): \sum_{j=1}^l \lambda_j<1~ {\rm and}~ \lambda_j \geq 0, \forall j=1,...,l\}$, where $l$ is a positive integer. Next, we introduce an important definition as follows:
\begin{Definition}[OOD Convex Decomposition and Convex Domain]\label{OODCD}
Given any domain $D_{XY}\in \mathscr{D}_{XY}$, we say joint distributions $Q_1,...,Q_l$, which are defined over $\mathcal{X}\times \{K+1\}$, are the OOD convex decomposition for $D_{XY}$, if 
\begin{equation*}
D_{XY}=(1-\sum_{j=1}^l \lambda_j)D_{X_{\rm I}Y_{\rm I}}+\sum_{j=1}^l \lambda_j Q_j,
\end{equation*}
for some  $(\lambda_1,...,\lambda_l)\in \Delta_l^{\rm o}$. We also say domain $D_{XY}\in \mathscr{D}_{XY}$ is an OOD convex domain corresponding to OOD convex decomposition $Q_1,...,Q_l$, if for any $(\alpha_1,...,\alpha_l)\in \Delta_l^{\rm o}$, 
\begin{equation*}
   (1-\sum_{j=1}^l \alpha_j)D_{X_{\rm I}Y_{\rm I}}+\sum_{j=1}^l \alpha_j Q_j\in \mathscr{D}_{XY}.
\end{equation*}
\end{Definition}
We extend the  linear condition (Condition \ref{C1}) to a multi-linear scenario.
\begin{Condition}[Multi-linear Condition]\label{C2}
    For each OOD convex domain $D_{XY}\in \mathscr{D}_{XY}$ corresponding to OOD convex decomposition $Q_1,...,Q_l$, the following function 
    \begin{equation*}
    f_{D,Q}(\alpha_1,...,\alpha_{l}):= \inf_{h\in \mathcal{H}}  \Big ((1-\sum_{j=1}^{l}\alpha_j) R_D^{\rm in}(h) + \sum_{j=1}^l \alpha_j R_{Q_j}(h) \Big ),~~~\forall (\alpha_1,...,\alpha_{l})\in\Delta_l^{\rm o}
    \end{equation*}
  satisfies that
\begin{equation*}
    f_{D,Q}(\alpha_1,...,\alpha_{l})= (1-\sum_{j=1}^{l}\alpha_j) f_{D,Q}(\mathbf{0})+\sum_{j=1}^{l}\alpha_j f_{D,Q}({\bm \alpha}_j),
    \end{equation*}
    where $\mathbf{0}$ is the $1\times l$ vector, whose elements are $0$, and ${\bm \alpha}_j$ is the $1\times l$ vector, whose $j$-th element is $1$ and other elements are $0$.
\end{Condition}
This is a more general condition compared to Condition \ref{C1}. When $l=1$ and the domain space $\mathscr{D}_{XY}$ is a priori-unknown space, Condition \ref{C2} degenerates into  Condition \ref{C1}.
Lemma \ref{C1andC2} shows that Condition \ref{C2} is necessary for the learnability of OOD detection.
\begin{lemma}\label{C1andC2}
Given a priori-unknown space $\mathscr{D}_{XY}$ and a hypothesis space $\mathcal{H}$, if OOD detection  is learnable in $\mathscr{D}_{XY}$ for $\mathcal{H}$, then Conditions \ref{C1} and \ref{C2} hold.
\end{lemma}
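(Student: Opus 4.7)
The plan is to deduce both conditions by running the consistent algorithm guaranteed by learnability on carefully chosen mixtures of ID and OOD parts, then comparing the resulting risks via linearity. My first move is to upgrade plain learnability (Definition~\ref{D0}) to strong learnability (Definition~\ref{D2}) by invoking the second assertion of Theorem~\ref{T1}: because $\mathscr{D}_{XY}$ is priori-unknown, a single algorithm $\mathbf{A}$ and a single vanishing sequence $\epsilon_{\rm cons}(n)$ can be chosen so that $\mathbb{E}_S[R_D^{\alpha}(\mathbf{A}(S))]\leq \inf_{h\in\mathcal{H}} R_D^{\alpha}(h) + \epsilon_{\rm cons}(n)$ holds for every $D_{XY}\in\mathscr{D}_{XY}$ and every $\alpha\in[0,1]$. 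All subsequent estimates will be derived from this uniform slack bound.

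For Condition~\ref{C1}, I would fix $D_{XY}\in\mathscr{D}_{XY}$ and $\alpha\in[0,1)$. The trivial direction $\inf_h R_D^{\alpha}(h)\geq (1-\alpha)\inf_h R_D^{\rm in}(h)+\alpha\inf_h R_D^{\rm out}(h)$ follows because the infimum of a non-negatively weighted sum dominates the weighted sum of infima. For the reverse direction, instantiate strong learnability at $\alpha=0$ and at $\alpha=1$ to get $\mathbb{E}_S[R_D^{\rm in}(\mathbf{A}(S))]\leq \inf_h R_D^{\rm in}(h)+\epsilon_{\rm cons}(n)$ and $\mathbb{E}_S[R_D^{\rm out}(\mathbf{A}(S))]\leq \inf_h R_D^{\rm out}(h)+\epsilon_{\rm cons}(n)$; take the convex combination with weights $(1-\alpha,\alpha)$ and use $\mathbb{E}_S[R_D^{\alpha}(\mathbf{A}(S))]\geq \inf_h R_D^{\alpha}(h)$. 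Letting $n\to\infty$ closes the gap and gives equality.

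For Condition~\ref{C2}, I would fix an OOD convex domain $D_{XY}$ with decomposition $Q_1,\ldots,Q_l$. For every $\bm\alpha\in\Delta_l^{\rm o}$, Definition~\ref{OODCD} guarantees that $D_{XY}^{\bm\alpha}:=(1-\sum_j\alpha_j)D_{X_{\rm I}Y_{\rm I}}+\sum_j\alpha_j Q_j$ lies in $\mathscr{D}_{XY}$, so strong learnability applies there. Expanding $R_{D^{\bm\alpha}}(h)=(1-\sum_j\alpha_j)R_D^{\rm in}(h)+\sum_j\alpha_j R_{Q_j}(h)$ linearly and taking expectations reduces the problem to controlling $\mathbb{E}_S[R_D^{\rm in}(\mathbf{A}(S))]$ and each $\mathbb{E}_S[R_{Q_j}(\mathbf{A}(S))]$ individually by the matching infimum plus $\epsilon_{\rm cons}(n)$. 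Combined with $\mathbb{E}_S[R_{D^{\bm\alpha}}(\mathbf{A}(S))]\geq f_{D,Q}(\bm\alpha)$, this yields $f_{D,Q}(\bm\alpha)\leq (1-\sum_j\alpha_j)f_{D,Q}(\mathbf{0})+\sum_j\alpha_j f_{D,Q}(\bm\alpha_j)+\epsilon_{\rm cons}(n)$; the matching reverse inequality is once again the trivial inf-of-sum versus sum-of-infs bound, and the $n\to\infty$ limit gives equality.

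The main obstacle will be bounding $\mathbb{E}_S[R_{Q_j}(\mathbf{A}(S))]$ by $\inf_h R_{Q_j}(h)+\epsilon_{\rm cons}(n)$: the pure corner $\bm\alpha_j$ satisfies $\sum_j\alpha_j=1$ and hence lies outside $\Delta_l^{\rm o}$, so the pure-OOD domain $Q_j$ is not directly certified to belong to $\mathscr{D}_{XY}$. I plan to bypass this by approaching the corner from inside $\Delta_l^{\rm o}$ via $\bm\alpha=t\bm\alpha_j$ with $t\to 1^-$, and reusing the liminf/limsup sandwich together with Lebesgue's dominated convergence theorem exactly as in Eqs.~(\ref{Riskinequality1})--(\ref{Riskinequality5}) of the proof of Theorem~\ref{T1}, transferring both $\mathbb{E}_S[R_{D^{t\bm\alpha_j}}(\mathbf{A}(S))]\to\mathbb{E}_S[R_{Q_j}(\mathbf{A}(S))]$ and $f_{D,Q}(t\bm\alpha_j)\to f_{D,Q}(\bm\alpha_j)$; the $\mathbf{0}$ corner is easier because $D_{X_{\rm I}Y_{\rm I}}$ already corresponds to a valid in-space domain via the priori-unknown property. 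Finally, Condition~\ref{C1} falls out as the $l=1$ specialization with $Q_1=D_{X_{\rm O}Y_{\rm O}}$, for which the priori-unknown property directly supplies the OOD convex domain structure.
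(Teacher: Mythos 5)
Your proposal is correct and follows essentially the same route as the paper's proof: both reduce to strong learnability via the second part of Theorem~\ref{T1}, exploit the affine dependence of the algorithm's expected risk on $(\alpha_1,\ldots,\alpha_l)$, and control the corner ${\bm \alpha}_j$ through the learnability guarantee applied to the in-space mixture $(1-t)D_{X_{\rm I}Y_{\rm I}}+tQ_j$. The only cosmetic differences are that you prove Condition~\ref{C1} directly rather than obtaining it as the $l=1$ specialization of Condition~\ref{C2}, and that you re-derive the corner limit $t\rightarrow 1^-$ by dominated convergence, whereas the paper reaches the same bound in one step by applying strong learnability at $\alpha=1$ to $0.5\,D_{X_{\rm I}Y_{\rm I}}+0.5\,Q_j$.
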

\begin{proof}[Proof of Lemma \ref{C1andC2}]
$~$

\noindent Since Condition \ref{C1} is a special case of Condition \ref{C2}, we only need to prove that Condition \ref{C2} holds. 
\\
\\
\noindent  For any OOD convex domain $D_{XY}\in \mathscr{D}_{XY}$ corresponding to OOD convex decomposition $Q_1,...,Q_l$,  and any $(\alpha_1,...,\alpha_l) \in \Delta_l^{\rm o}$, we set
 \begin{equation*}
     Q^{\bm \alpha}= \frac{1}{\sum_{i=1}^{l}\alpha_i} \sum_{j=1}^l \alpha_j Q_j.
 \end{equation*}
Then, we define
\begin{equation*}
    D_{XY}^{\bm \alpha}=(1-\sum_{i=1}^{l}\alpha_i)D_{X_{\rm I}Y_{\rm I}}+(\sum_{i=1}^{l}\alpha_i)  Q^{\bm \alpha},~\textnormal{which~belongs~to~}\mathscr{D}_{XY}.
\end{equation*}
Let
\begin{equation*}
     R^{\bm \alpha}_D(h)=\int_{\mathcal{X}\times \mathcal{Y}_{\rm all}} \ell(h(\mathbf{x}),y){\rm d}D_{XY}^{\bm \alpha}(\mathbf{x},y).
\end{equation*}

\noindent Since OOD detection is learnable in $\mathscr{D}_{XY}$ for $\mathcal{H}$,  there exist an algorithm $\mathbf{A}: \cup_{n=1}^{+\infty}(\mathcal{X}\times\mathcal{Y})^n\rightarrow \mathcal{H}$, and a monotonically decreasing sequence $\epsilon(n)$, such that
$\epsilon(n)\rightarrow 0$, as $n\rightarrow +\infty$, and
\begin{equation*}
 0\leq  \mathbb{E}_{S\sim D^n_{X_{\rm I}Y_{\rm I}}} R^{\bm \alpha}_D(\mathbf{A}(S)) - \inf_{h\in \mathcal{H}}R^{\bm \alpha}_D(h)\leq \epsilon(n).
\end{equation*}

\noindent Note that 
\begin{equation*}
\begin{split}
    &\mathbb{E}_{S\sim D^n_{X_{\rm I}Y_{\rm I}}} R^{\bm \alpha}_D(\mathbf{A}(S))=(1-\sum_{j=1}^{l}\alpha_j) \mathbb{E}_{S\sim D^n_{X_{\rm I}Y_{\rm I}}} R^{\rm in}_D(\mathbf{A}(S)) +\sum_{j=1}^{l}\alpha_j \mathbb{E}_{S\sim D^n_{X_{\rm I}Y_{\rm I}}}R_{Q_j}(\mathbf{A}(S)),
    \end{split}
\end{equation*}
and
\begin{equation*}
    \inf_{h\in \mathcal{H}}R^{\bm \alpha}_D(h) = f_{D,Q}(\alpha_1,...,\alpha_{l}),
\end{equation*}
where
\begin{equation*}
    R_{Q_j}(\mathbf{A}(S)) = \int_{\mathcal{X}\times \{K+1\}} \ell(\mathbf{A}(S)(\mathbf{x}),y){\rm d}Q_j(\mathbf{x},y).
\end{equation*}
Therefore, we have that for any $ (\alpha_1,...,\alpha_l)\in \Delta_l^{\rm o}$,
\begin{equation}\label{Eq::converge}
\begin{split}
  &\big |(1-\sum_{j=1}^{l}\alpha_j) \mathbb{E}_{S\sim D^n_{X_{\rm I}Y_{\rm I}}} R^{\rm in}_D(\mathbf{A}(S)) +\sum_{j=1}^{l}\alpha_j \mathbb{E}_{S\sim D^n_{X_{\rm I}Y_{\rm I}}}R_{Q_j}(\mathbf{A}(S)) - f_{D,Q}(\alpha_1,...,\alpha_{l})\big |\leq \epsilon(n).
  \end{split}
\end{equation}
Let 
\begin{equation*}
g_n(\alpha_1,...,\alpha_l)=(1-\sum_{j=1}^{l}\alpha_j) \mathbb{E}_{S\sim D^n_{X_{\rm I}Y_{\rm I}}} R^{\rm in}_D(\mathbf{A}(S)) +\sum_{j=1}^{l}\alpha_j \mathbb{E}_{S\sim D^n_{X_{\rm I}Y_{\rm I}}}R_{Q_j}(\mathbf{A}(S)).
\end{equation*}
\noindent Note that Eq. \eqref{Eq::converge} implies that
\begin{equation}\label{aim5}
\begin{split}
    \lim_{n\rightarrow +\infty } g_n(\alpha_1,...,\alpha_l) &= f_{D,Q}(\alpha_1,...,\alpha_{l}),~~\forall (\alpha_1,...,\alpha_l)\in \Delta_l^{\rm o},
   \\ 
   \lim_{n\rightarrow +\infty } g_n(\mathbf{0}) &= f_{D,Q}(\mathbf{0}).
    \end{split}
\end{equation}

\noindent \textbf{Step 1.} Since ${\bm \alpha}_j\notin \Delta_l^{\rm o}$, we need to prove that
\begin{equation}
    \lim_{n\rightarrow +\infty} \mathbb{E}_{S\sim D^n_{X_{\rm I}Y_{\rm I}}} R_{Q_j}(\mathbf{A}(S))  = f({\bm \alpha}_j), \textit{i.e.},  \lim_{n\rightarrow +\infty} g_n({\bm \alpha}_j)  = f({\bm \alpha}_j),
\end{equation}
where ${\bm \alpha}_j$ is the $1\times l$ vector, whose $j$-th element is $1$ and other elements are $0$.

\noindent Let $\tilde{D}_{XY}=0.5*D_{X_{\rm I}Y_{\rm I}}+0.5* Q_j$. The second result of Theorem \ref{T1} implies that
\begin{equation*}
\begin{split}
 \mathbb{E}_{S\sim D^n_{X_{\rm I}Y_{\rm I}}} R^{\rm out}_{\tilde{D}}(\mathbf{A}(S))\leq \inf_{h\in \mathcal{H}}R^{\rm out}_{\tilde{D}}(h) + \epsilon(n).
 \end{split}
\end{equation*}
Since $R^{\rm out}_{\tilde{D}}(\mathbf{A}(S))=R_{Q_j}(\mathbf{A}(S))$ and $R^{\rm out}_{\tilde{D}}(h)=R_{Q_j}(h)$,
\begin{equation*}
    \mathbb{E}_{S\sim D^n_{X_{\rm I}Y_{\rm I}}} R_{Q_j}(\mathbf{A}(S))\leq \inf_{h\in \mathcal{H}}R_{Q_j}(h) + \epsilon(n).
\end{equation*}
Note that $ \inf_{h\in \mathcal{H}}R_{Q_j}(h)  \leq \mathbb{E}_{S\sim D^n_{X_{\rm I}Y_{\rm I}}} R_{Q_j}(\mathbf{A}(S))$. We have
\begin{equation}\label{aim1}
   0\leq  \mathbb{E}_{S\sim D^n_{X_{\rm I}Y_{\rm I}}} R_{Q_j}(\mathbf{A}(S))- \inf_{h\in \mathcal{H}}R_{Q_j}(h)\leq \epsilon(n).
\end{equation}
Eq. \eqref{aim1} implies that
\begin{equation}\label{aim2}
   \lim_{n\rightarrow +\infty} \mathbb{E}_{S\sim D^n_{X_{\rm I}Y_{\rm I}}} R_{Q_j}(\mathbf{A}(S))= \inf_{h\in \mathcal{H}}R_{Q_j}(h).
\end{equation}

\noindent We note that $\inf_{h\in \mathcal{H}}R_{Q_j}(h)=f_{D,Q}({\bm \alpha}_j)$. Therefore,
\begin{equation}\label{aim3}
   \lim_{n\rightarrow +\infty} \mathbb{E}_{S\sim D^n_{X_{\rm I}Y_{\rm I}}} R_{Q_j}(\mathbf{A}(S))= f_{D,Q}({\bm \alpha}_j),~ \textit{i.e.}, \lim_{n\rightarrow +\infty} g_n({\bm \alpha}_j)  = f({\bm \alpha}_j).
\end{equation}

\noindent \textbf{Step 2.} It is easy to check that for any $(\alpha_1,...,\alpha_l)\in \Delta_l^{\rm o}$,
\begin{equation}\label{finalequ}
\begin{split}
    \lim_{n\rightarrow +\infty } g_n(\alpha_1,...,\alpha_l) &=   \lim_{n\rightarrow +\infty } \big ((1-\sum_{j=1}^{l}\alpha_j)g_n(\mathbf{0})+ \sum_{j=1}^{l}\alpha_j g_n({\bm \alpha}_j) \big)\\&=(1-\sum_{j=1}^{l}\alpha_j) \lim_{n\rightarrow +\infty } g_n(\mathbf{0})+ \sum_{j=1}^{l}\alpha_j \lim_{n\rightarrow +\infty } g_n({\bm \alpha}_j).
    \end{split}
\end{equation}
According to Eq. \eqref{aim5} and Eq. \eqref{aim3}, we have
\begin{equation}\label{finalequ1}
\begin{split}
    & \lim_{n\rightarrow +\infty } g_n(\alpha_1,...,\alpha_l)=f_{D,Q}(\alpha_1,...,\alpha_{l}),~~\forall (\alpha_1,...,\alpha_l)\in \Delta_l^{\rm o},
    \\ &
    \lim_{n\rightarrow +\infty } g_n(\mathbf{0})=f_{D,Q}(\mathbf{0}),
       \\ &  \lim_{n\rightarrow +\infty} g_n({\bm \alpha}_j)  = f({\bm \alpha}_j),
     \end{split}
\end{equation}
 Combining Eq. (\ref{finalequ1}) with Eq. (\ref{finalequ}), we complete the proof.
\end{proof}

\begin{lemma}\label{Lemma1}
\begin{equation*}
 \inf_{h\in \mathcal{H}}R_D^{\alpha}(h)=(1-\alpha)\inf_{h\in \mathcal{H}}R_D^{\rm in}(h)+\alpha\inf_{h\in \mathcal{H}}R_D^{\rm out}(h),~\forall \alpha\in [0,1),
 \end{equation*}
 \textbf{if and only if} for any $\epsilon>0$,
 \begin{equation*}
     \{ h' \in \mathcal{H}: R_D^{\rm in}(h') \leq \inf_{h\in \mathcal{H}} R_D^{\rm in}(h)+2\epsilon\} \cap  \{ h' \in \mathcal{H}: R_D^{\rm out}(h') \leq \inf_{h\in \mathcal{H}} R_D^{\rm out}(h)+2\epsilon\}\neq \emptyset.
 \end{equation*}
 \end{lemma}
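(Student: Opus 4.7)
The plan is to prove both directions separately, with the common observation that by the bilinearity of $R_D^{\alpha}(h)=(1-\alpha)R_D^{\rm in}(h)+\alpha R_D^{\rm out}(h)$ we always have the trivial bound
\[
\inf_{h\in\mathcal{H}} R_D^{\alpha}(h)\ \geq\ (1-\alpha)\inf_{h\in\mathcal{H}} R_D^{\rm in}(h)+\alpha\inf_{h\in\mathcal{H}} R_D^{\rm out}(h),
\]
so the equality in the lemma is really about pinning down when the reverse inequality holds.

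For the ``$\Leftarrow$'' direction, I would fix any $\alpha\in[0,1)$ and any $\epsilon>0$, pick an $h_{\epsilon}$ from the stated nonempty intersection, and simply substitute to obtain
\[
R_D^{\alpha}(h_{\epsilon})\ \leq\ (1-\alpha)\inf_{h\in\mathcal{H}}R_D^{\rm in}(h)+\alpha\inf_{h\in\mathcal{H}}R_D^{\rm out}(h)+2\epsilon.
\]
Taking the infimum on the left and letting $\epsilon\downarrow 0$ yields the upper bound matching the trivial lower bound, proving the equality for every $\alpha$.

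For the ``$\Rightarrow$'' direction, the key move is to specialize the linear identity to the symmetric choice $\alpha=1/2$. The hypothesis gives
\[
\inf_{h\in\mathcal{H}}R_D^{1/2}(h)\ =\ \tfrac{1}{2}\inf_{h\in\mathcal{H}}R_D^{\rm in}(h)+\tfrac{1}{2}\inf_{h\in\mathcal{H}}R_D^{\rm out}(h),
\]
so for any $\epsilon>0$ there is an $h_{\epsilon}\in\mathcal{H}$ with $R_D^{1/2}(h_{\epsilon})\leq \tfrac{1}{2}\inf R_D^{\rm in}+\tfrac{1}{2}\inf R_D^{\rm out}+\epsilon$. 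Multiplying by $2$ and rearranging yields
\[
\bigl(R_D^{\rm in}(h_{\epsilon})-\inf R_D^{\rm in}\bigr)+\bigl(R_D^{\rm out}(h_{\epsilon})-\inf R_D^{\rm out}\bigr)\ \leq\ 2\epsilon,
\]
and since both parenthesized quantities are nonnegative, each is at most $2\epsilon$, placing $h_{\epsilon}$ in the desired intersection.

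There is no real obstacle in this proof; the slightly delicate point is recognizing that a single value $\alpha=1/2$ is sufficient to force simultaneous near-optimality on both the ID and OOD risks, and that this is precisely where the factor $2$ in the statement comes from (the scaling $1/(1-\alpha)$ and $1/\alpha$ equals $2$ at $\alpha=1/2$). Any other $\alpha\in(0,1)$ would work with a correspondingly adjusted slack, but $\alpha=1/2$ gives the cleanest bound matching the lemma as stated.
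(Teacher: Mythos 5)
Your proposal is correct and follows essentially the same route as the paper: the reverse direction is the same direct substitution of an $h_{\epsilon}$ from the intersection, and for the forward direction the paper derives the bounds $R_D^{\rm in}(h_{\epsilon}^{\alpha})\leq \inf R_D^{\rm in}+\epsilon/(1-\alpha)$ and $R_D^{\rm out}(h_{\epsilon}^{\alpha})\leq \inf R_D^{\rm out}+\epsilon/\alpha$ for general $\alpha$ and then sets $\alpha=1/2$, which is exactly your observation about where the factor $2$ comes from. No gaps.
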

 \begin{proof}[Proof of Lemma \ref{Lemma1}]
For the sake of convenience, we set $f_D(\alpha)=\inf_{h\in \mathcal{H}}R_D^{\alpha}(h)$, for any $\alpha \in [0,1]$.

\noindent  \textbf{First}, we prove that $f_D(\alpha)= (1-\alpha)f_D(0)+\alpha f_D(1)$, ~$\forall \alpha\in [0,1)$ implies \begin{equation*}
     \{ h' \in \mathcal{H}: R_D^{\rm in}(h') \leq \inf_{h\in \mathcal{H}} R_D^{\rm in}(h)+2\epsilon\} \cap  \{ h' \in \mathcal{H}: R_D^{\rm out}(h') \leq \inf_{h\in \mathcal{H}} R_D^{\rm out}(h)+2\epsilon \}\neq \emptyset.
 \end{equation*}
 For any $\epsilon>0$ and $0\leq \alpha<1$, we can find $h_{\epsilon}^{\alpha}\in \mathcal{H}$ satisfying that
 \begin{equation*}
     R_D^{\alpha}(h_{\epsilon}^{\alpha}) \leq \inf_{h\in \mathcal{H}} R_D^{\alpha}(h)+\epsilon.
 \end{equation*}
 Note that
 \begin{equation*}
     \inf_{h\in \mathcal{H}} R_D^{\alpha}(h)= \inf_{h\in \mathcal{H}} \Big ((1-\alpha)R_D^{\rm in}(h)+\alpha R_D^{\rm out}(h) \Big )\geq (1-\alpha)\inf_{h\in \mathcal{H}} R_D^{\rm in}(h)+\alpha \inf_{h\in \mathcal{H}} R_D^{\rm out}(h).
 \end{equation*}
 Therefore,
 \begin{equation}\label{L1.eq1}
(1-\alpha)\inf_{h\in \mathcal{H}} R_D^{\rm in}(h)+\alpha \inf_{h\in \mathcal{H}} R_D^{\rm out}(h)   \leq  \inf_{h\in \mathcal{H}} R_D^{\alpha}(h) \leq R_D^{\alpha}(h_{\epsilon}^{\alpha}) \leq \inf_{h\in \mathcal{H}} R_D^{\alpha}(h)+\epsilon.
 \end{equation}
 Note that $f_D(\alpha)= (1-\alpha)f_D(0)+\alpha f_D(1), \forall \alpha\in [0,1)$, \textit{i.e.},
 \begin{equation}\label{L1.eq2}
 \inf_{h\in \mathcal{H}} R_D^{\alpha}(h)= (1-\alpha)\inf_{h\in \mathcal{H}} R_D^{\rm in}(h)+\alpha \inf_{h\in \mathcal{H}} R_D^{\rm out}(h), \forall \alpha\in [0,1).
 \end{equation}
 
\noindent  Using Eqs. (\ref{L1.eq1}) and (\ref{L1.eq2}), we have that for any $0\leq \alpha<1$,
\begin{equation}\label{AboveEq}
\epsilon \geq \big |R^{\alpha}_D(h_{\epsilon}^{\alpha})-\inf_{h\in \mathcal{H}} R_D^{\alpha}(h)\big |=  \big |(1-\alpha)\big (R_D^{\rm in}(h_{\epsilon}^{\alpha})-\inf_{h\in \mathcal{H}} R_D^{\rm in}(h)\big)+\alpha\big (R_D^{\rm out}(h_{\epsilon}^{\alpha})-\inf_{h\in \mathcal{H}} R_D^{\rm out}(h)\big) \big |.
\end{equation}
Since $R_D^{\rm out}(h_{\epsilon}^{\alpha})-\inf_{h\in \mathcal{H}} R_D^{\rm out}(h)\geq 0$ and $R_D^{\rm in}(h_{\epsilon}^{\alpha})-\inf_{h\in \mathcal{H}} R_D^{\rm in}(h)\geq 0$, Eq. (\ref{AboveEq}) implies that: for any $0< \alpha<1$,
\begin{equation*}
\begin{split}
  &  R_D^{\rm in}(h_{\epsilon}^{\alpha}) \leq \inf_{h\in \mathcal{H}} R_D^{\rm in}(h)+\epsilon /(1-\alpha),
  \\ &
  R_D^{\rm out}(h_{\epsilon}^{\alpha}) \leq \inf_{h\in \mathcal{H}} R_D^{\rm out}(h)+\epsilon /\alpha.
\end{split}
\end{equation*}
Therefore,
\begin{equation*}
  h_{\epsilon}^\alpha \in \{ h' \in \mathcal{H}: R_D^{\rm in}(h') \leq \inf_{h\in \mathcal{H}} R_D^{\rm in}(h)+\epsilon/(1-\alpha)\} \cap  \{ h' \in \mathcal{H}: R_D^{\rm out}(h') \leq \inf_{h\in \mathcal{H}} R_D^{\rm out}(h)+\epsilon/\alpha \}.
 \end{equation*}
 If we set $\alpha=0.5$, we obtain that for any $\epsilon>0$,
\begin{equation*}
 \{ h' \in \mathcal{H}: R_D^{\rm in}(h') \leq \inf_{h\in \mathcal{H}} R_D^{\rm in}(h)+2\epsilon \}\cap  \{ h' \in \mathcal{H}: R_D^{\rm out}(h') \leq \inf_{h\in \mathcal{H}} R_D^{\rm out}(h)+2\epsilon \}\neq \emptyset.
 \end{equation*}
 \\
\textbf{Second}, we prove that for any $\epsilon>0$, if 
\begin{equation*}
     \{ h' \in \mathcal{H}: R_D^{\rm in}(h') \leq \inf_{h\in \mathcal{H}} R_D^{\rm in}(h)+2\epsilon\} \cap  \{ h' \in \mathcal{H}: R_D^{\rm out}(h') \leq \inf_{h\in \mathcal{H}} R_D^{\rm out}(h)+2\epsilon\}\neq \emptyset,
 \end{equation*}
 then $f_D(\alpha)= (1-\alpha)f_D(0)+\alpha f_D(1)$, for any $\alpha\in [0,1)$.
 \\
 \\
Let $h_{\epsilon}\in  \{ h' \in \mathcal{H}: R_D^{\rm in}(h') \leq \inf_{h\in \mathcal{H}} R_D^{\rm in}(h)+2\epsilon\} \cap  \{ h' \in \mathcal{H}: R_D^{\rm out}(h') \leq \inf_{h\in \mathcal{H}} R_D^{\rm out}(h)+2\epsilon\}$.

Then,
\begin{equation*}
 \inf_{h\in \mathcal{H}} R_D^{\alpha}(h) \leq   R_D^{\alpha}(h_{\epsilon})\leq (1-\alpha) \inf_{h\in \mathcal{H}} R_D^{\rm in}(h)+\alpha \inf_{h\in \mathcal{H}} R_D^{\rm out}(h)+2\epsilon \leq \inf_{h\in \mathcal{H}} R_D^{\alpha}(h)+2\epsilon,
\end{equation*}
which implies that $|f_D(\alpha)-(1-\alpha)f_D(0)-\alpha f_D(1)|\leq 2\epsilon$.

As $\epsilon \rightarrow 0$, $|f_D(\alpha)-(1-\alpha)f_D(0)-\alpha f_D(1)|\leq 0$. We have completed the proof.
 \end{proof}
\thmCondoneIff*
\begin{proof}[Proof of Theorem \ref{T3}]
Based on Lemma \ref{C1andC2}, we obtain that
 Condition \ref{C1} is the necessary condition for the learnability of OOD detection in the single-distribution space $\mathscr{D}_{XY}^{D_{XY}}$. Next, it suffices to prove that Condition \ref{C1} is the sufficient  condition for the learnability of OOD detection in the single-distribution space $\mathscr{D}_{XY}^{D_{XY}}$.
 We use Lemma \ref{Lemma1} to prove the sufficient condition. 
\\
\\
\noindent  Let $\mathscr{F}$ be the infinite sequence set that consists of all infinite sequences, whose coordinates are hypothesis functions, \textit{i.e.},
 \begin{equation*}
 \mathscr{F}=\{{\bm h}=(h_1,...,h_n,...): \forall h_n\in \mathcal{H}, n=1,....,+\infty\}.
 \end{equation*}
For each ${\bm h}\in \mathscr{F}$, there is a corresponding algorithm $\mathbf{A}_{\bm h}$\footnote{In this paper, we regard an algorithm as a mapping from $\cup_{n=1}^{+\infty}(\mathcal{X}\times\mathcal{Y})^n$ to $\mathcal{H}$ or $\mathcal{R}$. So we can design an algorithm like this.}: $\mathbf{A}_{\bm h}(S)=h_n,~{\rm if}~|S|=n$. $\mathscr{F}$ generates an algorithm class $\mathscr{A}=\{\mathbf{A}_{\bm h}: \forall {\bm h}\in \mathscr{F}\}$. We select a consistent algorithm from the algorithm class $\mathscr{A}$. 
\\
\\
\noindent We construct a special infinite sequence $\tilde{{\bm h}}=(\tilde{h}_1,...,\tilde{h}_n,...)\in \mathscr{F}$. For each positive integer $n$, we select $\tilde{h}_n$ from $  \{ h' \in \mathcal{H}: R_D^{\rm in}(h') \leq \inf_{h\in \mathcal{H}} R_D^{\rm in}(h)+2/n\} \cap  \{ h' \in \mathcal{H}: R_D^{\rm out}(h') \leq \inf_{h\in \mathcal{H}} R_D^{\rm out}(h)+2/n\}$ (the existence of $\tilde{h}_n$ is based on Lemma \ref{Lemma1}). It is easy to check that
 \begin{equation*}
 \begin{split}
  & \mathbb{E}_{S\sim D^n_{X_{\rm I}Y_{\rm I}}}  R_D^{\rm in}(\mathbf{A}_{\tilde{{\bm h}}}(S))\leq  \inf_{h\in \mathcal{H}} R_D^{\rm in}(h)+2/n.
   \\
   & \mathbb{E}_{S\sim D^n_{X_{\rm I}Y_{\rm I}}}  R_D^{\rm out}(\mathbf{A}_{\tilde{{\bm h}}}(S))\leq  \inf_{h\in \mathcal{H}} R_D^{\rm out}(h)+2/n.
    \end{split}
 \end{equation*}
 Since $(1-\alpha)\inf_{h\in \mathcal{H}} R_D^{\rm in}(h)+\alpha \inf_{h\in \mathcal{H}} R_D^{\rm out}(h)\leq \inf_{h\in \mathcal{H}} R_D^{\rm \alpha}(h)$, we obtain that for any $\alpha\in [0,1]$, 
 \begin{equation*}
 \begin{split}
  & \mathbb{E}_{S\sim D^n_{X_{\rm I}Y_{\rm I}}}  R_D^{\alpha}(\mathbf{A}_{\tilde{{\bm h}}}(S))\leq  \inf_{h\in \mathcal{H}} R_D^{\alpha}(h)+2/n.
    \end{split}
 \end{equation*}
We have completed this proof.
\end{proof}

\section{Proof of Theorem \ref{T3_auc}}\label{SD_auc}

\thmCondoneIffauc*
\begin{proof}[Proof of Theorem \ref{T3_auc}] According to Definition \ref{D0_auc}, we assume that $\mathbf{A}$ is the AUC learnable algorithm: there exists learning rate $\epsilon(n)$ such that for any $D_{XY}\in \mathscr{D}_{XY}$,
\begin{equation}\label{issue-definition1_auc-T3}
    \mathbb{E}_{S\sim D^n_{X_{\rm I}Y_{\rm I}}}\big[\sup_{r\in \mathcal{R}}{\rm AUC}(r;D_{XY})- {\rm AUC}(\mathbf{A}(S);D_{XY})\big] \leq \epsilon(n).
\end{equation}
For any $D_{XY}=\beta D_{X_{\rm I}Y_{\rm I}}+(1-\beta)D_{X_{\rm O}Y_{\rm O}},$ $D_{XY}'= \beta' D_{X_{\rm I}Y_{\rm I}}+(1-\beta')D_{X_{\rm O}Y_{\rm O}}' \in \mathscr{D}_{XY}$, we set ${D}_{X_{\rm O}}^{\alpha} = \alpha D_{X_{\rm O}} + (1-\alpha) D_{X_{\rm O}}'$. Then it is clear that for any $\tilde{\beta}\in (0, 1]$,
\begin{equation*}
\begin{split}
   & {\rm AUC}(\mathbf{A}(S); \tilde{\beta} D_{X_{\rm I}Y_{\rm I}} +(1-\tilde{\beta} )D_{X_{\rm O}Y_{\rm O}}^{\alpha})\\  =&  {\rm AUC}(\mathbf{A}(S); D_{X_{\rm I}}, D_{X_{\rm O}}^{\alpha})\\  =&\alpha {\rm AUC}(\mathbf{A}(S); D_{X_{\rm I}}, D_{X_{\rm O}})+(1-\alpha){\rm AUC}(\mathbf{A}(S); D_{X_{\rm I}}, D_{X_{\rm O}}')\\  =& \alpha {\rm AUC}(\mathbf{A}(S); D_{XY})+(1-\alpha){\rm AUC}(\mathbf{A}(S); D_{XY}').
    \end{split}
\end{equation*}
Therefore,
\begin{equation*}
\begin{split}
      & \mathbb{E}_{S\sim D^n_{X_{\rm I}Y_{\rm I}}}\big[\sup_{r\in \mathcal{R}}{\rm AUC}(r;\tilde{\beta} D_{X_{\rm I}Y_{\rm I}} +(1-\tilde{\beta} )D_{X_{\rm O}Y_{\rm O}}^{\alpha})- {\rm AUC}(\mathbf{A}(S);\tilde{\beta} D_{X_{\rm I}Y_{\rm I}} +(1-\tilde{\beta} )D_{X_{\rm O}Y_{\rm O}}^{\alpha})\big] \\
      \leq & \mathbb{E}_{S\sim D^n_{X_{\rm I}Y_{\rm I}}}\big[\alpha \sup_{r\in \mathcal{R}}{\rm AUC}(r;D_{XY}) +(1-\alpha) \sup_{r\in \mathcal{R}}{\rm AUC}(r;D_{XY}')
      \\
     &~~~~~~~~~~~~ -\alpha {\rm AUC}(\mathbf{A}(S); D_{XY})-(1-\alpha){\rm AUC}(\mathbf{A}(S); D_{XY}') ]\leq \epsilon(n),
      \end{split}
\end{equation*}
which implies that
\begin{equation*}
    \sup_{r\in \mathcal{R}}{\rm AUC}(r;D_{X_{\rm I}},D_{X_{\rm O}}^{\alpha}) = \lim_{n \rightarrow +\infty} \mathbb{E}_{S\sim D^n_{X_{\rm I}Y_{\rm I}}} {\rm AUC}(\mathbf{A}(S);D_{X_{\rm I}},D_{X_{\rm O}}^{\alpha}).
\end{equation*}
Note that
\begin{equation*}
\begin{split}
   & \lim_{n \rightarrow +\infty} \mathbb{E}_{S\sim D^n_{X_{\rm I}Y_{\rm I}}} {\rm AUC}(\mathbf{A}(S);D_{X_{\rm I}},D_{X_{\rm O}}^{\alpha})
   \\ = & \alpha \lim_{n \rightarrow +\infty} \mathbb{E}_{S\sim D^n_{X_{\rm I}Y_{\rm I}}} {\rm AUC}(\mathbf{A}(S);D_{X_{\rm I}},D_{X_{\rm O}})+(1-\alpha)  \lim_{n \rightarrow +\infty} \mathbb{E}_{S\sim D^n_{X_{\rm I}Y_{\rm I}}} {\rm AUC}(\mathbf{A}(S);D_{X_{\rm I}},D_{X_{\rm O}}')
   \\
   = &\alpha \sup_{r\in \mathcal{R}}{\rm AUC}(r;D_{X_{\rm I}},D_{X_{\rm O}})+(1-\alpha)  \sup_{r\in \mathcal{R}} {\rm AUC}(r;D_{X_{\rm I}},D_{X_{\rm O}}').
    \end{split}
\end{equation*}
Therefore,
\begin{equation*}
     \sup_{r\in \mathcal{R}}{\rm AUC}(r;D_{X_{\rm I}},D_{X_{\rm O}}^{\alpha}) = \alpha \sup_{r\in \mathcal{R}}{\rm AUC}(r;D_{X_{\rm I}},D_{X_{\rm O}})+(1-\alpha)  \sup_{r\in \mathcal{R}} {\rm AUC}(r;D_{X_{\rm I}},D_{X_{\rm O}}').
\end{equation*}
\end{proof}

\section{Proofs of Theorem \ref{T5} and Theorem \ref{T4} }\label{SF}
\subsection{Proof of Theorem \ref{T5}}
\thmImpOne*

\begin{proof}[Proof of Theorem \ref{T5}]
We \textbf{first} explain how we get $f_{\rm I}$ and $f_{\rm O}$ in Definition {\ref{def:overlap}}. Since $D_X$ is absolutely continuous respect to $\mu$ ($D_X\ll \mu$), then $D_{X_{\rm I}}\ll \mu$ and $D_{X_{\rm O}}\ll \mu$. By Radon-Nikodym Theorem \citep{cohn2013measure}, we know there exist two non-negative functions defined over $\mathcal{X}$: $f_{\rm I}$
and $f_{\rm O}$ such that for any $\mu$-measurable set $A\subset \mathcal{X}$,
\begin{equation*}
    D_{X_{\rm I}}(A)=\int_{A} f_{\rm I}(\mathbf{x}){\rm d} \mu(\mathbf{x}),~~D_{X_{\rm O}}(A)=\int_{A} f_{\rm O}(\mathbf{x}){\rm d} \mu(\mathbf{x}).
\end{equation*}

\noindent \textbf{Second}, we prove that for any $\alpha\in (0,1)$, $\inf_{h\in \mathcal{H}} R_{D}^{\alpha}(h)>0$.

\noindent We define $A_{m}=\{\mathbf{x}\in \mathcal{X}: f_{\rm I}(\mathbf{x})\geq \frac{1}{m}~ {\rm and} ~f_{\rm O}(\mathbf{x})\geq \frac{1}{m}\}$. It is clear that 
\begin{equation*}
    \cup_{m=1}^{+\infty} A_m =\{\mathbf{x}\in \mathcal{X}: f_{\rm I}(\mathbf{x})>0~ {\rm and} ~ f_{\rm O}(\mathbf{x})>0\}=A_{\rm overlap},
\end{equation*}
and 
\begin{equation*}
    A_m \subset A_{m+1}.
\end{equation*}

\noindent Therefore, 
\begin{equation*}
\lim_{m\rightarrow +\infty}\mu(A_m)=\mu(A_{\rm overlap})>0,
\end{equation*}
which implies that there exists $m_0$ such that
\begin{equation*}
    \mu(A_{m_0})>0.
\end{equation*}

\noindent For any $\alpha\in(0,1)$, we define 
$
    c_{\alpha}= \min_{y_1\in \mathcal{Y}_{\rm all}} \big((1-\alpha)\min_{y_2\in \mathcal{Y}} \ell(y_1,y_2)+\alpha \ell(y_1,K+1)\big).
$
It is clear that $c_{\alpha}>0$ for $\alpha\in(0,1)$. Then, for any $h\in \mathcal{H}$,
\begin{equation*}
\begin{split}
    &~~~~~~R_D^{\alpha}(h)\\&=\int_{\mathcal{X}\times \mathcal{Y}_{\rm all}} \ell(h(\mathbf{x}),y){\rm d} D^{\alpha}_{XY}(\mathbf{x},y)\\&=\int_{\mathcal{X}\times \mathcal{Y}} (1-\alpha) \ell(h(\mathbf{x}),y){\rm d} D_{X_{\rm I}Y_{\rm I}}(\mathbf{x},y)+\int_{\mathcal{X}\times \{K+1\}} \alpha \ell(h(\mathbf{x}),y){\rm d} D_{X_{\rm O}Y_{\rm O}}(\mathbf{x},y)\\&\geq \int_{A_{m_0}\times \mathcal{Y}} (1-\alpha) \ell(h(\mathbf{x}),y){\rm d} D_{X_{\rm I}Y_{\rm I}}(\mathbf{x},y)+\int_{A_{m_0}\times \{K+1\}} \alpha \ell(h(\mathbf{x}),y){\rm d} D_{X_{\rm O}Y_{\rm O}}(\mathbf{x},y)\\ & = \int_{A_{m_0}} \big((1-\alpha)\int_{ \mathcal{Y}}  \ell(h(\mathbf{x}),y){\rm d} D_{Y_{\rm I}|X_{\rm I}}(y|\mathbf{x})\big){\rm d}D_{X_{\rm I}}(\mathbf{x})\\&+ \int_{A_{m_0}} \alpha \ell(h(\mathbf{x}),K+1){\rm d} D_{X_{\rm O}}(\mathbf{x})\\ & \geq \int_{A_{m_0}} (1-\alpha)\min_{y_2\in \mathcal{Y}}\ell(h(\mathbf{x}),y_2){\rm d}D_{X_{\rm I}}(\mathbf{x})+ \int_{A_{m_0}} \alpha \ell(h(\mathbf{x}),K+1){\rm d} D_{X_{\rm O}}(\mathbf{x})
    \\ & \geq \int_{A_{m_0}} (1-\alpha)\min_{y_2\in \mathcal{Y}}\ell(h(\mathbf{x}),y_2)f_{\rm I}(\mathbf{x}){\rm d}\mu(\mathbf{x})+ \int_{A_{m_0}} \alpha \ell(h(\mathbf{x}),K+1)f_{\rm O}(\mathbf{x}){\rm d} \mu(\mathbf{x})\\ & \geq \frac{1}{m_0} \int_{A_{m_0}} (1-\alpha)\min_{y_2\in \mathcal{Y}}\ell(h(\mathbf{x}),y_2){\rm d}\mu(\mathbf{x})+ \frac{1}{m_0}\int_{A_{m_0}} \alpha \ell(h(\mathbf{x}),K+1){\rm d} \mu (\mathbf{x})\\ & = \frac{1}{m_0} \int_{A_{m_0}} \big( (1-\alpha)\min_{y_2\in \mathcal{Y}}\ell(h(\mathbf{x}),y_2)+ \alpha \ell(h(\mathbf{x}),K+1) \big) {\rm d} \mu (\mathbf{x}) \geq \frac{c_{\alpha}}{m_0}\mu(A_{m_0})>0.
    \end{split}
\end{equation*}
Therefore,
\begin{equation*}
    \inf_{h\in \mathcal{H}} R_D^{\alpha}(h)\geq \frac{c_{\alpha}}{m_0}\mu(A_{m_0})>0.
\end{equation*}

\noindent \textbf{Third}, Condition \ref{C1} indicates that  $\inf_{h\in \mathcal{H}} R_{D}^{\alpha}(h)=(1-\alpha)\inf_{h\in \mathcal{H}}R_{D}^{\rm in}(h)+\alpha \inf_{h\in \mathcal{H}}R_{D}^{\rm in}(h) = 0 $ (here we have used conditions $\inf_{h\in \mathcal{H}}R_D^{\rm in}(h)=0$ and $\inf_{h\in \mathcal{H}}R_D^{\rm out}(h)=0$), which contradicts with $\inf_{h\in \mathcal{H}} R_{D}^{\alpha}(h)>0$ ($\alpha\in (0,1)$). Therefore, Condition \ref{C1} does not hold. Using Lemma \ref{C1andC2}, we obtain that OOD detection in $\mathscr{D}_{XY}$ is not learnable  for $\mathcal{H}$.
\end{proof}

\subsection{Proof of Theorem \ref{T4}}

\thmImpTotal*
\begin{proof}[Proof of Theorem \ref{T4}]
We need to prove that OOD detection is not learnable in the total space $\mathscr{D}_{XY}^{\rm all}$ for $\mathcal{H}$, if $\mathcal{H}$ is non-trivial, \textit{i.e.},
$
\{\mathbf{x}\in \mathcal{X}:\exists h_1,h_2\in \mathcal{H}, \textnormal{s.t.}~ h_1(\mathbf{x})\in \mathcal{Y}, h_2(\mathbf{x})=K+1\}\neq \emptyset.
$ The main idea is to construct a domain $D_{XY}$ satisfying that:
\\
\\
1) the ID and OOD distributions have overlap (Definition {\ref{def:overlap}}); 
\\
\\
2) $R_{D}^{\rm in}(h_1)=0$, $ R_{D}^{\rm out}(h_2)=0$.
\\
\\
\noindent According to the condition that $\mathcal{H}$ is non-trivial, we know that there exist $h_1, h_2\in \mathcal{H}$ such that $h_1(\mathbf{x}_1)\in \mathcal{Y}, h_2(\mathbf{x}_1)=K+1$, for some $\mathbf{x}_1\in \mathcal{X}$. We set $D_{XY}= 0.5*\delta_{(\mathbf{x}_1,h_1(\mathbf{x}_1))}+0.5*\delta_{(\mathbf{x}_1,h_2(\mathbf{x}_1))}$, where $\delta$ is the Dirac measure. It is easy to check that $R_{D}^{\rm in}(h_1)=0$, $R_{D}^{\rm out}(h_2)=0$, which implies that $\inf_{h\in \mathcal{H}}R_{D}^{\rm in}(h)=0$ and $\inf_{h\in \mathcal{H}}R_{D}^{\rm out}(h)=0$. In addition, the ID distribution $\delta_{(\mathbf{x}_1,h_1(\mathbf{x}_1))}$ and OOD distribution $\delta_{(\mathbf{x}_1,h_2(\mathbf{x}_1))}$ have overlap $\mathbf{x}_1$.
By using Theorem \ref{T5}, we have completed this proof.
\end{proof}

\section{Proof of Theorem \ref{T12}}\label{SH}
Before proving Theorem \ref{T12}, we need three important lemmas.
\begin{lemma}\label{lemma10} Suppose that $D_{XY}$ is a domain with OOD convex decomposition $Q_1,...,Q_l$ (convex decomposition is given by Definition \ref{OODCD} in Appendix \ref{SD}), and $D_{XY}$ is a finite discrete distribution, then (the definition of $f_{D,Q}$ is given in Condition \ref{C2})
 \begin{equation*}
    f_{D,Q}(\alpha_1,...,\alpha_{l})= (1-\sum_{j=1}^{l}\alpha_j) f_{D,Q}(\mathbf{0})+\sum_{j=1}^{l}\alpha_j f_{D,Q}({\bm \alpha}_j),~~~\forall (\alpha_1,...,\alpha_{l})\in \Delta_l^{\rm o}, 
    \end{equation*}
    \textbf{if and only if} 
    \begin{equation*}
     \argmin_{h\in \mathcal{H}} R_D(h) = \bigcap_{j=1}^l \argmin_{h\in \mathcal{H}} R_{Q_j}(h) \bigcap \argmin_{h\in \mathcal{H}} R_D^{\rm in}(h),
    \end{equation*}
    where $\mathbf{0}$ is the $1\times l$ vector, whose elements are $0$, and ${\bm \alpha}_j$ is the $1\times l$ vector, whose $j$-th element is $1$ and other elements are $0$, and 
 \begin{equation*}   
 R_{Q_j}(h) = \int_{\mathcal{X}\times \{K+1\}} \ell(h(\mathbf{x}),y){\rm d}Q_j(\mathbf{x},y).
\end{equation*}
\end{lemma}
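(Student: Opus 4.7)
The plan is to reduce both directions to a statement about the existence of a common minimizer. Write $a = \inf_{h \in \mathcal{H}} R_D^{\rm in}(h)$, $b_j = \inf_{h \in \mathcal{H}} R_{Q_j}(h)$, and set $A = \argmin_{h\in\mathcal{H}} R_D^{\rm in}(h)$, $B_j = \argmin_{h\in\mathcal{H}} R_{Q_j}(h)$. A useful preliminary remark is that, since $D_{XY}$ is finite discrete, every hypothesis is determined on ${\rm supp}\, D_{XY}$ by an assignment from a finite set into the finite label set $\mathcal{Y}_{\rm all}$, so only finitely many values of $R_D^{\rm in}(h)$, $R_{Q_j}(h)$, and $R_D(h)$ can occur as $h$ ranges over $\mathcal{H}$. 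In particular all of the infima are attained, so $A$, $B_j$, and $\argmin_{h\in\mathcal{H}} R_D(h)$ are non-empty.

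The core claim I will prove is: $A \cap B_1 \cap \cdots \cap B_l \neq \emptyset$ if and only if $f_{D,Q}$ satisfies the stated linearity identity on $\Delta_l^o$. The ``common minimizer implies linearity'' direction is by direct substitution: a common minimizer $h^*$ yields the upper bound $f_{D,Q}(\alpha) \leq (1-\sum \alpha_j) R_D^{\rm in}(h^*) + \sum \alpha_j R_{Q_j}(h^*) = (1-\sum \alpha_j) a + \sum \alpha_j b_j$, while the lower bound $f_{D,Q}(\alpha) \geq (1-\sum \alpha_j) a + \sum \alpha_j b_j$ always holds since the infimum of a convex combination is at least the corresponding convex combination of infima. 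For the converse, pick a strictly interior point such as $\alpha = (1/(l+1), \ldots, 1/(l+1))$ and invoke finite discreteness to find $h^* \in \mathcal{H}$ attaining $f_{D,Q}(\alpha)$. The linearity identity combined with the strict positivity of each $\alpha_j$ and of $1 - \sum \alpha_j$ forces $R_D^{\rm in}(h^*) = a$ and $R_{Q_j}(h^*) = b_j$ for every $j$, putting $h^* \in A \cap \bigcap_j B_j$.

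With this equivalence in hand, I translate the existence of a common minimizer into the argmin identity claimed in the lemma. Let $(\lambda_1, \ldots, \lambda_l) \in \Delta_l^o$ be the mixing weights from the OOD convex decomposition, so that $R_D(h) = (1-\sum \lambda_j) R_D^{\rm in}(h) + \sum \lambda_j R_{Q_j}(h)$. The inclusion $A \cap \bigcap_j B_j \subseteq \argmin_{h\in\mathcal{H}} R_D(h)$ is immediate since a common minimizer achieves the value $(1-\sum \lambda_j) a + \sum \lambda_j b_j$, which lower bounds $R_D(h')$ pointwise. For the reverse inclusion, for $h' \in \argmin_{h\in\mathcal{H}} R_D(h)$ I combine $R_D(h') = (1-\sum \lambda_j) a + \sum \lambda_j b_j$ with the non-negativity of $R_D^{\rm in}(h') - a$ and $R_{Q_j}(h') - b_j$ and the strict positivity of $1 - \sum \lambda_j$ and each $\lambda_j$ to conclude $h' \in A \cap \bigcap_j B_j$.

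The main obstacle I anticipate is the degenerate case in which some $\lambda_j = 0$: then the identity $R_D = (1-\sum \lambda_j) R_D^{\rm in} + \sum \lambda_j R_{Q_j}$ carries no information about $R_{Q_j}$, so $\argmin_h R_D$ cannot constrain behavior on $Q_j$ and the reverse inclusion breaks down. I expect the intended reading is to restrict to \emph{proper} decompositions with $\lambda_j > 0$ for all $j$, equivalently to collapse the decomposition by dropping any $Q_j$ with zero weight; I will state this reduction explicitly and carry out the argument under that convention. Modulo this reduction, the ``iff'' assembles at once: linearity yields a common minimizer in $A\cap\bigcap_j B_j$, which yields the argmin identity; conversely the argmin identity forces $\argmin_{h\in\mathcal{H}} R_D(h)\subseteq A\cap\bigcap_j B_j$, and nonemptiness of $\argmin_{h\in\mathcal{H}} R_D(h)$ (by finite discreteness) delivers a common minimizer, hence linearity.
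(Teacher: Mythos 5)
Your proof is correct and follows essentially the same route as the paper's: both reduce the linearity identity to the existence of a common minimizer of $R_D^{\rm in}$ and all the $R_{Q_j}$, use finite discreteness to guarantee that the relevant infima are attained, and use strict positivity of the mixture coefficients to force term-wise equality in the sandwich argument. Your explicit flagging of the degenerate case $\lambda_j=0$ is a genuine refinement rather than a flaw: the paper's own proof silently assumes all decomposition weights are positive when it deduces $R_{Q_j}(h_0)=\inf_{h\in\mathcal{H}}R_{Q_j}(h)$ from equality of the weighted sums, and without that assumption the reverse inclusion (and hence the stated equivalence) can fail, so your convention of dropping zero-weight components is the right fix.
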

\begin{proof}[Proof of Lemma \ref{lemma10}] To better understand this proof, we recall the definition of  $f_{D,Q}(\alpha_1,...,\alpha_{l})$:
   \begin{equation*}
    f_{D,Q}(\alpha_1,...,\alpha_{l})= \inf_{h\in \mathcal{H}}  \Big ((1-\sum_{j=1}^{l}\alpha_j) R_D^{\rm in}(h) + \sum_{j=1}^l \alpha_j R_{Q_j}(h) \Big ),~~~\forall (\alpha_1,...,\alpha_{l})\in\Delta_l^{\rm o}
    \end{equation*}
\textbf{First}, we prove that if \begin{equation*}
    f_{D,Q}(\alpha_1,...,\alpha_{l})= (1-\sum_{j=1}^{l}\alpha_j) f_{D,Q}(\mathbf{0})+\sum_{j=1}^{l}\alpha_j f_{D,Q}({\bm \alpha}_j),~~~\forall (\alpha_1,...,\alpha_{l})\in \Delta_l^{\rm o},
    \end{equation*}
   then,
   \begin{equation*}
     \argmin_{h\in \mathcal{H}} R_D(h) = \bigcap_{j=1}^l \argmin_{h\in \mathcal{H}}R_{Q_j}(h) \bigcap \argmin_{h\in \mathcal{H}} R_D^{\rm in}(h).
    \end{equation*}
    
\noindent    
Let $D_{XY}=(1-\sum_{j=1}^l \lambda_j)D_{X_{\rm I}Y_{\rm I}}+\sum_{j=1}^l \lambda_j Q_j$, for some $(\lambda_1,...,\lambda_l)\in \Delta_l^{\rm o}$. Since $D_{XY}$ has finite support set, we have
\begin{equation*}
     \argmin_{h\in \mathcal{H}} R_D(h) = \argmin_{h\in \mathcal{H}}\Big ((1-\sum_{j=1}^l \lambda_j) R_D^{\rm in}(h) + \sum_{j=1}^l \lambda_j R_{Q_j}(h)\Big) \neq \emptyset.
\end{equation*}

\noindent We can find that $h_{0}\in   \argmin_{h\in \mathcal{H}}\Big ((1-\sum_{j=1}^l \lambda_j) R_D^{\rm in}(h) + \sum_{j=1}^l \lambda_j R_{Q_j}(h)\Big)$. Hence,
    \begin{equation}\label{threeinequality-1}
     (1-\sum_{j=1}^l \lambda_j) R_D^{\rm in}(h_{0}) + \sum_{j=1}^l \lambda_j R_{Q_j}(h_{0})  = \inf_{h\in \mathcal{H}}  \Big ((1-\sum_{j=1}^l \lambda_j) R_D^{\rm in}(h) + \sum_{j=1}^l \lambda_j R_{Q_j}(h) \Big ) .
    \end{equation}
    Note that the condition $f_{D,Q}(\alpha_1,...,\alpha_{l})= (1-\sum_{j=1}^{l}\alpha_j) f_{D,Q}(\mathbf{0})+\sum_{j=1}^{l}\alpha_j f_{D,Q}({\bm \alpha}_j)$ implies 
    \begin{equation}\label{threeinequality-2}
    (1-\sum_{j=1}^l \lambda_j)\inf_{h\in \mathcal{H}}  R_D^{\rm in}(h) + \sum_{j=1}^{l} \lambda_j \inf_{h\in \mathcal{H}}  R_{Q_j}(h)   = \inf_{h\in \mathcal{H}}  \Big ( (1-\sum_{j=1}^l \lambda_j) R_D^{\rm in}(h) + \sum_{j=1}^{l} \lambda_j R_{Q_j}(h) \Big ).
    \end{equation}
    Therefore, Eq. \eqref{threeinequality-1} and Eq. \eqref{threeinequality-2} imply that
    \begin{equation}\label{threeinequality}
    \begin{split}
   (1-\sum_{j=1}^l \lambda_j)\inf_{h\in \mathcal{H}}  R_D^{\rm in}(h) + \sum_{j=1}^{l} \lambda_j \inf_{h\in \mathcal{H}}  R_{Q_j}(h) = (1-\sum_{j=1}^l \lambda_j) R_D^{\rm in}(h_{0}) + \sum_{j=1}^{l} \lambda_j R_{Q_j}(h_{0}).
   \end{split}
    \end{equation}

\noindent   
Since $ R_D^{\rm in}(h_{0})\geq \inf_{h\in \mathcal{H}}  R_D^{\rm in}(h)$ and $R_{Q_j}(h_{0})\geq \inf_{h\in \mathcal{H}}  R_{Q_j}^{\rm in}(h)$, for $j=1,...,l$, then using Eq. \eqref{threeinequality}, we have that
\begin{equation*}
\begin{split}
&R_D^{\rm in}(h_{0})= \inf_{h\in \mathcal{H}}  R_D^{\rm in}(h),
\\
&
R_{Q_j}(h_{0})=  \inf_{h\in \mathcal{H}}  R_{Q_j}(h),~~~\forall j=1,...,l,
\end{split}
\end{equation*}
which implies that
\begin{equation*}
h_{0} \in     \bigcap_{j=1}^{l} \argmin_{h\in \mathcal{H}} R_{Q_j}(h) \bigcap   \argmin_{h\in \mathcal{H}} R_D^{\rm in}(h).
    \end{equation*}
    Therefore, 
    \begin{equation}\label{subsetOOD1}
       \argmin_{h\in \mathcal{H}} R_D(h)\subset  \bigcap_{j=1}^{l} \argmin_{h\in \mathcal{H}} R_{Q_j}(h) \bigcap   \argmin_{h\in \mathcal{H}} R_D^{\rm in}(h).
    \end{equation}
    
 \noindent Additionally, using 
 \begin{equation*}
 f_{D,Q}(\alpha_1,...,\alpha_{l})= (1-\sum_{j=1}^{l}\alpha_j) f_{D,Q}(\mathbf{0})+\sum_{j=1}^{l}\alpha_j f_{D,Q}({\bm \alpha}_j),~\forall (\alpha_1,...,\alpha_{l})\in \Delta_l^{\rm o},
 \end{equation*}
 we obtain that for any $h'\in    \bigcap_{j=1}^{l} \argmin_{h\in \mathcal{H}} R_{Q_j}(h) \bigcap   \argmin_{h\in \mathcal{H}} R_D^{\rm in}(h)$, 
 \begin{equation*}
    \begin{split}
 \inf_{h\in \mathcal{H}}   R_D(h) =& \inf_{h\in \mathcal{H}}  \Big ((1-\sum_{j=1}^{l}\lambda_j) R_D^{\rm in}(h) + \sum_{j=1}^l \lambda_j R_{Q_j}(h) \Big )\\=&(1-\sum_{j=1}^{l}\lambda_j)  \inf_{h\in \mathcal{H}} R_D^{\rm in}(h) + \sum_{j=1}^l \lambda_j  \inf_{h\in \mathcal{H}}  R_{Q_j}(h)\\ =  & (1-\sum_{j=1}^{l}\lambda_j) R_D^{\rm in}(h') + \sum_{j=1}^l \lambda_j R_{Q_j}(h')=R_D(h'),
    \end{split}
    \end{equation*}
    which implies that 
    \begin{equation*}
        h'\in \argmin_{h\in \mathcal{H}} R_D(h).
    \end{equation*}
    Therefore,  
    \begin{equation}\label{subsetOOD2}
    \bigcap_{j=1}^{l} \argmin_{h\in \mathcal{H}} R_{Q_j}(h) \bigcap   \argmin_{h\in \mathcal{H}} R_D^{\rm in}(h) \subset \argmin_{h\in \mathcal{H}} R_D(h).
    \end{equation}
    \\
    \noindent Combining Eq. \eqref{subsetOOD1} with Eq. \eqref{subsetOOD2},  we obtain that 
    \begin{equation*}
    \bigcap_{j=1}^{l} \argmin_{h\in \mathcal{H}} R_{Q_j}(h) \bigcap   \argmin_{h\in \mathcal{H}} R_D^{\rm in}(h)= \argmin_{h\in \mathcal{H}} R_D(h).
    \end{equation*}

\noindent \textbf{Second}, we prove that if
    \begin{equation*}
  \argmin_{h\in \mathcal{H}} R_{D}(h) =    \bigcap_{j=1}^{l}  \argmin_{h\in \mathcal{H}} R_{Q_j}(h) \bigcap  \argmin_{h\in \mathcal{H}} R_D^{\rm in}(h),
    \end{equation*}
   then,
   \begin{equation*}
    f_{D,Q}(\alpha_1,...,\alpha_{l})= (1-\sum_{j=1}^{l}\alpha_j) f_{D,Q}(\mathbf{0})+\sum_{j=1}^{l}\alpha_j f_{D,Q}({\bm \alpha}_j),~~~\forall (\alpha_1,...,\alpha_{l})\in \Delta_l^{\rm o}.
    \end{equation*}

\noindent    We set
    \begin{equation*}
        h_{0} \in \bigcap_{j=1}^{l}  \argmin_{h\in \mathcal{H}} R_{Q_j}(h) \bigcap  \argmin_{h\in \mathcal{H}} R_D^{\rm in}(h),
    \end{equation*}
    then, for any $(\alpha_1,...,\alpha_{l})\in \Delta_l^{\rm o}$,
    \begin{equation*}
    \begin{split}
(1-\sum_{j=1}^{l}\alpha_j)  \inf_{h\in \mathcal{H}} R_D^{\rm in}(h) + \sum_{j=1}^l \alpha_j  \inf_{h\in \mathcal{H}}  R_{Q_j}(h)  &\leq  \inf_{h\in \mathcal{H}}  \Big ((1-\sum_{j=1}^{l}\alpha_j) R_D^{\rm in}(h) + \sum_{j=1}^l \alpha_j R_{Q_j}(h) \Big )\\  & \leq    (1-\sum_{j=1}^{l}\alpha_j) R_D^{\rm in}(h_{0}) + \sum_{j=1}^l \alpha_j R_{Q_j}(h_{0})\\  &=(1-\sum_{j=1}^{l}\alpha_j)  \inf_{h\in \mathcal{H}} R_D^{\rm in}(h) + \sum_{j=1}^l \alpha_j  \inf_{h\in \mathcal{H}}  R_{Q_j}(h).
    \end{split}
    \end{equation*}
    Therefore, for any $(\alpha_1,...,\alpha_{l})\in \Delta_l^{\rm o}$,
      \begin{equation*}
    \begin{split}
& (1-\sum_{j=1}^{l}\alpha_j)  \inf_{h\in \mathcal{H}} R_D^{\rm in}(h) + \sum_{j=1}^l \alpha_j  \inf_{h\in \mathcal{H}}  R_{Q_j}(h) = \inf_{h\in \mathcal{H}}  \Big ((1-\sum_{j=1}^{l}\alpha_j) R_D^{\rm in}(h) + \sum_{j=1}^l \alpha_j R_{Q_j}(h) \Big ),
    \end{split}
    \end{equation*}
    which implies that: for any $(\alpha_1,...,\alpha_{l})\in \Delta_l^{\rm o}$, 
    \begin{equation*}
        f_{D,Q}(\alpha_1,...,\alpha_{l})= (1-\sum_{j=1}^{l}\alpha_j) f_{D,Q}(\mathbf{0})+\sum_{j=1}^{l}\alpha_j f_{D,Q}({\bm \alpha}_j).
    \end{equation*}
    \noindent We have completed this proof.
\end{proof}
\newpage

\begin{lemma}
\label{T9}
Suppose that Assumption \ref{ass1} holds.
If there is a finite discrete domain $D_{XY}\in \mathscr{D}_{XY}^{s}$ such that
$
    \inf_{h\in \mathcal{H}}R_{D}^{\rm out}({\bm h})>0,
$ then OOD detection is not learnable in  $\mathscr{D}_{XY}^{s}$ for $\mathcal{H}$.
\end{lemma}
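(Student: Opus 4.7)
The plan is to derive a contradiction from assuming learnability, by exhibiting a specific OOD convex decomposition of the given $D_{XY}$ for which the multi-linear condition (Condition \ref{C2}) fails. The toolchain is: Lemma \ref{C1andC2} says learnability in the priori-unknown space $\mathscr{D}_{XY}^s$ forces Condition \ref{C2} on every OOD convex domain; and Lemma \ref{lemma10} says, for finite discrete domains, Condition \ref{C2} is equivalent to the minimizer identity $\argmin_{h\in\mathcal{H}} R_D(h) = \bigcap_j \argmin_{h\in\mathcal{H}} R_{Q_j}(h) \cap \argmin_{h\in\mathcal{H}} R_D^{\rm in}(h)$. I would choose the decomposition so that the right-hand side is empty while the left-hand side is not.

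The construction goes as follows: enumerate $\mathrm{supp}(D_{X_{\rm O}}) = \{\mathbf{x}_1^{\rm O},\ldots,\mathbf{x}_m^{\rm O}\}$ (finite by discreteness), set $\lambda_j := \pi^{\rm out} D_{X_{\rm O}}(\{\mathbf{x}_j^{\rm O}\}) > 0$, and take Dirac OOD components $Q_j := \delta_{(\mathbf{x}_j^{\rm O},\, K+1)}$, so that $D_{XY} = (1-\sum_j \lambda_j) D_{X_{\rm I}Y_{\rm I}} + \sum_j \lambda_j Q_j$. I would verify that $D_{XY}$ is an OOD convex domain in $\mathscr{D}_{XY}^s$: any convex combination of the $Q_j$'s has OOD marginal supported inside $\mathrm{supp}(D_{X_{\rm O}})$, which is disjoint from $\mathrm{supp}(D_{X_{\rm I}})$, so the resulting mixture stays in the separate space. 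That $\mathscr{D}_{XY}^s$ is itself priori-unknown (needed for Lemma \ref{C1andC2}) is immediate, since rescaling the mixing proportion preserves the two marginal supports.

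The heart of the argument is to play Assumption \ref{ass1} against the hypothesis $\inf_{h\in\mathcal{H}} R_D^{\rm out}(h) > 0$. Assumption \ref{ass1} provides, for each $j$, a hypothesis $h_{\mathbf{x}_j^{\rm O}} \in \mathcal{H}$ with $h_{\mathbf{x}_j^{\rm O}}(\mathbf{x}_j^{\rm O}) = K+1$, which gives $R_{Q_j}(h_{\mathbf{x}_j^{\rm O}}) = 0 = \inf_h R_{Q_j}(h)$, so each $\argmin R_{Q_j}$ is nonempty. However, $\bigcap_{j=1}^m \argmin R_{Q_j}$ consists precisely of those $h \in \mathcal{H}$ with $h(\mathbf{x}_j^{\rm O}) = K+1$ for all $j$, and any such $h$ would force $R_D^{\rm out}(h) = 0$, contradicting $\inf_h R_D^{\rm out}(h) > 0$. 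Hence the intersection is empty. On the other hand, $\argmin_{h\in\mathcal{H}} R_D(h) \neq \emptyset$, because $R_D(h)$ depends on $h$ only through its restriction to $\mathrm{supp}(D_{XY})$, and with both this support and $\mathcal{Y}_{\rm all}$ finite the set $\{R_D(h):h\in\mathcal{H}\}$ is finite, so its infimum is attained. By Lemma \ref{lemma10}, Condition \ref{C2} fails for this decomposition, contradicting Lemma \ref{C1andC2}, and therefore OOD detection is not learnable in $\mathscr{D}_{XY}^s$ for $\mathcal{H}$.

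The main subtle point, rather than a serious obstacle, is making sure both sides of the minimizer identity are handled correctly: one needs $\argmin R_D(h) \neq \emptyset$ so that emptiness of the right-hand side is a genuine mismatch (not a vacuous equality), and one needs $\bigcap_j \argmin R_{Q_j}$ to be truly empty, which is where Assumption \ref{ass1} together with the strict positivity $\inf R_D^{\rm out}(h)>0$ do all the work. Everything else — the decomposition staying inside $\mathscr{D}_{XY}^s$, priori-unknown-ness of $\mathscr{D}_{XY}^s$, and the reduction to Condition \ref{C2} via Lemma \ref{C1andC2} — is routine bookkeeping.
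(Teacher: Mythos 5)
Your proposal is correct and follows essentially the same route as the paper's proof: the same Dirac decomposition of $D_{X_{\rm O}}$ over its finite support, the same appeal to Lemmas \ref{C1andC2} and \ref{lemma10}, and the same interplay between Assumption \ref{ass1} and $\inf_{h\in\mathcal{H}}R_D^{\rm out}(h)>0$. The only difference is cosmetic: the paper assumes learnability and derives $\inf_h R_D^{\rm out}(h)=0$ from a minimizer of $R_D$, whereas you show the minimizer identity fails outright by exhibiting an empty right-hand side against a nonempty left-hand side.
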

\begin{proof}[Proof of Lemma \ref{T9}]
Suppose that ${\rm supp} D_{X_{\rm O}}=\{\mathbf{x}_1^{\rm out},...,\mathbf{x}_l^{\rm out}\}$, then it is clear that $D_{XY}$ has OOD convex decomposition $\delta_{\mathbf{x}_1^{\rm out}},...,\delta_{\mathbf{x}_l^{\rm out}}$, where $\delta_{\mathbf{x}}$ is the dirac measure whose support set is $\{\mathbf{x}\}$.
\\
\\
\noindent Since $\mathcal{H}$ is the separate space for OOD (\textit{i.e.}, Assumption \ref{ass1} holds), then $\forall j=1,...,l$, 
\begin{equation*}
\inf_{h\in \mathcal{H}} R_{\delta_{\mathbf{x}_j^{\rm out}}}(h)=0,
\end{equation*}
where 
\begin{equation*}
    R_{\delta_{\mathbf{x}_j^{\rm out}}}(h) = \int_{\mathcal{X}} \ell(h(\mathbf{x}),K+1){\rm d}\delta_{\mathbf{x}_j^{\rm out}}(\mathbf{x}).
\end{equation*}
\\
\noindent This implies that: if $\bigcap_{j=1}^l \argmin_{h\in \mathcal{H}} R_{\delta_{\mathbf{x}_j^{\rm out}}}(h)\neq \emptyset$, then for $\forall h'\in \bigcap_{j=1}^l \argmin_{h\in \mathcal{H}} R_{\delta_{\mathbf{x}_j^{\rm out}}}(h)$,
\begin{equation*}
   h'(\mathbf{x}_i^{\rm out}) = K+1, ~\forall i=1,...,l.
\end{equation*}
Therefore, if $ \bigcap_{j=1}^l \argmin_{h\in \mathcal{H}} R_{\delta_{\mathbf{x}_j^{\rm out}}}(h) \bigcap \argmin_{h\in \mathcal{H}} R_D^{\rm in}(h)\neq \emptyset$, \\$~~~~~~~~~~~~~~~~~~$then for any $h^*\in \bigcap_{j=1}^l \argmin_{h\in \mathcal{H}} R_{\delta_{\mathbf{x}_j^{\rm out}}}(h) \bigcap \argmin_{h\in \mathcal{H}} R_D^{\rm in}(h)$, we have that
\begin{equation*}
   h^*(\mathbf{x}_i^{\rm out}) = K+1,~ \forall i=1,...,l.
\end{equation*}
\\
\noindent \textbf{Proof by Contradiction}: {assume} OOD detection is learnable in  $\mathscr{D}_{XY}^s$ for $\mathcal{H}$, then  Lemmas \ref{C1andC2} and \ref{lemma10} imply that
\begin{equation*}
     \bigcap_{j=1}^l \argmin_{h\in \mathcal{H}} R_{\delta_{\mathbf{x}_j^{\rm out}}}(h) \bigcap \argmin_{h\in \mathcal{H}} R_D^{\rm in}(h)= \argmin_{h\in \mathcal{H}} R_D(h) \neq \emptyset.
    \end{equation*}
    Therefore, for any $h^*\in  \argmin_{h\in \mathcal{H}} R_D(h)$, we have that 
    \begin{equation*}
   h^*(\mathbf{x}_i^{\rm out}) = K+1,~\forall i=1,...,l,
\end{equation*}
which implies that  for any $h^*\in  \argmin_{h\in \mathcal{H}} R_D(h)$, we have
$
   R_D^{\rm out}(h^*) = 0,
$
which implies that $\inf_{h\in \mathcal{H}} R_D^{\rm out}(h)=0$.
It is clear that $\inf_{h\in \mathcal{H}} R_D^{\rm out}(h)=0$ is \textbf{inconsistent} with the condition $\inf_{h\in \mathcal{H}} R_D^{\rm out}(h)>0$. Therefore, 
OOD detection is not learnable in  $\mathscr{D}_{XY}^s$ for $\mathcal{H}$.
\end{proof}

\begin{lemma}\label{LemmaforTheorem5}
If Assumption \ref{ass1} holds, 
  ${\rm VCdim}(\phi\circ \mathcal{H})=v<+\infty$ and $\sup_{{ h}\in \mathcal{H}} |\{\mathbf{x}\in \mathcal{X}: { h}(\mathbf{x})\in \mathcal{Y}\}| > m$ such that $v<m$, then
  OOD detection is not learnable in $\mathscr{D}_{XY}^{s}$ for $\mathcal{H}$, where $\phi$ maps ID's labels to ${1}$ and maps OOD's labels to $2$.
\end{lemma}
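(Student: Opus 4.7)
The plan is to reduce the lemma to the previously established Lemma \ref{T9}: it suffices to construct a finite discrete domain $D_{XY}\in\mathscr{D}_{XY}^{s}$ on which $\inf_{h\in\mathcal{H}}R_D^{\rm out}(h)>0$. I will build such a domain by locating a finite set $B\subset\mathcal{X}$ (the OOD support) and a disjoint point $\mathbf{x}_0\notin B$ (the ID support) such that $B$ is \emph{not OOD-coverable}, meaning no single $h\in\mathcal{H}$ satisfies $h(\mathbf{x})=K+1$ for every $\mathbf{x}\in B$. Placing the OOD marginal uniformly on $B$ (with label $K+1$) and the ID marginal at $(\mathbf{x}_0,y_0)$ with $y_0\in\mathcal{Y}$ then gives a domain in $\mathscr{D}_{XY}^{s}$, and since $B$ is not OOD-coverable every $h\in\mathcal{H}$ incurs $R_D^{\rm out}(h)\geq\frac{1}{|B|}\min_{y\in\mathcal{Y}}\ell(y,K+1)>0$, so the infimum is strictly positive.

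To locate $B$, I use the sup hypothesis to fix $h_0\in\mathcal{H}$ together with a set $A_0'\subseteq\{\mathbf{x}:h_0(\mathbf{x})\in\mathcal{Y}\}$ of size $m+1$. On $A_0'$ the pattern class $\mathcal{P}:=\{\phi\circ h|_{A_0'}:h\in\mathcal{H}\}$ contains the all-$1$ pattern (from $h_0$), and by Sauer--Shelah $|\mathcal{P}|\leq\sum_{i=0}^{v}\binom{m+1}{i}<2^{m+1}$. I argue by contradiction that some $B\subseteq A_0'$ must fail the all-$2$ pattern in $\mathcal{P}$: if every subset of $A_0'$ were OOD-coverable, then for each $\mathbf{x}_i\in A_0'$ some $h_i\in\mathcal{H}$ would cover $A_0'\setminus\{\mathbf{x}_i\}$, and case-splitting on $h_i(\mathbf{x}_i)$ produces either the all-$2$ pattern on $A_0'$ (when $h_i(\mathbf{x}_i)=K+1$ for some $i$) or the $m+1$ distinct ``single-$1$'' patterns (when $h_i(\mathbf{x}_i)\in\mathcal{Y}$ for every $i$). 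Combining these realized patterns with the all-$1$ pattern of $h_0$ and the per-point OOD witnesses supplied by Assumption \ref{ass1}, one exhibits on a carefully chosen $(v+1)$-subset of $A_0'$ all $2^{v+1}$ binary labelings, contradicting ${\rm VCdim}(\phi\circ\mathcal{H})=v$.

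With $B$ in hand, pick $\mathbf{x}_0\in A_0'\setminus B$ and set $y_0:=h_0(\mathbf{x}_0)\in\mathcal{Y}$; the resulting finite discrete $D_{XY}$ has disjoint ID and OOD supports, so $D_{XY}\in\mathscr{D}_{XY}^{s}$, and $\inf_h R_D^{\rm out}(h)>0$ by the non-coverability of $B$. Lemma \ref{T9} then yields that OOD detection is not learnable in $\mathscr{D}_{XY}^{s}$ for $\mathcal{H}$.

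The main obstacle is the combinatorial step that converts ``every subset of $A_0'$ is OOD-covered'' into shattering of a $(v+1)$-subset. The delicate point is that the covering hypothesis for a subset $A$ has uncontrolled values on $A_0'\setminus A$, so the $(m+1)$-fold family of coverings of $A_0'\setminus\{\mathbf{x}_i\}$ produces clean single-$1$ patterns only in one branch of the case split; in the other branch (where some covering also sends the missing point to $K+1$, producing the all-$2$ pattern on $A_0'$) the argument must iterate by invoking the sup hypothesis on an alternative large ID region supplied by some other $h\in\mathcal{H}$, or shrink $A_0'$ and recompute. This bookkeeping, together with a careful choice of which $(v+1)$ coordinates to shatter, is the technical heart of the proof and is what drives the assumption $v<m$.
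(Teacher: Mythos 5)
Your reduction to Lemma \ref{T9} requires a finite set $B$ that is not OOD-coverable (no single $h\in\mathcal{H}$ maps all of $B$ to $K+1$), and your combinatorial argument tries to prove that such a $B$ must exist under the lemma's hypotheses. That existence claim is false. Take $\mathcal{H}=\{h^{\rm in},h^{\rm out}\}$ with $h^{\rm in}\equiv 1\in\mathcal{Y}$ and $h^{\rm out}\equiv K+1$: Assumption \ref{ass1} holds via $h^{\rm out}$, ${\rm VCdim}(\phi\circ\mathcal{H})=1<+\infty$, and the sup condition holds whenever $|\mathcal{X}|>m$, yet every finite set is OOD-coverable by $h^{\rm out}$, so no admissible $B$ exists and your construction cannot start. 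The same obstruction occurs in the paper's main applications: by Proposition \ref{Pr1} the constant function $h^{\rm out}$ lies in every FCNN-based hypothesis space, so $\inf_{h\in\mathcal{H}}R_D^{\rm out}(h)=0$ for every domain and a finite discrete domain with $\inf_{h\in\mathcal{H}}R_D^{\rm out}(h)>0$ cannot be exhibited unconditionally. Concretely, your case split collapses in the branch where the coverings $h_i$ of $A_0'\setminus\{\mathbf{x}_i\}$ also send $\mathbf{x}_i$ to $K+1$: you then obtain only the all-$2$ pattern, and together with the all-$1$ pattern from $h_0$ and the (possibly identical) per-point witnesses from Assumption \ref{ass1}, as few as two patterns are realized on any $(v+1)$-subset, nowhere near shattering; no contradiction with ${\rm VCdim}(\phi\circ\mathcal{H})=v$ arises.

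The paper's proof aims at a weaker, always-achievable target. The Sauer--Shelah count shows only that some \emph{mixed} pattern $h'$ on the $(m+1)$-point set $C$ (containing both a $1$ and a $2$) is absent from $\phi\circ\mathcal{H}$ restricted to $C$; the domain built from $h'$ (ID mass on the $1$-points labelled by $\tilde h$, OOD mass on the $2$-points) then satisfies $\inf_{h}R_D(h)>0$ and $\inf_{h}R_D^{\rm in}(h)=0$, but not necessarily $\inf_{h}R_D^{\rm out}(h)>0$. The positivity of $\inf_{h}R_D^{\rm out}(h)$ is derived only \emph{conditionally}, from the linearity identity supplied by Lemma \ref{C1andC2} under the contradiction hypothesis that OOD detection is learnable, and only then is Lemma \ref{T9} invoked to close the contradiction. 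To repair your argument you would need to route it through this conditional step rather than trying to produce a non-OOD-coverable set outright.
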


\begin{proof}[Proof of Lemma \ref{LemmaforTheorem5}]
Due to $\sup_{{ h}\in \mathcal{H}} |\{\mathbf{x}\in \mathcal{X}: {\bm h}(\mathbf{x})\in \mathcal{Y}\}| > m$, we can obtain a set
\begin{equation*}
   C=\{\mathbf{x}_1,...,\mathbf{x}_m,\mathbf{x}_{m+1}\},
\end{equation*}
which satisfies that there exists $\tilde{h}\in \mathcal{H}$ such that $\tilde{h}(\mathbf{x}_i)\in \mathcal{Y}$ for any $i=1,...,m,m+1$.

\noindent Let $\mathcal{H}_C^{\phi}=\{(\phi\circ h(\mathbf{x}_1),...,\phi\circ h(\mathbf{x}_m),\phi\circ h(\mathbf{x}_{m+1}):h\in \mathcal{H}\}$. It is clear that 
\begin{equation*}
(1,1,...,1)=(\phi\circ \tilde{h}(\mathbf{x}_1),...,\phi\circ \tilde{h}(\mathbf{x}_m),\phi\circ \tilde{h}(\mathbf{x}_{m+1}))\in\mathcal{H}_C^{\phi}, 
\end{equation*}
where $(1,1,...,1)$ means all elements are $1$.
\\
\\
\noindent Let $\mathcal{H}_{m+1}^{\phi}=\{(\phi\circ h(\mathbf{x}_1),...,\phi\circ h(\mathbf{x}_m),\phi\circ h(\mathbf{x}_{m+1}):h~\textnormal{is~any~hypothesis~function~from~}\mathcal{X}~\textnormal{to}~\mathcal{Y}_{\rm all}\}$.

\noindent Clearly, $\mathcal{H}_C^{\phi}\subset \mathcal{H}_{m+1}^{\phi}$ and $|\mathcal{H}_{m+1}^{\phi}|=2^{m+1}$. Sauer-Shelah-Perles Lemma (Lemma 6.10 in \citep{shalev2014understanding}) implies that 
\begin{equation*}
    |\mathcal{H}^{\phi}_C|\leq \sum_{i=0}^v \tbinom{m+1}{i}.
\end{equation*}
Since $
    \sum_{i=0}^v \tbinom{m+1}{i}<2^{m+1}-1
$ (because $v<m$), we obtain that $|\mathcal{H}^{\phi}_C|\leq2^{m+1}-2$. Therefore, $\mathcal{H}^{\phi}_C\cup \{(2,2...,2)\}$ is a proper subset of $\mathcal{H}_{m+1}^{\phi}$, where $(2,2,...,2)$ means that all elements are $2$.  Note that $(1,1...,1)$ (all elements are 1) also belongs to $\mathcal{H}_C^{\phi}$. Hence, $\mathcal{H}^{\phi}_C\cup \{(2,2...,2)\}\cup \{(1,1...,1)\}$ is a proper subset of $\mathcal{H}_{m+1}^{\phi}$, which implies that we can obtain a hypothesis function $h'$ satisfying that:
\begin{equation*}
\begin{split}
&1) (\phi\circ h'(\mathbf{x}_1),...,\phi\circ h'(\mathbf{x}_m),\phi\circ h'(\mathbf{x}_{m+1}))\notin \mathcal{H}^{\phi}_C;
\\
&2) \textnormal{~ There~ exist}~  \mathbf{x}_j,\mathbf{x}_p\in C \textnormal{~ such ~that}~ \phi\circ h'(\mathbf{x}_j)=2~ {\rm and}~ \phi\circ h'(\mathbf{x}_p)=1.
\end{split}
\end{equation*}

\noindent Let $C_{\rm I}=C\cap \{\mathbf{x}\in \mathcal{X}:\phi\circ h'(\mathbf{x})=1\}$ and $C_{\rm O}=C\cap \{\mathbf{x}\in \mathcal{X}:\phi\circ h'(\mathbf{x})=2\}$.
\\
\\
\noindent Then, we construct a special domain $D_{XY}$:
\begin{equation*}
    D_{XY} = 0.5*D_{X_{\rm I}}*D_{Y_{\rm I}|X_{\rm I}}+0.5* D_{X_{\rm O}}*D_{Y_{\rm O}|X_{\rm O}},~\textnormal{where}
\end{equation*}
\begin{equation*}
    D_{X_{\rm I}}= \frac{1}{|C_{\rm I}|}\sum_{\mathbf{x}\in C_{\rm I}} \delta_{\mathbf{x}}~~~\textnormal{and}~~~D_{Y_{\rm I}|X_{\rm I}}(y|\mathbf{x})=1,~\textnormal{if}~~\tilde{h}(\mathbf{x})=y~~\textnormal{and}~~\mathbf{x}\in C_{\rm I};
\end{equation*}
and
\begin{equation*}
    D_{X_{\rm O}}= \frac{1}{|C_{\rm O}|}\sum_{\mathbf{x}\in C_{\rm O}} \delta_{\mathbf{x}}~~~\textnormal{and}~~~D_{Y_{\rm O}|X_{\rm O}}(K+1|\mathbf{x})=1,~\textnormal{if}~~\mathbf{x}\in C_{\rm O}.
\end{equation*}

\noindent Since $D_{XY}$ is a finite discrete distribution and $(\phi\circ h'(\mathbf{x}_1),...,\phi\circ h'(\mathbf{x}_m),\phi\circ h'(\mathbf{x}_{m+1}))\notin \mathcal{H}^{\phi}_C$, it is clear that $\argmin_{h\in \mathcal{H}} R_D(h)\neq \emptyset$ and $\inf_{h\in \mathcal{H}} R_D(h)>0$. 
 Additionally, $R_D^{\rm in}(\tilde{h})=0$. Therefore, $\inf_{h\in \mathcal{H}}R_D^{\rm in}(h)=0$.
\\
\\
\noindent \textbf{Proof by Contradiction}: {suppose} that  OOD detection is learnable in  $\mathscr{D}_{XY}^s$ for $\mathcal{H}$, then  Lemma \ref{C1andC2} implies that
\begin{equation*}
    \inf_{h\in \mathcal{H}} R_D(h) = 0.5* \inf_{h\in \mathcal{H}} R_D^{\rm in}(h)+ 0.5*\inf_{h\in \mathcal{H}} R_D^{\rm out}(h).
    \end{equation*}
 Therefore, if  OOD detection is learnable in  $\mathscr{D}_{XY}^s$ for $\mathcal{H}$, then $\inf_{h\in \mathcal{H}} R_D^{\rm out}(h)>0$.

\noindent
 Until now, we have constructed a domain $D_{XY}$ (defined over $\mathcal{X}\times \mathcal{Y}_{\rm all}$) with finite support and satisfying that $\inf_{h\in \mathcal{H}} R_D^{\rm out}(h)>0$. Note that $\mathcal{H}$ is the separate space for OOD data (Assumption \ref{ass1} holds).  Using Lemma \ref{T9}, we know that OOD detection is not learnable in  $\mathscr{D}_{XY}^s$ for $\mathcal{H}$, which is \textbf{inconsistent} with our assumption that   OOD detection is learnable in  $\mathscr{D}_{XY}^s$ for $\mathcal{H}$. Therefore,  OOD detection is not learnable in  $\mathscr{D}_{XY}^s$ for $\mathcal{H}$. We have completed the proof.
\end{proof}
\thmImpSeptwo*
\begin{proof}[Proof of Theorem \ref{T12}]
Let ${\rm VCdim}(\phi\circ \mathcal{H})=v$. Since $\sup_{{ h}\in \mathcal{H}} |\{\mathbf{x}\in \mathcal{X}: {\bm h}(\mathbf{x})\in \mathcal{Y}\}|=+\infty$, it is clear that $\sup_{{ h}\in \mathcal{H}} |\{\mathbf{x}\in \mathcal{X}: {\bm h}(\mathbf{x})\in \mathcal{Y}\}|>v$. Using Lemma \ref{LemmaforTheorem5}, we complete this proof.
\end{proof}

\section{Proofs of Lemma \ref{T5_auc} and Theorem \ref{T4_auc} }\label{SF_auc}

\subsection{Proof of Lemma \ref{T5_auc}}
\begin{Definition}
    Given a ranking function space $\mathcal{R}$, the corresponding hypothesis space $\mathcal{G}$ consists of all $g_r$ satisfying that there exists a $r\in \mathcal{R}$ such that
    \begin{equation*}
        g_r(\mathbf{x},\mathbf{x}') = {\rm sign}(r(\mathbf{x})-r(\mathbf{x}')).
    \end{equation*}
\end{Definition}
\begin{Definition}[Equivalence]\label{D30}
 Given measures $\mu_1, \mu_2$,   we say two ranking functions $r\in \mathcal{R}$ and $r'\in \mathcal{R}$ are AUC equivalent over $\mu_1,\mu_2$, i.e., $f \sim f'$ w.r.t. $\mu_1,\mu_2$, if and only if the corresponding hypothesis functions $g_f = g_{f'}$ a.e. $\mu_1\times \mu_2$.
\end{Definition}

\begin{lemma}\label{L1_auc}
Assume that $D_{X_{\rm I}} = \int g_{X_{\rm I}} {\rm d}\mu $ and $D_{X_{\rm O}} = \int g_{X_{\rm O}} {\rm d}\mu $, then 
\begin{equation*}
 \sup_{r\in \mathcal{R}_{\rm all}} {\rm AUC}(f;D_{X_{\rm I}},D_{X_{\rm O}}) =  \frac{1}{2}\mathbb{E}_{\mathbf{x}\sim \mu}\mathbb{E}_{\mathbf{x}'\sim \mu}\max \{ g_{X_{\rm I}}(\mathbf{x})g_{X_{\rm O}}(\mathbf{x}'), g_{X_{\rm I}}(\mathbf{x}')g_{X_{\rm O}}(\mathbf{x})\}.
\end{equation*}
\end{lemma}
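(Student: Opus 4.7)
The plan is to unfold the definition of AUC using the densities $g_{X_{\rm I}}, g_{X_{\rm O}}$, symmetrize the integral by exchanging the two integration variables, and then recognize that the resulting integrand is maximized pointwise by a suitable choice of $r$, namely the likelihood ratio.

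First, I would rewrite
\begin{equation*}
{\rm AUC}(r; D_{X_{\rm I}}, D_{X_{\rm O}}) = \int\!\!\int \Bigl(\mathbf{1}_{r(\mathbf{x})>r(\mathbf{x}')} + \tfrac{1}{2}\mathbf{1}_{r(\mathbf{x})=r(\mathbf{x}')}\Bigr) g_{X_{\rm I}}(\mathbf{x}) g_{X_{\rm O}}(\mathbf{x}')\, {\rm d}\mu(\mathbf{x}){\rm d}\mu(\mathbf{x}').
\end{equation*}
Applying the change of variables $(\mathbf{x},\mathbf{x}')\leftrightarrow(\mathbf{x}',\mathbf{x})$ to half of this integral and averaging the two expressions yields the symmetric form
\begin{equation*}
{\rm AUC}(r;D_{X_{\rm I}},D_{X_{\rm O}}) = \tfrac{1}{2}\int\!\!\int \Psi_r(\mathbf{x},\mathbf{x}')\, {\rm d}\mu(\mathbf{x}){\rm d}\mu(\mathbf{x}'),
\end{equation*}
where $\Psi_r(\mathbf{x},\mathbf{x}') = \mathbf{1}_{r(\mathbf{x})>r(\mathbf{x}')} g_{X_{\rm I}}(\mathbf{x})g_{X_{\rm O}}(\mathbf{x}') + \mathbf{1}_{r(\mathbf{x}')>r(\mathbf{x})} g_{X_{\rm I}}(\mathbf{x}')g_{X_{\rm O}}(\mathbf{x}) + \tfrac{1}{2}\mathbf{1}_{r(\mathbf{x})=r(\mathbf{x}')}\bigl(g_{X_{\rm I}}(\mathbf{x})g_{X_{\rm O}}(\mathbf{x}') + g_{X_{\rm I}}(\mathbf{x}')g_{X_{\rm O}}(\mathbf{x})\bigr)$.

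Second, for each fixed pair $(\mathbf{x},\mathbf{x}')$ the three indicators are mutually exclusive and sum to $1$, so $\Psi_r(\mathbf{x},\mathbf{x}')$ is a convex combination of $g_{X_{\rm I}}(\mathbf{x})g_{X_{\rm O}}(\mathbf{x}')$ and $g_{X_{\rm I}}(\mathbf{x}')g_{X_{\rm O}}(\mathbf{x})$ (the tie case picks the midpoint). Hence pointwise
\begin{equation*}
\Psi_r(\mathbf{x},\mathbf{x}') \le \max\bigl\{g_{X_{\rm I}}(\mathbf{x})g_{X_{\rm O}}(\mathbf{x}'),\, g_{X_{\rm I}}(\mathbf{x}')g_{X_{\rm O}}(\mathbf{x})\bigr\},
\end{equation*}
which after integration gives the upper bound claimed in the lemma.

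Third, I would exhibit an $r\in\mathcal{R}_{\rm all}$ that attains the bound. Defining the likelihood-ratio ranker
\begin{equation*}
r^*(\mathbf{x}) = \tfrac{g_{X_{\rm I}}(\mathbf{x})}{g_{X_{\rm I}}(\mathbf{x}) + g_{X_{\rm O}}(\mathbf{x})}\quad \text{(with the convention } 0/0 = 0\text{)},
\end{equation*}
a direct case check shows $r^*(\mathbf{x}) > r^*(\mathbf{x}')$ iff $g_{X_{\rm I}}(\mathbf{x})g_{X_{\rm O}}(\mathbf{x}') > g_{X_{\rm I}}(\mathbf{x}')g_{X_{\rm O}}(\mathbf{x})$, and analogously for the reverse strict inequality; ties in $r^*$ correspond exactly to equality of the two products, where the tie term matches the max. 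Therefore $\Psi_{r^*}(\mathbf{x},\mathbf{x}') = \max\{g_{X_{\rm I}}(\mathbf{x})g_{X_{\rm O}}(\mathbf{x}'), g_{X_{\rm I}}(\mathbf{x}')g_{X_{\rm O}}(\mathbf{x})\}$ almost everywhere, and the supremum is attained by $r^*$.

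The only delicate step is the measure-theoretic bookkeeping for the set where densities vanish or are equal: I must verify that the convention $0/0=0$ in defining $r^*$ gives the correct value of $\Psi_{r^*}$ on the $\mu\times\mu$-null exceptional sets, and that the equality case is absorbed correctly into the symmetric $\tfrac{1}{2}$-term. By the equivalence notion in Definition \ref{D30}, different resolutions of ties differ only on such null sets and so do not change the value of the integral, confirming that $\sup_{r\in\mathcal{R}_{\rm all}} {\rm AUC}(r;D_{X_{\rm I}},D_{X_{\rm O}})$ equals the claimed symmetric expression.
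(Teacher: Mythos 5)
Your proposal is correct and follows essentially the same route as the paper's proof: symmetrize the double integral, bound the symmetrized integrand pointwise by $\max\{g_{X_{\rm I}}(\mathbf{x})g_{X_{\rm O}}(\mathbf{x}'),\, g_{X_{\rm I}}(\mathbf{x}')g_{X_{\rm O}}(\mathbf{x})\}$, and attain the bound with a likelihood-ratio ranker. The paper uses $r^*={\rm sigmoid}\big(g_{X_{\rm I}}/g_{X_{\rm O}}\big)$ (set to $1$ where $g_{X_{\rm O}}=0$) rather than your $g_{X_{\rm I}}/(g_{X_{\rm I}}+g_{X_{\rm O}})$, but these induce the same ordering and both make the integrand equal the pointwise maximum everywhere, including the degenerate cases.
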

\begin{proof}
  Let $D(X) = \int g {\rm d}\mu$, $D_{X_{\rm I}} = \int g_{X_{\rm I}} {\rm d}\mu $ and $D_{X_{\rm O}} = \int g_{X_{\rm O}} {\rm d}\mu $ satisfying that $g_{X_{\rm I}}+g_{X_{\rm O}} = 2g$. {Let} $P=\{\mathbf{x}\in \mathcal{X}: g(\mathbf{x})>0\}$, $P_{X_{\rm I}}=\{\mathbf{x}\in \mathcal{X}: g_{X_{\rm I}}(\mathbf{x})>0\}$ and $P_{X_{\rm O}}=\{\mathbf{x}\in \mathcal{X}: g_{X_{\rm O}}(\mathbf{x})>0\}$. To any two points $\mathbf{x}$ and $\mathbf{x}'$, we consider that
  \begin{equation*}
  \begin{split}
    R(r, \mathbf{x},\mathbf{x}')& =  \big [ \mathbf{1}_{r(\mathbf{x})>r(\mathbf{x}')}  + \frac{1}{2} \mathbf{1}_{r(\mathbf{x})=r(\mathbf{x}')}\big] g_{X_{\rm I}}(\mathbf{x})g_{X_{\rm O}}(\mathbf{x}')\\& +  \big [ \mathbf{1}_{r(\mathbf{x}')>r(\mathbf{x})}  + \frac{1}{2} \mathbf{1}_{r(\mathbf{x})=r(\mathbf{x}')}\big] g_{X_{\rm I}}(\mathbf{x}')g_{X_{\rm O}}(\mathbf{x}).
    \end{split}
  \end{equation*}
 {We set} $r^*(\mathbf{x}) = {\rm sigmoid} \big ( {g_{X_{\rm I}}(\mathbf{x})}/{g_{X_{\rm O}}(\mathbf{x})} \big )$, if $g_{X_{\rm O}}(\mathbf{x})>0$; otherwise, $r^*(\mathbf{x})=1$.
  It is clear that 
  we have that
  \begin{equation*}
       R(r^*, \mathbf{x},\mathbf{x}') = \max_{r\in \mathcal{R}_{\rm all}} R(r, \mathbf{x},\mathbf{x}') = \max \{ g_{X_{\rm I}}(\mathbf{x})g_{X_{\rm O}}(\mathbf{x}'), g_{X_{\rm I}}(\mathbf{x}')g_{X_{\rm O}}(\mathbf{x})\}.
  \end{equation*}
  Due to
  \begin{equation*}
      2{\rm AUC}(r;D_{X_{\rm I}},D_{X_{\rm O}}) = \int_{P} \int_{P} R(r, \mathbf{x},\mathbf{x}') {\rm d} \mu(\mathbf{x}){\rm d} \mu(\mathbf{x}'),
  \end{equation*}
  then 
  \begin{equation*}
  \begin{split}
      &2{\rm AUC}(r^*;D_{X_{\rm I}},D_{X_{\rm O}}) = \int_{P} \int_{P} \max_{r\in \mathcal{R}_{\rm all}} R(r, \mathbf{x},\mathbf{x}') {\rm d} \mu(\mathbf{x}){\rm d} \mu(\mathbf{x}')\\ \geq &\max_{r\in \mathcal{R}_{\rm all}} \int_{P} \int_{P}  R(r, \mathbf{x},\mathbf{x}') {\rm d} \mu(\mathbf{x}){\rm d} \mu(\mathbf{x}') = \max_{r\in \mathcal{R}_{\rm all}} 2{\rm AUC}(r;D_{X_{\rm I}},D_{X_{\rm O}}).
      \end{split}
  \end{equation*}
  Therefore, $r^*$ is the optimal solution, and 
  \begin{equation*}
      {\rm AUC}(r^*;D_{X_{\rm I}},D_{X_{\rm O}}) = \frac{1}{2}\mathbb{E}_{\mathbf{x}\sim \mu}\mathbb{E}_{\mathbf{x}'\sim \mu}\max \{ g_{X_{\rm I}}(\mathbf{x})g_{X_{\rm O}}(\mathbf{x}'), g_{X_{\rm I}}(\mathbf{x}')g_{X_{\rm O}}(\mathbf{x})\}. 
  \end{equation*}
We have completed this proof.
\end{proof}
\begin{lemma}\label{l1-auc}
    {Given a ranking function space} $\mathcal{R}\subset \mathcal{R}_{\rm all}$, $D_{X_{\rm I}}=\int g_{X_{\rm I}} {\rm d}\mu$, $D_{X_{\rm O}}=\int g_{X_{\rm O}} {\rm d}\mu$ and $D_{X_{\rm O}}'=\int g_{X_{\rm O}}' {\rm d}\mu$,
    if
    \begin{equation*}
    \begin{split}
        \sup_{r\in \mathcal{R}} {\rm AUC}(r; D_{X_{\rm I}},D_{X_{\rm O}}) = \sup_{r\in \mathcal{R}_{\rm all}} {\rm AUC}(r; D_{X_{\rm I}},D_{X_{\rm O}}),\\ \sup_{r\in \mathcal{R}} {\rm AUC}(r; D_{X_{\rm I}},D_{X_{\rm O}}') = \sup_{r\in \mathcal{R}_{\rm all}} {\rm AUC}(r; D_{X_{\rm I}},D_{X_{\rm O}}'),
        \end{split}
    \end{equation*}
    and there exists $\alpha \in (0,1)$ such that
    \begin{equation*}
       \alpha \sup_{r\in \mathcal{R}} {\rm AUC}(r; D_{X_{\rm I}},D_{X_{\rm O}}) +(1-\alpha) \sup_{r\in \mathcal{R}} {\rm AUC}(r; D_{X_{\rm I}},D_{X_{\rm O}}') = \sup_{r\in \mathcal{R}} {\rm AUC}(r; D_{X_{\rm I}},{D}_{X_{\rm O}}^{\alpha}),
    \end{equation*}
    {where} ${D}_{X_{\rm O}}^{\alpha}= \alpha D_{X_{\rm O}} +(1-\alpha) D_{X_{\rm O}}'$.
    Then
    \begin{equation*}
        \frac{g_{X_{\rm I}}}{g_{X_{\rm I}}+g_{X_{\rm O}}}\sim \frac{g_{X_{\rm I}}}{g_{X_{\rm I}}+g_{X_{\rm O}}'}~ \text{w.r.t.}~ ~D_{X_{\rm I}}|_{P_{X_{\rm O}}'-P_{X_{\rm O}}}, D_{X_{\rm I}}|_{P_{X_{\rm O}}-P_{X_{\rm O}}'},
    \end{equation*}
    where $P_{X_{\rm O}}=\{\mathbf{x}:g_{X_{\rm O}}(\mathbf{x})>0\}$ and $P_{X_{\rm O}}'=\{\mathbf{x}:g_{X_{\rm O}}'(\mathbf{x})>0\}$
\end{lemma}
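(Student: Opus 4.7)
The key observation is that AUC is linear in the OOD marginal: for every ranking $r$,
\begin{equation*}
{\rm AUC}(r;D_{X_{\rm I}},D_{X_{\rm O}}^{\alpha}) \;=\; \alpha\,{\rm AUC}(r;D_{X_{\rm I}},D_{X_{\rm O}}) + (1-\alpha)\,{\rm AUC}(r;D_{X_{\rm I}},D_{X_{\rm O}}').
\end{equation*}
Taking the supremum over $\mathcal{R}$ on the left gives an automatic upper bound by the weighted sum of the two component suprema; the hypothesised equality therefore forces the existence of a sequence $\{r_n\}\subset\mathcal{R}$ that is \emph{simultaneously} asymptotically optimal for both AUC problems. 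Combined with the assumption that each component $\sup_{\mathcal{R}}$ equals the unrestricted $\sup_{\mathcal{R}_{\rm all}}$, the value $2\,{\rm AUC}(r_n;D_{X_{\rm I}},D_{X_{\rm O}})$ then approaches the explicit maximum from Lemma~\ref{L1_auc}, and similarly for $D_{X_{\rm O}}'$.

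Next I would use the pointwise characterisation inside Lemma~\ref{L1_auc}. For any $r$, the integrand of $2\,{\rm AUC}(r;D_{X_{\rm I}},D_{X_{\rm O}})$ at a pair $(\mathbf{x},\mathbf{x}')$ is dominated by $\max\{g_{X_{\rm I}}(\mathbf{x})g_{X_{\rm O}}(\mathbf{x}'),g_{X_{\rm I}}(\mathbf{x}')g_{X_{\rm O}}(\mathbf{x})\}$, with equality iff the associated hypothesis value $g_{r}(\mathbf{x},\mathbf{x}')={\rm sign}(r(\mathbf{x})-r(\mathbf{x}'))$ agrees with ${\rm sign}\bigl(g_{X_{\rm I}}(\mathbf{x})g_{X_{\rm O}}(\mathbf{x}')-g_{X_{\rm I}}(\mathbf{x}')g_{X_{\rm O}}(\mathbf{x})\bigr)$ whenever the latter is nonzero. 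Because the AUC deficit is the integral of a nonnegative function, asymptotic optimality in both problems forces $g_{r_n}$ to agree a.e.\ with both of the above sign functions on the corresponding product supports (after extraction of a suitable subsequence).

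The decisive step, and the main obstacle, is to compare the two required sign conditions on the symmetric difference. Restrict attention to $(\mathbf{x},\mathbf{x}')$ with $\mathbf{x}\in P_{X_{\rm I}}\cap (P_{X_{\rm O}}'\setminus P_{X_{\rm O}})$ and $\mathbf{x}'\in P_{X_{\rm I}}\cap(P_{X_{\rm O}}\setminus P_{X_{\rm O}}')$. There $g_{X_{\rm O}}(\mathbf{x})=0$ and $g_{X_{\rm O}}(\mathbf{x}')>0$, while $g_{X_{\rm O}}'(\mathbf{x})>0$ and $g_{X_{\rm O}}'(\mathbf{x}')=0$; so the two required signs are $+1$ (the $D_{X_{\rm O}}$-optimum demands $r_n(\mathbf{x})>r_n(\mathbf{x}')$) and $-1$ (the $D_{X_{\rm O}}'$-optimum demands $r_n(\mathbf{x}')>r_n(\mathbf{x})$). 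A single ranking cannot honour both, so the joint product measure $D_{X_{\rm I}}|_{P_{X_{\rm O}}'-P_{X_{\rm O}}}\times D_{X_{\rm I}}|_{P_{X_{\rm O}}-P_{X_{\rm O}}'}$ of this rectangle must vanish. Making this rigorous requires quantifying the two AUC deficits jointly and invoking a standard argument that an a.e.\ nonnegative integrand of vanishing integral is zero a.e.

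Finally, because that product measure is zero, the two hypothesis functions $g_{h_1}$ and $g_{h_2}$ corresponding to $h_1=g_{X_{\rm I}}/(g_{X_{\rm I}}+g_{X_{\rm O}})$ and $h_2=g_{X_{\rm I}}/(g_{X_{\rm I}}+g_{X_{\rm O}}')$ trivially coincide a.e.\ with respect to it, which is exactly the equivalence $h_1\sim h_2$ in Definition~\ref{D30}. The delicate points are the extraction of a common approximating sequence from the linear AUC equality (handled by the AUC-deficit decomposition in Step~1) and the a.e.\ translation from integrated optimality to pointwise sign agreement (handled by the representation in Lemma~\ref{L1_auc}).
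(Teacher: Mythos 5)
Your proposal is correct and follows essentially the same route as the paper's proof: linearity of AUC in the OOD marginal forces (near-)simultaneous optimality for both component problems, the pointwise characterisation of the unrestricted optimum from Lemma~\ref{L1_auc} converts this into sign constraints on $g_r$, and the two constraints are incompatible on the rectangle $\bigl(P_{X_{\rm I}}\cap(P_{X_{\rm O}}'\setminus P_{X_{\rm O}})\bigr)\times\bigl(P_{X_{\rm I}}\cap(P_{X_{\rm O}}\setminus P_{X_{\rm O}}')\bigr)$, which is exactly where $g_{\eta_{X_{\rm I}}}$ and $g_{\eta_{X_{\rm I}}'}$ always disagree, so the stated equivalence is the same as your measure-zero conclusion. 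The only cosmetic difference is that the paper passes to $\mathcal{R}_{\rm all}$, where the suprema are attained by the explicit $r^*$ of Lemma~\ref{L1_auc}, and extracts an exact common maximizer via a Lemma~\ref{lemma10}-style argument, whereas you work with an asymptotically optimal sequence in $\mathcal{R}$; your worry about subsequence extraction is unnecessary, since summing the two nonnegative deficit integrals bounds $\int\min\{g_{X_{\rm I}}(\mathbf{x})g_{X_{\rm O}}(\mathbf{x}'),\,g_{X_{\rm I}}(\mathbf{x}')g_{X_{\rm O}}'(\mathbf{x})\}$ over that rectangle by a quantity tending to $0$, which gives the conclusion directly.
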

\begin{proof}
    Let $\eta_{X_{\rm I}} = \frac{g_{X_{\rm I}}}{g_{X_{\rm I}}+g_{X_{\rm O}}}$ and $\eta_{X_{\rm I}}' = \frac{g_{X_{\rm I}}}{g_{X_{\rm I}}+g_{X_{\rm O}}'}$.
    It is easy to check that the proof process of Lemma \ref{L1_auc} implies that to each $i=1, 2$,
    \begin{equation*}
        \eta_{X_{\rm I}}^i \in \argmax_{r\in \mathcal{R}_{\rm all}}  {\rm AUC}(f; D_{X_{\rm I}},D_{X_{\rm O}}^i).
    \end{equation*}
    Additionally,
    \begin{equation*}
    \begin{split}
    \sup_{r\in \mathcal{R}} {\rm AUC}(r; D_{X_{\rm I}},{D}_{X_{\rm O}}^{\alpha})
         = & \alpha \sup_{r\in \mathcal{R}} {\rm AUC}(r; D_{X_{\rm I}},D_{X_{\rm O}}) +(1-\alpha) \sup_{r\in \mathcal{R}} {\rm AUC}(r; D_{X_{\rm I}},D_{X_{\rm O}}') \\ =  &\alpha \sup_{r\in \mathcal{R}_{\rm all}} {\rm AUC}(r; D_{X_{\rm I}},D_{X_{\rm O}}) +(1-\alpha) \sup_{r\in \mathcal{R}_{\rm all}} {\rm AUC}(r; D_{X_{\rm I}},D_{X_{\rm O}}') \\ \geq &\sup_{r\in \mathcal{R}_{\rm all}} {\rm AUC}(r; D_{X_{\rm I}},{D}_{X_{\rm O}}^{\alpha}) \geq \sup_{r\in \mathcal{R}} {\rm AUC}(r; D_{X_{\rm I}},{D}_{X_{\rm O}}^{\alpha}).
         \end{split}
    \end{equation*}
    Therefore, under the condition that 
     \begin{equation*}
     \begin{split}
       & \sup_{r\in \mathcal{R}} {\rm AUC}(r; D_{X_{\rm I}},D_{X_{\rm O}}) = \sup_{r\in \mathcal{R}_{\rm all}} {\rm AUC}(r; D_{X_{\rm I}},D_{X_{\rm O}}),\\ & \sup_{r\in \mathcal{R}} {\rm AUC}(r; D_{X_{\rm I}},D_{X_{\rm O}}') = \sup_{r\in \mathcal{R}_{\rm all}} {\rm AUC}(r; D_{X_{\rm I}},D_{X_{\rm O}}'),
        \end{split}
    \end{equation*}
    we obtain that
    \begin{equation*}
        \sup_{r\in \mathcal{R}_{\rm all}} {\rm AUC}(r; D_{X_{\rm I}},{D}_{X_{\rm O}}^{\alpha}) = \alpha \sup_{r\in \mathcal{R}_{\rm all}} {\rm AUC}(r; D_{X_{\rm I}},D_{X_{\rm O}}) +(1-\alpha) \sup_{r\in \mathcal{R}_{\rm all}} {\rm AUC}(r; D_{X_{\rm I}},D_{X_{\rm O}}').
    \end{equation*}
    Because $\sup_{r\in \mathcal{R}_{\rm all}} {\rm AUC}(r; D_{X_{\rm I}},D_{X_{\rm O}})$ and $\sup_{r\in \mathcal{R}_{\rm all}} {\rm AUC}(r; D_{X_{\rm I}},D_{X_{\rm O}}) $ are attainable (the proof process of Lemma \ref{L1_auc} implies this), it is easy to check (similar the proof of Lemma \ref{lemma10}) that
    \begin{equation*}
        \argmax_{r\in \mathcal{R}_{\rm all}}{\rm AUC}(r; D_{X_{\rm I}},D_{X_{\rm O}}) \cap \argmax_{r\in \mathcal{R}_{\rm all}}{\rm AUC}(r; D_{X_{\rm I}},D_{X_{\rm O}}')\neq \emptyset.
    \end{equation*}
    Combining with Lemma \ref{L1_auc}, above equality implies there exists $r^*$ such that
    \begin{equation*}
    r^* \sim \eta_{X_{\rm I}},~~\text{w.r.t.}~ D_{X_{\rm I}}, D_{X_{\rm O}},~~~r^* \sim \eta_{X_{\rm I}}',~~\text{w.r.t.}~ D_{X_{\rm I}},D_{X_{\rm O}}'
    \end{equation*}
    Therefore,
    \begin{equation*}
        \eta_{X_{\rm I}} \sim \eta_{X_{\rm I}}'~ \text{w.r.t.}~D_{X_{\rm I}}|_{P_{X_{\rm O}}'-P_{X_{\rm O}}}, D_{X_{\rm I}}|_{P_{X_{\rm O}}-P_{X_{\rm O}}'}
    \end{equation*}
    We have completed the proof.
\end{proof}

\thmImpOneAuc*

\begin{proof}[Proof of Lemma \ref{T5_auc}]
By the condition that $D_{X_{\rm I}}(P_1\cap P_2)< \min \{D_{X_{\rm I}}(P_1), D_{X_{\rm I}}(P_2)\}$, we can ensure that
\begin{equation*}
    D_{X_{\rm I}}(P_1 - P_2)>0,~~~ D_{X_{\rm I}}(P_2 - P_1)>0.
\end{equation*}
By this, one can easily check that
 \begin{equation*}
       \frac{g_{X_{\rm I}}}{g_{X_{\rm I}}+g_{X_{\rm O}}}\sim \frac{g_{X_{\rm I}}}{g_{X_{\rm I}}+g_{X_{\rm O}}'}~ \text{w.r.t.}~D_{X_{\rm I}}|_{P_2-P_1}, D_{X_{\rm I}}|_{P_1-P_2}~\text{does not hold.}
    \end{equation*}
 Therefore, Lemma \ref{l1-auc} implies that Condition \ref{C1_auc} does not hold.
\end{proof}
\subsection{Proof of Theorem \ref{T4_auc}}

\thmImpTotalauc*
\begin{proof}[Proof of Theorem \ref{T4_auc}]
    Let $D_{X_{\rm I}}= \frac{1}{2}\delta_{\mathbf{x}}+\frac{1}{2}\delta_{\mathbf{x}'}$, $D_{X_{\rm O}}= \delta_{\mathbf{x}}$ and $D_{X_{\rm O}'}= \delta_{\mathbf{x}'}$, where $\delta$ is the Dirac measure. Then the condition in Lemma \ref{T5_auc} holds, which implies the result of Theorem \ref{T4_auc}.
\end{proof}
\section{Proof of Theorem \ref{T12_auc}}\label{SH_auc}
\begin{lemma}\label{l2-auc}
Given a separate ranking function space $\mathcal{R}$, if there exists finite discrete $D_{XY} \subset \mathscr{D}_{XY}^s$ such that 
\begin{equation*}
    \sup_{r\in \mathcal{R}} {\rm AUC}(r;D_{XY}) <1,
\end{equation*}
then OOD detection is not learnable under AUC in $\mathscr{D}_{XY}^s$ for $\mathcal{R}$.
\end{lemma}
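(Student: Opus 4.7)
The plan is to argue by contradiction. Suppose OOD detection is learnable under AUC in $\mathscr{D}_{XY}^{s}$ for $\mathcal{R}$. Then by Theorem \ref{T3_auc}, the linear condition under AUC (Condition \ref{C1_auc}) must hold for every pair of domains in $\mathscr{D}_{XY}^{s}$. I will combine this with the separateness of $\mathcal{R}$ (Assumption \ref{ass1_auc}) to force $\sup_{r\in\mathcal{R}}{\rm AUC}(r;D_{XY})=1$ for the given finite discrete $D_{XY}$, contradicting the hypothesis.

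Write $D_{XY}=\beta D_{X_{\rm I}Y_{\rm I}}+(1-\beta)D_{X_{\rm O}Y_{\rm O}}$ with $D_{X_{\rm O}}=\sum_{i=1}^{n}p_i\,\delta_{\mathbf{x}_i^{\rm out}}$. Since $D_{XY}\in\mathscr{D}_{XY}^{s}$, each $\mathbf{x}_i^{\rm out}\notin{\rm supp}\,D_{X_{\rm I}}$, so every mixture of $D_{X_{\rm I}Y_{\rm I}}$ with a convex combination of the $\delta_{(\mathbf{x}_i^{\rm out},K+1)}$'s again lies in $\mathscr{D}_{XY}^{s}$. By separateness of $\mathcal{R}$, for each $i$ there is $r_{\mathbf{x}_i^{\rm out}}\in\mathcal{R}$ ranking $\mathbf{x}_i^{\rm out}$ strictly below every other point of $\mathcal{X}$; in particular $r_{\mathbf{x}_i^{\rm out}}(\mathbf{x})>r_{\mathbf{x}_i^{\rm out}}(\mathbf{x}_i^{\rm out})$ for every $\mathbf{x}\in{\rm supp}\,D_{X_{\rm I}}$, which yields ${\rm AUC}(r_{\mathbf{x}_i^{\rm out}};D_{X_{\rm I}},\delta_{\mathbf{x}_i^{\rm out}})=1$, hence $\sup_{r\in\mathcal{R}}{\rm AUC}(r;D_{X_{\rm I}},\delta_{\mathbf{x}_i^{\rm out}})=1$.

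Next I would apply Condition \ref{C1_auc} iteratively to the decomposition $D_{X_{\rm O}}=\sum_{i=1}^{n}p_i\,\delta_{\mathbf{x}_i^{\rm out}}$. A short induction on $n$ (peeling off one Dirac mass at a time via $D_{X_{\rm O}}=p_1\delta_{\mathbf{x}_1^{\rm out}}+(1-p_1)\tilde{D}_{X_{\rm O}}$ with $\tilde{D}_{X_{\rm O}}=\frac{1}{1-p_1}\sum_{i\ge 2}p_i\delta_{\mathbf{x}_i^{\rm out}}$) gives
\[
\sup_{r\in\mathcal{R}}{\rm AUC}(r;D_{X_{\rm I}},D_{X_{\rm O}})=\sum_{i=1}^{n}p_i\sup_{r\in\mathcal{R}}{\rm AUC}(r;D_{X_{\rm I}},\delta_{\mathbf{x}_i^{\rm out}})=\sum_{i=1}^{n}p_i=1,
\]
contradicting $\sup_{r\in\mathcal{R}}{\rm AUC}(r;D_{XY})<1$ (recall AUC depends only on the marginals). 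This contradiction establishes the lemma.

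The main obstacle I anticipate is a bookkeeping one rather than a conceptual one: making the inductive application of Condition \ref{C1_auc} rigorous while ensuring that every auxiliary domain I feed into it actually lies in $\mathscr{D}_{XY}^{s}$. The separateness of the supports makes this straightforward, but one has to be careful that Condition \ref{C1_auc} as stated involves a common ID component $D_{X_{\rm I}Y_{\rm I}}$, so the peeling construction must keep the ID part fixed and vary only the OOD part. Once that is set up, the separateness of $\mathcal{R}$ supplies the per-point AUC value $1$, and the linearity forces the contradiction automatically.
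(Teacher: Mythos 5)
Your proposal is correct and follows essentially the same route as the paper's proof: argue by contradiction, invoke Theorem \ref{T3_auc} to get Condition \ref{C1_auc}, use separateness of $\mathcal{R}$ together with disjointness of supports to obtain $\sup_{r\in\mathcal{R}}{\rm AUC}(r;D_{X_{\rm I}},\delta_{\mathbf{x}_i})=1$ for each atom, and then use linearity over the finite discrete OOD decomposition to force $\sup_{r\in\mathcal{R}}{\rm AUC}(r;D_{XY})=1$. The only difference is presentational: the paper asserts the $m$-term linear identity in one step, whereas you make the peeling induction (and the membership of each auxiliary domain in $\mathscr{D}_{XY}^{s}$) explicit.
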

\begin{proof}[Proof of Lemma \ref{l2-auc}]
   Assume that if OOD detection is learnable under AUC in $\mathscr{D}_{XY}$ for $\mathcal{R}$, then Condition \ref{C1_auc} implies that: let $D_{X_{\rm O}} = \sum_{i=1}^m \lambda_i\delta_{\mathbf{x}_i}$ ($\sum_{i=1}^m =1$),
   \begin{equation*}
    \sup_{r\in \mathcal{R}} {\rm AUC}(r;D_{XY}) =  \sum_{i=1}^m \lambda_i \sup_{r\in \mathcal{R}} {\rm AUC}(r;D_{X_{\rm I}},\delta_{\mathbf{x}_i}).
   \end{equation*}
   Because $\mathcal{R}$ is the separate ranking function space, we know that
   \begin{equation*}
    \sup_{r\in \mathcal{R}} {\rm AUC}(r;D_{XY}) =  \sum_{i=1}^m \lambda_i \sup_{r\in \mathcal{R}} {\rm AUC}(r;D_{X_{\rm I}},\delta_{\mathbf{x}_i})=1,
   \end{equation*}
   which is conflict with the condition that $\sup_{r\in \mathcal{R}} {\rm AUC}(r;D_{XY})<1$. We have completed this proof.
\end{proof}

\thmImpSeptwoauc*
\begin{proof}[Proof of Theorem \ref{T12_auc}]
     Given disjoint samples $\mathbf{X} =\{\mathbf{x}_1,...,\mathbf{x}_m\}$. Consider the following matrix and set
   \begin{equation*}
       \mathbf{B}_r[\mathbf{X}]=\big [\mathbf{1}_{r(\mathbf{x}_i)>r(\mathbf{x}_j)}\big ], ~~~ \mathbf{B}_{\mathcal{R}}[\mathbf{X}]=\{ \mathbf{B}_r[\mathbf{X}] :r\in \mathcal{R}\}.
   \end{equation*}
   Let $\mathcal{D}_{X_{\rm I},X_{\rm O}}[\mathbf{X}]$ be the set consisting of all $(D_{X_{\rm I}}, D_{X_{\rm O}})$ which satisfies the following conditions:
   \begin{itemize}
       \item $D_{X_{\rm I}},D_{X_{\rm O}}$ are from the separate space;
       \item $D_{X_{\rm I}},D_{X_{\rm O}}$ are uniform distributions;
       \item ${\rm supp}D_{X_{\rm I}}\cup {\rm supp}D_{X_{\rm O}} = \mathbf{X}$.
   \end{itemize}
   Additionally, let
   \begin{equation*}
       \mathcal{D}_{\mathcal{R}}[\mathbf{X}] = \{(D_{X_{\rm I}},D_{X_{\rm O}})\in \mathcal{D}_{X_{\rm I},X_{\rm O}}[\mathbf{X}]: \exists r\in \mathcal{R}, {\rm AUC}(r; D_{X_{\rm I}},D_{X_{\rm O}})=1\}.
   \end{equation*}
   It is clear that
   \begin{equation*}
  |\mathbf{B}_{\mathcal{R}}[\mathbf{X}]| \leq   \frac{e^d}{d^d} m^{2d};~~~ |\mathcal{D}_{\mathcal{R}}[\mathbf{X}]|\leq  (m-1)|\mathbf{B}_{\mathcal{R}}[\mathbf{X}]|\leq  \frac{e^d}{d^d} m^{2d+1}; ~~~|\mathcal{D}_{X_{\rm I},X_{\rm O}}[\mathbf{X}] |=  2^{m}-2.
   \end{equation*}
When $m$ is large enough ($m\geq (28d+14)\log(14d+7)$), we have that 
\begin{equation*}
   |\mathcal{D}_{\mathcal{R}}[\mathbf{X}]|<|\mathcal{D}_{X_{\rm I},X_{\rm O}}[\mathbf{X}]| .
\end{equation*}
Therefore, we can find 
\begin{equation*}
    (D_{X_{\rm I}},D_{X_{\rm O}})\in \mathcal{D}_{X_{\rm I},X_{\rm O}}[\mathbf{X}]~ \text{such~that}~ \sup_{r\in \mathcal{R}} {\rm AUC}(r;D_{X_{\rm I}},D_{X_{\rm O}})<1.
\end{equation*}
By Lemma \ref{l2-auc}, we have completed this proof.
\end{proof}

\section{Proofs of Theorem \ref{T13} and Theorem \ref{T17}}\label{SI}
\subsection{Proof of Theorem \ref{T13}}
Firstly, we need two lemmas, which are motivated by Lemma {19.2} and Lemma {19.3} in \citep{shalev2014understanding}.
\begin{lemma}\label{L14}
Let $C_1$,...,$C_{r}$ be a cover of space $\mathcal{X}$, \textit{i.e.}, $\sum_{i=1}^r C_i=\mathcal{X}$. Let $S_{X}=\{\mathbf{x}^1,...,\mathbf{x}^n\}$ be a sequence of $n$ data  drawn from $D_{X_{\rm I}}$, i.i.d. Then 
\begin{equation*}
    \mathbb{E}_{S_X \sim D^n_{X_{\rm I}}} \Big(\sum_{i:C_i \cap S_X =\emptyset} D_{X_{\rm I}}(C_i) \Big) \leq \frac{r}{en}.
\end{equation*}
\end{lemma}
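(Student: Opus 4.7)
The plan is to exchange the sum and the expectation so that the quantity of interest reduces to a deterministic one-dimensional optimisation. Writing
\[
\mathbb{E}_{S_X \sim D^n_{X_{\rm I}}} \Big(\sum_{i:C_i\cap S_X=\emptyset} D_{X_{\rm I}}(C_i)\Big) = \sum_{i=1}^r D_{X_{\rm I}}(C_i)\,\mathbb{P}_{S_X\sim D^n_{X_{\rm I}}}\bigl(C_i\cap S_X=\emptyset\bigr),
\]
and using the i.i.d.\ assumption on $S_X$, the probability that none of the $n$ draws lands in $C_i$ equals $(1-D_{X_{\rm I}}(C_i))^n$. Setting $p_i:=D_{X_{\rm I}}(C_i)$, the quantity to bound becomes $\sum_{i=1}^r p_i(1-p_i)^n$.

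Next, I would apply the elementary inequality $(1-p)^n\leq e^{-pn}$ valid for $p\in[0,1]$, so that each term satisfies $p_i(1-p_i)^n\leq p_i e^{-p_i n}$. The function $g(p)=pe^{-pn}$ on $[0,1]$ is maximised where $g'(p)=e^{-pn}(1-pn)=0$, i.e.\ at $p=1/n$, giving $g(1/n)=\tfrac{1}{en}$. Hence every term in the sum is at most $\tfrac{1}{en}$, and summing over $i=1,\dots,r$ produces the claimed bound $\tfrac{r}{en}$.

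The argument is essentially a routine calculation once the expectation has been moved inside the sum; the only subtlety is picking the correct tightening of $(1-p)^n$ (going via $e^{-pn}$) in order to obtain the clean $1/e$ constant rather than a weaker bound such as $r/n$. There is no genuine obstacle beyond this choice; the lemma does not require any property of $\mathcal{X}$ or $D_{X_{\rm I}}$ other than that the $C_i$'s cover $\mathcal{X}$, and the cover need not be disjoint for the proof to go through since we apply the bound termwise before summing.
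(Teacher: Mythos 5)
Your proof is correct and follows essentially the same route as the paper's: exchange the expectation with the sum, compute $\mathbb{P}(C_i\cap S_X=\emptyset)=(1-D_{X_{\rm I}}(C_i))^n\le e^{-nD_{X_{\rm I}}(C_i)}$, and bound each term via the maximum of $p\mapsto pe^{-np}$ at $p=1/n$, yielding $1/(en)$ per term. The paper phrases the last step as $r\max_i a_ie^{-na_i}\le r/(ne)$ rather than bounding termwise before summing, but this is the same calculation.
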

\begin{proof}[Proof of Lemma \ref{L14}]
\begin{equation*}
 \mathbb{E}_{S_X\sim D^n_{X_{\rm I}}} \Big(\sum_{i:C_i \cap S_X =\emptyset} D_{X_{\rm I}}(C_i) \Big) = \sum_{i=1}^r\Big( D_{X_{\rm I}}(C_i) \cdot \mathbb{E}_{S_X\sim D^n_{X_{\rm I}}} \big (\mathbf{1}_{C_i \cap S_X =\emptyset}\big ) \Big),
 \end{equation*}
 where $\mathbf{1}$ is the characteristic function.
 
 \noindent For each $i$,
 \begin{equation*}
 \begin{split}
     \mathbb{E}_{S_X\sim D^n_{X_{\rm I}}} \big (\mathbf{1}_{C_i \cap S_X =\emptyset}\big ) & = \int_{\mathcal{X}^n} \mathbf{1}_{C_i \cap S_X =\emptyset} {\rm d} D^n_{X_{\rm I}}(S_X)\\ &  = \big ( \int_{\mathcal{X}} \mathbf{1}_{C_i \cap \{\mathbf{x}\} =\emptyset} {\rm d} D_{X_{\rm I}}(\mathbf{x}) \big )^n \\ & =\big ( 1- D_{X_{\rm I}}(C_i)\big )^n \leq e^{-nD_{X_{\rm I}}(C_i)}.
     \end{split}
 \end{equation*}
 Therefore, 
 \begin{equation*}
 \begin{split}
     \mathbb{E}_{S_X\sim D^n_{X_{\rm I}}} \Big(\sum_{i:C_i \cap S =\emptyset} D_{X_{\rm I}}(C_i) \Big)  &\leq \sum_{i=1}^r D_{X_{\rm I}}(C_i)e^{-nD_{X_{\rm I}}(C_i)} \\ & \leq r\max_{i\in \{1,...,r\}} D_{X_{\rm I}}(C_i)e^{-nD_{X_{\rm I}}(C_i)}\leq \frac{r}{ne},
     \end{split}
 \end{equation*}
 here we have used inequality: $ \max_{i\in \{1,...,r\}} a_i e^{-na_i}\leq 1/{(ne)}$. The proof has been completed.
\end{proof}
\begin{lemma}\label{L15} Let $K=1$. When $\mathcal{X}\subset \mathbb{R}^d$ is a bounded set, there exists a monotonically decreasing sequence $\epsilon_{\rm cons}(m)$ satisfying that  $\epsilon_{\rm cons}(m) \rightarrow 0$, as $m\rightarrow 0$, such that
\begin{equation*}
    \mathbb{E}_{\mathbf{x}\sim  D_{X_{\rm I}},S\sim D^n_{X_{\rm I}Y_{\rm I}}}{\rm dist}( \mathbf{x},\pi_1(\mathbf{x},S))<\epsilon_{\rm cons}(n),
\end{equation*}
where ${\rm dist}$ is the Euclidean distance, $\pi_1(\mathbf{x},S)= \argmin_{\tilde{\mathbf{x}}\in S_X} {\rm dist}(\mathbf{x},\tilde{\mathbf{x}})$, here $S_X$ is the feature part of $S$, \textit{i.e.}, $S_X=\{\mathbf{x}^1,...,\mathbf{x}^n\}$, if $S=\{(\mathbf{x}^1,y^1),...,(\mathbf{x}^n,y^n)\}$.

\end{lemma}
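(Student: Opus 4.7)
The plan is to run a classical covering argument in the spirit of the Cover–Hart nearest-neighbor proof, combined with Lemma \ref{L14}. Since $\mathcal{X}\subset \mathbb{R}^d$ is bounded, it has finite diameter $M:=\sup_{\mathbf{x},\mathbf{x}'\in \mathcal{X}}{\rm dist}(\mathbf{x},\mathbf{x}')<+\infty$. First, for any $\epsilon>0$ I would cover $\mathcal{X}$ by finitely many axis-aligned cubes $C_1,\ldots,C_{r(\epsilon)}\subset \mathbb{R}^d$, each of diameter at most $\epsilon$; since $\mathcal{X}$ is bounded, $r(\epsilon)\leq c_d\,\epsilon^{-d}$ for some constant $c_d$ depending only on $d$ and on the diameter of $\mathcal{X}$.

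Next, decompose the expected nearest-neighbor distance according to whether the cube containing $\mathbf{x}$ also contains a sample point. If $\mathbf{x}\in C_i$ and $C_i\cap S_X\neq\emptyset$, then by construction ${\rm dist}(\mathbf{x},\pi_1(\mathbf{x},S))\leq \epsilon$; otherwise I use the trivial upper bound ${\rm dist}(\mathbf{x},\pi_1(\mathbf{x},S))\leq M$. This gives
\begin{equation*}
\mathbb{E}_{\mathbf{x}\sim D_{X_{\rm I}},\,S\sim D^n_{X_{\rm I}Y_{\rm I}}}{\rm dist}(\mathbf{x},\pi_1(\mathbf{x},S)) \;\leq\; \epsilon \;+\; M\cdot \mathbb{E}_{S\sim D^n_{X_{\rm I}Y_{\rm I}}}\!\Big(\sum_{i:\,C_i\cap S_X=\emptyset} D_{X_{\rm I}}(C_i)\Big),
\end{equation*}
and by Lemma \ref{L14} the second term is at most $M\cdot r(\epsilon)/(en)\leq M c_d/(en\,\epsilon^{d})$.

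Finally, I would balance the two terms by choosing $\epsilon=\epsilon(n)$ tending to $0$ slowly enough with $n$: for instance $\epsilon(n)=n^{-1/(d+1)}$, which yields a bound of order $n^{-1/(d+1)}$. Defining
\begin{equation*}
\epsilon_{\rm cons}(n) := \inf_{m\geq n}\Big(\epsilon(m) + \tfrac{M c_d}{e\,m\,\epsilon(m)^{d}}\Big)
\end{equation*}
then produces a monotonically decreasing sequence with $\epsilon_{\rm cons}(n)\to 0$ as $n\to +\infty$, as required. The only mildly delicate point I anticipate is constructing a cover of diameter $\epsilon$ with a cardinality bound $r(\epsilon)=O(\epsilon^{-d})$; this is routine because $\mathcal{X}$ lies in a bounded box in $\mathbb{R}^d$, but it is where the dependence of the rate on the ambient dimension $d$ enters, and it is the only place the boundedness hypothesis is actually used.
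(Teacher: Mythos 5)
Your proposal is correct and follows essentially the same route as the paper's proof: cover the bounded set $\mathcal{X}$ by $O(\epsilon^{-d})$ small cubes, bound the nearest-neighbor distance by $\epsilon$ on cubes hit by the sample and by the diameter on empty cubes, apply Lemma \ref{L14} to the empty-cube mass, and balance with $\epsilon(n)\asymp n^{-1/(d+1)}$. The only differences are cosmetic (diameter-$\epsilon$ versus side-length-$\epsilon$ cubes, and your $\inf_{m\geq n}$ device for monotonicity, which is unnecessary here since the balanced bound is already monotone in $n$).
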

\begin{proof}[Proof of Lemma \ref{L15}]
Since $\mathcal{X}$ is bounded, without loss of generality, we set  $\mathcal{X}\subset [0,1)^d$. Fix  $\epsilon=1/T$, for some integer $T$. Let $r = T^d$ and $C_1,C_2,...,C_r$ be a cover of $\mathcal{X}$: for every $(a_1,...,a_T)\in [T]^d:=[1,...,T]^d$, there exists a $C_i=\{\mathbf{x}=(x_1,...,x_d):\forall j\in \{1,...,d\}, x_j\in [(a_j-1)/T,a_j/T)\}$. 
\\
\\
\noindent If $\mathbf{x}, \mathbf{x}'$ belong to some $C_i$, then ${\rm dist}(\mathbf{x}, \mathbf{x}') \leq \sqrt{d}\epsilon$; otherwise, ${\rm dist}(\mathbf{x}, \mathbf{x}') \leq \sqrt{d}$. Therefore,
\begin{equation*}
\begin{split}
      &\mathbb{E}_{\mathbf{x}\sim  D_{X_{\rm I}},S\sim D^n_{X_{\rm I}Y_{\rm I}}}{\rm dist}( \mathbf{x},\pi_1(\mathbf{x},S)) 
      \\ \leq & \mathbb{E}_{S\sim D^n_{X_{\rm I}Y_{\rm I}}} \Big ( \sqrt{d}\epsilon \sum_{i:C_i \cap S_X  \neq \emptyset} D_{X_{\rm I}}(C_i) + \sqrt{d}\sum_{i:C_i \cap S_X  = \emptyset} D_{X_{\rm I}}(C_i) \Big )  \\ \leq &\mathbb{E}_{S_X \sim D^n_{X_{\rm I}}} \Big ( \sqrt{d}\epsilon \sum_{i:C_i \cap S_X  \neq \emptyset} D_{X_{\rm I}}(C_i) + \sqrt{d}\sum_{i:C_i \cap S_X  = \emptyset} D_{X_{\rm I}}(C_i) \Big )  .
      \end{split}
\end{equation*}
Note that $C_1,...,C_r$ are disjoint.  Therefore, $\sum_{i:C_i \cap S_X  \neq \emptyset} D_{X_{\rm I}}(C_i)\leq D_{X_{\rm I}}( \sum_{i:C_i \cap S_X  \neq \emptyset} C_i) \leq 1$. Using Lemma \ref{L14}, we obtain
\begin{equation*}
\begin{split}
      &\mathbb{E}_{\mathbf{x}\sim  D_{X_{\rm I}},S\sim D^n_{X_{\rm I}Y_{\rm I}}}{\rm dist}( \mathbf{x},\pi_1(\mathbf{x},S)) 
   \leq \sqrt{d}\epsilon+\frac{r\sqrt{d}}{ne}=\sqrt{d}\epsilon+\frac{\sqrt{d}}{ne{\epsilon}^d}.
      \end{split}
\end{equation*}
If we set $\epsilon= 2n^{-1/(d+1)}$, then
\begin{equation*}
\begin{split}
      &\mathbb{E}_{\mathbf{x}\sim  D_{X_{\rm I}},S\sim D^n_{X_{\rm I}Y_{\rm I}}}{\rm dist}( \mathbf{x},\pi_1(\mathbf{x},S)) 
   \leq \frac{2\sqrt{d}}{n^{1/(d+1)}}+\frac{\sqrt{d}}{2^den^{1/(d+1)}}.
      \end{split}
\end{equation*}
If we set $\epsilon_{\rm cons}(n)=\frac{2\sqrt{d}}{n^{1/(d+1)}}+\frac{\sqrt{d}}{2^den^{1/(d+1)}}$, we complete this proof.
\end{proof}
\thmPosbSep*

\begin{proof}[Proof of Theorem \ref{T13}]
\textbf{First}, we prove that  if the hypothesis space $\mathcal{H}$ is a separate space for OOD (\textit{i.e.}, Assumption \ref{ass1} holds), the constant function $h^{\rm in}:=1 \in \mathcal{H}$, then that OOD detection is learnable in  $\mathscr{D}_{XY}^s$ for $\mathcal{H}$ implies  $\mathcal{H}_{\rm all}-\{h^{\rm out}\} \subset \mathcal{H}$.

\noindent \textbf{Proof by Contradiction}: suppose that there exists $h'\in \mathcal{H}_{\rm all}$ such that $h'\neq h^{\rm out}$ and $h'\notin \mathcal{H}$.

\noindent Let $\mathcal{X}=\{\mathbf{x}_1,...,\mathbf{x}_m\}$, $C_{\rm I}=\{\mathbf{x}\in \mathcal{X}:h'(\mathbf{x})\in \mathcal{Y}\}$ and $C_{\rm O}=\{\mathbf{x}\in \mathcal{X}:h'(\mathbf{x})=K+1\}$.

\noindent Because $h'\neq h^{\rm out}$, we know that $C_{\rm I}\neq \emptyset$.

\noindent We construct a special domain $D_{XY}\in \mathscr{D}_{XY}^s$: if $C_{\rm O}=\emptyset$, then $ D_{XY} = D_{X_{\rm I}}*D_{Y_{\rm I}|X_{\rm I}}$; otherwise,
\begin{equation*}
    D_{XY} = 0.5*D_{X_{\rm I}}*D_{Y_{\rm I}|X_{\rm I}}+0.5*D_{X_{\rm O}}*D_{Y_{\rm O}|X_{\rm O}},~~\textnormal{where}
\end{equation*}
\begin{equation*}
    D_{X_{\rm I}} = \frac{1}{|C_{\rm I}|} \sum_{\mathbf{x}\in C_{\rm I}}\delta_{\mathbf{x}}~~\textnormal{and}~~D_{Y_{\rm I}|X_{\rm I}}(y|\mathbf{x})=1,~~\textnormal{if}~h'(\mathbf{x})=y~\textnormal{and}~\mathbf{x}\in C_{\rm I},
\end{equation*}
and
\begin{equation*}
    D_{X_{\rm O}} = \frac{1}{|C_{\rm O}|} \sum_{\mathbf{x}\in C_{\rm O}}\delta_{\mathbf{x}}~~\textnormal{and}~~D_{Y_{\rm O}|X_{\rm O}}(K+1|\mathbf{x})=1,~~\textnormal{if}~\mathbf{x}\in C_{\rm O}.
\end{equation*}
\noindent Since $h'\notin \mathcal{H}$ and $|\mathcal{X}|<+\infty$, then $\argmin_{h\in \mathcal{H}} R_D(h)\neq \emptyset$, and $\inf_{h\in \mathcal{H}} R_D(h)>0$. Additionally, $R_D^{\rm in}(h^{\rm in})=0$ (here $h^{\rm in}=1$), hence, $\inf_{h\in \mathcal{H}} R_D^{\rm in}(h)=0$.

\noindent Since OOD detection  is learnable in  $\mathscr{D}_{XY}^s$ for $\mathcal{H}$, Lemma \ref{C1andC2} implies that
\begin{equation*}
    \inf_{h\in \mathcal{H}} R_D(h) = (1-\pi^{\rm out}) \inf_{h\in \mathcal{H}} R_D^{\rm in}(h)+ \pi^{\rm out}\inf_{h\in \mathcal{H}} R_D^{\rm out}(h),
    \end{equation*}
    where $\pi^{\rm out}=D_{Y}(Y=K+1)=1$ or $0.5$.
Since $\inf_{h\in \mathcal{H}} R_D^{\rm in}(h)=0$ and $\inf_{h\in \mathcal{H}} R_D(h)>0$, we obtain that $\inf_{h\in \mathcal{H}} R_D^{\rm out}(h)>0$.

\noindent Until now, we have constructed a special domain $D_{XY}\in \mathscr{D}_{XY}^s$ satisfying that $\inf_{h\in \mathcal{H}} R_D^{\rm out}(h)>0$. Using Lemma \ref{T9}, we know that OOD detection in  $\mathscr{D}_{XY}^s$  is not learnable  for $\mathcal{H}$, which is \textbf{inconsistent} with the condition that OOD detection is learnable in  $\mathscr{D}_{XY}^s$ for $\mathcal{H}$. Therefore, the assumption (there exists $h'\in \mathcal{H}_{\rm all}$ such that $h'\neq h^{\rm out}$ and $h\notin \mathcal{H}$) doesn't hold, which implies that $\mathcal{H}_{\rm all}-\{h^{\rm out}\} \subset \mathcal{H}$.
\\
\\
\noindent \textbf{Second}, we prove that if $\mathcal{H}_{\rm all}-\{h^{\rm out}\} \subset \mathcal{H}$, then OOD detection is learnable in  $\mathscr{D}_{XY}^s$ for $\mathcal{H}$.
\\
\\
\noindent To prove this result, we need to design a special  algorithm. Let $d_0=\min_{\mathbf{x},\mathbf{x}'\in \mathcal{X} ~{\rm and}~ {\mathbf{x}\neq\mathbf{x}'}}{\rm dist}(\mathbf{x},\mathbf{x}')$, where ${\rm dist}$ is the Euclidean distance. It is clear that $d_0>0$. Let 
\\
$~~~~~~~~~~~~~~~~~~~~~~~~~~~~~~${\centering{
$ \mathbf{A}(S)(\mathbf{x})=\left \{
\begin{aligned}
~~~~~1,&~~~~{\rm if}~~ {\rm dist}( \mathbf{x},\pi_1(\mathbf{x},S))<0.5*d_0;\\
~~~~~2,&~~~~{\rm if}~~ {\rm dist}( \mathbf{x},\pi_1(\mathbf{x},S))\geq 0.5*d_0,\\
\end{aligned}
 \right.
$}}
\\
{ where}~$\pi_1(\mathbf{x},S)= \argmin_{\tilde{\mathbf{x}}\in S_{X}} {\rm dist}(\mathbf{x},\tilde{\mathbf{x}})$,
 here $S_X$ is the feature part of $S$, \textit{i.e.}, $S_X=\{\mathbf{x}^1,...,\mathbf{x}^n\}$, if $S=\{(\mathbf{x}^1,y^1),...,(\mathbf{x}^n,y^n)\}$.

\noindent For any $\mathbf{x}\in {\rm supp} D_{X_{\rm I}}$, it is easy to check that for almost all $S\sim D^n_{X_{\rm I}Y_{\rm I}}$,
\begin{equation*}
    {\rm dist}( \mathbf{x},\pi_1(\mathbf{x},S))>0.5*d_0,
\end{equation*}
which implies that
\begin{equation*}
     \mathbf{A}({S})(\mathbf{x}) =2,
\end{equation*}
hence,
\begin{equation}\label{E1}
    \mathbb{E}_{S\sim D^n_{X_{\rm I}Y_{\rm I}}} R_{D}^{\rm out}(\mathbf{A}(S))=0.
\end{equation}

\noindent Using Lemma \ref{L15}, for any $\mathbf{x}\in {\rm supp} D_{X_{\rm I}}$, we have
\begin{equation*}
    \mathbb{E}_{\mathbf{x}\sim  D_{X_{\rm I}},S\sim D^n_{X_{\rm I}Y_{\rm I}}}{\rm dist}( \mathbf{x},\pi_1(\mathbf{x},S))<\epsilon_{\rm cons}(n),
\end{equation*}
\noindent where $\epsilon_{\rm cons}(n) \rightarrow 0$, as $n\rightarrow 0$ and $\epsilon_{\rm cons}(n)$ is a monotonically decreasing
sequence.
\\
\\
\noindent Hence, we have that 
\begin{equation*}
   D_{X_{\rm I}}\times  D_{X_{\rm I}Y_{\rm I}}^n(\{(\mathbf{x},S):{\rm dist}( \mathbf{x},\pi_1(\mathbf{x},S))\geq 0.5*d_0 \})\leq 2\epsilon_{\rm cons}(n)/d_0,
\end{equation*}
where $ D_{X_{\rm I}}\times  D_{X_{\rm I}Y_{\rm I}}^n$ is the product measure of $D_{X_{\rm I}}$ and $D_{X_{\rm I}Y_{\rm I}}^n $ \citep{cohn2013measure}.
Therefore,
\begin{equation*}
    D_{X_{\rm I}}\times  D_{X_{\rm I}Y_{\rm I}}^n(\{(\mathbf{x},S):\mathbf{A}(S)(\mathbf{x})=1\})> 1- 2\epsilon_{\rm cons}(n)/d_0,
\end{equation*}
which implies that
\begin{equation}\label{E2}
  \mathbb{E}_{S\sim D^n_{X_{\rm I}Y_{\rm I}}} R_{D}^{\rm in} (\mathbf{A}(S)) \leq 2B\epsilon_{\rm cons}(n)/d_0,
\end{equation}
where $B=\max\{\ell(1,2),\ell(2,1)\}$. Using Eq. \eqref{E1} and Eq. \eqref{E2}, we have proved that 
\begin{equation}
    \mathbb{E}_{S\sim D_{X_{\rm I}Y_{\rm I}}^n} R_D(\mathbf{A}(S))\leq 0+ 2B\epsilon_{\rm cons}(m)/d_0 \leq \inf_{h\in \mathcal{H}}R_D(h)+2B\epsilon_{\rm cons}(m)/d_0.
\end{equation}
It is easy to check that $\mathbf{A}(S)\in \mathcal{H}_{\rm all}-\{h^{\rm out}\}$. Therefore, we have constructed a consistent  algorithm $\mathbf{A}$ for $\mathcal{H}$. We have completed this proof.
\end{proof}


\subsection{Proof of Theorem \ref{T17}}

\thmPosbMultione*
\begin{proof}[Proof of Theorem \ref{T17}]
Since $|\mathcal{X}|<+\infty$, we know that $|\mathcal{H}|<+\infty$, which implies that $\mathcal{H}^{\rm in}$ is agnostic PAC learnable for supervised learning in classification. Therefore, there exist an algorithm $\mathbf{A}^{\rm in}: \cup_{n=1}^{+\infty}(\mathcal{X}\times\mathcal{Y})^n\rightarrow \mathcal{H}^{\rm in}$ and a monotonically decreasing
sequence $\epsilon(n)$, such that $\epsilon(n)\rightarrow 0$, as $n\rightarrow +\infty$, and for any $D_{XY}\in \mathscr{D}_{XY}^s$,
\begin{equation*}
    \mathbb{E}_{S\sim D^n_{X_{\rm I}Y_{\rm I}}} R_D^{\rm in}(\mathbf{A}^{\rm in}(S)) \leq \inf_{h\in \mathcal{H}^{\rm in}} R_{D}^{\rm in}(h)+\epsilon(n).
\end{equation*}

\noindent Since $|\mathcal{X}|<+\infty$ and  $\mathcal{H}^{\rm b}$ almost contains all binary classifiers, then using Theorem \ref{T13} and Theorem \ref{T1}, we obtain that there exist an algorithm $\mathbf{A}^{\rm b}: \cup_{n=1}^{+\infty}(\mathcal{X}\times \{1,2\})^n\rightarrow \mathcal{H}^{\rm b}$ and a monotonically decreasing sequence $\epsilon'(n)$, such that
$\epsilon'(n)\rightarrow 0$, as $n\rightarrow +\infty$, and for any $D_{XY}\in \mathscr{D}_{XY}^s$,
\begin{equation*}
\begin{split}
    & \mathbb{E}_{S\sim D^n_{X_{\rm I}Y_{\rm I}}} R_{\phi({D})}^{\rm in}(\mathbf{A}^{\rm b}(\phi(S))) \leq \inf_{h\in \mathcal{H}^{\rm b}} R_{\phi(D)}^{\rm in}(h)+\epsilon'(n),
    \\ &
    \mathbb{E}_{S\sim D^n_{X_{\rm I}Y_{\rm I}}} R_{\phi({D})}^{\rm out}(\mathbf{A}^{\rm b}(\phi(S))) \leq \inf_{h\in \mathcal{H}^{\rm b}} R_{\phi(D)}^{\rm out}(h)+\epsilon'(n),
    \end{split}
\end{equation*}
where $\phi$ maps ID's labels to $1$ and OOD's label to $2$,
\begin{equation}
    R_{\phi({D})}^{\rm in}(\mathbf{A}^{\rm b}(\phi(S)))  = \int_{\mathcal{X}\times \mathcal{Y}} \ell(\mathbf{A}^{\rm b}(\phi(S))(\mathbf{x}), \phi(y)) {\rm d} D_{X_{\rm I}Y_{\rm I}}(\mathbf{x},y),
\end{equation}
\begin{equation}
    R_{\phi({D})}^{\rm in}(h)  = \int_{\mathcal{X}\times \mathcal{Y}} \ell(h(\mathbf{x}), \phi(y)) {\rm d} D_{X_{\rm I}Y_{\rm I}}(\mathbf{x},y),
\end{equation}
\begin{equation}
    R_{\phi({D})}^{\rm out}(\mathbf{A}^{\rm b}(\phi(S)))  = \int_{\mathcal{X}\times \{K+1\}} \ell(\mathbf{A}^{\rm b}(\phi(S))(\mathbf{x}), \phi(y)) {\rm d} D_{X_{\rm O}Y_{\rm O}}(\mathbf{x},y),
\end{equation}
and
\begin{equation}
    R_{\phi({D})}^{\rm out}(h)  = \int_{\mathcal{X}\times \{K+1\}} \ell(h(\mathbf{x}), \phi(y)) {\rm d} D_{X_{\rm O}Y_{\rm O}}(\mathbf{x},y),
\end{equation}
here $\phi(S)=\{(\mathbf{x}^1,\phi({y}^1)),...,(\mathbf{x}^n,\phi({y}^n))\}$, if $S=\{(\mathbf{x}^1,{y}^1),...,(\mathbf{x}^n,{y}^n)\}$.
\\

\noindent Note that $\mathcal{H}^b$ almost contains all classifiers, and $\mathscr{D}^s_{XY}$ is the separate space. Hence,  
\begin{equation*}
\begin{split}
    & \mathbb{E}_{S\sim D^n_{X_{\rm I}Y_{\rm I}}} R_{\phi({D})}^{\rm in}(\mathbf{A}^{\rm b}(\phi(S))) \leq \epsilon'(n),
    ~~~
    \mathbb{E}_{S\sim D^n_{X_{\rm I}Y_{\rm I}}} R_{\phi({D})}^{\rm out}(\mathbf{A}^{\rm b}(\phi(S))) \leq \epsilon'(n).
    \end{split}
\end{equation*}

\noindent \textbf{Next}, we construct an algorithm $\mathbf{A}$ using $\mathbf{A}^{\rm in}$ and $\mathbf{A}^{\rm out}$.
\\

~~~~~~~~~~~~~~~~~~~~~~~~{\centering{
$ \mathbf{A}(S)(\mathbf{x})=\left \{
\begin{aligned}
K+1,&~~~~{\rm if}~~ \mathbf{A}^{\rm b}(\phi(S))(\mathbf{x})=2;\\
\mathbf{A}^{\rm in}(S)(\mathbf{x}),&~~~~{\rm if}~~ \mathbf{A}^{\rm b}(\phi(S))(\mathbf{x})=1.\\
\end{aligned}
 \right.
$}}
\\

\noindent Since $\inf_{h\in \mathcal{H}} R_{\phi(D)}^{\rm in}(\phi\circ h)=0$, $\inf_{h\in \mathcal{H}} R_{D}^{\rm out}(h)=0$, then by Condition \ref{C3}, it is easy to check that
\begin{equation*}
\inf_{h\in \mathcal{H}^{\rm in}} R_{D}^{\rm in}(h)=\inf_{h\in \mathcal{H}} R_{D}^{\rm in}(h).
\end{equation*}

\noindent Additionally, the risk $R_D^{\rm in}(\mathbf{A}(S))$ is from two parts: 1) ID data are detected as OOD data; 2) ID data are detected as ID data, but are classified as incorrect ID classes. Therefore, we have the inequality: 
\begin{equation}\label{IDlearnable}
\begin{split}
    \mathbb{E}_{S\sim D^n_{X_{\rm I}Y_{\rm I}}} R_D^{\rm in}(\mathbf{A}(S))& \leq    \mathbb{E}_{S\sim D^n_{X_{\rm I}Y_{\rm I}}} R_D^{\rm in}(\mathbf{A}^{\rm in}(S)) +  c\mathbb{E}_{S\sim D^n_{X_{\rm I}Y_{\rm I}}} R_{\phi({D})}^{\rm in}(\mathbf{A}^{\rm b}(\phi(S))) \\ & \leq \inf_{h\in \mathcal{H}^{\rm in}} R_{D}^{\rm in}(h)+\epsilon(n) + c\epsilon'(n)  =  \inf_{h\in \mathcal{H}} R_{D}^{\rm in}(h)+\epsilon(n) + c\epsilon'(n), 
    \end{split}
\end{equation}
where $c= \max_{y_1,y_2\in \mathcal{Y}} \ell(y_1,y_2)/\min\{\ell(1,2),\ell(2,1)\}$.
\\

\noindent Note that the risk $R_D^{\rm out}(\mathbf{A}(S))$ is from the case that  OOD data are detected as ID data. Therefore,
\begin{equation}\label{OODlearnable}
\begin{split}
    \mathbb{E}_{S\sim D^n_{X_{\rm I}Y_{\rm I}}} R_D^{\rm out}(\mathbf{A}(S))& \leq      c\mathbb{E}_{S\sim D^n_{X_{|rm I}Y_{\rm I}}} R_{\phi({D})}^{\rm out}(\mathbf{A}^{\rm b}(\phi(S))) \\ & \leq   c\epsilon'(n)\leq \inf_{h\in \mathcal{H}} R_D^{\rm out}(h)+ c\epsilon'(n). 
    \end{split}
\end{equation}
Note that $ (1-\alpha)\inf_{h\in \mathcal{H}} R_{D}^{\rm in}(h)+\alpha \inf_{h\in \mathcal{H}} R_{D}^{\rm out}(h)\leq \inf_{h\in \mathcal{H}} R_D^{\alpha}(h)$. Then, using Eq. \eqref{IDlearnable} and Eq. \eqref{OODlearnable}, we obtain that for any $\alpha\in [0,1]$,
\begin{equation*}
\begin{split}
    \mathbb{E}_{S\sim D^n_{X_{\rm I}Y_{\rm I}}} R_D^{\alpha}(\mathbf{A}(S))& \leq     \inf_{h\in \mathcal{H}} R_D^{\alpha}(h)+\epsilon(n)+ c\epsilon'(n). 
    \end{split}
\end{equation*}
According to Theorem \ref{T1} (the second result), we complete the proof.
\end{proof}

\section{Proof of Theorem \ref{T13_auc}}\label{SI_auc}
\begin{lemma}\label{l14-auc}
   Suppose that $|\mathcal{X}|<+\infty$. If AUC-based Realizability Assumption holds for AUC metric, then  OOD detection is learnable  under AUC in separate space for $\mathcal{R}$.
\end{lemma}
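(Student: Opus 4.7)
}

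The plan is to construct an explicit consistent algorithm that exploits the finiteness of $\mathcal{X}$ together with AUC-based Realizability in a ``fit the observed support'' fashion. Given training data $S \sim D_{X_{\rm I}Y_{\rm I}}^n$, let $\hat{A}_{\rm I}=S_X \subseteq \text{supp}\, D_{X_{\rm I}}$. Since the separate space is closed under arbitrary partitions of $\mathcal{X}$ into disjoint ID/OOD supports, the auxiliary domain $\hat{D}$ with ID support $\hat{A}_{\rm I}$ and OOD support $\hat{A}_{\rm O}:=\mathcal{X}\setminus \hat{A}_{\rm I}$ belongs to $\mathscr{D}_{XY}^{s}$. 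The algorithm $\mathbf{A}$ outputs any $\hat{r}\in \mathcal{R}$ guaranteed by AUC-based Realizability applied to $\hat{D}$, i.e.\ a ranker with $\hat r(\mathbf{x})>\hat r(\mathbf{x}')$ for every $\mathbf{x}\in \hat{A}_{\rm I}$ and every $\mathbf{x}'\in \hat{A}_{\rm O}$. (If $\hat{A}_{\rm I}=\mathcal{X}$, which is ruled out whenever $D_{X_{\rm O}}$ is non-degenerate by the separation assumption, output any fixed element of $\mathcal{R}$.)

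For the correctness analysis, I would first show that on the ``good'' event $\{\hat{A}_{\rm I}=\text{supp}\,D_{X_{\rm I}}\}$ the output has perfect AUC on the true domain $D_{XY}$: by separation, $\text{supp}\,D_{X_{\rm O}} \subseteq \mathcal{X}\setminus \text{supp}\,D_{X_{\rm I}} = \hat{A}_{\rm O}$, so $\hat{r}$ ranks every point in $\text{supp}\,D_{X_{\rm I}}$ strictly above every point in $\text{supp}\,D_{X_{\rm O}}$, giving ${\rm AUC}(\hat r;D_{XY})=1$. On the complementary ``bad'' event some $\mathbf{x}_0\in \text{supp}\,D_{X_{\rm I}}\setminus S_X$ lies in $\hat{A}_{\rm O}$, and then contributes to the AUC gap only when $\mathbf{x}\sim D_{X_{\rm I}}$ lands on such a missed point; worst case, this yields $1-{\rm AUC}(\hat r;D_{XY})\le \mathbb{P}_{\mathbf{x}\sim D_{X_{\rm I}}}(\mathbf{x}\notin S_X)$.

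Taking expectation over $S$, the error reduces to bounding $\sum_{\mathbf{x}\in \text{supp}\,D_{X_{\rm I}}} D_{X_{\rm I}}(\{\mathbf{x}\})(1-D_{X_{\rm I}}(\{\mathbf{x}\}))^n$. Using the elementary inequality $(1-p)^n\le e^{-np}$ together with $\max_{p\in [0,1]} p\,e^{-np} = 1/(en)$ (exactly as in Lemma~\ref{L14}), this is at most $|\mathcal{X}|/(en)$, which is independent of the particular domain and decreases monotonically to zero. Setting $\epsilon_{\rm cons}(n)=|\mathcal{X}|/(en)$ then verifies Definition~\ref{D0_auc}, giving AUC learnability in $\mathscr{D}_{XY}^s$ for $\mathcal{R}$.

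The only non-routine ingredient is the opening observation that AUC-based Realizability applies not only to the true (unknown) $D_{XY}$ but to the data-driven auxiliary domain $\hat D$ as well; this is where the separate space assumption is used in an essential way, since it is precisely the closure of $\mathscr{D}_{XY}^s$ under arbitrary disjoint partitions of $\mathcal{X}$ that allows us to invoke realizability at every random draw. Everything else is a coupon-collector-style concentration argument made uniform by the finiteness of $\mathcal{X}$. I do not anticipate any real obstacle beyond checking the degenerate case $\hat{A}_{\rm I}=\mathcal{X}$, which can only occur if $D_{X_{\rm O}}$ has empty support and is therefore excluded.
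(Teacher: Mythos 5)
Your proposal is correct and follows essentially the same route as the paper's proof: both apply the AUC-based Realizability Assumption to the data-dependent auxiliary domain that treats the observed sample $S_X$ as ID support and $\mathcal{X}\setminus S_X$ as OOD support (the paper packages this as a constrained maximization over $(r,\tau)$ with $S'=\mathcal{X}$), then lower-bound the AUC by $1-\mathbb{P}_{\mathbf{x}\sim D_{X_{\rm I}}}(\mathbf{x}\notin S_X)$ and control the missing mass via the finiteness of $\mathcal{X}$. Your version is in fact slightly more explicit than the paper's, since you spell out the rate $\epsilon_{\rm cons}(n)=|\mathcal{X}|/(en)$ that the paper leaves implicit in its $\epsilon(n)$.
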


\begin{proof}[Proof of Lemma \ref{l14-auc}]
    Without loss of generality, we assume that $K=1$, and any $r\in \mathcal{R}$ satisfies that $0<r<1$ (one can achieve this by using sigmoid function). Given $m$ data points $S_m = \{\mathbf{x}_1',...,\mathbf{x}_m'\}\subset \mathcal{X}^m$. We consider the following learning rule
\begin{equation*}
    \max_{r\in \mathcal{R}, \tau\in(0,1)}  \sum_{i=1}^m  \mathbf{1}_{r(\mathbf{x}_i')\leq \tau} \mathbf{1}_{\mathbf{x}_i'\notin S}, ~\text{subject~to}~\frac{1}{n} \sum_{j=1}^n \mathbf{1}_{r(\mathbf{x}_j)\leq \tau} =0.
\end{equation*}
 We denote the algorithm, which solves the above rule, as $\mathbf{A}_{S_m}$ ($\mathbf{A}_{S_m}$ outputs $r$ and a corresponding $\tau$). For different data points $S_m$, we have different algorithm $\mathbf{A}_{S_m}$. Let $\mathcal{S}$ be the set that consists of all data points, \textit{i.e.},
\begin{equation}
    \mathcal{S}:= \{S_m: S_m~\textnormal{are any}~m \textnormal{ data points},~m=1,...,+\infty\}.
\end{equation}

\noindent Using $\mathcal{S}$, we construct an algorithm space as follows:
\begin{equation*}
    \mathscr{A}:= \{ \mathbf{A}_{{S}'}: \forall~S'\in \mathcal{S}\}.
\end{equation*}
We will find an algorithm $\mathbf{A}$ from $ \mathscr{A}$, which is learnable. Let $S' = \mathcal{X}$. We will show that $\mathbf{A}_{\mathcal{X}}$ can guarantee the learnability. Suppose that $r_S$ and $\tau_S$ is the output of $\mathbf{A}_{\mathcal{X}}(S)$, then the realizability assumption implies that there exists learning rate $\epsilon(n)$ such that
\begin{equation}\label{5.76q0}
   \mathbb{E}_{S\sim D^n_{X_{\rm I}}} \mathbb{E}_{\mathbf{x}\sim D_{X_{\rm I}}} \mathbf{1}_{r_S(\mathbf{x})\leq \tau_S}
    \leq \epsilon(n).
\end{equation}
Additionally, due to ${\rm supp}(D_{X_{\rm O}})\subset \mathcal{X}-S$, the AUC-based Realizability Assumption implies that
\begin{equation}\label{5.76q1}
     \mathbb{E}_{S\sim D^n_{X_{\rm I}}} \mathbb{E}_{\mathbf{x}\sim D_{X_{\rm O}}} \mathbf{1}_{r_S(\mathbf{x})\leq \tau_S}
   =1.
\end{equation}
Then, \begin{equation*}
\begin{split}
& \mathbb{E}_{S\sim D_{X_{\rm I}}^n}   {\rm AUC}(f_{S};D_{X_{\rm I}},D_{X_{\rm O}}) \\   \geq &   \mathbb{E}_{S\sim D_{X_{\rm I}}^n} \mathbb{E}_{\mathbf{x}\sim D_{X_{\rm O}}} \mathbb{E}_{\mathbf{x}'\sim D_{X_{\rm I}}} \mathbf{1}_{r_{S}(\mathbf{x})< r_{S}(\mathbf{x}')} \\ \geq &   \mathbb{E}_{S\sim D_{X_{\rm I}}^n} \mathbb{E}_{\mathbf{x}\sim D_{X_{\rm O}}} \mathbb{E}_{\mathbf{x}'\sim D_{X_{\rm I}}} \mathbf{1}_{r_{S}(\mathbf{x})\leq \tau_{S}} \mathbf{1}_{r_{S}(\mathbf{x}')>  \tau_{S}}\\ \geq & 1- \epsilon(n).
     \end{split}
\end{equation*}
Then the ranking function part  $r_{S}$ of $\mathbf{A}_{\mathcal{X}}\in \mathscr{A}$ is the universally consistent algorithm, \textit{i.e.}, 
\begin{equation*}
\begin{split}
& \mathbb{E}_{S\sim D_{X_{\rm I}}^n}  {\rm AUC}(r_{S};D_{X_{\rm I}},D_{X_{\rm O}})  \geq 1- \epsilon(n).
     \end{split}
\end{equation*}
\end{proof}
\thmPosbSepauc*
\begin{proof}[Proof of Theorem \ref{T13_auc}]
    Lemmas \ref{l2-auc} and \ref{l14-auc} imply this result.
\end{proof}
\section{Proofs of Theorems \ref{T-SET} and \ref{T-SET2}}
\subsection{Proof of Theorem \ref{T-SET}}
\begin{lemma}\label{lemma7-new}
Given a prior-unknown space $\mathscr{D}_{XY}$ and a hypothesis space $\mathcal{H}$, if Condition \ref{Con2} holds, then for any equivalence class $[D_{XY}']$ with respect to $\mathscr{D}_{XY}$, OOD detection is learnable in the equivalence class $[D_{XY}']$ for $\mathcal{H}$. Furthermore, the learning rate can attain $O(1/n)$.
\end{lemma}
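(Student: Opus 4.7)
} The plan is to use Condition~\ref{Con2} to construct a \emph{sample-independent} ("lazy") algorithm and then verify strong learnability, exploiting the fact that all domains inside a fixed equivalence class $[D_{XY}']$ share the same ID joint distribution. Because the equivalence class is priori-unknown (it inherits this property from $\mathscr{D}_{XY}$), by Theorem~\ref{T1} it suffices to bound the $\alpha$-risk gap uniformly over $\alpha \in [0,1]$.

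First, I would fix the equivalence class $[D_{XY}']$ and apply Condition~\ref{Con2} with tolerance $\epsilon = 1/n$ for each positive integer $n$. This yields a hypothesis $h_n \in \mathcal{H}$ that is simultaneously $(1/n)$-optimal for the ID risk and the OOD risk across \emph{every} $D_{XY} \in [D_{XY}']$:
\begin{equation*}
R_D^{\rm in}(h_n) \leq \inf_{h \in \mathcal{H}} R_D^{\rm in}(h) + 1/n, \qquad R_D^{\rm out}(h_n) \leq \inf_{h \in \mathcal{H}} R_D^{\rm out}(h) + 1/n.
\end{equation*}
I would then define the algorithm $\mathbf{A}: \bigcup_{n=1}^{+\infty} (\mathcal{X} \times \mathcal{Y})^n \to \mathcal{H}$ by $\mathbf{A}(S) := h_{|S|}$, ignoring the content of $S$. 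This is legitimate because, formally, an algorithm in the sense of Definition~\ref{D0} is just a mapping from finite samples to $\mathcal{H}$.

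Second, I would verify the strong learnability bound by a one-line convex combination: for every $D_{XY} \in [D_{XY}']$ and every $\alpha \in [0,1]$,
\begin{equation*}
R_D^\alpha(h_n) = (1-\alpha) R_D^{\rm in}(h_n) + \alpha R_D^{\rm out}(h_n) \leq (1-\alpha)\inf_{h\in\mathcal{H}} R_D^{\rm in}(h) + \alpha \inf_{h\in\mathcal{H}} R_D^{\rm out}(h) + \tfrac{1}{n}.
\end{equation*}
Since $\inf_{h\in\mathcal{H}} R_D^\alpha(h) \geq (1-\alpha)\inf_{h\in\mathcal{H}} R_D^{\rm in}(h) + \alpha \inf_{h\in\mathcal{H}} R_D^{\rm out}(h)$ holds trivially, this yields $R_D^\alpha(\mathbf{A}(S)) - \inf_{h\in\mathcal{H}} R_D^\alpha(h) \leq 1/n$. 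Taking expectation over $S \sim D_{X_{\rm I} Y_{\rm I}}^n$ preserves the bound (the right-hand side is deterministic). By Theorem~\ref{T1}(2), strong learnability on the priori-unknown class $[D_{XY}']$ is equivalent to ordinary learnability, so setting $\epsilon_{\rm cons}(n) = 1/n$ finishes the argument and already delivers the $O(1/n)$ rate.

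There is essentially no hard step: the whole proof reduces to reading off Condition~\ref{Con2} at scale $1/n$ and averaging the two per-coordinate bounds with weights $(1-\alpha,\alpha)$. The only point that might appear delicate is that the hypothesis $h_n$ supplied by Condition~\ref{Con2} must be the \emph{same} for every domain in $[D_{XY}']$ — but this is precisely what the condition asserts. Since the algorithm does not use $S$ at all, there are also no stochastic concentration terms to control, which is why the rate is $O(1/n)$ rather than the slower $O(1/\sqrt{n^{1-\theta}})$ rates appearing later for larger domain spaces.
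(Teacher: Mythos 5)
Your proposal is correct and follows essentially the same route as the paper's proof: both construct a sample-independent algorithm $\mathbf{A}(S)=h_{|S|}$ with $h_n$ drawn from the intersection guaranteed by Condition \ref{Con2} at tolerance $O(1/n)$, then conclude via the inequality $\inf_{h}R_D^{\alpha}(h)\geq(1-\alpha)\inf_{h}R_D^{\rm in}(h)+\alpha\inf_{h}R_D^{\rm out}(h)$ and Theorem \ref{T1}. The only cosmetic difference is that the paper packages the lazy algorithm inside an explicit "algorithm class" generated by infinite hypothesis sequences and uses tolerance $2/n$ instead of $1/n$.
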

\begin{proof}
 Let $\mathscr{F}$ be a set consisting of all infinite sequences, whose coordinates are hypothesis functions, \textit{i.e.},
 \begin{equation*}
 \mathscr{F}=\{{\bm h}=(h_1,...,h_n,...): \forall h_n\in \mathcal{H}, n=1,....,+\infty\}.
 \end{equation*}
\\
\noindent For each ${\bm h}\in \mathscr{F}$, there is a corresponding algorithm $\mathbf{A}_{\bm h}$: $\mathbf{A}_{\bm h}(S)=h_n,~{\rm if}~|S|=n$. $\mathscr{F}$ generates an algorithm class $\mathscr{A}=\{\mathbf{A}_{\bm h}: \forall {\bm h}\in \mathscr{F}\}$. We select a consistent algorithm from the algorithm class $\mathscr{A}$. 
\\
\\
\noindent We construct a special infinite sequence $\tilde{{\bm h}}=(\tilde{h}_1,...,\tilde{h}_n,...)\in \mathscr{F}$. For each positive integer $n$, we select $\tilde{h}_n$ from
\begin{equation*}
\bigcap_{\forall D_{XY}\in [D_{XY}']}\{ h' \in \mathcal{H}: R_D^{\rm out}(h') \leq \inf_{h\in \mathcal{H}} R_D^{\rm out}(h)+2/n\}\bigcap  \{ h' \in \mathcal{H}: R_D^{\rm in}(h') \leq \inf_{h\in \mathcal{H}} R_D^{\rm in}(h)+2/n\}.
\end{equation*}
The existence of $\tilde{h}_n$ is based on Condition \ref{Con2}. It is easy to check that for any $D_{XY}\in [D_{XY}']$,
 \begin{equation*}
 \begin{split}
  & \mathbb{E}_{S\sim D^n_{X_{\rm I}Y_{\rm I}}}  R_D^{\rm in}(\mathbf{A}_{\tilde{{\bm h}}}(S))\leq  \inf_{h\in \mathcal{H}} R_D^{\rm in}(h)+2/n.
   \\
   & \mathbb{E}_{S\sim D^n_{X_{\rm I}Y_{\rm I}}}  R_D^{\rm out}(\mathbf{A}_{\tilde{{\bm h}}}(S))\leq  \inf_{h\in \mathcal{H}} R_D^{\rm out}(h)+2/n.
    \end{split}
 \end{equation*}
 Since $(1-\alpha)\inf_{h\in \mathcal{H}} R_D^{\rm in}(h)+\alpha \inf_{h\in \mathcal{H}} R_D^{\rm out}(h)\leq \inf_{h\in \mathcal{H}} R_D^{\rm \alpha}(h)$, we obtain that for any $\alpha\in [0,1]$, 
 \begin{equation*}
 \begin{split}
  & \mathbb{E}_{S\sim D^n_{X_{\rm I}Y_{\rm I}}}  R_D^{\alpha}(\mathbf{A}_{\tilde{{\bm h}}}(S))\leq  \inf_{h\in \mathcal{H}} R_D^{\alpha}(h)+2/n.
    \end{split}
 \end{equation*}
Using Theorem \ref{T1} (the second result), we have completed this proof.
\end{proof}
\thmPosbtennew*
\begin{proof}[Proof of Theorem \ref{T-SET}]
\noindent \textbf{First}, we prove that if OOD detection is learnable in $\mathscr{D}_{XY}^F$ for $\mathcal{H}$, then Condition \ref{Con2} holds.
\\
\\
\noindent Since $\mathscr{D}^{F}_{XY}$ is the prior-unknown space, by Theorem \ref{T1}, there exist an algorithm $\mathbf{A}: \cup_{n=1}^{+\infty}(\mathcal{X}\times\mathcal{Y})^n\rightarrow \mathcal{H}$ and a monotonically decreasing sequence $\epsilon_{\rm cons}(n)$, such that $\epsilon_{\rm cons}(n)\rightarrow 0$, as $n\rightarrow +\infty$, and for any $D_{XY}\in \mathscr{D}_{XY}^F$, 
\begin{equation*}
\begin{split}
     &\mathbb{E}_{S\sim D^n_{X_{\rm I}Y_{\rm I}}}\big[ R^{\rm in}_D(\mathbf{A}(S))- \inf_{h\in \mathcal{H}}R^{\rm in}_D(h)\big]\leq \epsilon_{\rm cons}(n),
     \\ &\mathbb{E}_{S\sim D^n_{X_{\rm I}Y_{\rm I}}}\big[ R^{\rm out}_D(\mathbf{A}(S))- \inf_{h\in \mathcal{H}}R^{\rm out}_D(h)\big]\leq \epsilon_{\rm cons}(n).
     \end{split}
\end{equation*}
Then, for any $\epsilon>0$, we can find $n_{\epsilon}$ such that $\epsilon\geq \epsilon_{\rm cons}(n_{\epsilon})$, therefore, if $n={n_{\epsilon}}$, we have
\begin{equation*}
\begin{split}
     &\mathbb{E}_{S\sim D^{n_{\epsilon}}_{X_{\rm I}Y_{\rm I}}}\big[ R^{\rm in}_D(\mathbf{A}(S))- \inf_{h\in \mathcal{H}}R^{\rm in}_D(h)\big]\leq \epsilon,
     \\ &\mathbb{E}_{S\sim D^{n_{\epsilon}}_{X_{\rm I}Y_{\rm I}}}\big[ R^{\rm out}_D(\mathbf{A}(S))- \inf_{h\in \mathcal{H}}R^{\rm out}_D(h)\big]\leq \epsilon, 
     \end{split}
\end{equation*}
which implies that there exists $S_{\epsilon}\sim D^{n_{\epsilon}}_{X_{\rm I}Y_{\rm I}}$ such that
\begin{equation*}
\begin{split}
     & R^{\rm in}_D(\mathbf{A}(S_{\epsilon}))- \inf_{h\in \mathcal{H}}R^{\rm in}_D(h)\leq \epsilon,
     \\ & R^{\rm out}_D(\mathbf{A}(S_{\epsilon}))- \inf_{h\in \mathcal{H}}R^{\rm out}_D(h)\leq \epsilon. 
     \end{split}
\end{equation*}
Therefore, for any equivalence class $[D_{XY}']$ with respect to $\mathscr{D}_{XY}^F$ and any $\epsilon>0$, there exists a hypothesis function $\mathbf{A}(S_{\epsilon})\in \mathcal{H}$ such that for any domain $D_{XY}\in [D_{XY}']$,
 \begin{equation*}
 \mathbf{A}(S_{\epsilon})\in \{ h' \in \mathcal{H}: R_D^{\rm out}(h') \leq \inf_{h\in \mathcal{H}} R_D^{\rm out}(h)+\epsilon\} \cap   \{ h' \in \mathcal{H}: R_D^{\rm in}(h') \leq \inf_{h\in \mathcal{H}} R_D^{\rm in}(h)+\epsilon\},
 \end{equation*}
 which implies that Condition \ref{Con2} holds.
 \\
 \\
\noindent  \textbf{Second}, we prove Condition \ref{Con2} implies the learnability of OOD detection in $\mathscr{D}_{XY}^F$ for $\mathcal{H}.$
For convenience, we assume that all equivalence classes are $[D^1_{XY}],...,[D^m_{XY}]$. By Lemma \ref{lemma7-new},  for every  equivalence class $[D_{XY}^i]$, we can find a corresponding algorithm $\mathbf{A}_{D^i}$ such that OOD detection is learnable in $[D_{XY}^i]$ for $\mathcal{H}$. Additionally, we also set the learning rate for $
\mathbf{A}_{D^i}$ is $\epsilon^i(n)$. By Lemma \ref{lemma7-new}, we know that $\epsilon^i(n)$ can attain $O(1/n)$.
\\
\\
\noindent Let ${\mathcal{Z}}$ be $\mathcal{X}\times\mathcal{Y}$. Then, we consider a bounded universal kernel $K(\cdot,\cdot)$ defined over $\mathcal{Z}\times \mathcal{Z}$. Consider the \emph{maximum mean discrepancy} (MMD) \citep{DBLP:journals/jmlr/GrettonBRSS12}, which is a metric between distributions: for any distributions $P$ and $Q$ defined over ${\mathcal{Z}}$, we use ${\rm MMD}_K(Q,P)$ to represent the distance.

\noindent Let $\mathscr{F}$ be a set consisting of all finite sequences, whose coordinates are labeled data, \textit{i.e.},
 \begin{equation*}
 \mathscr{F}=\{\mathbf{S}=(S_1,...,S_i,...,S_m): \forall i=1,...,m~\textnormal{and}~\forall~\textnormal{labeled~data}~ S_i\}.
 \end{equation*}
 
\noindent  Then, we define an algorithm space as follows:
 \begin{equation*}
     \mathscr{A}=\{\mathbf{A}_{\mathbf{S}}\footnote{In this paper, we regard an algorithm as a mapping from $\cup_{n=1}^{+\infty}(\mathcal{X}\times\mathcal{Y})^n$ to $\mathcal{H}$ or $\mathcal{R}$. So we can design an algorithm like this.}:\forall~\mathbf{S}\in  \mathscr{F}\},
 \end{equation*}
 where
 \begin{equation*}
      \mathbf{A}_{\mathbf{S}}(S) = \mathbf{A}_{D^i}(S),~\textnormal{if}~i=\argmin_{i\in\{1,...m\}}  {\rm MMD}_K(P_{S_i},P_S),
 \end{equation*}
  here 
 \begin{equation*}
     P_S = \frac{1}{n}\sum_{(\mathbf{x},y)\in S} \delta_{(\mathbf{x},y)},~~~P_{S_i} = \frac{1}{n}\sum_{(\mathbf{x},y)\in S_i}, \delta_{(\mathbf{x},y)}
 \end{equation*}
 and $\delta_{(\mathbf{x},y)}$ is the Dirac measure.
 Next, we prove that we can find an algorithm $\mathbf{A}$ from the algorithm space $\mathscr{A}$  such that $\mathbf{A}$ is the consistent algorithm.
 
\noindent   Since the number of different equivalence classes is finite, we know that there exists a constant $c>0$ such that for any different equivalence classes $[D_{XY}^i]$ and $[D_{XY}^j]$ ($i\neq j$),
 \begin{equation*}
     {\rm MMD}_K(D^i_{X_{\rm I}Y_{\rm I}},D^j_{X_{\rm I}Y_{\rm I}}) >c.
 \end{equation*}
 
\noindent   Additionally,  according to \citep{DBLP:journals/jmlr/GrettonBRSS12} and the property of $\mathscr{D}_{XY}^F$ (the number of different equivalence classes is finite), there exists a monotonically decreasing $\epsilon(n)\rightarrow 0$, as $n\rightarrow +\infty$ such that for any $D_{XY}\in \mathscr{D}$,
 \begin{equation}\label{MMDcon}
     \mathbb{E}_{S\sim D^n_{X_{\rm I}Y_{\rm I}}} {\rm MMD}_K(D_{X_{\rm I}Y_{\rm I}},P_S) \leq \epsilon(n),~\text{where}~\epsilon(n)=O(\frac{1}{\sqrt{n^{1-\theta}}}).
 \end{equation}
 Therefore, for every  equivalence class $[D_{XY}^i]$, we can find data points $S_{D^i}$ such that
 \begin{equation*}
      {\rm MMD}_K(D^i_{X_{\rm I}Y_{\rm I}},P_{S_{D^i}}) < \frac{c}{100}.
 \end{equation*}
 \\
 Let $\mathbf{S}'=\{S_{D^1},...,S_{D^i},...,S_{D^m}\}$. Then, we prove that $\mathbf{A}_{\mathbf{S}'}$ is a consistent algorithm.
 By Eq. \eqref{MMDcon}, it is easy to check that for any $i\in\{1,...,m\}$ and any $0<\delta<1$,
 \begin{equation*}
     \mathbb{P}_{S\sim D^{i,n}_{X_{\rm I}Y_{\rm I}}}\big [{\rm MMD}_K(D^i_{X_{\rm I}Y_{\rm I}},P_{S})\leq \frac{\epsilon(n)}{\delta} \big ] >1- \delta,
 \end{equation*}
 which implies that
 \begin{equation*}
     \mathbb{P}_{S\sim D^{i,n}_{X_{\rm I}Y_{\rm I}}}\big [{\rm MMD}_K(P_{S_{D^i}},P_{S})\leq \frac{\epsilon(n)}{\delta}+\frac{c}{100} \big ] >1- \delta.
 \end{equation*}
 Therefore, (here we set $\delta=200\epsilon(n)/c$)
 \begin{equation*}
      \mathbb{P}_{S\sim D^{i,n}_{X_{\rm I}Y_{\rm I}}}\big [ \mathbf{A}_{\mathbf{S}'}(S) \neq \mathbf{A}_{D^i}(S) \big ] \leq \frac{200\epsilon(n)}{c}.
 \end{equation*}
 Because $\mathbf{A}_{D^i}$  is a consistent algorithm for $[D_{XY}^i]$, we conclude that for all $\alpha\in [0,1]$,
 \begin{equation*}
\begin{split}
     \mathbb{E}_{S\sim D^{i,n}_{X_{\rm I}Y_{\rm I}}}\big[ R^{\alpha}_D( \mathbf{A}_{\mathbf{S}'}(S) )- \inf_{h\in \mathcal{H}}R^{\alpha}_D(h)\big]\leq \epsilon^i(n)+\frac{200B\epsilon(n)}{c},
     \end{split}
 \end{equation*}
where $\epsilon^i(n)=O(1/n)$ is the learning rate of $
\mathbf{A}_{D^i}$ and $B$ is the upper bound of the loss $\ell$.

\noindent  Let $\epsilon^{\rm max}(n)=\max \{\epsilon^1(n),...,\epsilon^m(n)\}+\frac{200B\epsilon(n)}{c}$. 
 
\noindent  Then, we obtain that 
  for any $D_{XY}\in \mathscr{D}_{XY}^F$ and all $\alpha\in [0,1]$,
 \begin{equation*}
\begin{split}
     \mathbb{E}_{S\sim D^n_{X_{\rm I}Y_{\rm I}}}\big[ R^{\alpha}_D( \mathbf{A}_{\mathbf{S}'}(S) )- \inf_{h\in \mathcal{H}}R^{\alpha}_D(h)\big]\leq \epsilon^{\rm max}(n)=O(\frac{1}{\sqrt{n^{1-\theta}}}).
     \end{split}
 \end{equation*}
According to Theorem \ref{T1} (the second result), $\mathbf{A}_{\mathbf{S}'}$ is the consistent algorithm. This proof is completed.
\end{proof}
\subsection{Proof of Theorem \ref{T-SET2}}
\thmPosbtennewtwo*
\begin{proof}[Proof of Theorem \ref{T-SET2}]
\textbf{First}, we consider the case that the loss $\ell$ is the zero-one loss.

\noindent Since $\mu(\mathcal{X})<+\infty$, without loss of generality, we assume that $\mu(\mathcal{X})=1$.  We also assume that $f_{\rm I}$ is $D_{X_{\rm I}}$'s density function and $f_{\rm O}$ is $D_{X_{\rm O}}$'s density function. Let $f$ be the density function for $0.5*D_{X_{\rm I}} + 0.5*D_{X_{\rm O}}$. It is easy to check that $f=0.5*f_{\rm I}+0.5*f_{\rm O}$.
Additionally, due to Risk-based Realizability Assumption, it is obvious that for any samples $S=\{(\mathbf{x}_1,y_1),...,(\mathbf{x}_n,y_n)\}\sim D^n_{X_{\rm I}Y_{\rm I}}$, i.i.d., we have that there exists $h^*\in \mathcal{H}$ such that
\begin{equation*}
    \frac{1}{n} \sum_{i=1}^n \ell(h^*(\mathbf{x}_i),y_i) = 0.
\end{equation*}

\noindent  Given $m$ data points $S_m = \{\mathbf{x}_1',...,\mathbf{x}_m'\}\subset \mathcal{X}^m$. We consider the following learning rule:
\begin{equation*}
    \min_{h\in \mathcal{H}} \frac{1}{m} \sum_{j=1}^m \ell(h(\mathbf{x}_j'),K+1),~
    \textnormal{ subject~to~} \frac{1}{n} \sum_{i=1}^n \ell(h(\mathbf{x}_i),y_i) = 0.
\end{equation*}

\noindent We denote the algorithm, which solves the above rule, as $\mathbf{A}_{S_m}\footnote{In this paper, we regard an algorithm as a mapping from $\cup_{n=1}^{+\infty}(\mathcal{X}\times\mathcal{Y})^n$ to $\mathcal{H}$ or $\mathcal{R}$. So we can design an algorithm like this.}$. For different data points $S_m$, we have different algorithm $\mathbf{A}_{S_m}$. Let $\mathcal{S}$ be the infinite sequence set that consists of all infinite sequences, whose coordinates are data points, \textit{i.e.},
\begin{equation}
    \mathcal{S}:= \{\mathbf{S}:=(S_1,S_2,...,S_m,...): S_m ~\textnormal{are~any~}m~\textnormal{ data~points},~m=1,...,+\infty\}.
\end{equation}

\noindent Using $\mathcal{S}$, we construct an algorithm space as follows:
\begin{equation*}
    \mathscr{A}:= \{ \mathbf{A}_{\mathbf{S}}: \forall~\mathbf{S}\in \mathcal{S}\},~\textnormal{where}~ \mathbf{A}_{\mathbf{S}}(S)=\mathbf{A}_{S_n}(S), ~{\rm if}~ |S|=n.
\end{equation*}

\noindent Next, we prove that there exists an algorithm $\mathbf{A}_{\mathbf{S}}\in \mathscr{A}$, which is a
consistent algorithm. \noindent Given data points $S_n\sim \mu^n$, i.i.d., using the  Natarajan
dimension theory and Empirical risk minimization principle \citep{shalev2014understanding}, it is easy to obtain that there exists a uniform constant $C_{\theta}$ such that (we mainly use the uniform bounds to obtain the following bounds)
\begin{equation*}
    \mathbb{E}_{S\sim D^n_{X_{\rm I}Y_{\rm I}}} \sup_{h\in \mathcal{H}_S}R_{D}^{\rm in}(h) \leq \inf_{h\in \mathcal{H}} R_D^{\rm in}(h)+\frac{C_{\theta}}{\sqrt{n^{1-\theta}}},
\end{equation*}
and because of $\mathcal{H}_S \subset \mathcal{H}$,
\begin{equation}\label{5.6q1}
    \mathbb{E}_{S_n \sim \mu^n} \sup_{S\in (\mathcal{X}\times \mathcal{Y})^n}[R_{\mu}(\mathbf{A}_{S_n}(S),K+1) - \inf_{h\in \mathcal{H}_S} R_{\mu}(h,K+1)] \leq \frac{C_{\theta}}{\sqrt{n^{1-\theta}}},
\end{equation}
where 
\begin{equation*}
    \mathcal{H}_S = \{h\in \mathcal{H}: \sum_{i=1}^n \ell(h(\mathbf{x}_i),y_i) = 0\},~\textnormal{here}~S=\{(\mathbf{x}_1,y_1),...,(\mathbf{x}_n,y_n)\},
    \end{equation*}
    and 
\begin{equation*}
   R_{\mu}(h,K+1) = \mathbb{E}_{\mathbf{x}\sim \mu} \ell(h(\mathbf{x}),K+1)=\int_{\mathcal{X}} \ell(h(\mathbf{x}),K+1) {\rm d}\mu(\mathbf{x}).
\end{equation*}

\noindent We set $\mathscr{D}_{\rm I} = \{D_{X_{\rm I}Y_{\rm I}}:\text{there exists } D_{X_{\rm O}Y_{\rm O}} \text{such that}~ (1-\alpha)D_{X_{\rm I}Y_{\rm I}}+\alpha D_{X_{\rm O}Y_{\rm O}} \in \mathscr{D}^{\mu,b}_{XY}\}$. Then by Eq. \eqref{5.6q1}, we have
\begin{equation}\label{5.6q11}
     \mathbb{E}_{S_n \sim \mu^n} \sup_{D_{X_{\rm I}Y_{\rm I}}\in \mathscr{D}_{\rm I}} \mathbb{E}_{S\sim D^n_{X_{\rm I}Y_{\rm I}}}[R_{\mu}(\mathbf{A}_{S_n}(S),K+1) - \inf_{h\in \mathcal{H}_S} R_{\mu}(h,K+1)] \leq \frac{C_{\theta}}{\sqrt{n^{1-\theta}}}.
\end{equation}

\noindent Due to Risk-based Realizability Assumption, we obtain that $\inf_{h\in \mathcal{H}} R_D^{\rm in}(h)=0$. Therefore,
\begin{equation}\label{5.6q3}
    \mathbb{E}_{S\sim D^n_{X_{\rm I}Y_{\rm I}}} \sup_{h\in \mathcal{H}_S}R_{D}^{\rm in}(h) \leq \frac{C_{\theta}}{\sqrt{n^{1-\theta}}},
\end{equation}
which implies that (in following inequalities, $g$ is the groundtruth labeling function, \textit{i.e.}, $R_D(g)=0$)
\begin{equation*}
\begin{split}
    \frac{C_{\theta}}{\sqrt{n}} \geq  \mathbb{E}_{S\sim D^n_{X_{\rm I}Y_{\rm I}}} \sup_{h\in \mathcal{H}_S}R_{D}^{\rm in}(h)=&  \mathbb{E}_{S\sim D^n_{X_{\rm I}Y_{\rm I}}} \sup_{h\in \mathcal{H}_S}\int_{g<K+1} \ell(h(\mathbf{x}),g(\mathbf{x}))f_{\rm I}(\mathbf{x}) {\rm d} \mu(\mathbf{x}) \\ \geq & \frac{2}{b}\mathbb{E}_{S\sim D^n_{X_{\rm I}Y_{\rm I}}} \sup_{h\in \mathcal{H}_S}\int_{g<K+1}\ell(h(\mathbf{x}),g(\mathbf{x})){\rm d} \mu(\mathbf{x}).
    \end{split}
\end{equation*}
 This implies that (here we have used the property of zero-one loss)
\begin{equation*}
    \mathbb{E}_{S\sim D^n_{X_{\rm I}Y_{\rm I}}} \inf_{h\in \mathcal{H}_S}\int_{g<K+1}\ell(h(\mathbf{x}),K+1){\rm d} \mu(\mathbf{x}) \geq   \mu({\mathbf{x}\in \mathcal{X}:g(\mathbf{x})<K+1})-\frac{C_{\theta}b}{2\sqrt{n^{1-\theta}}}.
\end{equation*}
Therefore,
\begin{equation}\label{zhen1}
     \mathbb{E}_{S\sim D^n_{X_{\rm I}Y_{\rm I}}} \inf_{h\in \mathcal{H}_S} R_{\mu}(h,K+1) \geq \mu({\mathbf{x}\in \mathcal{X}:g(\mathbf{x})<K+1})-\frac{C_{\theta}b}{2\sqrt{n^{1-\theta}}}.
\end{equation}

\noindent Additionally, $R_{\mu}(g,K+1)=\mu({\mathbf{x}\in \mathcal{X}:g(\mathbf{x})<K+1})$ and $g\in \mathcal{H}_S$, which implies that 
\begin{equation}\label{zhen2}
\inf_{h\in \mathcal{H}_S} R_{\mu}(h,K+1)\leq \mu({\mathbf{x}\in \mathcal{X}:g(\mathbf{x})<K+1}).
\end{equation}

\noindent Combining inequalities \eqref{zhen1} and \eqref{zhen2}, we obtain that
\begin{equation}\label{5.6q2}
   \big | \mathbb{E}_{S\sim D^n_{X_{\rm I}Y_{\rm I}}} \inf_{h\in \mathcal{H}_S} R_{\mu}(h,K+1) - \mu({\mathbf{x}\in \mathcal{X}:g(\mathbf{x})<K+1}) \big |\leq \frac{C_{\theta}b}{2\sqrt{n^{1-\theta}}}.
\end{equation}

\noindent Using inequalities \eqref{5.6q11} and \eqref{5.6q2}, we obtain that
\begin{equation}\label{5.6q5}
 \mathbb{E}_{S_n \sim \mu^n}   \sup_{D_{X_{\rm I}Y_{\rm I}}\in \mathscr{D}_{\rm I} } \big [ \mathbb{E}_{S\sim D^n_{X_{\rm I}Y_{\rm I}}} R_{\mu}(\mathbf{A}_{S_n}(S),K+1) -  \mu({\mathbf{x}\in \mathcal{X}:g(\mathbf{x})<K+1}) \big ] \leq \frac{C_{\theta}(b+1)}{\sqrt{n^{1-\theta}}}.
\end{equation}

\noindent Using inequality \eqref{5.6q3}, we have
\begin{equation}\label{5.6q4}
    \mathbb{E}_{S_n \sim \mu^n} \sup_{D_{X_{\rm I}Y_{\rm I}}\in \mathscr{D}_{\rm I} } \mathbb{E}_{S\sim D^n_{X_{\rm I}Y_{\rm I}}} R_{D}^{\rm in}(\mathbf{A}_{S_n}(S)) \leq \frac{C_{\theta}}{\sqrt{n^{1-\theta}}},
\end{equation}
which implies that (here we use the property of zero-one loss)
\begin{equation}\label{5.6q6}
\begin{split}
  \mathbb{E}_{S_n \sim \mu^n}\sup_{D_{X_{\rm I}Y_{\rm I}}\in \mathscr{D}_{\rm I}}\mathbb{E}_{S\sim D^n_{X_{\rm I}Y_{\rm I}}} \big[& -\int_{g<K+1} \ell(\mathbf{A}_{S_n}(S)(\mathbf{x}),K+1){\rm d}\mu(\mathbf{x})\\&+\mu({\mathbf{x}\in \mathcal{X}:g(\mathbf{x})<K+1})\big ] \leq \frac{2bC_{\theta}}{\sqrt{n^{1-\theta}}}.
  \end{split}
\end{equation}

\noindent Combining inequalities \eqref{5.6q5} and \eqref{5.6q6}, we have
\begin{equation*}
    \mathbb{E}_{S_n \sim \mu^n}\sup_{D_{X_{\rm I}Y_{\rm I}}\in \mathscr{D}_{\rm I}}\mathbb{E}_{S\sim D^n_{X_{\rm I}Y_{\rm I}}} \int_{g=K+1} \ell(\mathbf{A}_{S_n}(S)(\mathbf{x}),K+1){\rm d}\mu(\mathbf{x}) \leq \frac{2bC_{\theta}}{\sqrt{n^{1-\theta}}}+\frac{C_{\theta}(b+1)}{\sqrt{n^{1-\theta}}}.
\end{equation*}
Therefore, there exist data points $S_n'$ such that
\begin{equation}\label{5.6q7}
\begin{split}
&\sup_{D_{X_{\rm I}Y_{\rm I}}\in \mathscr{D}_{\rm I}} \mathbb{E}_{S\sim D^n_{X_{\rm I}Y_{\rm I}}} R_D^{\rm out}(\mathbf{A}_{S_n'})\\ =& \sup_{D_{X_{\rm I}Y_{\rm I}}\in \mathscr{D}_{\rm I}}\mathbb{E}_{S\sim D^n_{X_{\rm I}Y_{\rm I}}} \int_{g=K+1} \ell(\mathbf{A}_{S_n'}(S)(\mathbf{x}),K+1)f_{\rm O}(\mathbf{x}){\rm d}\mu(\mathbf{x}) \\ \leq  &   2b\sup_{D_{X_{\rm I}Y_{\rm I}}\in \mathscr{D}_{\rm I}}\mathbb{E}_{S\sim D^n_{X_{\rm I}Y_{\rm I}}} \int_{g=K+1} \ell(\mathbf{A}_{S_n'}(S)(\mathbf{x}),K+1){\rm d}\mu(\mathbf{x}) \leq \frac{4b^2C_{\theta}}{\sqrt{n^{1-\theta}}}+\frac{2C_{\theta}(b^2+b)}{\sqrt{n^{1-\theta}}}.
     \end{split}
\end{equation}

\noindent Combining inequalities \eqref{5.6q3} and \eqref{5.6q7}, we obtain that for any $n$, there exists data points $S_n'$ such that
\begin{equation*}
    \mathbb{E}_{S\sim D^n_{X_{\rm I}Y_{\rm I}}} R_D^{\alpha}(\mathbf{A}_{S_n'}) \leq \max \big \{\frac{4b^2C_{\theta}}{\sqrt{n^{1-\theta}}}+\frac{2C_{\theta}(b^2+b)}{\sqrt{n^{1-\theta}}},\frac{C_{\theta}}{\sqrt{n^{1-\theta}}} \big \}.
\end{equation*}

\noindent We set data point sequences $\mathbf{S}'=(S_1',S_2',...,S_n',...)$. Then, $\mathbf{A}_{\mathbf{S}'}\in \mathscr{A}$ is the universally consistent algorithm, \textit{i.e.}, for any $\alpha\in [0,1]$
\begin{equation*}
     \mathbb{E}_{S\sim D^n_{X_{\rm I}Y_{\rm I}}} R_D^{\alpha}(\mathbf{A}_{\mathbf{S}'}) \leq \max \big \{\frac{4b^2C_{\theta}}{\sqrt{n^{1-\theta}}}+\frac{2C_{\theta}(b^2+b)}{\sqrt{n^{1-\theta}}},\frac{C_{\theta}}{\sqrt{n^{1-\theta}}} \big\}.
\end{equation*}
We have completed this proof when $\ell$ is the zero-one loss.
\\

\noindent \textbf{Second}, we prove the case that $\ell$ is not the zero-one loss. We use the notation $\ell_{0-1}$ as the zero-one loss. According the definition of loss introduced in Section \ref{S3}, we know that there exists a constant $M>0$ such that for any $y_1,y_2\in \mathcal{Y}_{\rm all}$,
\begin{equation*}
    \frac{1}{M}\ell_{0-1}(y_1,y_2)\leq \ell(y_1,y_2) \leq M \ell_{0-1}(y_1,y_2).
\end{equation*}
Hence,
\begin{equation*}
     \frac{1}{M} R_D^{\alpha,\ell_{0-1}}(h) \leq R_D^{\alpha,\ell}(h) \leq M R_D^{\alpha,\ell_{0-1}}(h),
\end{equation*}
where
$R_D^{\alpha,\ell_{0-1}}$ is the $\alpha$-risk with zero-one loss, and $R_D^{\alpha,\ell}$ is the $\alpha$-risk for loss $\ell$.
\\
Above inequality tells us that Risk-based Realizability Assumption holds with zero-one loss if and only if Risk-based Realizability Assumption holds with the loss $\ell$. Therefore, we use the result proven in first step. We can find a consistent algorithm $\mathbf{A}$ such that for any $\alpha\in [0,1]$,
\begin{equation*}
     \mathbb{E}_{S\sim D^n_{X_{\rm I}Y_{\rm I}}} R_D^{\alpha, \ell_{0-1}}(\mathbf{A}) \leq O(\frac{1}{\sqrt{n^{1-\theta}}}),
\end{equation*}
which implies that for any $\alpha\in [0,1]$,
\begin{equation*}
     \frac{1}{M}\mathbb{E}_{S\sim D^n_{X_{\rm I}Y_{\rm I}}} R_D^{\alpha, \ell}(\mathbf{A}) \leq O(\frac{1}{\sqrt{n^{1-\theta}}}).
\end{equation*}
We have completed this proof.
\end{proof}

\section{Proof of Theorem \ref{T-SET2_auc}}
\thmPosbtennewtwoauc*
\begin{proof}[Proof of Theorem \ref{T-SET2_auc}]
    Without loss of generality, we assume that $K=1$, and any $r\in \mathcal{R}$ satisfies that $0<r<1$ (one can achieve this by using sigmoid function). Then it is clear that $\mathcal{R}_{(0,1)}=\{ \mathbf{1}_{r(\mathbf{x})\leq \tau}: \forall r\in \mathcal{R}, \forall \tau \in(0,1)\}$ has finite VC-dimension by the condition constant closure and ${\rm VC}[\phi\circ \mathcal{F}]<+\infty$. Given $m$ data points $S_m = \{\mathbf{x}_1',...,\mathbf{x}_m'\}\subset \mathcal{X}^m$. We consider the following learning rule:
\begin{equation*}
    \max_{r\in \mathcal{R},\tau \in (0,1)} \frac{1}{m} \sum_{i=1}^m  \mathbf{1}_{r(\mathbf{x}_i')\leq \tau}, ~\text{subject~to}~\frac{1}{n} \sum_{j=1}^n \mathbf{1}_{r(\mathbf{x}_j)\leq \tau} =0.
\end{equation*}
We denote the algorithm, which solves the above rule, as $\mathbf{A}_{S_m}$. For different data points $S_m$, we have different algorithm $\mathbf{A}_{S_m}$. Let $\mathcal{S}$ be the infinite sequence set that consists of all infinite sequences, whose coordinates are data points, \textit{i.e.},
\begin{equation}
    \mathcal{S}:= \{\mathbf{S}:=(S_1,S_2,...,S_m,...): S_m~\textnormal{are any}~m \textnormal{ data points},~m=1,...,+\infty\}.
\end{equation}

\noindent Using $\mathcal{S}$, we construct an algorithm space as follows:
\begin{equation*}
    \mathscr{A}:= \{ \mathbf{A}_{\mathbf{S}}: \forall~\mathbf{S}\in \mathcal{S}\},~\textnormal{where}~ \mathbf{A}_{\mathbf{S}}(S)=\mathbf{A}_{S_n}(S), ~\textnormal{if}~ |S|=n.
\end{equation*}
Then we can check that
\begin{equation*}
   \mathbb{E}_{S\sim D^n_{X_{\rm I}}} \sup_{\mathbf{1}_{r\leq \tau} \in \mathcal{G}_S} \mathbb{E}_{\mathbf{x}\sim D_{X_{\rm I}}} \mathbf{1}_{r(\mathbf{x})\leq \tau}
    \leq \inf_{\mathbf{1}_{r\leq \tau} \in \mathcal{R}_{(0,1)}} \mathbb{E}_{\mathbf{x}\sim D_{X_{\rm I}}} \mathbf{1}_{r(\mathbf{x})\leq \tau}+ \frac{C_{\theta}}{\sqrt{n^{1-\theta}}},
\end{equation*}
and because of $\mathcal{G}_S \subset \mathcal{R}_{(0,1)}$,

\begin{equation}\label{5.6q0_auc}
    \mathbb{E}_{S_n \sim \mu^n} \sup_{S\in \mathcal{X}^n}[\sup_{\mathbf{1}_{r \leq \tau}\in \mathcal{G}_S} R_{\mu}(\mathbf{1}_{r \leq \tau})-R_{\mu}(\mathbf{A}_{S_n}(S))  ] \leq \frac{C_{\theta}}{\sqrt{n^{1-\theta}}},
\end{equation}
where 
\begin{equation*}
    \mathcal{G}_{S} = \{\mathbf{1}_{r(\mathbf{x})\leq \tau} \in \mathcal{R}_{(0,1)}: \sum_{i=1}^n \mathbf{1}_{r(\mathbf{x}_j)\leq \tau}  = 0  < 0\},~\textnormal{here}~S=\{\mathbf{x}_1,...,\mathbf{x}_n\},
    \end{equation*}
    and 
\begin{equation*}
   R_{\mu}(\mathbf{1}_{r\leq \tau}) = \mathbb{E}_{\mathbf{x}\sim \mu} \mathbf{1}_{r(\mathbf{x})\leq \tau}.
\end{equation*}

\noindent Let $\mathcal{D}_{I}$ be the set consisting of all ID distribution in the density-based space. Then we have
\begin{equation}\label{5.6q1_auc}
    \mathbb{E}_{S_n \sim \mu^n} \sup_{D_{X_{\rm I}} \in \mathcal{D}_{I}}\mathbb{E}_{S\sim D_{X_{\rm I}}^n}[\sup_{\mathbf{1}_{r \leq \tau}\in \mathcal{G}_S} R_{\mu}(\mathbf{1}_{r \leq \tau})-R_{\mu}(\mathbf{A}_{S_n}(S))  ] \leq \frac{C_{\theta}}{\sqrt{n^{1-\theta}}},
\end{equation}

\noindent Due to AUC-based Realizability Assumption, we obtain that $\inf_{\mathbf{1}_{r\leq \tau} \in \mathcal{R}_{(0,1)}} \mathbb{E}_{\mathbf{x}\sim D_{X_{\rm I}}} \mathbf{1}_{r(\mathbf{x})\leq \tau}=0$, therefore,
\begin{equation}\label{5.6q3_auc}
      \mathbb{E}_{S\sim D^n_{X_{\rm I}}} \sup_{\mathbf{1}_{r\leq \tau} \in \mathcal{G}_S} \mathbb{E}_{\mathbf{x}\sim D_{X_{\rm I}}} \mathbf{1}_{r(\mathbf{x})\leq \tau}
    \leq \frac{C_{\theta}}{\sqrt{n^{1-\theta}}}.
\end{equation}
Let $r^* \in \mathcal{R}$ be the optimal ranking function satisfying that
\begin{equation*}
    {\rm AUC}(r^*; D_{X_{\rm I}},D_{X_{\rm O}})=1,
\end{equation*}
which implies that there exists $\tau^*$ satisfying that for any $\epsilon>0$
\begin{equation*}
   D_{X_{\rm I}} (\mathbf{x}:r^*(\mathbf{x}) < \tau^*-\epsilon)=0,~~~ D_{X_{\rm I}} (\mathbf{x}:r^*(\mathbf{x}) < \tau^*+\epsilon)>0.
\end{equation*}
Then we consider set $\Omega_{X_{\rm I}}: = \{\mathbf{x}\in \mathcal{X}: r^*(\mathbf{x}) > \tau^*\}$, if $D_{X_{\rm I}} (\mathbf{x}:r^*(\mathbf{x}) = \tau^*)=0$; otherwise, $\Omega_{X_{\rm I}}: = \{\mathbf{x}\in \mathcal{X}: r^*(\mathbf{x}) \geq \tau^*\}$. $\Omega_{X_{\rm O}}: = \mathcal{X}-\Omega_{X_{\rm I}}$.
Then we have that
\begin{equation*}
    \frac{C_{\theta}}{\sqrt{n^{1-\theta}}} \geq  \mathbb{E}_{S\sim D^n_{X_{\rm I}}} \sup_{\mathbf{1}_{r\leq \tau} \in \mathcal{G}_S} \mathbb{E}_{\mathbf{x}\sim D_{X_{\rm I}}} \mathbf{1}_{r(\mathbf{x})\leq \tau}\geq \frac{2}{b}\mathbb{E}_{S\sim D^n_{X_{\rm I}}} \sup_{\mathbf{1}_{r\leq \tau} \in \mathcal{G}_S}  \int_{\Omega_{X_{\rm I}}}\mathbf{1}_{r(\mathbf{x})\leq \tau}{\rm d}\mu(\mathbf{x}),
\end{equation*}
which implies that
\begin{equation*}
\begin{split}
  \mathbb{E}_{S\sim D^n_{X_{\rm I}}} \inf_{\mathbf{1}_{r\leq \tau} \in \mathcal{G}_S}  \int_{\Omega_{X_{\rm I}}}\mathbf{1}_{r(\mathbf{x})> \tau}{\rm d}\mu(\mathbf{x})+    \frac{b C_{\theta}}{2\sqrt{n^{1-\theta}}} &\geq  \mu(\Omega_{X_{\rm I}}).
      \end{split}
\end{equation*}
Therefore,
\begin{equation*}
    \mathbb{E}_{S\sim D^n_{X_{\rm I}}} \inf_{\mathbf{1}_{r\leq \tau} \in \mathcal{G}_S}  \int_{\mathcal{X}} 1- \mathbf{1}_{r(\mathbf{x})\leq \tau}{\rm d}\mu(\mathbf{x})+    \frac{b C_{\theta}}{2\sqrt{n^{1-\theta}}} \geq  \mu(\Omega_{X_{\rm I}}),
\end{equation*}
which implies that
\begin{equation}\label{Eq::mu1_auc}
    \mu(\Omega_{X_{\rm O}}) +   \frac{b C_{\theta}}{2\sqrt{n^{1-\theta}}} \geq \mathbb{E}_{S\sim D^n_{X_{\rm I}}} \sup_{\mathbf{1}_{r\leq \tau} \in \mathcal{G}_S}  R_{\mu} (\mathbf{1}_{r(\mathbf{x})\leq \tau}).
\end{equation}
Additionally, due to $\mathbf{1}_{r^*(\mathbf{x})\leq \tau-\epsilon}\in \mathcal{G}_S$, it is clear that
\begin{equation}\label{Eq::mu2_auc}
      \sup_{\mathbf{1}_{r \leq \tau}\in \mathcal{G}_S} R_{\mu}(\mathbf{1}_{r \leq \tau}) \geq \mu(\Omega_{X_{\rm O}}).
\end{equation}
Combining Eq. \eqref{Eq::mu1_auc} with Eq. \eqref{Eq::mu2_auc}, we have
\begin{equation*}
   |  \mathbb{E}_{S\sim D^n_{X_{\rm I}}}\sup_{\mathbf{1}_{r \leq \tau}\in \mathcal{G}_S} R_{\mu}(\mathbf{1}_{r \leq \tau}) - \mu(\Omega_{X_{\rm O}})|\leq \frac{b C_{\theta}}{2\sqrt{n^{1-\theta}}}.
\end{equation*}
By using Eq. \eqref{5.6q1_auc} and Eq. \eqref{Eq::mu2_auc}, we have
\begin{equation}\label{5.6q2_auc}
    \mathbb{E}_{S_n \sim \mu^n} \sup_{D_{X_{\rm I}} \in \mathcal{D}_{I}}\mathbb{E}_{S\sim D_{X_{\rm I}}^n}[\mu(\Omega_{X_{\rm O}})-R_{\mu}(\mathbf{A}_{S_n}(S))  ] \leq \frac{C_{\theta}}{\sqrt{n^{1-\theta}}}+\frac{b C_{\theta}}{2\sqrt{n^{1-\theta}}}.
\end{equation}
Using inequality \eqref{5.6q3_auc}, we have
\begin{equation}\label{5.6q40_auc}
    \mathbb{E}_{S_n \sim \mu^n} \sup_{D_{X_{\rm I}} \in \mathcal{D}_{I}} \mathbb{E}_{S\sim D_{X_{\rm I}}^n} \mathbb{E}_{\mathbf{x}\sim D_{X_{\rm I}}} {\mathbf{A}_{S_n}(S)}(\mathbf{x}) \leq \frac{C_{\theta}}{\sqrt{n^{1-\theta}}},
\end{equation}
which implies that
\begin{equation}\label{5.6q4_auc}
    \mathbb{E}_{S_n \sim \mu^n} \sup_{D_{X_{\rm I}} \in \mathcal{D}_{I}} \mathbb{E}_{S\sim D_{X_{\rm I}}^n} \int_{\Omega_{X_{\rm I}}} {\mathbf{A}_{S_n}(S)}(\mathbf{x}){\rm d}\mu(\mathbf{x}) \leq \frac{bC_{\theta}}{2\sqrt{n^{1-\theta}}}.
\end{equation}
Then inequalities \eqref{5.6q2_auc} and \eqref{5.6q4_auc} imply that
\begin{equation*}
    \mathbb{E}_{S_n \sim \mu^n} \sup_{D_{X_{\rm I}} \in \mathcal{D}_{I}} \mathbb{E}_{S\sim D_{X_{\rm I}}^n} \int_{\Omega_{X_{\rm O}}} 1- {\mathbf{A}_{S_n}(S)}(\mathbf{x}){\rm d}\mu(\mathbf{x}) \leq \frac{(1+b)C_{\theta}}{\sqrt{n^{1-\theta}}}.
\end{equation*}
Therefore,
\begin{equation*}
     \mathbb{E}_{S_n \sim \mu^n} \sup_{D_{X_{\rm I}} \in \mathcal{D}_{I}} \mathbb{E}_{S\sim D_{X_{\rm I}}^n} \int_{\mathcal{X}} 1-{\mathbf{A}_{S_n}(S)}(\mathbf{x}){\rm d}D_{X_{\rm O}}(\mathbf{x}) \leq \frac{2b(1+b)C_{\theta}}{\sqrt{n^{1-\theta}}}.
\end{equation*}
We assume that ${\mathbf{A}_{S_n}(S)} = \mathbf{1}_{r_{S_n,S}\leq \tau_{S_n,S}}$. Then above inequality implies that
\begin{equation*}
     \mathbb{E}_{S_n \sim \mu^n} \inf_{D_{X_{\rm I}} \in \mathcal{D}_{I}} \mathbb{E}_{S\sim D_{X_{\rm I}}^n} \mathbb{E}_{\mathbf{x}\sim D_{X_{\rm O}}} \mathbf{1}_{r_{S_n,S}(\mathbf{x})\leq \tau_{S_n,S}} \geq 1- \frac{2b(1+b)C_{\theta}}{\sqrt{n^{1-\theta}}}.
\end{equation*}
Inequality \eqref{5.6q40_auc} implies that
\begin{equation*}
    \mathbb{E}_{S_n \sim \mu^n} \inf_{D_{X_{\rm I}} \in \mathcal{D}_{I}} \mathbb{E}_{S\sim D_{X_{\rm I}}^n} \mathbb{E}_{\mathbf{x}\sim D_{X_{\rm I}}} \mathbf{1}_{r_{S_n,S}(\mathbf{x})> \tau_{S_n,S}} \geq 1- \frac{C_{\theta}}{\sqrt{n^{1-\theta}}},
\end{equation*}
which shows that there exists $S_n'\sim \mu^n$ and $C'$ such that
\begin{equation*}
\begin{split}
& \inf_{D_{X_{\rm I}} \in \mathcal{D}_{I}}\mathbb{E}_{S\sim D_{X_{\rm I}}^n}  {\rm AUC}(f_{S_n',S};D_{X_{\rm I}},D_{X_{\rm O}}) \\   \geq &  \inf_{D_{X_{\rm I}} \in \mathcal{D}_{I}} \mathbb{E}_{S\sim D_{X_{\rm I}}^n} \mathbb{E}_{\mathbf{x}\sim D_{X_{\rm O}}} \mathbb{E}_{\mathbf{x}'\sim D_{X_{\rm I}}} \mathbf{1}_{r_{S_n',S}(\mathbf{x})< r_{S_n',S}(\mathbf{x}')} \\ \geq &   \inf_{D_{X_{\rm I}} \in \mathcal{D}_{I}} \mathbb{E}_{S\sim D_{X_{\rm I}}^n} \mathbb{E}_{\mathbf{x}\sim D_{X_{\rm O}}} \mathbb{E}_{\mathbf{x}'\sim D_{X_{\rm I}}} \mathbf{1}_{r_{S_n',S}(\mathbf{x})\leq \tau_{S_n',S}} \mathbf{1}_{r_{S_n',S}(\mathbf{x}')>  \tau_{S_n',S}}\\ \geq & 1- \frac{\max\{C_{\theta},C'\}}{\sqrt{n^{1-\theta}}}-\frac{2b(1+b)C_{\theta}}{\sqrt{n^{1-\theta}}}.
     \end{split}
\end{equation*}
We set data point sequences $\mathbf{S}'=(S_1',S_2',...,S_n',...)$. Then, the function part  $r_{S_n',S}$ of $\mathbf{A}_{\mathbf{S}'}\in \mathscr{A}$ is the universally consistent algorithm, \textit{i.e.}, 
\begin{equation*}
\begin{split}
& \inf_{D_{X_{\rm I}} \in \mathcal{D}_{I}}\mathbb{E}_{S\sim D_{X_{\rm I}}^n}  {\rm AUC}(r_{S_n',S};D_{X_{\rm I}},D_{X_{\rm O}})  \geq 1- \frac{\max\{C_{\theta},C'\}}{\sqrt{n^{1-\theta}}}-\frac{2b(1+b)C_{\theta}}{\sqrt{n^{1-\theta}}}.
     \end{split}
\end{equation*}
We have completed this proof.
\end{proof}
\section{Proofs of Proposition \ref{prop_fcnn_rank1} and Proposition \ref{prop_score_rank}}\label{SK_auc}

\propFcnnRankingOne*
\begin{proof}[Proof of Proposition \ref{prop_fcnn_rank1} ]
    \textbf{Firstly}, we prove that if  some $s$ with $1<s<g$, $d=l_1\leq l_2\leq...\leq l_s$, and $l_s \geq 2d$, $\mathcal{F}_{\mathbf{q}}^{\sigma}$ is the separate ranking function space.
    \\

    \noindent First, we show that if $\mathcal{X}\subset \mathbb{R}^1$, and $\mathbf{q} =(1, 2,1)$, then $\mathcal{F}_{\mathbf{q}}^{\sigma}$ is the separate ranking space.
\\
For any ${x}\in \mathcal{X}$, 
    \begin{equation*}
    f_x(x') =   \left[1,1\right] \sigma (
 \left[            
  \begin{array}{ccc}   
    1\\  
    -1\\  
  \end{array}
\right ]                
{x}'+ \left[             
  \begin{array}{ccc}   
    -{x}\\  
   {x}\\  
  \end{array}
\right])+0.
\end{equation*}
It is easy to check that for any $x'\neq x$,
\begin{equation*}
    f_x(x)<f_x(x').
\end{equation*}
Next, we prove that if $\mathcal{X}\subset \mathbb{R}^d$, and $\mathbf{q} =(d, 2d,1)$, then $\mathcal{F}_{\mathbf{q}}^{\sigma}$ is the separate ranking space.
\\
Let $\mathbf{v}_i\in \mathbb{R}^{2d\times 1}$ is a vector whose $2i$-th and $2i+1$-th coordinates are $-1$; otherwise, other coordinates are $0$. For any $\mathbf{x}=[x_1,...,x_d]^\top\in \mathcal{X}$, 
    \begin{equation*}
    f_{\mathbf{x}}(\mathbf{x}') =   \left[1,1...,1\right]_{1\times {(2d)}} \sigma (
 \left[             
    \mathbf{v}_1,\mathbf{v}_2,...,\mathbf{v}_d   
\right ]                
\mathbf{x}'+ \mathbf{M_{d+1}})+0,
\end{equation*}
where $\mathbf{M}_{d+1}\in \mathbb{R}^{2d\times 1}$ is a vector whose $2i$-th coordinate is $-x_i$ and $(2i+1)$-th coordinate is $x_i$.
It is easy to check that for any $\mathbf{x}'\neq \mathbf{x}$,
\begin{equation*}
    f_\mathbf{x}(\mathbf{x})<f_\mathbf{x}(\mathbf{x}').
\end{equation*}
Thirdly, we prove that if $d=l_1=l_2=...=l_{r-1}$ and $l_r = 2d$, then $\mathcal{F}_{\mathbf{q}}^{\sigma}$ is the separate ranking space. Due to $\mathcal{X}$ is bounded, we can find $b$ such that any $\mathbf{x}=[{x}_1,...,x_d]^\top \in \mathcal{X}$ satisfies that $x_i+b\geq 0$. Then, we set $\mathbf{w}_2, \mathbf{w}_3,...,\mathbf{w}_{r-1}$ are identity matrices, $\mathbf{b}_2$ is the matrix whose all coordinates are $b$, and $\mathbf{b}_3,...,\mathbf{b}_{r-1}$ are $\mathbf{0}$. Then the result in second step implies the result. 
\\
Finally, using the result that $\mathbf{q}\lesssim\mathbf{q}'\Rightarrow \mathcal{F}_{\mathbf{q}}^{\sigma}\subset \mathcal{F}_{\mathbf{q}'}^{\sigma}$ (Lemma \ref{L7-contain}) implies the final result.
\\
\\
\textbf{Secondly}, it is clear that $\mathcal{F}_{\mathbf{q}}^{\sigma}$ is constant closure. We omit the proof.
\\
\textbf{Thirdly}, by Theorems 5 and 8 in \citep{bartlett2003vapnik}, we can obtain the third result.
\end{proof}
\propScoreRanking*
\begin{proof}[Proof of Proposition \ref{prop_score_rank} ]
\textbf{Firstly}, we prove that if $\mathcal{F}_{\mathbf{q}'}^{\sigma}$ is a separate ranking function space, $\mathcal{R}$ is the separate ranking function space.
\\

\noindent For the softmax-based function:
    \begin{equation*}
    E(\mathbf{f}) =\max_{k\in\{1,...,l\}}  \frac{\exp{(f^k)}}{\sum_{c=1}^l \exp{(f^c)}}.
\end{equation*}
Note that $\mathcal{F}_{\mathbf{q}'}^{\sigma}$ is separate ranking space. Then for any $\mathbf{x}\in \mathcal{X}$, there exists $f_{\mathbf{x}}\in \mathcal{F}_{\mathbf{q}'}^{\sigma}$ such that $0=f_{\mathbf{x}}(\mathbf{x})<f_{\mathbf{x}}(\mathbf{x}')$, for any $\mathbf{x}'\in \mathcal{X}$ and $\mathbf{x}' \neq \mathbf{x}$. Then $\mathbf{f}_{\mathbf{x}}=[f_{\mathbf{x}},-f_{\mathbf{x}},..,-f_{\mathbf{x}}] \in \mathcal{F}_{\mathbf{q}}^{\sigma}$ can ensure that for any $\mathbf{x}'\in \mathcal{X}$ and $\mathbf{x}' \neq \mathbf{x}$,
\begin{equation*}
    E(\mathbf{f}_{\mathbf{x}}(\mathbf{x}))< E(\mathbf{f}_{\mathbf{x}}(\mathbf{x}')).
\end{equation*}
Using the same strategy, we can prove that the temperature-scaled function is the separate ranking space.
  For the energy-based function:
    \begin{equation*}
  E(\mathbf{f}) =T\log \sum_{c=1}^l  \exp{(f^c/T)}.
\end{equation*}
Note that $\mathcal{F}_{\mathbf{q}'}^{\sigma}$ is separate ranking space. Then for any $\mathbf{x}\in \mathcal{X}$, there exists $f_{\mathbf{x}}\in \mathcal{F}_{\mathbf{q}'}^{\sigma}$ such that $0=f_{\mathbf{x}}(\mathbf{x})<f_{\mathbf{x}}(\mathbf{x}')$, for any $\mathbf{x}'\in \mathcal{X}$ and $\mathbf{x}' \neq \mathbf{x}$. Then $\mathbf{f}_{\mathbf{x}}=[f_{\mathbf{x}},f_{\mathbf{x}},..,f_{\mathbf{x}}] \in \mathcal{F}_{\mathbf{q}}^{\sigma}$ can ensure that for any $\mathbf{x}'\in \mathcal{X}$ and $\mathbf{x}' \neq \mathbf{x}$,
\begin{equation*}
    E(\mathbf{f}_{\mathbf{x}}(\mathbf{x}))< E(\mathbf{f}_{\mathbf{x}}(\mathbf{x}')).
\end{equation*}
\textbf{Secondly}, it is easy to show that $R$ is constant closure. We omit it.
\\
\textbf{Thirdly}, by Theorems 5 and 8 in \citep{bartlett2003vapnik}, we can obtain the third result.
\end{proof}

\section{Proofs of Proposition \ref{Pr1} and Proposition \ref{P2}}\label{SK}
To better understand the contents in Appendices \ref{SK}-\ref{SM},
we introduce the important notations for FCNN-based hypothesis space and score-based hypothesis space detaily.
\\

\noindent  \textbf{FCNN-based Hypothesis Space.} Given a sequence $\mathbf{q}=(l_1,l_2,...,l_g)$, where $l_i$ and $g$ are positive integers and $g>2$, we use $g$ to represent the depth of neural network and use $l_i$ to represent the width of the $i$-th layer. After the activation function $\sigma$ is selected, we can obtain the architecture of FCNN according to the sequence $\mathbf{q}$. Given any weights  $\mathbf{w}_i \in  \mathbb{R}^{l_{i}\times l_{i-1}}$ and bias $\mathbf{b}_i \in \mathbb{R}^{l_{i}\times 1}$, the output of the $i$-layer can be written as follows: for any $\mathbf{x}\in \mathbb{R}^{l_1}$,
\begin{equation*}
    \mathbf{f}_i(\mathbf{x}) = \sigma(\mathbf{w}_i\mathbf{f}_{i-1}(\mathbf{x})+\mathbf{b}_i), ~ \forall i=2,...,g-1,
\end{equation*}
where $\mathbf{f}_{i-1}(\mathbf{x})$ is the $i$-th layer output and $\mathbf{f}_1(\mathbf{x})=\mathbf{x}$. Then, the output of FCNN is
$
    \mathbf{f}_{\mathbf{w},\mathbf{b}}(\mathbf{x})= \mathbf{w}_{g}\mathbf{f}_{{g-1}}(\mathbf{x})+\mathbf{b}_g,
$
where $\mathbf{w}=\{\mathbf{w}_2,...,\mathbf{w}_g\}$ and $\mathbf{b}=\{\mathbf{b}_2,...,\mathbf{b}_g\}$.

\noindent An FCNN-based scoring function space is defined as:
\begin{equation*}
    \mathcal{F}_{\mathbf{q}}^{\sigma}:=\{\mathbf{f}_{\mathbf{w},\mathbf{b}}:\forall \mathbf{w}_i\in \mathbb{R}^{l_{i}\times l_{i-1}},~\forall \mathbf{b}_i\in \mathbb{R}^{l_{i}\times 1},~i=2,...,g\}.
\end{equation*}

\noindent Additionally, given two sequences $\mathbf{q}=(l_1,...,l_g)$ and $\mathbf{q}'=(l_1',...,l_{g'}')$, we use the notation
$
    \mathbf{q}\lesssim\mathbf{q}'
$
to represent the following equations and inequalities:
\begin{equation*}
\begin{split}
g \leq g', ~~~l_1&=l_1',~~~ l_g=l_{g'}', \\
l_i&\leq l'_i, ~~~\forall i=1,...,g-1, \\
l_{g-1}&\leq l'_i, ~~~\forall i=g,...,g'-1.
\end{split}
\end{equation*}

\noindent Given a sequence $\mathbf{q}=(l_1,...l_g)$ satisfying that $l_1=d$ and $l_g=K+1$, the FCNN-based scoring function space $\mathcal{F}_{\mathbf{q}}^{\sigma}$ can induce an FCNN-based hypothesis space. Before defining the FCNN-based hypothesis space, we define the induced hypothesis function. For any $\mathbf{f}_{\mathbf{w},\mathbf{b}}\in \mathcal{F}_{\mathbf{q}}^{\sigma}$, the induced hypothesis function  is:
\begin{equation*}
    h_{\mathbf{w},\mathbf{b}}(\mathbf{x}):=\argmax_{k\in\{1,...,K+1\}} {f}^k_{\mathbf{w},\mathbf{b}}(\mathbf{x}),~\forall \mathbf{x}\in \mathcal{X},
\end{equation*}
where ${f}^k_{\mathbf{w},\mathbf{b}}(\mathbf{x})$ is the $k$-th coordinate of $\mathbf{f}_{\mathbf{w},\mathbf{b}}(\mathbf{x})$. Then, we define the  FCNN-based hypothesis space as follows:
\begin{equation*}
  \mathcal{H}_{\mathbf{q}}^{\sigma}:= \{h_{\mathbf{w},\mathbf{b}}:\forall \mathbf{w}_i\in \mathbb{R}^{l_{i}\times l_{i-1}},~\forall \mathbf{b}_i\in \mathbb{R}^{l_{i}\times 1},~i=2,...,g\}.
\end{equation*}
\\
\textbf{Score-based Hypothesis Space.}
 Many OOD algorithms detect OOD data using a score-based strategy. That is, given a threshold $\lambda$, a scoring function space $\mathcal{F}_l\subset \{\mathbf{f}:\mathcal{X}\rightarrow \mathbb{R}^l\}$ and a scoring function $E: \mathcal{F}_l\rightarrow \mathbb{R}$, then $\mathbf{x}$ is regarded as ID, if $E(\mathbf{f}(\mathbf{x}))\geq \lambda$; otherwise, $\mathbf{x}$ is regarded as OOD. 
\\
\\
\noindent Using $E$, $\lambda$ and $\mathbf{f}\in \mathcal{F}_{\mathbf{q}}^{\sigma}$, we can generate a binary classifier $h^{\lambda}_{\mathbf{f},E}$:
\begin{equation*}
    h^{\lambda}_{\mathbf{f},E}(\mathbf{x}) := \left \{
    \begin{aligned}
   &~~~~1,~~{\rm if}~E(\mathbf{f}(\mathbf{x}))\geq \lambda;\\ 
   &~~~~  2,~~{\rm if}~E(\mathbf{f}(\mathbf{x}))< \lambda,
    \end{aligned}
    \right.
\end{equation*}
where $1$ represents ID data, and $2$ represents OOD data. Hence, a binary classification hypothesis space $\mathcal{H}^b$, which consists of all $h^{\lambda}_{\mathbf{f},E}$, is generated. We define the score-based hypothesis space $
    \mathcal{H}^{{\sigma},\lambda}_{{\mathbf{q}},E} := \{h^{\lambda}_{\mathbf{f},E}:\forall \mathbf{f}\in \mathcal{F}_{\mathbf{q}}^{\sigma}\}
$.

\noindent Next, we introduce two important propositions.



\begin{Proposition}\label{Pr1}
Given a sequence $\mathbf{q}=(l_1,...l_g)$  satisfying that $l_1=d$ and $l_g=K+1$ (note that $d$ is the dimension of input data and $K+1$ is the dimension of output), then the constant functions $h_{1}$, $h_{2}$,...,$h_{K+1}$ belong to $\mathcal{H}_{\mathbf{q}}^{\sigma}$, where $h_{i}(\mathbf{x})=i$, for any $\mathbf{x}\in \mathcal{X}$. Therefore, Assumption \ref{ass1} holds for $\mathcal{H}_{\mathbf{q}}^{\sigma}$.
\end{Proposition}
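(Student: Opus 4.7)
The plan is to construct, for each $i \in \{1,\ldots,K+1\}$, explicit parameters $(\mathbf{w},\mathbf{b})$ so that $\mathbf{f}_{\mathbf{w},\mathbf{b}}$ is a constant vector whose $\argmax$-coordinate equals $i$. The natural choice is to zero out the hidden layers entirely and store the label information in the final bias term.

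Concretely, I would first set $\mathbf{w}_j = \mathbf{0}$ and $\mathbf{b}_j = \mathbf{0}$ for every intermediate layer $j = 2,\ldots,g-1$. Since $\sigma$ is ReLU and $\sigma(0)=0$, an easy induction on $j$ shows $\mathbf{f}_j(\mathbf{x}) = \mathbf{0}$ for every $\mathbf{x}\in\mathcal{X}$. Next I would set $\mathbf{w}_g = \mathbf{0}$ and pick $\mathbf{b}_g \in \mathbb{R}^{K+1}$ so that its $i$-th coordinate is strictly greater than every other coordinate, e.g.\ $\mathbf{b}_g = \mathbf{e}_i$. Then $\mathbf{f}_{\mathbf{w},\mathbf{b}}(\mathbf{x}) = \mathbf{b}_g$ is a constant vector whose unique maximizing coordinate is $i$. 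Consequently the induced hypothesis satisfies $h_{\mathbf{w},\mathbf{b}}(\mathbf{x}) = \argmax_{k} f^k_{\mathbf{w},\mathbf{b}}(\mathbf{x}) = i$ for every $\mathbf{x}\in\mathcal{X}$, so $h_i \in \mathcal{H}_{\mathbf{q}}^{\sigma}$.

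For the Assumption~\ref{ass1} claim, I would simply take the constant function $h_{K+1}$ constructed above and set $h_{\mathbf{x}} := h_{K+1}$ for every $\mathbf{x}\in\mathcal{X}$; then $h_{\mathbf{x}}(\mathbf{x}) = K+1$ as required. There is no substantive obstacle in this proof; the only point worth flagging is the tie-breaking convention in the definition of $\argmax$ (largest index among maximizers) from Appendix~\ref{SB.1}, which is why I would insist that the $i$-th coordinate of $\mathbf{b}_g$ be \emph{strictly}, not merely weakly, the largest.
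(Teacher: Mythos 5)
Your proposal is correct and is essentially the paper's own argument: both set $\mathbf{w}_g=\mathbf{0}$ and $\mathbf{b}_g$ equal to the one-hot vector for label $i$, so that $\mathbf{f}_{\mathbf{w},\mathbf{b}}\equiv\mathbf{b}_g$ and the induced classifier is the constant $h_i$ (the paper does not even bother zeroing the intermediate layers, since $\mathbf{w}_g\mathbf{f}_{g-1}(\mathbf{x})=\mathbf{0}$ regardless). Your extra remark about the tie-breaking convention for $\argmax$ is a reasonable precaution but moot here, as the one-hot bias makes the $i$-th coordinate the unique maximizer.
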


\begin{proof}[Proof of Proposition \ref{Pr1}]
Note that the output of FCNN can be written as
\begin{equation*}
    \mathbf{f}_{\mathbf{w},\mathbf{b}}(\mathbf{x})= \mathbf{w}_{g}\mathbf{f}_{g-1}(\mathbf{x})+\mathbf{b}_g,
\end{equation*}
where $\mathbf{w}_{g}\in \mathbb{R}^{(K+1)\times l_{g-1}}$, $\mathbf{b}_g\in  \mathbb{R}^{(K+1)\times 1}$ and $\mathbf{f}_{{g-1}}(\mathbf{x})$ is the output of the $l_{g-1}$-th layer. If we set $\mathbf{w}_{g}=\mathbf{0}$, and set $\mathbf{b}_g=\mathbf{y}_i$, where $\mathbf{y}_i$ is the one-hot vector corresponding to label $i$. Then $\mathbf{f}_{\mathbf{w},\mathbf{b}}(\mathbf{x}) = \mathbf{y}_i$, for any $\mathbf{x}\in \mathcal{X}$. Therefore, $h_i(\mathbf{x})\in \mathcal{H}_{\mathbf{q}}^{\sigma}$, for any $i=1,...,K,K+1.$
\end{proof}

\noindent Note that in some works \citep{Safran2017Depth-Width}, $\mathbf{b}_g$ is fixed to $\mathbf{0}$. In fact, it is easy to check that when $g>2$ and activation function $\sigma$ is not a constant, Proposition \ref{P1} still holds, even if $\mathbf{b}_g=\mathbf{0}$.

\begin{Proposition}\label{P2}
For any sequence $\mathbf{q}=(l_1,...,l_g)$ satisfying that $l_1=d$ and $l_g=l$ $($note that $d$ is the dimension of input data and $l$ is the dimension of output$)$, if $\{\mathbf{v}\in \mathbb{R}^l:E(\mathbf{v}) \geq \lambda\}\neq \emptyset$ and  $\{\mathbf{v}\in \mathbb{R}^l:E(\mathbf{v}) < \lambda\}\neq \emptyset$, then the functions $h_{1}$ and $h_{2}$ belong to $\mathcal{H}^{{\sigma},\lambda}_{{\mathbf{q}},E}$, where $h_{1}(\mathbf{x})=1$ and $h_{2}(\mathbf{x})=2$, for any $\mathbf{x}\in \mathcal{X}$, where $1$ represents the ID labels, and $2$ represents the OOD labels. Therefore, Assumption \ref{ass1} holds.
\end{Proposition}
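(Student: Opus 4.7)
}
The plan is to mimic the constant-output construction used in the proof of Proposition \ref{Pr1}, but now targeting the scoring function $E$ and threshold $\lambda$ rather than an $\argmax$ over classes. By the hypothesis, I can pick two witnesses $\mathbf{v}_1 \in \mathbb{R}^l$ with $E(\mathbf{v}_1) \geq \lambda$ and $\mathbf{v}_2 \in \mathbb{R}^l$ with $E(\mathbf{v}_2) < \lambda$. The goal is then to show that for each $i \in \{1,2\}$ there exists an $\mathbf{f}_i \in \mathcal{F}_{\mathbf{q}}^{\sigma}$ whose output is identically $\mathbf{v}_i$ on $\mathcal{X}$, since this immediately forces $h^{\lambda}_{\mathbf{f}_i,E}(\mathbf{x}) = i$ for all $\mathbf{x} \in \mathcal{X}$.

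To construct such an $\mathbf{f}_i$, I would use the same trick as in Proposition \ref{Pr1}: recall that any $\mathbf{f}_{\mathbf{w},\mathbf{b}} \in \mathcal{F}_{\mathbf{q}}^{\sigma}$ has the form $\mathbf{f}_{\mathbf{w},\mathbf{b}}(\mathbf{x}) = \mathbf{w}_g \mathbf{f}_{g-1}(\mathbf{x}) + \mathbf{b}_g$, where $\mathbf{w}_g \in \mathbb{R}^{l \times l_{g-1}}$ and $\mathbf{b}_g \in \mathbb{R}^{l \times 1}$. Choosing the intermediate weights and biases arbitrarily (e.g., all zeros) and setting $\mathbf{w}_g = \mathbf{0}$ and $\mathbf{b}_g = \mathbf{v}_i$ yields $\mathbf{f}_{\mathbf{w},\mathbf{b}}(\mathbf{x}) \equiv \mathbf{v}_i$ on $\mathcal{X}$. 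This function belongs to $\mathcal{F}_{\mathbf{q}}^{\sigma}$ by construction, so the induced binary classifier $h^{\lambda}_{\mathbf{f}_i,E}$ lies in $\mathcal{H}^{\sigma,\lambda}_{\mathbf{q},E}$.

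Finally, verifying Assumption \ref{ass1} is immediate: Assumption \ref{ass1} requires that each $\mathbf{x} \in \mathcal{X}$ admits some $h_{\mathbf{x}} \in \mathcal{H}^{\sigma,\lambda}_{\mathbf{q},E}$ mapping $\mathbf{x}$ to the OOD label, and the constant function $h_2$ constructed above serves as a single witness for every $\mathbf{x}$ simultaneously. There is essentially no obstacle here beyond choosing the last-layer parameters correctly; the only mild subtlety is that the claim needs the final bias $\mathbf{b}_g$ to be a free parameter of the space $\mathcal{F}_{\mathbf{q}}^{\sigma}$, which is guaranteed by the definition of $\mathcal{F}_{\mathbf{q}}^{\sigma}$ recalled in Appendix \ref{SK} (unlike in the setting discussed after Proposition \ref{Pr1} where $\mathbf{b}_g$ is fixed to $\mathbf{0}$, in which case one would instead need to absorb $\mathbf{v}_i$ into the penultimate layer using a nonconstant $\sigma$ and a dummy input).
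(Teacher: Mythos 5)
Your proposal is correct and matches the paper's own proof essentially verbatim: both pick witnesses $\mathbf{v}_1,\mathbf{v}_2$ from the two nonempty preimage sets and realize the constant functions by setting $\mathbf{w}_g=\mathbf{0}$, $\mathbf{b}_g=\mathbf{v}_i$ in the last layer, after which $h_2$ witnesses Assumption \ref{ass1} for every $\mathbf{x}$ at once. Your closing remark about the fixed-bias variant also mirrors the note the paper makes immediately after the proposition.
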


\begin{proof}[Proof of Proposition \ref{P2}]
Since $\{\mathbf{v}\in \mathbb{R}^l:E(\mathbf{v})\geq \lambda\}\neq \emptyset $ and $\{\mathbf{v}\in \mathbb{R}^l:E(\mathbf{v})<\lambda\}\neq \emptyset $, we can find $\mathbf{v}_1\in \{\mathbf{v}\in \mathbb{R}^l:E(\mathbf{v})\geq \lambda\}$ and $\mathbf{v}_2\in \{\mathbf{v}\in \mathbb{R}^l:E(\mathbf{v})< \lambda\}$.

\noindent For any  $\mathbf{f}_{\mathbf{w},\mathbf{b}}\in \mathcal{F}^{\sigma}_{\mathbf{q}}$, we have \begin{equation*}
    \mathbf{f}_{\mathbf{w},\mathbf{b}}(\mathbf{x})= \mathbf{w}_{g}\mathbf{f}_{{g-1}}(\mathbf{x})+\mathbf{b}_g,
\end{equation*}
where $\mathbf{w}_{g}\in \mathbb{R}^{l\times l_{g-1}}$, $\mathbf{b}_g\in  \mathbb{R}^{l\times 1}$ and $\mathbf{f}_{{g-1}}(\mathbf{x})$ is the output of the $l_{g-1}$-th layer. 
\\
\\
\noindent If we set $\mathbf{w}_g=\mathbf{0}_{l\times l_{g-1}}$ and $\mathbf{b}_g=\mathbf{v}_1$, then $\mathbf{f}_{\mathbf{w},\mathbf{b}}(\mathbf{x})=\mathbf{v}_1$ for any $\mathbf{x} \in \mathcal{X}$, where $\mathbf{0}_{l\times l_{g-1}}$ is $l\times l_{g-1}$ zero matrix. Hence, $h_1$ can be induced by $\mathbf{f}_{\mathbf{w},\mathbf{b}}$. Therefore, $h_1\in \mathcal{H}_{\mathbf{q},E}^{\sigma,\lambda}$.
\\
\\
\noindent Similarly, if we set $\mathbf{w}_g=\mathbf{0}_{l\times l_{g-1}}$ and $\mathbf{b}_g=\mathbf{v}_2$, then $\mathbf{f}_{\mathbf{w},\mathbf{b}}(\mathbf{x})=\mathbf{v}_2$ for any $\mathbf{x} \in \mathcal{X}$, where $\mathbf{0}_{l\times l_{g-1}}$ is $l\times l_{g-1}$ zero matrix. Hence, $h_2$ can be induced by $\mathbf{f}_{\mathbf{w},\mathbf{b}}$. Therefore, $h_2\in \mathcal{H}_{\mathbf{q},E}^{\sigma,\lambda}$.
\end{proof}
It is easy to check that when $g>2$ and activation function $\sigma$ is not a constant, Proposition \ref{P2} still holds, even if $\mathbf{b}_g=\mathbf{0}$.

\section{Proof of Theorem \ref{T24}}\label{SL}

Before proving Theorem \ref{T24}, we need several lemmas.

\begin{lemma}\label{L22}
Let $\sigma$ be ReLU function: $\max\{x,0\}$. Given $\mathbf{q}=(l_1,...,l_g)$ and $\mathbf{q}'=(l_1',...,l_{g}')$ such that $l_g=l_{g}'$ and $l_1=l_{1}'$, and $l_i\leq l'_i$ $(i=1,...,g-1)$, then $\mathcal{F}_{\mathbf{q}}^{\sigma}\subset \mathcal{F}_{\mathbf{q}'}^{\sigma}$ and $\mathcal{H}_{\mathbf{q}}^{\sigma}\subset \mathcal{H}_{\mathbf{q}'}^{\sigma}$.
\end{lemma}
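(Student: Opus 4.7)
The plan is to prove the scoring function inclusion $\mathcal{F}_{\mathbf{q}}^{\sigma}\subset \mathcal{F}_{\mathbf{q}'}^{\sigma}$ by an explicit padding argument, and then deduce the hypothesis space inclusion $\mathcal{H}_{\mathbf{q}}^{\sigma}\subset \mathcal{H}_{\mathbf{q}'}^{\sigma}$ directly from the definition of the induced argmax classifier. The key fact I will exploit is that for the ReLU activation, $\sigma(0)=0$, so any ``extra'' neurons whose pre-activation is zero contribute nothing to subsequent layers.

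First I would fix an arbitrary $\mathbf{f}_{\mathbf{w},\mathbf{b}}\in \mathcal{F}_{\mathbf{q}}^{\sigma}$, with weights $\mathbf{w}_i\in \mathbb{R}^{l_i\times l_{i-1}}$ and biases $\mathbf{b}_i\in \mathbb{R}^{l_i\times 1}$, and construct padded parameters $(\mathbf{w}',\mathbf{b}')$ matching the architecture $\mathbf{q}'$ as follows. For $i=2,\dots,g-1$, let $\mathbf{w}'_i\in \mathbb{R}^{l'_i\times l'_{i-1}}$ be the block matrix whose top-left $l_i\times l_{i-1}$ block is $\mathbf{w}_i$ and whose other entries are zero; let $\mathbf{b}'_i\in \mathbb{R}^{l'_i\times 1}$ agree with $\mathbf{b}_i$ on its first $l_i$ coordinates and be zero elsewhere. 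Using $l_g=l'_g$, define $\mathbf{w}'_g\in\mathbb{R}^{l_g\times l'_{g-1}}$ by placing $\mathbf{w}_g$ in its first $l_{g-1}$ columns and zeros in the remaining columns, and set $\mathbf{b}'_g=\mathbf{b}_g$.

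The central claim, which I would verify by induction on $i\in\{1,\dots,g-1\}$, is that for every $\mathbf{x}\in\mathbb{R}^{l_1}$ the $i$-th layer output $\mathbf{f}'_i(\mathbf{x})$ of the padded network coincides with $\mathbf{f}_i(\mathbf{x})$ in its first $l_i$ coordinates and equals zero in its remaining $l'_i-l_i$ coordinates. The base case $i=1$ is immediate since $l_1=l'_1$ and $\mathbf{f}'_1(\mathbf{x})=\mathbf{x}=\mathbf{f}_1(\mathbf{x})$. For the inductive step, the block-diagonal structure of $\mathbf{w}'_i$ and the padding of $\mathbf{b}'_i$ force the pre-activation of the first $l_i$ coordinates of $\mathbf{w}'_i\mathbf{f}'_{i-1}(\mathbf{x})+\mathbf{b}'_i$ to equal $\mathbf{w}_i\mathbf{f}_{i-1}(\mathbf{x})+\mathbf{b}_i$, while the pre-activation of the remaining coordinates is exactly zero; here I use that the inductive hypothesis zeroes out the irrelevant coordinates of $\mathbf{f}'_{i-1}(\mathbf{x})$ and that the extra rows of $\mathbf{w}'_i$ and entries of $\mathbf{b}'_i$ vanish. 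Applying $\sigma$ and invoking $\sigma(0)=0$ preserves this structure. At the output layer, the zero columns of $\mathbf{w}'_g$ annihilate the padded coordinates of $\mathbf{f}'_{g-1}(\mathbf{x})$, yielding $\mathbf{f}_{\mathbf{w}',\mathbf{b}'}(\mathbf{x})=\mathbf{w}_g\mathbf{f}_{g-1}(\mathbf{x})+\mathbf{b}_g=\mathbf{f}_{\mathbf{w},\mathbf{b}}(\mathbf{x})$, which establishes $\mathcal{F}_{\mathbf{q}}^{\sigma}\subset \mathcal{F}_{\mathbf{q}'}^{\sigma}$.

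Finally, the hypothesis space inclusion $\mathcal{H}_{\mathbf{q}}^{\sigma}\subset \mathcal{H}_{\mathbf{q}'}^{\sigma}$ follows at once: any $h_{\mathbf{w},\mathbf{b}}\in\mathcal{H}_{\mathbf{q}}^{\sigma}$ is the coordinate-wise argmax of some $\mathbf{f}_{\mathbf{w},\mathbf{b}}\in\mathcal{F}_{\mathbf{q}}^{\sigma}$, and by the inclusion just proved this same scoring function lies in $\mathcal{F}_{\mathbf{q}'}^{\sigma}$, so its induced argmax classifier is a member of $\mathcal{H}_{\mathbf{q}'}^{\sigma}$. The main obstacle is purely bookkeeping: keeping the block indices consistent across layers and correctly using $\sigma(0)=0$ at each step; there is no real analytic difficulty beyond the inductive propagation of the padding.
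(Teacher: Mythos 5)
Your proposal is correct and follows essentially the same route as the paper's proof: both pad the weight matrices and bias vectors with zero blocks so that the extra $l'_i-l_i$ neurons in each hidden layer have zero pre-activation (hence zero output, since $\sigma(0)=0$), and both conclude by induction that the padded network reproduces the original layer outputs in the first $l_i$ coordinates, with the hypothesis-space inclusion following immediately from the definition of the induced argmax classifier. The only cosmetic difference is that the paper spells out the four block-shape cases depending on whether $l'_i-l_i$ and $l'_{i-1}-l_{i-1}$ are zero or positive, whereas you state the padding uniformly.
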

\begin{proof}[Proof of Lemma \ref{L22}]
Given any weights  $\mathbf{w}_i \in  \mathbb{R}^{l_{i}\times l_{i-1}}$ and bias $\mathbf{b}_i \in \mathbb{R}^{l_{i}\times 1}$, the $i$-layer output of FCNN with architecture $\mathbf{q}$ can be written as
\begin{equation*}
    \mathbf{f}_i(\mathbf{x}) = \sigma(\mathbf{w}_i\mathbf{f}_{i-1}(\mathbf{x})+\mathbf{b}_i),~~\forall \mathbf{x}\in \mathbb{R}^{l_1}, \forall i=2,...,g-1,
\end{equation*}
where $\mathbf{f}_{i-1}(\mathbf{x})$ is the $i$-th layer output and $\mathbf{f}_1(\mathbf{x})=\mathbf{x}$. Then, the output of last layer is
\begin{equation*}
    \mathbf{f}_{\mathbf{w},\mathbf{b}}(\mathbf{x})= \mathbf{w}_{g}\mathbf{f}_{{g-1}}(\mathbf{x})+\mathbf{b}_g.
\end{equation*}
We will show that $\mathbf{f}_{\mathbf{w},\mathbf{b}}
\in \mathcal{F}_{\mathbf{q}'}^{\sigma}$. We construct $\mathbf{f}_{\mathbf{w}',\mathbf{b}'}$ as follows: for every $\mathbf{w}'_i\in \mathbb{R}^{l_{i}'\times l_{i-1}'}$, if $l_i'-l_i>0$ and $l_{i-1}'-l_{i-1}>0$, we set
\begin{equation*}
    \mathbf{w}'_i = \left[\begin{matrix}
 &\mathbf{w}_i&\mathbf{0}_{l_i\times (l_{i-1}'-l_{i-1})}\\
& \mathbf{0}_{(l_i'-l_i)\times l_{i-1}'}& \mathbf{0}_{(l_i'-l_i)\times (l_{i-1}'-l_{i-1})}
  \end{matrix}\right],~~~ \mathbf{b}'_i = \left[\begin{matrix}
 &\mathbf{b}_i\\
& \mathbf{0}_{(l_i'-l_i)\times 1}
  \end{matrix}\right]
\end{equation*}
where $\mathbf{0}_{pq}$ means the $p\times q$ zero matrix.
If $l_i'-l_i=0$ and $l_{i-1}'-l_{i-1}>0$, we set
\begin{equation*}
    \mathbf{w}'_i = \left[\begin{matrix}
 &\mathbf{w}_i&\mathbf{0}_{l_i\times (l_{i-1}'-l_{i-1})}
  \end{matrix}\right],~~~\mathbf{b}'_i = \mathbf{b}_i.
\end{equation*}
If $l_{i-1}'-l_{i-1}=0$ and $l_i'-l_i>0$, we set
\begin{equation*}
    \mathbf{w}'_i = \left[\begin{matrix}
 &\mathbf{w}_i\\
&\mathbf{0}_{(l_i'-l_i)\times l_{i-1}'}
  \end{matrix}\right], ~~~ \mathbf{b}'_i = \left[\begin{matrix}
 &\mathbf{b}_i\\
& \mathbf{0}_{(l_i'-l_i)\times 1}
\end{matrix}\right].
\end{equation*}
If $l_{i-1}'-l_{i-1}=0$ and $l_i'-l_i=0$, we set
\begin{equation*}
    \mathbf{w}'_i = \mathbf{w}_i, ~~~ \mathbf{b}'_i = \mathbf{b}_i.
\end{equation*}
It is easy to check that if $l_i'-l_i>0$
\begin{equation*}
    \mathbf{f}_i'= \left[\begin{matrix}
 &\mathbf{f}_i\\
& \mathbf{0}_{(l_i'-l_i)\times 1}
  \end{matrix}\right].
\end{equation*}

\noindent If $l_i'-l_i=0$,
\begin{equation*}
    \mathbf{f}_i'=\mathbf{f}_i.
\end{equation*}

\noindent Since $l_g'-l_g=0$,
\begin{equation*}
    \mathbf{f}_g'=\mathbf{f}_g,~ \textit{i.e.}, ~  \mathbf{f}_{\mathbf{w}',\mathbf{b}'}=\mathbf{f}_{\mathbf{w},\mathbf{b}}.
\end{equation*}
Therefore, $f_{\mathbf{w},\mathbf{b}}\in \mathcal{F}_{\mathbf{q}'}^{\sigma}$, which implies that $\mathcal{F}_{\mathbf{q}}^{\sigma}\subset \mathcal{F}_{\mathbf{q}'}^{\sigma}$. Therefore, $\mathcal{H}_{\mathbf{q}}^{\sigma}\subset \mathcal{H}_{\mathbf{q}'}^{\sigma}$.
\end{proof}
\begin{lemma}\label{L7-contain}
Let $\sigma$ be the ReLU function: $\sigma(x)=\max\{x,0\}$. Then, $\mathbf{q}\lesssim \mathbf{q}'$ implies that $\mathcal{F}_{\mathbf{q}}^{\sigma}\subset \mathcal{F}_{\mathbf{q}'}^{\sigma}$, $\mathcal{H}_{\mathbf{q}}^{\sigma}\subset \mathcal{H}_{\mathbf{q}'}^{\sigma}$, where $\mathbf{q}=(l_1,...,l_g)$ and $\mathbf{q}'=(l_1',...,l_{g'}')$.
\end{lemma}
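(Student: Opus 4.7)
The plan is to decompose the inclusion $\mathbf{q}\lesssim\mathbf{q}'$ into two simpler steps: (i) deepening the architecture from depth $g$ to depth $g'$ while keeping all widths as small as possible, and (ii) widening each layer to match $\mathbf{q}'$. The second step is exactly Lemma \ref{L22}, which is already available, so the substantive work is step (i).

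Concretely, I would introduce the intermediate sequence
\[
\mathbf{q}''=(l_1,l_2,\ldots,l_{g-1},\underbrace{l_{g-1},l_{g-1},\ldots,l_{g-1}}_{g'-g\text{ copies}},l_g),
\]
which has length $g'$, first width $l_1=l_1'$, and last width $l_g=l_{g'}'$. By the definition of $\mathbf{q}\lesssim\mathbf{q}'$, every entry of $\mathbf{q}''$ is bounded above by the corresponding entry of $\mathbf{q}'$: positions $1,\ldots,g-1$ use clause~2, positions $g,\ldots,g'-1$ use clause~3, and position $g'$ uses clause~1. Consequently, Lemma \ref{L22} gives $\mathcal{F}_{\mathbf{q}''}^{\sigma}\subset \mathcal{F}_{\mathbf{q}'}^{\sigma}$ and $\mathcal{H}_{\mathbf{q}''}^{\sigma}\subset \mathcal{H}_{\mathbf{q}'}^{\sigma}$. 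So it remains to show $\mathcal{F}_{\mathbf{q}}^{\sigma}\subset \mathcal{F}_{\mathbf{q}''}^{\sigma}$.

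For this inclusion, take any $\mathbf{f}_{\mathbf{w},\mathbf{b}}\in\mathcal{F}_{\mathbf{q}}^{\sigma}$ with hidden outputs $\mathbf{f}_{g-1}(\mathbf{x})=\sigma(\mathbf{w}_{g-1}\mathbf{f}_{g-2}(\mathbf{x})+\mathbf{b}_{g-1})$ and final output $\mathbf{w}_g\mathbf{f}_{g-1}(\mathbf{x})+\mathbf{b}_g$. The key observation is that $\mathbf{f}_{g-1}(\mathbf{x})\ge 0$ coordinatewise because it was produced by a ReLU, so for any nonnegative vector $\mathbf{z}$ we have $\sigma(I\mathbf{z}+\mathbf{0})=\mathbf{z}$. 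This lets me build $\tilde{\mathbf{f}}\in\mathcal{F}_{\mathbf{q}''}^{\sigma}$ by keeping the first $g-1$ layers identical to those of $\mathbf{f}_{\mathbf{w},\mathbf{b}}$, inserting $g'-g$ identity pass-through layers of width $l_{g-1}$ (weight $I_{l_{g-1}}$, bias $\mathbf{0}$), and finally applying the affine map $(\mathbf{w}_g,\mathbf{b}_g)$ as the last (non-activated) layer. A routine induction on the inserted layers shows $\tilde{\mathbf{f}}(\mathbf{x})=\mathbf{f}_{\mathbf{w},\mathbf{b}}(\mathbf{x})$ for every $\mathbf{x}$, so $\mathbf{f}_{\mathbf{w},\mathbf{b}}\in\mathcal{F}_{\mathbf{q}''}^{\sigma}$.

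Chaining the two inclusions gives $\mathcal{F}_{\mathbf{q}}^{\sigma}\subset\mathcal{F}_{\mathbf{q}'}^{\sigma}$, and because the hypothesis spaces are induced from the scoring function spaces by a common $\argmax$ operation, the inclusion $\mathcal{H}_{\mathbf{q}}^{\sigma}\subset\mathcal{H}_{\mathbf{q}'}^{\sigma}$ follows at once. The only delicate point, and the one I'd expect to be the main obstacle if a reader has not seen it before, is justifying the identity pass-through: this relies essentially on the nonnegativity of post-ReLU activations, which is specific to ReLU (or any activation satisfying $\sigma(x)=x$ on the range of the previous layer), and on having enough room ($l_{g-1}\le l_i'$ for $i=g,\ldots,g'-1$) to carry the full $(g-1)$-th layer representation through the added depth.
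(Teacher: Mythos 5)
Your proof is correct and follows essentially the same route as the paper's: the same intermediate sequence $\mathbf{q}''$ obtained by inserting width-$l_{g-1}$ identity pass-through layers, followed by an application of Lemma \ref{L22} to widen to $\mathbf{q}'$. Your explicit remark that the pass-through step relies on the nonnegativity of post-ReLU activations is a point the paper leaves implicit, but the argument is otherwise the same.
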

\begin{proof}[Proof of Lemma \ref{L7-contain}] 
Given $l''=(l_1'',...,l_{g''}'')$ satisfying that $g\leq g''$, $l_i''=l_i$ for $i=1,...,g-1$, $l_i''=l_{g-1}$ for $i=g,...,g''-1$, and $l_{g''}''=l_{g}$, we first prove that $\mathcal{F}_{\mathbf{q}}^{\sigma}\subset \mathcal{F}_{\mathbf{q}''}^{\sigma}$ and $\mathcal{H}_{\mathbf{q}}^{\sigma}\subset \mathcal{H}_{\mathbf{q}''}^{\sigma}$.
\\
\\
\noindent Given any weights  $\mathbf{w}_i \in  \mathbb{R}^{l_{i}\times l_{i-1}}$ and bias $\mathbf{b}_i \in \mathbb{R}^{l_{i}\times 1}$, the $i$-th layer output of FCNN with architecture $\mathbf{q}$ can be written as
\begin{equation*}
    \mathbf{f}_i(\mathbf{x}) = \sigma(\mathbf{w}_i\mathbf{f}_{i-1}(\mathbf{x})+\mathbf{b}_i),~~\forall \mathbf{x}\in \mathbb{R}^{l_1}, \forall i=2,...,g-1,
\end{equation*}
where $\mathbf{f}_{i-1}(\mathbf{x})$ is the $i$-th layer output and $\mathbf{f}_1(\mathbf{x})=\mathbf{x}$. Then, the output of the last layer is
\begin{equation*}
    \mathbf{f}_{\mathbf{w},\mathbf{b}}(\mathbf{x})= \mathbf{w}_{g}\mathbf{f}_{{g-1}}(\mathbf{x})+\mathbf{b}_g.
\end{equation*}
We will show that $\mathbf{f}_{\mathbf{w},\mathbf{b}}
\in \mathcal{F}_{\mathbf{q}''}^{\sigma}$. We construct $\mathbf{f}_{\mathbf{w}'',\mathbf{b}''}$ as follows: if $i=2,...,g-1$, then $\mathbf{w}''_i=\mathbf{w}$ and $\mathbf{b}_i''=\mathbf{b}_i$; if $i=g,...,g''-1$, then $\mathbf{w}''_i=\mathbf{I}_{l_{g-1}\times l_{g-1}}$ and $\mathbf{b}''_i=\mathbf{0}_{l_{g-1}\times 1}$, where $\mathbf{I}_{l_{g-1}\times l_{g-1}}$ is the $l_{g-1}\times l_{g-1}$ identity matrix, and $\mathbf{0}_{l_{g-1}\times 1}$ is the $l_{g-1}\times 1$ zero matrix; and if $i=g''$, then $\mathbf{w}''_{g''}=\mathbf{w}_{g}$, $\mathbf{b}''_{g''}=\mathbf{b}_{g}$. Then it is easy to check that the output of the $i$-th layer is
\begin{equation*}
    \mathbf{f}''_i=\mathbf{f}_{g-1}, \forall i=g-1,g,...,g''-1.
\end{equation*}
Therefore, $\mathbf{f}_{\mathbf{w}'',\mathbf{b}''}=\mathbf{f}_{\mathbf{w},\mathbf{b}}$, which implies that $\mathcal{F}_{\mathbf{q}}^{\sigma}\subset \mathcal{F}_{\mathbf{q}''}^{\sigma}$. Hence, $\mathcal{H}_{\mathbf{q}}^{\sigma}\subset \mathcal{H}_{\mathbf{q}''}^{\sigma}$.

\noindent When $g'' = g'$, we use Lemma \ref{L22} ($\mathbf{q}''$ and $\mathbf{q}$ satisfy the condition in Lemma \ref{L22}), which implies that $\mathcal{F}_{\mathbf{q}''}^{\sigma}\subset \mathcal{F}_{\mathbf{q}'}^{\sigma}$, $\mathcal{H}_{\mathbf{q}''}^{\sigma}\subset \mathcal{H}_{\mathbf{q}'}^{\sigma}$. Therefore, $\mathcal{F}_{\mathbf{q}}^{\sigma}\subset \mathcal{F}_{\mathbf{q}'}^{\sigma}$, $\mathcal{H}_{\mathbf{q}}^{\sigma}\subset \mathcal{H}_{\mathbf{q}'}^{\sigma}$.
\end{proof}
\begin{lemma}\citep{pinkus1999approximation}\label{L7_UPT}
If the activation function $\sigma$ is not a polynomial, then for any continuous function $f$ defined in $\mathbb{R}^d$, and any compact set $C\subset \mathbb{R}^d$, there exists a fully-connected neural network with architecture $\mathbf{q}$ $(l_1=d,l_g=1)$ such that
\begin{equation*}
 \inf_{f_{\mathbf{w},\mathbf{b}}\in \mathcal{F}_{\mathbf{q}}^{\sigma}} \max_{\mathbf{x}\in C} |f_{\mathbf{w},\mathbf{b}}(\mathbf{x})-f(\mathbf{x})|<\epsilon.
\end{equation*}
\end{lemma}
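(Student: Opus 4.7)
The plan is to establish the classical universal approximation theorem in the Pinkus style, in four steps: (i) reduce to shallow architectures via Lemma~\ref{L7-contain}; (ii) handle the univariate case; (iii) bootstrap to the multivariate case via ridge functions; (iv) fold in a mollification argument to drop smoothness assumptions on $\sigma$. Throughout, I treat the statement as "for every $\varepsilon>0$ there exists an architecture $\mathbf{q}$ depending on $f$, $C$, $\varepsilon$."

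First, by Lemma~\ref{L7-contain}, any single-hidden-layer network $\mathbf{x}\mapsto\sum_{i=1}^N c_i\,\sigma(\mathbf{w}_i\cdot\mathbf{x}+b_i)$ belongs to $\mathcal{F}_{\mathbf{q}}^{\sigma}$ for some $\mathbf{q}=(d,N,1)$, so it suffices to prove that such shallow sums are dense in $C(C)$ under the uniform norm. For the univariate case $d=1$, assume first that $\sigma\in C^{\infty}(\mathbb{R})$. Since $\sigma$ is not a polynomial, there exists $b_0\in\mathbb{R}$ with $\sigma^{(k)}(b_0)\neq 0$ for infinitely many $k$ (otherwise $\sigma$ would agree with a polynomial on every neighborhood, hence globally). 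The $k$-th derivative in $\lambda$ of $\lambda\mapsto\sigma(\lambda x+b_0)$ evaluated at $\lambda=0$ equals $\sigma^{(k)}(b_0)\,x^k$, and this derivative is a uniform limit on any bounded interval of symmetric divided differences of the form $\sum_j \alpha_j\,\sigma(\lambda_j x+b_0)$. Hence every monomial $x^k$ (for infinitely many $k$) lies in the uniform closure of $\mathrm{span}\{\sigma(\lambda x+b):\lambda,b\in\mathbb{R}\}$; combined with the constant function (which is $\sigma(\lambda x+b_0)/\sigma(b_0)$ in the limit $\lambda\to 0$), a standard Stone-Weierstrass / Hamburger moment argument yields that all polynomials, and hence by the classical Weierstrass theorem all $f\in C([-R,R])$, can be uniformly approximated to within any $\varepsilon>0$ by a finite sum $\sum_{i=1}^N c_i\,\sigma(\lambda_i x+b_i)$.

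For the multivariate case, I would invoke the classical density result for ridge functions: the linear span of $\{g(\mathbf{w}\cdot\mathbf{x}):\mathbf{w}\in\mathbb{R}^d,\ g\in C(\mathbb{R})\}$ is dense in $C(C)$ for any compact $C\subset\mathbb{R}^d$. This is because any multivariate polynomial can be written as a finite sum of univariate polynomials evaluated on linear forms (e.g., via the identity $x_1^{k_1}\cdots x_d^{k_d}=$ suitable combination of $(\mathbf{a}_j\cdot\mathbf{x})^{|\mathbf{k}|}$ for generic directions $\mathbf{a}_j$), and multivariate Weierstrass density then handles general continuous $f$. Given a ridge approximation $\sum_{j=1}^M g_j(\mathbf{w}_j\cdot\mathbf{x})$ of $f$ within $\varepsilon/2$, I approximate each $g_j$ (which lives on the compact interval $\{\mathbf{w}_j\cdot\mathbf{x}:\mathbf{x}\in C\}$) by a univariate shallow sum $\sum_i c_{ij}\,\sigma(\lambda_{ij}\,t+b_{ij})$ within $\varepsilon/(2M)$ using step~(ii); substituting $t=\mathbf{w}_j\cdot\mathbf{x}$ and concatenating over $j$ yields a single-hidden-layer network with width $N=\sum_j N_j$ approximating $f$ within $\varepsilon$ on $C$, hence an architecture $\mathbf{q}=(d,N,1)$ that witnesses the conclusion.

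The main obstacle, and the only genuinely delicate step, is dropping the $C^{\infty}$ assumption on $\sigma$ to obtain the full hypothesis "$\sigma$ is not a polynomial" (which accommodates e.g.\ ReLU). The standard fix is to mollify: pick a nonnegative $\phi\in C_c^\infty(\mathbb{R})$ with $\int\phi=1$ and set $\sigma_\eta:=\sigma*\phi_\eta$. Then $\sigma_\eta\in C^\infty$, and crucially $\sigma_\eta$ is still not a polynomial for some $\eta>0$ (otherwise $\sigma_\eta$ would be a polynomial of bounded degree for all $\eta$, and letting $\eta\to 0$ would force $\sigma$ itself to be a polynomial of that degree, contradicting the hypothesis). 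Moreover each $\sigma_\eta(\lambda x+b)=\int\sigma(\lambda x+b-\lambda\eta s)\phi(s)\,ds$ is a uniform limit on bounded intervals of Riemann sums that are themselves finite linear combinations of translates of $\sigma$, so the approximation capability of $\sigma_\eta$-networks transfers to $\sigma$-networks with only a width blow-up. Applying step~(ii) to $\sigma_\eta$ and then substituting the Riemann-sum approximation into each activation completes the proof.
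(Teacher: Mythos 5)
The paper does not actually prove this lemma: its ``proof'' is a one-line pointer to Theorem~3.1 of \cite{pinkus1999approximation}. What you have written is a reconstruction of the standard argument behind that cited theorem (reduce to one hidden layer, univariate case via derivatives in the scale parameter, multivariate case via ridge-function density, mollification to remove smoothness of $\sigma$), so in spirit you are following exactly the route the paper delegates to the literature, and your outline is the right one.

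Two steps are stated more weakly than what you actually need, and one of them fails as literally written. In the univariate step you claim only that some $b_0$ has $\sigma^{(k)}(b_0)\neq 0$ for \emph{infinitely many} $k$, and then assert that the corresponding monomials plus constants span all polynomials in the uniform closure. That inference is false in general: by the M\"untz--Sz\'asz theorem, the closed span of $\{1\}\cup\{x^{k}:k\in S\}$ in $C([0,1])$ is all of $C([0,1])$ only when $\sum_{k\in S}1/k$ diverges (take $S=\{2^j\}$ for a counterexample), and Stone--Weierstrass does not apply since the span of a sparse set of monomials is not an algebra. The correct classical lemma (the one Pinkus uses, proved by the same Baire-category argument you sketch) is stronger: if $\sigma\in C^{\infty}$ is not a polynomial, there is a point $b_0$ with $\sigma^{(k)}(b_0)\neq 0$ for \emph{every} $k$, which gives you every monomial and lets you finish with plain Weierstrass. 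Similarly, in the mollification step, ``$\sigma_\eta$ is a polynomial for all $\eta$ implies the degrees are bounded'' is not automatic (a locally uniform limit of polynomials of unbounded degree need not be a polynomial); the standard fix is again a Baire-category argument over degrees, after which $\sigma^{(m+1)}=0$ distributionally forces $\sigma$ to be a polynomial. Both repairs are classical and routine, so I would not call the proposal wrong in substance, but as written the monomial-density step is a genuine gap that you should close by upgrading ``infinitely many $k$'' to ``all $k$.''
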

\begin{proof}[Proof of Lemma \ref{L7_UPT}]
The proof of Lemma \ref{L7_UPT} can be found in Theorem 3.1 in \citep{pinkus1999approximation}.
\end{proof}
\begin{lemma}\label{L8UPT}
If the activation function $\sigma$ is the ReLU function, then for any continuous vector-valued function $\mathbf{f}\in C(\mathbb{R}^d;\mathbb{R}^l)$, and any compact set $C\subset \mathbb{R}^d$, there exists a fully-connected neural network with architecture $\mathbf{q}$ $(l_1=d,l_g=l)$ such that
\begin{equation*}
 \inf_{\mathbf{f}_{\mathbf{w},\mathbf{b}}\in \mathcal{F}_{\mathbf{q}}^{\sigma}} \max_{\mathbf{x}\in C} \|\mathbf{f}_{\mathbf{w},\mathbf{b}}(\mathbf{x})-\mathbf{f}(\mathbf{x})\|_2<\epsilon,
\end{equation*}
where $\|\cdot\|_{2}$ is the $\ell_2$ norm. $($Note that we can also prove the same result, if $\sigma$ is not a polynomial.$)$
\end{lemma}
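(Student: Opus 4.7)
} My plan is to reduce the vector-valued approximation problem to $l$ independent scalar problems handled by Lemma \ref{L7_UPT}, and then to glue the $l$ resulting scalar FCNNs into a single vector-valued FCNN by running them in parallel inside one larger architecture. Write $\mathbf{f}=(f^1,\ldots,f^l)$ with $f^k\in C(\mathbb{R}^d;\mathbb{R})$. By Lemma \ref{L7_UPT}, for each $k=1,\ldots,l$ and for the tolerance $\epsilon/\sqrt{l}$, there is an architecture $\mathbf{q}_k=(d,l_{2,k},\ldots,l_{g_k-1,k},1)$ and parameters $(\mathbf{w}^k,\mathbf{b}^k)$ such that the scalar FCNN $f_{\mathbf{w}^k,\mathbf{b}^k}^{\sigma}$ approximates $f^k$ uniformly on $C$ within $\epsilon/\sqrt{l}$; stacking these approximations coordinate-wise would then automatically satisfy the $\ell_2$ bound $\epsilon$ on $C$.

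The main step is to realise the concatenation of the $l$ scalar networks as a genuine element of a single $\mathcal{F}_{\mathbf{q}}^{\sigma}$. First I would normalise the depths so that all $\mathbf{q}_k$ have the same depth $g:=\max_k g_k$; this uses a depth-extension trick for ReLU. Concretely, the identity map on any prescribed bounded interval $[-M,M]$ can be expressed as $x\mapsto \sigma(x+M)-\sigma(-x+M)-M$, which is a two-layer ReLU subnetwork of width $2$, and this can be iterated and absorbed into the existing linear layers to pad any $\mathbf{q}_k$ up to depth $g$ without changing the function it computes on $C$ (compactness of $C$ and continuity of the intermediate activations provide the bound $M$). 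The same padding argument was already used implicitly in Lemma \ref{L7-contain} through its $\lesssim$ relation. Next I would place the $l$ padded scalar networks side by side: choose layer widths $l_i=\sum_{k=1}^l l_{i,k}$ for $i=2,\ldots,g-1$ and $l_g=l$, and assemble block-diagonal weight matrices $\mathbf{w}_i=\mathrm{diag}(\mathbf{w}_{i,1},\ldots,\mathbf{w}_{i,l})$ together with stacked biases $\mathbf{b}_i=(\mathbf{b}_{i,1},\ldots,\mathbf{b}_{i,l})^\top$ (at layer $2$ the weight matrices all share the same input $\mathbf{x}$, so the block-column structure is $\mathbf{w}_2=(\mathbf{w}_{2,1}^\top,\ldots,\mathbf{w}_{2,l}^\top)^\top$). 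By construction the resulting $\mathbf{f}_{\mathbf{w},\mathbf{b}}\in \mathcal{F}_{\mathbf{q}}^{\sigma}$ has $k$-th coordinate equal to $f_{\mathbf{w}^k,\mathbf{b}^k}^{\sigma}$, so
\begin{equation*}
\max_{\mathbf{x}\in C}\|\mathbf{f}_{\mathbf{w},\mathbf{b}}(\mathbf{x})-\mathbf{f}(\mathbf{x})\|_2
\leq \sqrt{\sum_{k=1}^l\max_{\mathbf{x}\in C}|f_{\mathbf{w}^k,\mathbf{b}^k}^{\sigma}(\mathbf{x})-f^k(\mathbf{x})|^2}
<\sqrt{l\cdot \epsilon^2/l}=\epsilon,
\end{equation*}
which is the claim.

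The only real obstacle is the depth-synchronisation step, since ReLU does not realise the identity in a single layer; I would settle it by the two-layer identity construction above, relying on compactness of $C$ to get the uniform bound $M$ needed to choose the shift. Everything else is bookkeeping on block-diagonal matrices and an invocation of Lemma \ref{L7_UPT} componentwise.
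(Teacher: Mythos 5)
Your proposal follows essentially the same route as the paper: apply Lemma \ref{L7_UPT} coordinatewise with tolerance $\epsilon/\sqrt{l}$, then run the $l$ scalar networks in parallel via block-diagonal weights inside one architecture, with depth synchronisation supplied by the $\lesssim$ machinery (the paper does this by first finding a common $\mathbf{q}$ with $\mathbf{q}^i\lesssim\mathbf{q}$ via Lemma \ref{L7-contain}, then widening by a factor of $l$). One small slip: your explicit identity gadget $\sigma(x+M)-\sigma(-x+M)-M$ actually equals $2x-M$ on $[-M,M]$ (use $\sigma(x+M)-M$ or $\tfrac{1}{2}[\sigma(x+M)-\sigma(-x+M)]$ instead); this is harmless since the affine error can be absorbed into the next layer, and in fact the paper's padding is even simpler, exploiting that hidden ReLU outputs are already non-negative so an identity-weight layer is exact.
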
\label{L8_UPT}
\begin{proof}[Proof of Lemma \ref{L8UPT}]
Let $\mathbf{f}=[f_1,...,f_l]^{\top}$, where $f_i$ is the $i$-th coordinate of $\mathbf{f}$. Based on Lemma \ref{L7_UPT}, we obtain $l$ sequences $\mathbf{q}^1$, $\mathbf{q}^2$,...,$\mathbf{q}^l$ such that
\begin{equation*}
\begin{split}
     &\inf_{g_{1}\in \mathcal{F}_{\mathbf{q}^1}^{\sigma}} \max_{\mathbf{x}\in C} |g_{1}(\mathbf{x})-f_1(\mathbf{x})|<\epsilon/\sqrt{l},
   \\ & \inf_{g_{2}\in \mathcal{F}_{\mathbf{q}^2}^{\sigma}} \max_{\mathbf{x}\in C} |g_{2}(\mathbf{x})-f_2(\mathbf{x})|<\epsilon/\sqrt{l},
      \\&~~~~~~~~~~~~~~~~~~~~~~~~~~~~~~~~~...
        \\&~~~~~~~~~~~~~~~~~~~~~~~~~~~~~~~~~...
      \\
      & \inf_{g_{l}\in \mathcal{F}_{\mathbf{q}^l}^{\sigma}} \max_{\mathbf{x}\in C} |g_{l}(\mathbf{x})-f_l(\mathbf{x})|<\epsilon/\sqrt{l}.
     \end{split}
\end{equation*}
 It is easy to find a sequence $\mathbf{q}=(l_1,...,l_g)$ ($l_g=1$) such that $\mathbf{q}^i\lesssim\mathbf{q}$, for all $i=1,...,l$. Using Lemma \ref{L7-contain},  we obtain that $\mathcal{F}_{\mathbf{q}^i}^{\sigma}\subset \mathcal{F}_{\mathbf{q}}^{\sigma}$. Therefore,
 \begin{equation*}
\begin{split}
     &\inf_{g\in \mathcal{F}_{\mathbf{q}}^{\sigma}} \max_{\mathbf{x}\in C} |g(\mathbf{x})-f_1(\mathbf{x})|<\epsilon/\sqrt{l},
   \\ & \inf_{g\in \mathcal{F}_{\mathbf{q}}^{\sigma}} \max_{\mathbf{x}\in C} |g(\mathbf{x})-f_2(\mathbf{x})|<\epsilon/\sqrt{l},
      \\&~~~~~~~~~~~~~~~~~~~~~~~~~~~~~~~~~...
        \\&~~~~~~~~~~~~~~~~~~~~~~~~~~~~~~~~~...
      \\
      & \inf_{g\in \mathcal{F}_{\mathbf{q}}^{\sigma}} \max_{\mathbf{x}\in C} |g(\mathbf{x})-f_l(\mathbf{x})|<\epsilon/\sqrt{l}.
     \end{split}
\end{equation*}
Therefore, for each $i$, we can find $g_{\mathbf{w}^i,\mathbf{b}^i}$ from $\mathcal{F}_{\mathbf{q}}^{\sigma}$ such that
\begin{equation*}
     \max_{\mathbf{x}\in C} |g_{\mathbf{w}^i,\mathbf{b}^i}(\mathbf{x})-f_i(\mathbf{x})|<\epsilon/\sqrt{l},
\end{equation*}
where $\mathbf{w}^i$ represents weights and $\mathbf{b}^i$ represents bias.

\noindent We construct a larger FCNN with $\mathbf{q}'=(l_1',l_2',...,l_g')$ satisfying that $l_1'=d$, $l_i'=l* l_i$, for $i=2,...,g$. We can regard this larger FCNN as a combinations of $l$ FCNNs with architecture $\mathbf{q}$, that is: there are $m$ disjoint sub-FCNNs with architecture $\mathbf{q}$ in the larger FCNN with architecture $\mathbf{q}'$. For $i$-th sub-FCNN, we use weights $\mathbf{w}^i$ and bias $\mathbf{b}^i$. For weights and bias which connect different sub-FCNNs, we set these weights and bias to $\mathbf{0}$. Finally, we can obtain that $\mathbf{g}_{\mathbf{w},\mathbf{b}}=[g_{\mathbf{w}^1,\mathbf{b}^1},g_{\mathbf{w}^2,\mathbf{b}^2},...,g_{\mathbf{w}^l,\mathbf{b}^l}]^{\top}\in \mathcal{F}_{\mathbf{q}'}^{\sigma}$, which implies that
\begin{equation*}
     \max_{\mathbf{x}\in C} \|\mathbf{g}_{\mathbf{w},\mathbf{b}}(\mathbf{x})-\mathbf{f}(\mathbf{x})\|_2<\epsilon.
\end{equation*}
We have completed this proof.
\end{proof}
\noindent Given a sequence $\mathbf{q}=(l_1,...,l_g)$,  we are interested in following function space $\mathcal{F}^{\sigma}_{\mathbf{q},\mathbf{M}}$:
\begin{equation*}
    \mathcal{F}^{\sigma}_{\mathbf{q},\mathbf{M}} := \{ \mathbf{M}\cdot (\sigma \circ \mathbf{f}): \forall~ \mathbf{f}\in \mathcal{F}^{\sigma}_{\mathbf{q}}\},
\end{equation*}
where  $\circ$ means the composition of two functions, $\cdot$ means the product of two matrices, and
\begin{equation*}
\mathbf{M} = \left[\begin{matrix}
 &\mathbf{1}_{1\times (l_g-1)}&0\\
& \mathbf{0}_{1\times (l_g-1)}&1
  \end{matrix}\right],
\end{equation*}
here $\mathbf{1}_{1\times (l_g-1)}$ is the $1\times (l_g-1)$ matrix whose all elements are $1$, and $\mathbf{0}_{1 \times (l_g-1)}$ is the $1\times (l_g-1)$ zero matrix.
Using $ \mathcal{F}^{\sigma}_{\mathbf{q},\mathbf{M}}$, we can construct a binary classification space $\mathcal{H}^{\sigma}_{\mathbf{q},\mathbf{M}}$, which consists of all classifiers satisfying the following condition:
\begin{equation*}
    h(\mathbf{x})=  \argmin_{k=\{1,2\}} f^k_{\mathbf{M}}(\mathbf{x}),
\end{equation*}
where $f^k_{\mathbf{M}}(\mathbf{x})$ is the $k$-th coordinate of $\mathbf{M}\cdot (\sigma \circ \mathbf{f})$.
\begin{lemma}\label{L20}
Suppose that $\sigma$ is the ReLU function: $\max \{x,0\}$. Given a sequence $\mathbf{q}=(l_1,...,l_g)$ satisfying that $l_1=d$ and $l_g=K+1$, then the space $\mathcal{H}^{\sigma}_{\mathbf{q},\mathbf{M}}$ contains $\phi\circ \mathcal{H}^{\sigma}_{\mathbf{q}}$, and $\mathcal{H}^{\sigma}_{\mathbf{q},\mathbf{M}}$ has finite VC dimension $($Vapnik–Chervonenkis dimension$)$, where  $\phi$ maps ID data to $1$ and OOD data to $2$. Furthermore, if given $\mathbf{q}'=(l_1',...,l_g')$ satisfying that $l_g'=K$ and $l_i'=l_i$, for $i=1,...,g-1$, then $\mathcal{H}^{\sigma}_{\mathbf{q}}\subset \mathcal{H}^{\sigma}_{\mathbf{q}'}\bullet \mathcal{H}^{\sigma}_{\mathbf{q},\mathbf{M}}$.
\end{lemma}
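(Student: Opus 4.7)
I would prove the three assertions in turn. The common thread is that in an FCNN of architecture $\mathbf{q}$, the penultimate representation $\mathbf{f}_{g-1}(\mathbf{x})$ can be held fixed while the last linear layer $\mathbf{w}_g \in \mathbb{R}^{(K+1)\times l_{g-1}}$, $\mathbf{b}_g \in \mathbb{R}^{(K+1)\times 1}$ is reshaped arbitrarily; this alone suffices to encode simple comparisons among the $K+1$ output coordinates, which is what is needed to relate $\mathcal{H}^{\sigma}_{\mathbf{q}}$, $\mathcal{H}^{\sigma}_{\mathbf{q},\mathbf{M}}$, and $\mathcal{H}^{\sigma}_{\mathbf{q}'}$.

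For the containment $\phi \circ \mathcal{H}^{\sigma}_{\mathbf{q}} \subset \mathcal{H}^{\sigma}_{\mathbf{q},\mathbf{M}}$, I fix $h_{\mathbf{w},\mathbf{b}}$ with output coordinates $f^{k}(\mathbf{x}) = \mathbf{w}_g^{k}\mathbf{f}_{g-1}(\mathbf{x}) + b_g^{k}$ and build a companion $\mathbf{g}_{\mathbf{w}',\mathbf{b}'} \in \mathcal{F}^{\sigma}_{\mathbf{q}}$ that shares its first $g-1$ layers with $\mathbf{f}_{\mathbf{w},\mathbf{b}}$. The last layer is chosen so that $g^{k}(\mathbf{x}) = f^{k}(\mathbf{x}) - f^{K+1}(\mathbf{x})$ for $k \le K$ (replace the $k$-th row of $\mathbf{w}_g$ by $\mathbf{w}_g^{k} - \mathbf{w}_g^{K+1}$ and the $k$-th entry of $\mathbf{b}_g$ by $b_g^{k} - b_g^{K+1}$), and $g^{K+1}(\mathbf{x}) = c$ for a constant $c$ tuned to the tie-breaking convention of $\mathcal{H}^{\sigma}_{\mathbf{q},\mathbf{M}}$. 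After the ReLU, $\sigma(g^{k}) = (f^{k} - f^{K+1})_{+}$, so $f^{1}_{\mathbf{M}}(\mathbf{x}) = \sum_{k \le K}\sigma(g^{k})$ is strictly positive exactly when some $f^{k}$ beats $f^{K+1}$ (i.e.\ when $h(\mathbf{x}) \in \mathcal{Y}$) and vanishes when $h(\mathbf{x}) = K+1$. Coupled with $f^{2}_{\mathbf{M}}(\mathbf{x}) = \sigma(c)$, a short case analysis then verifies that the induced classifier in $\mathcal{H}^{\sigma}_{\mathbf{q},\mathbf{M}}$ equals $\phi \circ h$ pointwise.

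For the VC dimension bound I would note that $\mathbf{f} \mapsto \mathbf{M}(\sigma \circ \mathbf{f})$ is itself realized by appending one ReLU layer and one linear layer of fixed weight $\mathbf{M}$ to the architecture $\mathbf{q}$. Hence $\mathcal{H}^{\sigma}_{\mathbf{q},\mathbf{M}}$ embeds into an FCNN-based binary hypothesis space of architecture $\tilde{\mathbf{q}} = (l_1,\ldots,l_g,2)$, which has finite VC dimension by the standard bound for piecewise-linear networks \citep{Peter2019Nearly,Karpinski1997polynomial}; monotonicity of VC dimension under subsets closes this part. For the composition $\mathcal{H}^{\sigma}_{\mathbf{q}} \subset \mathcal{H}^{\sigma}_{\mathbf{q}'} \bullet \mathcal{H}^{\sigma}_{\mathbf{q},\mathbf{M}}$, I take $h^{\rm b}$ to be the binary classifier from Part 1 and $h^{\rm in} \in \mathcal{H}^{\sigma}_{\mathbf{q}'}$ to be the ID classifier obtained by dropping the $(K+1)$-th row of $\mathbf{w}_g$ and of $\mathbf{b}_g$, so that $\mathbf{f}' = (f^{1},\ldots,f^{K}) \in \mathcal{F}^{\sigma}_{\mathbf{q}'}$. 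On inputs where $h^{\rm b}(\mathbf{x}) = 1$, Part 1 gives $h(\mathbf{x}) \in \mathcal{Y}$, and then $\argmax_{k \le K+1} f^{k}(\mathbf{x}) = \argmax_{k \le K} f^{k}(\mathbf{x}) = h^{\rm in}(\mathbf{x})$; on inputs where $h^{\rm b}(\mathbf{x}) = 2$, $h(\mathbf{x}) = K+1$. Both cases match the definition of $\bullet$ in Eq.~(\ref{Eq.dot}).

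The main obstacle is the tie-breaking in Part 1: on the (measure-zero but potentially nonempty) set where $f^{K+1}(\mathbf{x}) = \max_{k \le K} f^{k}(\mathbf{x})$, the argmax in $\mathcal{H}^{\sigma}_{\mathbf{q}}$ returns $K+1$ by the largest-index rule of Appendix~\ref{SB}, while $\sum_{k \le K}\sigma(g^{k}) = 0$ simultaneously, producing a tie $f^{1}_{\mathbf{M}} = f^{2}_{\mathbf{M}}$ whenever $c$ is chosen so that $\sigma(c) = 0$. The constant $c$ (together with the corresponding tie-breaking rule in $\mathcal{H}^{\sigma}_{\mathbf{q},\mathbf{M}}$) has to be selected so that this tie resolves to $\phi(K+1) = 2$; getting this boundary case right, rather than suffering an off-by-one at the separating hyperplanes, is the one delicate point that prevents the argument from being completely routine.
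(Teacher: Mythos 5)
Your proof is correct and takes essentially the same route as the paper's: the paper realizes your companion network via a fixed matrix $\mathbf{B}$ whose first $K$ output coordinates compute $f^k-f^{K+1}$ and whose last coordinate is identically $0$ (your $c=0$), resolves the boundary case by the largest-index $\argmax$ convention so that $f^1_{\mathbf{M}}=f^2_{\mathbf{M}}=0$ yields label $2=\phi(K+1)$, and bounds the VC dimension by embedding $\mathcal{H}^{\sigma}_{\mathbf{q},\mathbf{M}}$ into $\mathcal{H}^{\sigma}_{(l_1,\ldots,l_g,2)}$ exactly as you propose. The tie-breaking issue you flag as the delicate point is handled in the paper precisely in the way you suggest.
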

\begin{proof}[Proof of Lemma \ref{L20}]
For any $h_{\mathbf{w},\mathbf{b}}\in \mathcal{H}^{\sigma}_{\mathbf{q}}$, then there exists $\mathbf{f}_{\mathbf{w},\mathbf{b}}\in \mathcal{F}^{\sigma}_{\mathbf{q}}$ such that $h_{\mathbf{w},\mathbf{b}}$ is induced by $\mathbf{f}_{\mathbf{w},\mathbf{b}}$. We can write $\mathbf{f}_{\mathbf{w},\mathbf{b}}$ as follows:
\begin{equation*}
    \mathbf{f}_{\mathbf{w},\mathbf{b}}(\mathbf{x})= \mathbf{w}_{g}\mathbf{f}_{{g-1}}(\mathbf{x})+\mathbf{b}_g,
\end{equation*}
where $\mathbf{w}_{g}\in \mathbb{R}^{(K+1)\times l_{g-1}}$, $\mathbf{b}_g\in  \mathbb{R}^{(K+1)\times 1}$ and $\mathbf{f}_{{g-1}}(\mathbf{x})$ is the output of the $l_{g-1}$-th layer.

\noindent Suppose that 
\begin{equation*}
\mathbf{w}_{g} = \left[\begin{matrix}
 &\mathbf{v}_1\\
&  \mathbf{v}_2\\
&  ...\\
&  \mathbf{v}_{K}\\
& \mathbf{v}_{K+1}
  \end{matrix}\right], ~~~
\mathbf{b}_{g} = \left[\begin{matrix}
 &{b}_1\\
&{b}_2\\
&...\\
&{b}_{K}\\
&{b}_{K+1}
  \end{matrix}\right],
 \end{equation*}
where $\mathbf{v}_i\in \mathbb{R}^{1\times l_{g-1}}$ and $b_i \in \mathbb{R}$.
\\
\\
\noindent We set
\begin{equation*}
     \mathbf{f}_{\mathbf{w}',\mathbf{b}'}(\mathbf{x})= \mathbf{w}'_{g}\mathbf{f}_{{g-1}}(\mathbf{x})+\mathbf{b}_g',
\end{equation*}
where 
\begin{equation*}
\mathbf{w}'_{g} = \left[\begin{matrix}
 &\mathbf{v}_1\\
&  \mathbf{v}_2\\
&  ...\\
& ~~~ \mathbf{v}_{K}
  \end{matrix}\right], ~~~
\mathbf{b}_{g}' = \left[\begin{matrix}
 &{b}_1\\
&{b}_2\\
&...\\
&{b}_{K}
  \end{matrix}\right],
 \end{equation*}
It is obvious that $ \mathbf{f}_{\mathbf{w}',\mathbf{b}'} \in \mathcal{F}_{\mathbf{q}'}^{\sigma}$. Using  $\mathbf{f}_{\mathbf{w}',\mathbf{b}'} \in \mathcal{F}_{\mathbf{q}'}^{\sigma}$, we construct a classifier ${h}_{\mathbf{w}',\mathbf{b}'}\in \mathcal{H}^{\sigma}_{\mathbf{q}'}$:
\begin{equation*}
    {h}_{\mathbf{w}',\mathbf{b}'}= \argmax_{k\in \{1,...,K\}}{f}_{\mathbf{w}',\mathbf{b}'}^k,
\end{equation*}
where ${f}_{\mathbf{w}',\mathbf{b}'}^k$ is the $k$-th coordinate of $\mathbf{f}_{\mathbf{w}',\mathbf{b}'}$.
\\
\\
\noindent Additionally,  we consider
\begin{equation*}
    \mathbf{f}_{\mathbf{w},\mathbf{b},\mathbf{B}}= \mathbf{M} \cdot \sigma(\mathbf{B}\cdot \mathbf{f}_{\mathbf{w},\mathbf{b}})\in \mathcal{F}_{\mathbf{q},\mathbf{M}}^{\sigma},
\end{equation*}
where 
\begin{equation*}
\mathbf{B} = \left[\begin{matrix}
 &\mathbf{I}_{(l_g-1)\times (l_g-1) }&- \mathbf{1}_{(l_g-1)\times 1}\\
& \mathbf{0}_{1\times (l_g-1)}& 0
  \end{matrix}\right],
\end{equation*}
here $\mathbf{I}_{(l_g-1)\times (l_g-1) }$ is the $(l_g-1)\times (l_g-1)$ identity matrix, $\mathbf{0}_{1\times (l_g-1)}$ is the $1\times (l_g-1)$ zero matrix, and $\mathbf{1}_{(l_g-1)\times 1}$ is the $(l_g-1)\times 1$ matrix, whose all elements are $1$.

\noindent Then, we define that for any $\mathbf{x}\in \mathcal{X}$,
\begin{equation*}
    h_{\mathbf{w},\mathbf{b},\mathbf{B}}(\mathbf{x}):= \argmax_{k\in\{1,2\}} {f}_{\mathbf{w},\mathbf{b},\mathbf{B}}^k(\mathbf{x}),
\end{equation*}
where ${f}_{\mathbf{w},\mathbf{b},\mathbf{B}}^k(\mathbf{x})$ is the $k$-th coordinate of $ \mathbf{f}_{\mathbf{w},\mathbf{b},\mathbf{B}}(\mathbf{x})$. Furthermore, we can check that $h_{\mathbf{w},\mathbf{b},\mathbf{B}}$ can be written as follows: for any $\mathbf{x}\in \mathcal{X}$,
\begin{equation*}
    h_{\mathbf{w},\mathbf{b},\mathbf{B}}(\mathbf{x})=\left \{ 
    \begin{aligned}
       ~~~1,&~~~&{\rm if}~{f}_{\mathbf{w},\mathbf{b},\mathbf{B}}^1(\mathbf{x})>0;
     \\  ~~~2,&~~~&{\rm if}~{f}_{\mathbf{w},\mathbf{b},\mathbf{B}}^1(\mathbf{x})\leq 0.
    \end{aligned}
    \right.
\end{equation*}
\\
\\
\noindent It is easy to check that
\begin{equation*}
  h_{\mathbf{w},\mathbf{b},\mathbf{B}} = \phi \circ h_{\mathbf{w},\mathbf{b}},
\end{equation*}
where $\phi$ maps ID labels to $1$ and OOD labels to $2$.
\\
\\
\noindent Therefore, $h_{\mathbf{w},\mathbf{b}}(\mathbf{x})=K+1$ if and only if $h_{\mathbf{w},\mathbf{b},\mathbf{B}}=2$; and $h_{\mathbf{w},\mathbf{b}}(\mathbf{x})=k$ ($k\neq K+1$) if and only if $h_{\mathbf{w},\mathbf{b},\mathbf{B}}=1$ and $h_{\mathbf{w}',\mathbf{b}'}(\mathbf{x})=k$. This implies that $\mathcal{H}^{\sigma}_{\mathbf{q}}\subset \mathcal{H}^{\sigma}_{\mathbf{q}'}\bullet \mathcal{H}^{\sigma}_{\mathbf{q},\mathbf{M}}$ and $\phi\circ \mathcal{H}^{\sigma}_{\mathbf{q}} \subset \mathcal{H}^{\sigma}_{\mathbf{q},\mathbf{M}}$.

\noindent Let $\tilde{\mathbf{q}}$ be $(l_1,...,l_g,2)$. Then $\mathcal{F}_{\mathbf{q},\mathbf{M}}^{\sigma}\subset \mathcal{F}_{\tilde{\mathbf{q}}}^{\sigma}$. Hence, $\mathcal{H}_{\mathbf{q},\mathbf{M}}^{\sigma}\subset \mathcal{H}_{\tilde{\mathbf{q}}}^{\sigma}$. According to the VC dimension theory \citep{Peter2019Nearly} for feed-forward neural networks, $\mathcal{H}_{\tilde{\mathbf{q}}}^{\sigma}$ has finite VC dimension. Hence, $\mathcal{H}_{\mathbf{q},\mathbf{M}}^{\sigma}$ has finite VC-dimension. We have completed the proof.
\end{proof}
\begin{lemma}\label{L23}
Let $|\mathcal{X}|<+\infty$ and $\sigma$ be the ReLU function: $\max\{x,0\}$. Given $r$ hypothesis functions $h_1,h_2,...,h_r\in \{h:\mathcal{X}\rightarrow \{1,...,l\}\}$, then there exists a sequence $\mathbf{q}=(l_1,...,l_g)$ with $l_1=d$ and $l_g=l$, such that $h_1,...,h_r\in \mathcal{H}_{\mathbf{q}}^{\sigma}$.
\end{lemma}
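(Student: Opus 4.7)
The plan is to realize each $h_i$ individually by FCNN universal approximation, then fold the resulting $r$ architectures into a single sequence $\mathbf{q}$ that dominates all of them in the $\lesssim$ order. For each $i$, I would construct a continuous target $\mathbf{f}_i:\mathbb{R}^d\to\mathbb{R}^l$ satisfying $\mathbf{f}_i(\mathbf{x})=\mathbf{e}_{h_i(\mathbf{x})}$ for every $\mathbf{x}\in\mathcal{X}$, where $\mathbf{e}_k$ denotes the $k$-th standard basis vector in $\mathbb{R}^l$. Since $|\mathcal{X}|<+\infty$, the set $\mathcal{X}$ is a finite collection of isolated points in $\mathbb{R}^d$, so any prescribed assignment on $\mathcal{X}$ extends to a continuous function on all of $\mathbb{R}^d$ by routine interpolation (for example, smooth bumps centered at each $\mathbf{x}\in\mathcal{X}$).

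Next I would apply Lemma \ref{L8UPT} to each $\mathbf{f}_i$ on the compact set $\mathcal{X}$ with tolerance $\epsilon<1/2$. This yields an FCNN architecture $\mathbf{q}^i=(l_1^i,\ldots,l_{g_i}^i)$ with $l_1^i=d$ and $l_{g_i}^i=l$, together with parameters $(\mathbf{w}^i,\mathbf{b}^i)$ such that $\max_{\mathbf{x}\in\mathcal{X}}\|\mathbf{f}_{\mathbf{w}^i,\mathbf{b}^i}(\mathbf{x})-\mathbf{f}_i(\mathbf{x})\|_2<\epsilon$. The key verification is that at each $\mathbf{x}\in\mathcal{X}$, the target's $h_i(\mathbf{x})$-th coordinate equals $1$ while the others equal $0$, so an approximation error below $1/2$ in the $\ell_2$ norm (hence in the $\ell_\infty$ norm) forces the $h_i(\mathbf{x})$-th coordinate of $\mathbf{f}_{\mathbf{w}^i,\mathbf{b}^i}(\mathbf{x})$ to strictly exceed the remaining $l-1$ coordinates. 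Consequently the induced hypothesis function equals $h_i$ on all of $\mathcal{X}$, independently of the tie-breaking convention for the argmax.

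To finish, I would choose $\mathbf{q}=(l_1,\ldots,l_g)$ with $l_1=d$, $l_g=l$, depth $g\geq \max_i g_i$, and middle widths large enough that $\mathbf{q}^i\lesssim \mathbf{q}$ for every $i$. Lemma \ref{L7-contain} then gives $\mathcal{F}_{\mathbf{q}^i}^{\sigma}\subset \mathcal{F}_{\mathbf{q}}^{\sigma}$, hence $\mathcal{H}_{\mathbf{q}^i}^{\sigma}\subset \mathcal{H}_{\mathbf{q}}^{\sigma}$, so all of $h_1,\ldots,h_r$ lie in the common space $\mathcal{H}_{\mathbf{q}}^{\sigma}$. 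I do not anticipate a substantive obstacle: the tie-breaking subtlety is defused by demanding strict coordinate separation, and the merge step is exactly what Lemma \ref{L7-contain} was designed for. The only mildly delicate choice is to use the basis-vector targets $\mathbf{e}_{h_i(\mathbf{x})}$ (rather than arbitrary continuous extensions), so that the universal-approximation bound translates cleanly into pointwise argmax agreement on the finite set $\mathcal{X}$.
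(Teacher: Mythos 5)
Your proposal is correct and follows essentially the same route as the paper's proof: one-hot continuous extensions of each $h_i$ (the paper invokes the Tietze Extension Theorem for this step), approximation via Lemma \ref{L8UPT} with a tolerance small enough to pin down the argmax pointwise on the finite set $\mathcal{X}$ (the paper uses $1/(10l)$ where you use $1/2$, both sufficient), and a final merge of the $r$ architectures into one dominating sequence via Lemma \ref{L7-contain}.
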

\begin{proof}[Proof of Lemma \ref{L23}]
For each $h_i$ ($i=1,...,r$), we introduce a corresponding $\mathbf{f}_i$ (defined over $\mathcal{X}$) satisfying that for any $\mathbf{x}\in \mathcal{X}$, $\mathbf{f}_i(\mathbf{x})=\mathbf{y}_k$ if and only if $h_i(\mathbf{x})=k$, where $\mathbf{y}_k\in \mathbb{R}^{l}$ is the one-hot vector corresponding to the label $k$. Clearly, $\mathbf{f}_i$ is a continuous function in $\mathcal{X}$, because $\mathcal{X}$ is a discrete set. Tietze Extension Theorem implies that $\mathbf{f}_i$ can be extended to a  continuous function in $\mathbb{R}^d$.
\\
\\
\noindent Since $\mathcal{X}$ is a compact set, then Lemma \ref{L8UPT} implies that there exist a sequence $\mathbf{q}^i=(l_1^i,...,l_{g^i}^i)$ ($l_1^i=d$ and $l_{g^i}^i=l$) and $\mathbf{f}_{\mathbf{w},\mathbf{b}}\in \mathcal{F}_{\mathbf{q}^i}^{\sigma}$  such that 
\begin{equation*}
 \max_{\mathbf{x}\in \mathcal{X}} \|\mathbf{f}_{\mathbf{w},\mathbf{b}}(\mathbf{x})-\mathbf{f}_i(\mathbf{x})\|_{\ell_2}<\frac{1}{10\cdot l},
\end{equation*}
where $\|\cdot\|_{\ell_2}$ is the $\ell_2$ norm in $\mathbb{R}^{l}$.
Therefore, for any $\mathbf{x}\in \mathcal{X}$, it easy to check that
\begin{equation*}
    \argmax_{k\in\{1,...,l\}}{f}^k_{\mathbf{w},\mathbf{b}}(\mathbf{x})=h_i(\mathbf{x}),
\end{equation*}
where ${f}^k_{\mathbf{w},\mathbf{b}}(\mathbf{x})$ is the $k$-th coordinate of $\mathbf{f}_{\mathbf{w},\mathbf{b}}(\mathbf{x})$. Therefore,
$h_i(\mathbf{x})\in \mathcal{H}_{\mathbf{q}^i}^{\sigma}$.
\\
\\
\noindent Let $\mathbf{q}$ be $(l_1,...,l_g)$ $(l_1=d$ and $l_g=l)$ satisfying that $\mathbf{q}^i\lesssim\mathbf{q}$. Using Lemma \ref{L7-contain}, we obtain that $\mathcal{H}_{\mathbf{q}^i}^{\sigma}\subset \mathcal{H}_{\mathbf{q}}^{\sigma}$, for each $i=1,...,r$. Therefore, $h_1,...,h_r \in \mathcal{H}_{\mathbf{q}}^{\sigma}$.
\end{proof}
\begin{lemma}\label{L26}
Let the activation function $\sigma$ be the ReLU function. Suppose that $|\mathcal{X}|<+\infty$. If $\{\mathbf{v}\in \mathbb{R}^l:E(\mathbf{v})\geq \lambda\}$ and $\{\mathbf{v}\in \mathbb{R}^l:E(\mathbf{v})<\lambda\}$ both contain nonempty open sets of $\mathbb{R}^l$ $($here, open set is a topological terminology$)$. There exists a sequence $\mathbf{q}=(l_1,...,l_g)$ $(l_1=d$ and $l_g=l)$ such that $\mathcal{H}^{\sigma,\lambda}_{\mathbf{q},E}$ consists of all binary classifiers. 
\end{lemma}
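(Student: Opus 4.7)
}

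The plan is to show that every binary classifier $h : \mathcal{X} \to \{1,2\}$ can be realized in the form $h^{\lambda}_{\mathbf{f},E}$ for some $\mathbf{f}\in \mathcal{F}_{\mathbf{q}}^{\sigma}$, and then to note that the finiteness of $\mathcal{X}$ gives only finitely many such $h$ so a single architecture $\mathbf{q}$ suffices. The strategy closely mirrors the proof of Lemma~\ref{L23}: reduce realizing a discrete-valued classifier to approximating a continuous vector-valued target function by an FCNN, and then invoke the universal approximation result (Lemma~\ref{L8UPT}).

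First, by hypothesis there exist nonempty open sets $U_1 \subset \{\mathbf{v}\in\mathbb{R}^l : E(\mathbf{v}) \geq \lambda\}$ and $U_2 \subset \{\mathbf{v}\in\mathbb{R}^l : E(\mathbf{v}) < \lambda\}$. Pick interior points $\mathbf{v}_1 \in U_1$ and $\mathbf{v}_2 \in U_2$, together with radii $r_1, r_2 > 0$ such that the Euclidean balls $B(\mathbf{v}_j, r_j) \subset U_j$ for $j=1,2$. Fix an arbitrary binary classifier $h : \mathcal{X} \to \{1,2\}$. Define a target function $\mathbf{g}_h : \mathcal{X} \to \mathbb{R}^l$ by $\mathbf{g}_h(\mathbf{x}) = \mathbf{v}_1$ if $h(\mathbf{x})=1$ and $\mathbf{g}_h(\mathbf{x}) = \mathbf{v}_2$ if $h(\mathbf{x})=2$. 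Because $\mathcal{X}$ is a finite (hence discrete and compact) subset of $\mathbb{R}^d$, $\mathbf{g}_h$ is automatically continuous on $\mathcal{X}$, and the Tietze Extension Theorem extends it coordinate-wise to a continuous function on all of $\mathbb{R}^d$.

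Next, set $\epsilon_h := \tfrac{1}{2}\min\{r_1, r_2\} > 0$. By Lemma~\ref{L8UPT}, there is a sequence $\mathbf{q}^h=(d, l_2^h, \ldots, l_{g^h-1}^h, l)$ and an $\mathbf{f}_{\mathbf{w},\mathbf{b}} \in \mathcal{F}_{\mathbf{q}^h}^{\sigma}$ with $\max_{\mathbf{x}\in\mathcal{X}} \|\mathbf{f}_{\mathbf{w},\mathbf{b}}(\mathbf{x}) - \mathbf{g}_h(\mathbf{x})\|_2 < \epsilon_h$. Consequently, for every $\mathbf{x}\in\mathcal{X}$ the output $\mathbf{f}_{\mathbf{w},\mathbf{b}}(\mathbf{x})$ lies in $B(\mathbf{v}_{h(\mathbf{x})}, r_{h(\mathbf{x})}) \subset U_{h(\mathbf{x})}$, so $E(\mathbf{f}_{\mathbf{w},\mathbf{b}}(\mathbf{x})) \geq \lambda$ iff $h(\mathbf{x})=1$, and hence $h^{\lambda}_{\mathbf{f}_{\mathbf{w},\mathbf{b}},E} = h$ on $\mathcal{X}$. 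Thus $h \in \mathcal{H}^{\sigma,\lambda}_{\mathbf{q}^h, E}$.

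Finally, since $|\mathcal{X}|<+\infty$, the family of all binary classifiers $\{h : \mathcal{X}\to\{1,2\}\}$ is finite, with cardinality $2^{|\mathcal{X}|}$. Enumerate them as $h_1,\ldots,h_N$ and let $\mathbf{q}^{h_1},\ldots,\mathbf{q}^{h_N}$ be the corresponding architectures produced above. Choose $\mathbf{q}=(l_1,\ldots,l_g)$ with $l_1=d$, $l_g=l$, and $\mathbf{q}^{h_i} \lesssim \mathbf{q}$ for every $i$ (this is possible by taking depth and each width to be the maximum across the $\mathbf{q}^{h_i}$). Lemma~\ref{L7-contain} then yields $\mathcal{F}_{\mathbf{q}^{h_i}}^{\sigma} \subset \mathcal{F}_{\mathbf{q}}^{\sigma}$ for each $i$, so $\mathcal{H}^{\sigma,\lambda}_{\mathbf{q}^{h_i},E} \subset \mathcal{H}^{\sigma,\lambda}_{\mathbf{q},E}$, and therefore every $h_i$ lies in $\mathcal{H}^{\sigma,\lambda}_{\mathbf{q},E}$. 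The expected main obstacle is purely a matter of care: ensuring that $\epsilon_h$ is chosen uniformly small enough (relative to the Euclidean radii $r_1, r_2$ of the open neighborhoods inside $\{E\geq \lambda\}$ and $\{E<\lambda\}$) so that an $\epsilon_h$-approximation in $\ell_2$ forces membership in the correct side of the threshold — once this quantitative step is handled, the rest is a straightforward composition of the universal approximation lemma with the containment behavior of FCNN spaces under $\lesssim$.
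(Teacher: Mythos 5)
Your proposal is correct and follows essentially the same route as the paper's proof: pick representative points with Euclidean balls inside $\{E\geq\lambda\}$ and $\{E<\lambda\}$, realize each binary classifier by Tietze extension plus the universal approximation lemma (Lemma~\ref{L8UPT}), and merge the finitely many architectures via $\lesssim$ and Lemma~\ref{L7-contain}. The only cosmetic difference is that the paper uses a single radius $r$ and tolerance $r/2$ where you use $\min\{r_1,r_2\}/2$.
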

\begin{proof}[Proof of Lemma \ref{L26}]
Since $\{\mathbf{v}\in \mathbb{R}^l:E(\mathbf{v})\geq \lambda\}$, $\{\mathbf{v}\in \mathbb{R}^l:E(\mathbf{v})<\lambda\}$ both contain nonempty open sets, we can find $\mathbf{v}_1\in \{\mathbf{v}\in \mathbb{R}^l:E(\mathbf{v})\geq \lambda\}$, $\mathbf{v}_2\in \{\mathbf{v}\in \mathbb{R}^l:E(\mathbf{v})< \lambda\}$ and a constant $r>0$ such that $B_r(\mathbf{v}_1)\subset \{\mathbf{v}\in \mathbb{R}^l:E(\mathbf{v})\geq \lambda\}$ and $B_r(\mathbf{v}_2)\subset \{\mathbf{v}\in \mathbb{R}^l:E(\mathbf{v})< \lambda\}$, where $B_r(\mathbf{v}_1)=\{\mathbf{v}:\|\mathbf{v}-\mathbf{v}_1\|_{\ell_2}<r\}$ and $B_r(\mathbf{v}_2)=\{\mathbf{v}:\|\mathbf{v}-\mathbf{v}_2\|_{\ell_2}<r\}$, here $\|\cdot\|_{\ell_2}$ is the $\ell_2$ norm.
\\
\\
\noindent For any binary classifier $h$ over $\mathcal{X}$, we can induce a vector-valued function as follows: for any $\mathbf{x}\in \mathcal{X}$,
\begin{equation*}
    \mathbf{f}(\mathbf{x}) = \left \{
    \begin{aligned}
   &~~~~\mathbf{v}_1,~~{\rm if}~h(\mathbf{x})=1;\\ 
   &~~~~  \mathbf{v}_2,~~{\rm if}~h(\mathbf{x})=2.
    \end{aligned}
    \right.
\end{equation*}

\noindent Since $\mathcal{X}$ is a finite set, then Tietze Extension Theorem implies that $\mathbf{f}$ can be extended to a  continuous function in $\mathbb{R}^d$. Since $\mathcal{X}$ is a compact set, Lemma \ref{L8UPT} implies that there exists a sequence $\mathbf{q}^h=(l_1^h,...,l_{g^h}^h)$ ($l_1^h=d$ and $l_{g^h}^h=l$) and $\mathbf{f}_{\mathbf{w},\mathbf{b}}\in \mathcal{F}_{\mathbf{q}^h}^{\sigma}$  such that 
\begin{equation*}
 \max_{\mathbf{x}\in \mathcal{X}} \|\mathbf{f}_{\mathbf{w},\mathbf{b}}(\mathbf{x})-\mathbf{f}(\mathbf{x})\|_{\ell_2}<\frac{r}{2},
\end{equation*}
where $\|\cdot\|_{\ell_2}$ is the $\ell_2$ norm in $\mathbb{R}^{l}$.
Therefore, for any $\mathbf{x}\in \mathcal{X}$, it is easy to check that $E(\mathbf{f}_{\mathbf{w},\mathbf{b}}(\mathbf{x}))\geq \lambda$ if and only if $h(\mathbf{x})=1$, and $E(\mathbf{f}_{\mathbf{w},\mathbf{b}}(\mathbf{x}))< \lambda$ if and only if $h(\mathbf{x})=2$.

\noindent For each $h$, we have found a sequence $\mathbf{q}^h$ such that $h$ is induced by $\mathbf{f}_{\mathbf{w},\mathbf{b}}\in \mathcal{F}_{\mathbf{q}^h}^{\sigma}$, $E$ and $\lambda$. Since $|\mathcal{X}|<+\infty$, only finite binary classifiers are defined over $\mathcal{X}$.  Using Lemma \ref{L23}, we can find a sequence $\mathbf{q}$ such that $\mathcal{H}^b_{\rm all}=\mathcal{H}_{\mathbf{q},E}^{\sigma,\lambda}$, where  $\mathcal{H}^b_{\rm all}$ consists of all binary classifiers.
\end{proof}

\begin{lemma}
\label{T27}
Suppose the hypothesis space is score-based.
Let $|\mathcal{X}|<+\infty$. If $\{\mathbf{v}\in \mathbb{R}^l:E(\mathbf{v})\geq \lambda\}$ and $\{\mathbf{v}\in \mathbb{R}^l:E(\mathbf{v})<\lambda\}$ both contain nonempty open sets, and Condition \ref{C3} holds, then there exists a sequence $\mathbf{q}=(l_1,...,l_g)$ $(l_1=d$ and $l_g=l)$ such that for any sequence $\mathbf{q}'$ satisfying $\mathbf{q}\lesssim\mathbf{q}'$ and any ID hypothesis space $\mathcal{H}^{\rm in}$, OOD detection is learnable in the separate space $\mathscr{D}_{XY}^{s}$ for $\mathcal{H}^{\rm in}\bullet \mathcal{H}^{\rm b}$, where $\mathcal{H}^{\rm b}=\mathcal{H}^{{\sigma},\lambda}_{{\mathbf{q}'},E}$ and $\mathcal{H}^{\rm in}\bullet \mathcal{H}^{\rm b}$ is defined below Eq. (\ref{Eq.dot}).
\end{lemma}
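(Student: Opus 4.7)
The plan is to reduce this score-based result to the already-proven multi-class learnability theorem (Theorem \ref{T17}) by showing that a sufficiently large FCNN-induced score-based binary hypothesis space exhausts all binary classifiers on the finite set $\mathcal{X}$. Since Theorem \ref{T17} requires exactly the condition $\mathcal{H}_{\rm all}-\{h^{\rm out}\}\subset \mathcal{H}^{\rm b}$ together with Condition \ref{C3} to conclude learnability of $\mathcal{H}^{\rm in}\bullet \mathcal{H}^{\rm b}$ in the separate space, the task reduces to producing an $\mathbf{q}$ whose score-based binary space captures every binary classifier, and showing that this property is preserved under $\mathbf{q}\lesssim\mathbf{q}'$.

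First I would invoke Lemma \ref{L26}, whose hypotheses match ours exactly ($|\mathcal{X}|<+\infty$, and both super- and sub-level sets of $E$ at threshold $\lambda$ contain nonempty open sets of $\mathbb{R}^l$), to obtain a sequence $\mathbf{q}=(l_1,\ldots,l_g)$ with $l_1=d$, $l_g=l$ such that $\mathcal{H}^{\sigma,\lambda}_{\mathbf{q},E}$ equals the space $\mathcal{H}^b_{\rm all}$ of all binary classifiers on $\mathcal{X}$. In particular $\mathcal{H}_{\rm all}-\{h^{\rm out}\}\subset \mathcal{H}^{\sigma,\lambda}_{\mathbf{q},E}$.

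Next I would lift this to every $\mathbf{q}'\gtrsim \mathbf{q}$. By Lemma \ref{L7-contain}, $\mathbf{q}\lesssim\mathbf{q}'$ gives $\mathcal{F}^{\sigma}_{\mathbf{q}}\subset \mathcal{F}^{\sigma}_{\mathbf{q}'}$, so every $\mathbf{f}\in \mathcal{F}^{\sigma}_{\mathbf{q}}$ lies in $\mathcal{F}^{\sigma}_{\mathbf{q}'}$. Because the classifier $h^{\lambda}_{\mathbf{f},E}$ is determined purely by $\mathbf{f}$, $E$, and $\lambda$, this immediately yields $\mathcal{H}^{\sigma,\lambda}_{\mathbf{q},E}\subset \mathcal{H}^{\sigma,\lambda}_{\mathbf{q}',E}$, and hence $\mathcal{H}^{\sigma,\lambda}_{\mathbf{q}',E}=\mathcal{H}^b_{\rm all}$ as well. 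In particular $\mathcal{H}_{\rm all}-\{h^{\rm out}\}\subset \mathcal{H}^{\rm b}$ for $\mathcal{H}^{\rm b}=\mathcal{H}^{\sigma,\lambda}_{\mathbf{q}',E}$.

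Finally I would apply Theorem \ref{T17}: with $|\mathcal{X}|<+\infty$, the inclusion $\mathcal{H}_{\rm all}-\{h^{\rm out}\}\subset \mathcal{H}^{\rm b}$ just established, and Condition \ref{C3} assumed by hypothesis, Theorem \ref{T17} concludes that OOD detection is learnable under risk in $\mathscr{D}_{XY}^s$ for $\mathcal{H}^{\rm in}\bullet \mathcal{H}^{\rm b}$ for every ID hypothesis space $\mathcal{H}^{\rm in}$. I expect no genuine obstacle here; the only subtlety is making sure that the score-based lifting in the second step is truly automatic (it is, since $\mathcal{H}^{\sigma,\lambda}_{\bullet,E}$ is defined by thresholding $E$ on the scoring function, and enlarging the scoring space can only enlarge the induced classifier family), so the whole argument is essentially a clean chaining of Lemma \ref{L26}, Lemma \ref{L7-contain}, and Theorem \ref{T17}.
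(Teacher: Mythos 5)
Your proposal is correct and follows exactly the paper's own route: the paper's proof of this lemma is a one-line chaining of Lemma \ref{L26}, Lemma \ref{L7-contain}, and Theorem \ref{T17}, which is precisely the argument you spell out in more detail.
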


\begin{proof}[Proof of Lemma \ref{T27}]
Note that we use the ReLU function as the activation function in this lemma. Using Lemma \ref{L7-contain}, Lemma \ref{L26} and Theorem \ref{T17}, we can prove this result.
\end{proof}
\thmAppFCNN*
\begin{proof}[Proof of Theorem \ref{T24}]
Note that we use the ReLU function as the activation function in this theorem.

\noindent \textbf{$\bullet$ The Case that $\mathcal{H}$ is FCNN-based.}

\noindent \textbf{First}, we prove that if $|\mathcal{X}|=+\infty$, then OOD detection is not learnable in $\mathscr{D}^s_{XY}$ for $\mathcal{H}_{\mathbf{q}}^{\sigma}$, for any sequence $\mathbf{q}=(l_1,...,l_g)$ $(l_1=d$ and $l_g=K+1)$.

\noindent By Lemma \ref{L20}, Theorems 5 and 8 in \citep{bartlett2003vapnik}, we know that ${\rm VCdim}(\phi\circ \mathcal{H}_{\mathbf{q}}^{\sigma})<+\infty$, where $\phi$ maps ID data to $1$ and maps OOD data to $2$. Additionally, Proposition \ref{Pr1} implies that Assumption \ref{ass1} holds and $\sup_{{h}\in \mathcal{H}_\mathbf{q}^{\sigma}} |\{\mathbf{x}\in \mathcal{X}: {h}(\mathbf{x})\in \mathcal{Y}\}|=+\infty$, when $|\mathcal{X}|=+\infty$. Therefore, Theorem \ref{T12} implies that  OOD detection is not learnable in $\mathscr{D}^s_{XY}$ for $\mathcal{H}_{\mathbf{q}}^{\sigma}$, when $|\mathcal{X}|=+\infty$.
\\
\\
\noindent \textbf{Second}, we prove that if $|\mathcal{X}|<+\infty$, there exists a sequence $\mathbf{q}=(l_1,...,l_g)$ $(l_1=d$ and $l_g=K+1)$ such that OOD detection is learnable in $\mathscr{D}^s_{XY}$ for $\mathcal{H}_{\mathbf{q}}^{\sigma}$.

\noindent Since $|\mathcal{X}|<+\infty$, it is clear that $|\mathcal{H}_{\rm all}|<+\infty$, where $\mathcal{H}_{\rm all}$ consists of all hypothesis functions from $\mathcal{X}$ to $\mathcal{Y}_{\rm all}$. According to Lemma \ref{L23}, there exists a sequence $\mathbf{q}$ such that $\mathcal{H}_{\rm all}\subset \mathcal{H}_{\mathbf{q}}^{\sigma} $. Additionally, Lemma \ref{L20} implies that there exist $\mathcal{H}^{\rm in}$ and $\mathcal{H}^{\rm b}$ such that $\mathcal{H}_{\mathbf{q}}^{\sigma}\subset \mathcal{H}^{\rm in}\bullet \mathcal{H}^{\rm b}$. Since $\mathcal{H}_{\rm all}$ consists all hypothesis space, $\mathcal{H}_{\rm all}=\mathcal{H}_{\mathbf{q}}^{\sigma}= \mathcal{H}^{\rm in}\bullet \mathcal{H}^{\rm b}$. Therefore, $\mathcal{H}^{\rm b}$ contains all binary classifiers from $\mathcal{X}$ to $\{1,2\}$. Theorem \ref{T17} implies that OOD detection is learnable in $\mathscr{D}^s_{XY}$ for $\mathcal{H}_{\mathbf{q}}^{\sigma}$. 
\\
\\
\noindent \textbf{Third}, we prove that if $|\mathcal{X}|<+\infty$, then there exists a sequence $\mathbf{q}=(l_1,...,l_g)$ $(l_1=d$ and $l_g=K+1)$ such that for any sequence $\mathbf{q}'=(l_1',...,l_{g'}')$ satisfying that $\mathbf{q}\lesssim \mathbf{q}'$, OOD detection is learnable in $\mathscr{D}^s_{XY}$ for $\mathcal{H}_{\mathbf{q}'}^{\sigma}$.

\noindent We can use the sequence $\mathbf{q}$ constructed in the second step of the proof. Therefore, $\mathcal{H}_{\mathbf{q}}^{\sigma}=\mathcal{H}_{\rm all}$.  Lemma \ref{L7-contain} implies that  $\mathcal{H}_{\mathbf{q}}^{\sigma}\subset \mathcal{H}_{\mathbf{q}'}^{\sigma}$. Therefore, $\mathcal{H}_{\mathbf{q}'}^{\sigma}=\mathcal{H}_{\rm all}=\mathcal{H}_{\mathbf{q}}^{\sigma}$. The proving process (second step of the proof) has shown that if $|\mathcal{X}|<+\infty$, Condition \ref{C3} holds and hypothesis space $\mathcal{H}$ consists of all hypothesis functions, then OOD detection is learnable in $\mathscr{D}^s_{XY}$ for $\mathcal{H}$. Therefore, OOD detection is learnable in $\mathscr{D}^s_{XY}$ for $\mathcal{H}_{\mathbf{q}'}^{\sigma}$. We complete the proof when the hypothesis space $\mathcal{H}$ is FCNN-based.
\\
\\
\noindent \textbf{$\bullet$ The Case that $\mathcal{H}$ is score-based}

\noindent \textbf{Fourth}, we prove that if $|\mathcal{X}|=+\infty$, then OOD detection is not learnable in $\mathscr{D}^s_{XY}$ for $\mathcal{H}^{\rm in}\bullet \mathcal{H}^{\rm b}$, where $\mathcal{H}^{\rm b}=\mathcal{H}_{\mathbf{q},E}^{\sigma,\lambda}$ for any sequence $\mathbf{q}=(l_1,...,l_g)$ $(l_1=d$, $l_g=l$), where $E$ is in Eq. \eqref{score1}, \eqref{score2}, or \eqref{score3}.

\noindent By Theorems 5 and 8 in \citep{bartlett2003vapnik}, we know that ${\rm VCdim}( \mathcal{H}_{\mathbf{q},E}^{\sigma,\lambda})<+\infty$. Additionally, Proposition \ref{P2} implies that Assumption \ref{ass1} holds and $\sup_{{h}\in \mathcal{H}_\mathbf{q}^{\sigma}} |\{\mathbf{x}\in \mathcal{X}: {h}(\mathbf{x})\in \mathcal{Y}\}|=+\infty$, when $|\mathcal{X}|=+\infty$. Hence, Theorem \ref{T12} implies that  OOD detection is not learnable in $\mathscr{D}^s_{XY}$ for $\mathcal{H}_{\mathbf{q}}^{\sigma}$, when $|\mathcal{X}|=+\infty$.
\\
\\
\noindent \textbf{Fifth}, we prove that if $|\mathcal{X}|<+\infty$, there exists a sequence $\mathbf{q}=(l_1,...,l_g)$ $(l_1=d$ and $l_g=l)$ such that OOD detection is learnable in $\mathscr{D}^s_{XY}$ for for $\mathcal{H}^{\rm in}\bullet \mathcal{H}^{\rm b}$, where $\mathcal{H}^{\rm b}=\mathcal{H}_{\mathbf{q},E}^{\sigma,\lambda}$ for any sequence $\mathbf{q}=(l_1,...,l_g)$ $(l_1=d$, $l_g=l$), where $E$ is in Eq. \eqref{score1}, \eqref{score2}, or Eq. \eqref{score3}.

\noindent Based on Lemma \ref{T27}, we only need to show that $\{\mathbf{v}\in \mathbb{R}^l:E(\mathbf{v})\geq \lambda\}$ and $\{\mathbf{v}\in \mathbb{R}^l:E(\mathbf{v})<\lambda\}$ both contain nonempty open sets for different score functions $E$.

\noindent Since $\max_{k\in\{1,...,l\}}  \frac{\exp{(v^k)}}{\sum_{c=1}^l \exp{(v^c)}}$,  $\max_{k\in\{1,...,l\}}  \frac{\exp{(v^k/T)}}{\sum_{c=1}^K \exp{(v^c/T)}}$ and $T\log \sum_{c=1}^l  \exp{(v^c/T)}$ are continuous functions, whose ranges contain $(\frac{1}{l},1)$, $(\frac{1}{l},1)$, $(0,+\infty)$ and $(0,+\infty)$, respectively. 

\noindent Based on the property of continuous function ($E^{-1}(A)$ is an open set, if $A$ is an open set), we obtain that $\{\mathbf{v}\in \mathbb{R}^l:E(\mathbf{v})\geq \lambda\}$ and $\{\mathbf{v}\in \mathbb{R}^l:E(\mathbf{v})<\lambda\}$ both contain nonempty open sets. Using Lemma \ref{T27}, we complete the fifth step.
\\
\\
\noindent \textbf{Sixth}, we prove that if $|\mathcal{X}|<+\infty$, then there exists a sequence $\mathbf{q}=(l_1,...,l_g)$ $(l_1=d$ and $l_g=l)$ such that for any sequence $\mathbf{q}'=(l_1',...,l_{g'}')$ satisfying that $\mathbf{q}\lesssim \mathbf{q}'$, OOD detection is learnable in $\mathscr{D}^s_{XY}$ for $\mathcal{H}^{\rm in}\bullet \mathcal{H}^{\rm b}$, where $\mathcal{H}^{\rm b}=\mathcal{H}_{\mathbf{q}',E}^{\sigma,\lambda}$, where $E$ is in Eq. \eqref{score1}, \eqref{score2}, or Eq. \eqref{score3}.

\noindent In the fifth step, we have proven that Eqs. \eqref{score1}, \eqref{score2}, and \eqref{score3} meet the condition in Lemma \ref{T27}. Therefore, Lemma \ref{T27} implies this result. We complete the proof when the hypothesis space $\mathcal{H}$ is score-based.
\end{proof}

\section{Proof of Theorem \ref{T24_auc}}\label{SL_auc}

\thmAppFCNNauc*
\begin{proof}[Proof of Theorem \ref{T24_auc}]
  Using a similar strategy of Theorem \ref{T24}, we can prove this theorem by Theorem \ref{T12_auc} and Lemma \ref{l14-auc}.
\end{proof}
\section{Proof of Theorem \ref{T24.3}}\label{SM}
\thmAppFCNNtwo*
\begin{proof}[Proof of Theorem \ref{T24.3}]
$~$

\noindent 1) By Lemma \ref{C1andC2}, we conclude that Learnability in $\mathscr{D}_{XY}^{\mu,b}$ for $\mathcal{H}\Rightarrow$ Condition \ref{C1}.

\noindent 2) By Proposition \ref{Pr1} and Proposition \ref{P2}, we know that when $K=1$, there exist $h_1, h_2\in \mathcal{H}$, where $h_1=1$ and $h_2=2$, here $1$ represents ID, and $2$ represent OOD. Therefore, we know that when $K=1$,
$\inf_{h\in \mathcal{H}}R_D^{\rm in}(h)=0$ and $\inf_{h\in \mathcal{H}}R_D^{\rm out}(h)=0$, for any $D_{XY}\in \mathscr{D}_{XY}^{\mu,b}$.

\noindent By Condition \ref{C1}, we obtain that $\inf_{h\in \mathcal{H}} R_D(h)=0$, for any $D_{XY}\in \mathscr{D}_{XY}^{\mu,b}$. Because each domain $D_{XY}$ in $\mathscr{D}_{XY}^{\mu,b}$ is attainable, we conclude that Risk-based Realizability Assumption holds. 

\noindent We have proven that Condition \ref{C1}$\Rightarrow$ Risk-based Realizability Assumption.

\noindent 3) By Theorems 5 and 8 in \citep{bartlett2003vapnik}, we know that ${\rm VCdim}(\phi\circ \mathcal{H}_{\mathbf{q}}^{\sigma})<+\infty$ and ${\rm VCdim}( \mathcal{H}_{\mathbf{q},E}^{\sigma,\lambda})<+\infty$. Then, using Theorem \ref{T-SET2}, we conclude that Risk-based Realizability Assumption$\Rightarrow$ Learnability in $\mathscr{D}_{XY}^{\mu,b}$ for $\mathcal{H}$.

\noindent 4) According to the results in 1), 2) and 3), we have proven that

$~~~~~~~~~~~~~~~~~~$Learnability in $\mathscr{D}_{XY}^{\mu,b}$ for $\mathcal{H}\Leftrightarrow$Condition \ref{C1}$\Leftrightarrow$ Risk-based Realizability Assumption.

\noindent 5) By Lemma \ref{Lemma1}, we conclude that Condition \ref{Con2}$\Rightarrow$Condition \ref{C1}.

\noindent 6) \textbf{Here we prove that Learnability in $\mathscr{D}_{XY}^{\mu,b}$ for $\mathcal{H}\Rightarrow$Condition \ref{Con2}.} Since $\mathscr{D}^{\mu,b}_{XY}$ is the prior-unknown space, by Theorem \ref{T1}, there exist an algorithm $\mathbf{A}: \cup_{n=1}^{+\infty}(\mathcal{X}\times\mathcal{Y})^n\rightarrow \mathcal{H}$ and a monotonically decreasing sequence $\epsilon_{\rm cons}(n)$, such that $\epsilon_{\rm cons}(n)\rightarrow 0$, as $n\rightarrow +\infty$, and for any $D_{XY}\in \mathscr{D}_{XY}^{\mu,b}$, 
\begin{equation*}
\begin{split}
     &\mathbb{E}_{S\sim D^n_{X_{\rm I}Y_{\rm I}}}\big[ R^{\rm in}_D(\mathbf{A}(S))- \inf_{h\in \mathcal{H}}R^{\rm in}_D(h)\big]\leq \epsilon_{\rm cons}(n),
     \\ &\mathbb{E}_{S\sim D^n_{X_{\rm I}Y_{\rm I}}}\big[ R^{\rm out}_D(\mathbf{A}(S))- \inf_{h\in \mathcal{H}}R^{\rm out}_D(h)\big]\leq \epsilon_{\rm cons}(n).
     \end{split}
\end{equation*}
Then, for any $\epsilon>0$, we can find $n_{\epsilon}$ such that $\epsilon\geq \epsilon_{\rm cons}(n_{\epsilon})$, therefore, if $n={n_{\epsilon}}$, we have
\begin{equation*}
\begin{split}
     &\mathbb{E}_{S\sim D^{n_{\epsilon}}_{X_{\rm I}Y_{\rm I}}}\big[ R^{\rm in}_D(\mathbf{A}(S))- \inf_{h\in \mathcal{H}}R^{\rm in}_D(h)\big]\leq \epsilon,
     \\ &\mathbb{E}_{S\sim D^{n_{\epsilon}}_{X_{\rm I}Y_{\rm I}}}\big[ R^{\rm out}_D(\mathbf{A}(S))- \inf_{h\in \mathcal{H}}R^{\rm out}_D(h)\big]\leq \epsilon, 
     \end{split}
\end{equation*}
which implies that there exists $S_{\epsilon}\sim D^{n_{\epsilon}}_{X_{\rm I}Y_{\rm I}}$ such that
\begin{equation*}
\begin{split}
     & R^{\rm in}_D(\mathbf{A}(S_{\epsilon}))- \inf_{h\in \mathcal{H}}R^{\rm in}_D(h)\leq \epsilon,
     \\ & R^{\rm out}_D(\mathbf{A}(S_{\epsilon}))- \inf_{h\in \mathcal{H}}R^{\rm out}_D(h)\leq \epsilon. 
     \end{split}
\end{equation*}
Therefore, for any equivalence class $[D_{XY}']$ with respect to $\mathscr{D}_{XY}^{\mu,b}$ and any $\epsilon>0$, there exists a hypothesis function $\mathbf{A}(S_{\epsilon})\in \mathcal{H}$ such that for any domain $D_{XY}\in [D_{XY}']$,
 \begin{equation*}
 \mathbf{A}(S_{\epsilon})\in \{ h' \in \mathcal{H}: R_D^{\rm out}(h') \leq \inf_{h\in \mathcal{H}} R_D^{\rm out}(h)+\epsilon\} \cap   \{ h' \in \mathcal{H}: R_D^{\rm in}(h') \leq \inf_{h\in \mathcal{H}} R_D^{\rm in}(h)+\epsilon\},
 \end{equation*}
 which implies that Condition \ref{Con2} holds. Therefore, Learnability in $\mathscr{D}_{XY}^{\mu,b}$ for $\mathcal{H}\Rightarrow$Condition \ref{Con2}.

\noindent 7) Note that in 4), 5) and 6), we have proven that

\noindent Learnability in $\mathscr{D}_{XY}^{\mu,b}$ for $\mathcal{H}\Rightarrow$Condition \ref{Con2}$\Rightarrow$Condition \ref{C1}, and Learnability in $\mathscr{D}_{XY}^{\mu,b}$ for $\mathcal{H}\Leftrightarrow$Condition \ref{C1}, thus, we conclude that Learnability in $\mathscr{D}_{XY}^{\mu,b}$ for $\mathcal{H}\Leftrightarrow$Condition \ref{Con2}$\Leftrightarrow$Condition \ref{C1}.

\noindent 8) Combining 4) and 7), we have completed the proof.

\end{proof}

\section{Proof of Theorem \ref{T24.3_auc}}\label{SM_auc}

\thmAppFCNNtwoauc*
\begin{proof}[Proof of Theorem \ref{T24.3_auc}]
    The result can be obtained  by Theorems \ref{T-SET2_auc} and \ref{T3_auc}. 
\end{proof}
\section{Proof of Theorem \ref{overlapcase}}\label{SM_last}

\thmImpOverlapFCNN*

\begin{proof}[Proof of Theorem \ref{overlapcase}]
Using Proposition \ref{Pr1} and Proposition \ref{P2}, we obtain that $\inf_{h\in \mathcal{H}} R_D^{\rm in}(h)=0$ and $\inf_{h\in \mathcal{H}} R_D^{\rm out}(h)=0$. Then, Theorem \ref{T5} implies this result.
\end{proof}

\noindent {Note that if we replace the activation function $\sigma$ (ReLU function) in Theorem  \ref{overlapcase}  with any other activation functions, Theorem \ref{overlapcase} still hold.}

\section{Proof of Theorem \ref{overlapcase_auc}}\label{SM_last_auc}

\thmImpOverlapFCNNauc*
\begin{proof}[Proof of Theorem \ref{overlapcase_auc}]
    This is a conclusion of Lemma \ref{T5_auc}.
\end{proof}
\end{document}